\newcommand\blfootnotea[1]{%
  \begingroup
  \renewcommand\thefootnote{}\footnote{#1}%
  \endgroup
}
\definecolor{green}{rgb}{0.0, 0.5, 0.0}
\crefname{equation}{equation}{equations}
\crefname{lemma}{lemma}{lemmata}
\crefname{claim}{claim}{claims}
\crefname{theorem}{theorem}{theorems}
\crefname{proposition}{proposition}{propositions}
\crefname{corollary}{corollary}{corollaries}
\crefname{claim}{claim}{claims}
\crefname{remark}{remark}{remarks}
\crefname{definition}{definition}{definitions}
\crefname{fact}{fact}{facts}
\crefname{question}{question}{questions}
\crefname{condition}{condition}{conditions}
\crefname{algorithm}{algorithm}{algorithms}
\crefname{assumption}{assumption}{assumptions}
\crefname{setting}{setting}{settings}
\crefname{example}{Example}{Examples}
\newtheorem{theorem}{Theorem}[section]
\newtheorem{lemma}[theorem]{Lemma}
\newtheorem{corollary}[theorem]{Corollary}
\newtheorem{claim}[theorem]{Claim}
\newtheorem{definition}[theorem]{Definition}
\newtheorem{fact}[theorem]{Fact}
\theoremstyle{definition}
\newtheorem{assumption}[theorem]{Assumption}
\newtheorem{condition}[theorem]{Condition}
\newtheorem{setting}[theorem]{Setting}
\newtheorem{example}[theorem]{Example}
\newcommand{\eps}{\epsilon}
\newcommand{\poly}{\mathrm{poly}}
\newcommand{\polylog}{\mathrm{polylog}}
\newcommand{\dtv}{d_\mathrm{TV}}
\newcommand{\Ind}{\mathds{1}}
\newcommand{\1}{\Ind}
\newcommand{\trace}{\operatorname{tr}}
\newcommand{\Var}{\operatorname{Var}}
\def\P{\mathbb P}
\def\R{\mathbb R}
\def\N{\mathbb N}
\newcommand\numberthis{\addtocounter{equation}{1}\tag{\theequation}}
\newcommand{\cE}{\mathcal{E}}
\newcommand{\cF}{\mathcal{F}}
\newcommand{\cN}{\mathcal{N}}
\newcommand{\cU}{\mathcal{U}}
\newcommand{\cV}{\mathcal{V}}
\newcommand{\cX}{\mathcal{X}}
\newcommand{\bA}{\vec{A}}
\newcommand{\bB}{\vec{B}}
\newcommand{\bC}{\vec{C}}
\newcommand{\bI}{\vec{I}}
\newcommand{\bM}{\vec{M}}
\newcommand{\bU}{\vec{U}}
\DeclareMathOperator*{\pr}{\mathbf{Pr}}
\DeclareMathOperator*{\E}{\mathbf{E}}
\newcommand{\tr}{\mathrm{tr}}
\DeclarePairedDelimiter\abs{\lvert}{\rvert}
\let\vec\mathbf
\newcommand{\failp}{\tau}
\newcommand{\s}{\vec \Sigma}
\newcommand{\os}{\vec \Sigma} %
\newcommand{\solGenerator}{\textsc{SampleTopEigenvector} }
\newcommand{\op}{\mathrm{op}}
\newcommand{\fr}{\mathrm{F}}
\newcommand{\suc}{\mathrm{s}}
\newcommand{\hatm}{\widehat{\vec M}}
\newcommand{\hatg}{\widehat{g}}
\newcommand{\hatf}{\widehat{g}}
\newcommand{\iid}{i.i.d.\ }
\def\colorful{0}
\newcommand{\new}[1]{{\color{red} #1}}
\newcommand{\inote}[1]{\footnote{{\bf [Ilias: {#1}\bf ] }}}
\newcommand{\dnote}[1]{\footnote{{\bf [Daniel: {#1}\bf ] }}}
\newcommand{\anote}[1]{\footnote{{\bf [Ankit: {#1}\bf ]}}}
\newcommand{\tnote}[1]{\footnote{{\bf [Thanasis: {#1}\bf ] }}}
\newcommand{\new}[1]{{#1}}
\newcommand{\inote}[1]{}
\newcommand{\dnote}[1]{}
\newcommand{\anote}[1]{}
\newcommand{\tnote}[1]{}
\title{Nearly-Linear Time and Streaming Algorithms for\\ Outlier-Robust PCA\blfootnotea{Authors are in alphabetical order.}}
\author{
Ilias Diakonikolas\thanks{Supported by NSF Medium Award CCF-2107079, NSF Award CCF-1652862 (CAREER), a Sloan Research
Fellowship, and a DARPA Learning with Less Labels (LwLL) grant.}\\
University of Wisconsin-Madison\\
{\tt ilias@cs.wisc.edu}\\
\and
Daniel M. Kane\thanks{Supported by NSF Medium Award CCF-2107547, NSF Award CCF-1553288 (CAREER), and a Sloan Research
Fellowship.}\\
University of California, San Diego\\
{\tt dakane@cs.ucsd.edu}
\and
Ankit Pensia\thanks{Supported by NSF Award CCF-1652862 (CAREER), and NSF grants CCF-1841190 and CCF-2011255.}\\
University of Wisconsin-Madison\\
{\tt ankitp@cs.wisc.edu}\\
\and
Thanasis Pittas\thanks{Supported by NSF Medium Award CCF-2107079.}\\
University of Wisconsin-Madison\\
{\tt pittas@wisc.edu}\\
}
\begin{document}

\maketitle

\begin{abstract}
We study principal component analysis (PCA), where given a dataset in $\R^d$ from a distribution, the task is to find a unit vector $v$ that approximately maximizes the variance of the distribution after being projected along $v$.
Despite being a classical task, standard estimators fail drastically if the data contains even a small fraction  of outliers, 
motivating the problem of robust PCA.
Recent work has developed computationally-efficient algorithms for robust PCA that either take super-linear time or have sub-optimal error guarantees.
Our main contribution is to develop a nearly-linear time algorithm for robust PCA with near-optimal error guarantees. We also develop a single-pass streaming algorithm for robust PCA with memory usage  nearly-linear in the dimension.
\end{abstract}

\thispagestyle{empty}
\newpage
{
  \hypersetup{linktoc=all,linkcolor=black}
  \tableofcontents
}
\thispagestyle{empty}
\newpage
\setcounter{page}{1}

\section{Introduction}

Principal component analysis (PCA) is a central subroutine in dimension
reduction and data visualization.
PCA is typically used to identify directions of large variance in the data, which are considered to be
the most informative aspects of the data.
In the classical setting, we observe a set $S$ of $n$ \iid points in $\R^d$
from a (subgaussian) distribution with covariance $\s$.
Then, classical estimators, for e.g., the leading eigenvector of the empirical
covariance of $S$, output a direction $v$ with the property that
$v^\top \s v$ approaches $\|\s\|_\op$ as $n$ increases.
Importantly, these estimators can be computed fast --- namely, they have nearly linear runtime.\footnote{By the term ``nearly linear time'' algorithms, we mean runtime scaling linearly in the size of input $nd$, (up to logarithmic factors.}

However, access to \iid samples is often an unrealistic assumption, and data may contain a small
number of outliers as formalized below:
\begin{definition}[Strong Contamination Model]
	\label{def:strongadv}
	Given a parameter $0<\eps<1/2$ and a class of distributions $\mathcal{D}$,
	the strong adversary operates as follows: The algorithm specifies
	a number of samples $n$, then the adversary draws a set of $n$ \iid samples
	from some $D \in \mathcal{D}$ and after inspecting them,
	removes up to $\eps n$ of them and replaces them with arbitrary points.
	The resulting set is given as input to the learning algorithm.
	We call a set $\eps$-corrupted if it has been generated by the above process.
\end{definition}

Unfortunately, outliers can skew the results of classical estimators, by
artificially increasing the variance along low-variance directions, which
renders the output of these estimators unreliable.\footnote{Naive techniques such as naive pruning and random projection are also susceptible to outliers.}
To address this, a robust algorithm must be able to effectively remove outliers
from the data, while still being computationally efficient in high dimensions.

Recent works have developed computationally-efficient algorithms 
for robust PCA~\cite{JamLT20,KonSKO20}.
These works fall under the umbrella of algorithmic robust 
statistics, where the goal is to develop computationally-efficient 
algorithms for high-dimensional statistical estimation 
that are robust to a constant fraction of outliers, see, 
e.g.,~\cite{DKKLMS16,LaiRV16, DK19-survey, DiaKan22-book}.
However, these polynomial-time algorithms~\cite{JamLT20,KonSKO20} either
(i) achieve near-optimal error but take super-linear time (scaling with $nd^2$) or
(ii) run in nearly linear time but achieve sub-optimal error and require additional assumptions. 
As modern datasets continue to expand in both sample size and dimension,
there is a growing need for nearly linear time algorithms and streaming
algorithms with near-optimal error guarantees.
Our work addresses this need by developing robust PCA algorithms that meet
these demands.

	\subsection{Our Results}\label{sec:our-results}

	The main result of this paper is a nearly-linear time algorithm to identify a
	direction of large variance from a corrupted high-dimensional dataset.
	We state our results below for subgaussian distributions (\Cref{def:subgaussianity}), and note that the same results
	hold for a more general class of ``stable'' distributions
	(\Cref{def:stability}).
	\begin{theorem}[Nearly Linear Time Robust PCA]
		\label{thm:near-linear-main}
		Let $\epsilon \in (0, c)$ for a small constant $c > 0$.
		Let $D$ be a subgaussian distribution with mean zero and (unknown) covariance
		matrix $\s$.
		For a sufficiently large $C>0$, let $S$ be an $\eps$-corrupted set of
		$n \geq Cd/\epsilon^2$ samples from $D$ in the strong contamination model
		(\Cref{def:strongadv}).
		There exists an algorithm that takes as inputs $S$ and $\eps$, runs in
		$\widetilde{O}(n d/\eps^2)$ time and with probability at least $0.99$ outputs
		a unit vector $u$ such that $u^\top \s u \geq (1-O(\eps \log(1/\eps)))\| \vec
			\Sigma \|_\op$.
		\looseness=-1
	\end{theorem}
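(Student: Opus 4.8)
The plan is to run the standard iterative-filtering scheme of algorithmic robust statistics, but with every per-round operation implemented through matrix--vector products with the weighted empirical second-moment matrix, so that each round costs only $\widetilde{O}(nd)$ time. Throughout I condition on the high-probability ``good-set'' event: since $n\ge Cd/\eps^2$ and $D$ is subgaussian, the uncorrupted samples contain a subset $G\subseteq S$ with $|G|\ge(1-\eps)n$ that is stable with respect to $\s$ in the sense of \Cref{def:stability} (with parameters $\eps$ and $O(\eps\log(1/\eps))\|\s\|_\op$); write $B=S\setminus G$, so $|B|\le\eps n$. For a weight vector $w$ with $0\le w_i\le 1/n$ I write $M_w=\frac1{\|w\|_1}\sum_{i\in S}w_i x_i x_i^\top$, $\lambda(w)=\|M_w\|_\op$, and let $v(w)$ be a unit top eigenvector of $M_w$; crucially, $M_w u$ is computable in $O(nd)$ time as $\frac1{\|w\|_1}\sum_i w_i x_i\langle x_i,u\rangle$, so $\lambda(w)$ and $v(w)$ can be approximated by power iteration (or Lanczos) from a random start using $\widetilde{O}(1)$ such products --- this is the routine \solGenerator.

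The first step is a deterministic ``certificate'' lemma. Suppose $w$ retains all but an $O(\eps)$ fraction of the good mass, i.e.\ $\frac1n\sum_{i\in G}(1-nw_i)=O(\eps)$, and the projected scores $\tau_i=\langle x_i,v(w)\rangle^2$ satisfy $\frac1{\|w\|_1}\sum_{i:\tau_i>T}w_i\tau_i\le O(\eps\log(1/\eps))\,\lambda(w)$ for $T=\Theta(\log(1/\eps))\,\lambda(w)$; then $v=v(w)$ satisfies $v^\top\s v\ge(1-O(\eps\log(1/\eps)))\|\s\|_\op$. The proof is two-sided. Testing $M_w$ against the true top eigenvector $v^\star$ of $\s$ and invoking stability of $G$ (which still carries $1-O(\eps)$ of the mass) gives $\lambda(w)\ge(1-O(\eps\log(1/\eps)))\|\s\|_\op$. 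Conversely, split $\lambda(w)=v^\top M_w v$ into the $G$-contribution and the $B$-contribution: the former is at most $v^\top\s v+O(\eps\log(1/\eps))\|\s\|_\op$ by the upper-tail part of stability, and the latter is at most $T\cdot\frac{|B|}{n}+\frac1{\|w\|_1}\sum_{i:\tau_i>T}w_i\tau_i=O(\eps\log(1/\eps))\|\s\|_\op$, using $|B|\le\eps n$, the score hypothesis, and $\lambda(w)\asymp\|\s\|_\op$. Rearranging and combining yields the claim. The argument is self-calibrating: the algorithm never needs to know $\|\s\|_\op$, using the observable $\lambda(w)$ as its proxy.

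The algorithm then maintains $w$, initialized uniform, and for at most $\outerl$ rounds: (i) compute an approximate top eigenpair $(\lambda,v)$ of $M_w$ via \solGenerator, using $\innerl$ power-iteration steps; (ii) form $\tau_i=\langle x_i,v\rangle^2$ in $O(nd)$ time; (iii) if the certificate condition above holds, stop and output $v$; (iv) otherwise run a fast down-weighting filter that decreases each $w_i$ by an amount increasing in $\tau_i$ (for instance a randomized removal with probability $\propto\tau_i/\max_j\tau_j$, or zeroing out a suitably chosen super-threshold set). Correctness of the filter rests on a potential argument: by stability and subgaussianity only an $O(\eps)$-fraction of $G$ has $\tau_i\gg\log(1/\eps)\|\s\|_\op$, so whenever the certificate fails the filter removes strictly more weighted bad mass than good mass; this preserves the invariant ``total good mass removed is $O(\eps)$'' (keeping us inside the certificate lemma's hypothesis), while the bad-mass excess that still defeats the spectral test contracts geometrically per effective round. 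Since $\sum_{i\in B}w_i\le\eps$ at the start, the loop must terminate, and on exit the certificate lemma delivers $v^\top\s v\ge(1-O(\eps\log(1/\eps)))\|\s\|_\op$.

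For the running time, each round costs $\widetilde{O}(nd)$ (dominated by the $O(\innerl)$ matrix--vector products and the score computation), and a careful accounting of the filter together with the accuracy required of the eigenvector gives $\outerl\cdot\innerl=\widetilde{O}(1/\eps^2)$, for a total of $\widetilde{O}(nd/\eps^2)$; a union bound over the good-set event and the random starts in the $\outerl$ calls to \solGenerator yields success probability $0.99$. I expect the main obstacle to be step (iv): designing the filter to be simultaneously \emph{aggressive} enough that few rounds suffice and \emph{conservative} enough that the ``good mass removed is $O(\eps)$'' invariant survives, all while using a threshold that depends only on the observable $\lambda(w)$ rather than on $\|\s\|_\op$. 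A secondary delicate point is controlling the error introduced by using only an \emph{approximate} top eigenvector everywhere, so that the additive $O(\eps\log(1/\eps))\|\s\|_\op$ slack in the final bound is not eroded across the $\outerl$ rounds.
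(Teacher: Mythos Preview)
Your certificate lemma is essentially the paper's Basic Certificate Lemma (\Cref{lem:basic-cert}) and is correct. The gap is in step (iv) and the round-count claim, which you yourself flag as the main obstacle but do not actually resolve.

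Filtering along the \emph{single} approximate top eigenvector $v(w)$ does not yield the geometric contraction you assert. The paper explicitly identifies this as the failure mode of the ``easy'' algorithm (see \Cref{sec:our-tech}): an adversary can spread outliers across many near-top eigendirections, so that each filter round cleans up only the current direction while leaving the others untouched; the top eigenvector then rotates and the process may take far more than $\widetilde{O}(1/\eps)$ rounds. Your sentence ``the bad-mass excess that still defeats the spectral test contracts geometrically per effective round'' is precisely the unproved step --- there is no potential function in your sketch that decreases multiplicatively, and the ``more bad mass than good mass'' invariant only bounds total mass removed, not the number of rounds.

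The paper's fix is structurally different from your outline: it filters along a \emph{random} direction $v_t=\s_t^{p}z$ with $z\sim\cN(0,\bI)$ and $p\asymp\log(d)/\gamma$, so that every outlier contributing to any large eigenvalue of $\s_t$ is hit with constant probability. This enables a potential $\phi_t=\tr(\s_t^{2p+1})$ that provably decreases by a factor $(1-\gamma)$ per round (\Cref{cl:pot-decrease}), giving $\widetilde{O}(1/\gamma)$ rounds. Two further ingredients are then needed that your proposal lacks: (i) an \emph{Advanced Certificate Lemma} (\Cref{lem:certificate1}) to certify success once the potential is small without any eigenvalue-gap assumption on $\s$ --- this requires the output direction itself to be randomized, as \Cref{ex:hard-schatten} shows any deterministic choice from $\s_t$'s spectrum can fail; and (ii) a nested-loop schedule that doubles $p$ (\Cref{sec:improvement}) to bring the runtime from $\widetilde{O}(nd/\eps^3)$ down to $\widetilde{O}(nd/\eps^2)$. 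Your claimed $\outerl\cdot\innerl=\widetilde{O}(1/\eps^2)$ is not derivable from the top-eigenvector scheme you describe.
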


We note that the error guarantee of our algorithm is near-optimal
	up to logarithmic factors (cf.\ \Cref{sec:info-theoretic-error}).
	The work most closely related to ours is \cite[Theorem
		2]{JamLT20}, and our
	algorithm improves upon it in three
	significant ways:
	\begin{itemize}%
		\item (Near-optimal error) The output of \Cref{thm:near-linear-main} achieves the near-optimal $(1-O(\eps \log(1/\eps)))$ approximation of $\|\s\|_\op$, whereas the result of \cite{JamLT20} achieves the sub-optimal approximation of $ (1 - O(\sqrt{\eps
					\log(1/\eps)\log d}))$.\footnote{In particular, the latter result requires that $\eps \to 0$ 
     as $d \to \infty$ to obtain non-vacuous guarantees.}
		\item (Dependence on $\eps$ in runtime) The runtime of our algorithm is $\widetilde{O}(nd/\eps^{2})$, in contrast to the runtime of \cite{JamLT20}, which is
			$\widetilde{O}\left(nd/\eps^{4.5} {+} ndm/\eps^{1.5}\right)$, where $m$ is a parameter discussed below that belongs in $[2,d]$.
			Thus, the runtime of our algorithm improves upon theirs in all cases.
			\looseness=-1
		\item (Eigenvalue separation) The algorithm of \cite{JamLT20} requires the assumption that the $m$-th largest eigenvalue of $\s$ is  smaller than $\|\s\|_\op$ by $(1-\widetilde{\Theta}(\eps))$ for some $m\in [2,d]$, which appears in the runtime. Thus, \cite{JamLT20} obtains nearly linear time only if $m
					{=} \polylog(d/\eps)$.
			\Cref{thm:near-linear-main} has no such restriction.
	\end{itemize}

	We finally note that, even without corruptions, 
$\widetilde{\Omega}(d/\eps^2)$ samples are necessary for $(1{-}
O(\eps\log(1/\eps)))$-approximation of $\|\s\|_\op$.
	Moreover, even if there are no corruptions, the standard Lanczos algorithm
	achieving such a guarantee runs in time $\widetilde{O}(\frac{nd}{\sqrt{\eps}}{+}
\frac{1}{\eps})$ \cite[Theorem 10.1]{SacVis14}.
	Thus, our result considerably reduces the price of robustness in PCA, and
	brings it much closer to the clean data setting.\looseness=-1

	\subsubsection*{Streaming Algorithm}
	
	The distributed nature of modern data science applications and large-scale
	datasets often impose the restriction that the complete dataset cannot be
	stored in the memory.
	In such scenarios, the streaming model, defined below, is a much more
	realistic setting:
	\begin{definition}[Single-Pass Streaming Model] \label{def:streaming} Let $S$ be
		a fixed set.
		In the one-pass streaming model, the elements
		of $S$ are revealed one at a time to the algorithm, and the
		algorithm is allowed a single pass over these points.
	\end{definition}
	In the streaming model, the algorithm also needs to optimize the amount of
	memory that it uses.
	We still consider corruptions in the data. This means that the input $S$ above is not comprised of \iid samples from a distribution, but comes from a ``corrupted'' distribution, formalized below:
	\begin{definition}[TV-contamination]
		\label{def:oblivious}
		Given a parameter $\eps \in (0,1/2)$ and a distribution class $\mathcal{D}$,
		the adversary specifies a distribution $D'$
		such that there exists $D \in \mathcal{D}$ with $\dtv(D,D') \leq \eps$.
		Then the algorithm draws \iid samples
		from $D'$.
		We say that the distribution $D'$ is an $\eps$-corrupted version
		of the distribution $D$ in total variation distance.
	\end{definition}
	All prior algorithms for robust PCA needed to store the entire dataset in
	memory, leading to space usage of at least
$\widetilde{\Omega}(d^2/\eps^2)$.\footnote{In experiments, memory usage has been highlighted as a key bottleneck in robust estimation; see \cite{DiaKKLMS17}.}
	Building on \Cref{thm:near-linear-main}, we present the first algorithm for
	robust PCA that uses $\widetilde{O}_\epsilon(d)$ memory.

	\begin{theorem}[Streaming Algorithm for Robust PCA; informal version]
		\label{thm:streaming-main}
		Let $\epsilon \in (0, c)$ for a small constant $c > 0$.
		Let $D$ be a subgaussian distribution with mean zero and (unknown) covariance
		matrix $\s$.
		Let $P$ be an $\eps$-corrupted
		version of $D$ in total variation distance (\Cref{def:oblivious}).
		There is a single-pass streaming algorithm that given $\eps$, it reads
		$\poly(d/\eps)$ many
        samples from $P$, and with
		probability $0.99$, returns a unit vector $u$ such that $u^\top\s u \geq (1 -
			O(\eps \log(1/\eps)))\|\s\|_\op$.
		Moreover, the algorithm uses memory $(d/\eps)\cdot \polylog(d/\eps)$ and runs
		in time $(nd/\eps^2) \polylog(d/\eps)$.
	\end{theorem}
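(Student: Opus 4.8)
The plan is to prove \Cref{thm:streaming-main} by showing that the near-linear-time estimator of \Cref{thm:near-linear-main} can be \emph{simulated} in the single-pass streaming model with $\widetilde O(d/\eps)$ memory. The two aspects of that algorithm that are not streaming-friendly are that it keeps the whole dataset in memory and that it re-reads it several times to recompute spectral quantities of re-weighted empirical second-moment matrices. Both are circumvented by \emph{sample splitting}: we cut the incoming stream into $T=\polylog(d/\eps)$ mutually independent ``phase'' batches, and within a phase into $O(\log(d/\eps))$ ``step'' sub-batches, and between phases we carry only a compact $\widetilde O(d/\eps)$-sized summary of the algorithm's state.

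\textbf{From TV-contamination to strongly-corrupted batches.} By the coupling characterization of total variation distance there is a joint law of $(X,X')$ with $X\sim D$, $X'\sim P$, and $\pr[X\ne X']\le\eps$. Drawing $m=\poly(d/\eps)$ \iid copies and applying a Chernoff bound, the observed block $\{X_i'\}_{i\le m}$ differs from the clean \iid block $\{X_i\}_{i\le m}$ in at most $O(\eps m)$ points except with probability $e^{-\Omega(\eps m)}$; hence every phase/step batch is, with high probability, an $O(\eps)$-corrupted set in the sense of \Cref{def:strongadv}. Since the adversary of \Cref{def:oblivious} is oblivious --- it fixes $P$ before any sample is drawn --- distinct batches are independent, so the deterministic regularity conditions required by the analysis of \Cref{thm:near-linear-main} hold on all $\poly(d/\eps)$-many batches simultaneously after a union bound, at the cost of a $\polylog(d/\eps)$ blow-up in batch size.

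\textbf{Streaming implementation of the filter.} Opening up \Cref{thm:near-linear-main}, the algorithm runs $T=\polylog(d/\eps)$ filtering phases; a phase computes a few spectral quantities of a re-weighted empirical matrix $\hatm$ through $O(\log(d/\eps))$ power-iteration steps and then either certifies that the current candidate direction $u$ already satisfies $u^\top\s u\ge(1-O(\eps\log(1/\eps)))\|\s\|_\op$, in which case it halts and outputs $u$, or downweights the points most responsible for an inflated value of $\|\hatm\|_\op$ and proceeds. We devote a fresh batch to each phase and a fresh sub-batch $B$ to each power-iteration step; since every batch has essentially the same population covariance $\s$, a single matrix--vector product $\tfrac{1}{|B|}\sum_{i\in B}x_i(x_i^\top v)$ implements one noisy power-iteration step, and standard noisy-power-iteration/Oja-type bounds give convergence to the desired eigenvector up to an operator-norm error of order $\|\s\|_\op\sqrt{d/|B|}$ per step, pushed below the tolerance of the analysis by $|B|=\poly(d/\eps)$. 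Every statistic used inside a phase is a sum $\sum_i f(x_i)$ with $f$ valued in $\R$ or $\R^d$, hence accumulated on the fly in $O(d)$ space; the only information carried across phases is a compact description of the current re-weighting (together with the running candidate eigenvector), of size $\widetilde O(d/\eps)$, which is re-evaluated against each arriving point in $\widetilde O(d/\eps)$ time. Summing up: one pass, $\poly(d/\eps)$ samples, $\widetilde O(d/\eps)$ memory, and $(nd/\eps^2)\,\polylog(d/\eps)$ time, with success probability $0.99$ after a union bound over the $\poly(d/\eps)$ batches and the internal randomness.

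\textbf{The main obstacle.} The delicate part is checking that the correctness proof of \Cref{thm:near-linear-main} survives these two substitutions. Using a fresh batch per phase is sound because each batch is an independent $O(\eps)$-corrupted draw, so the per-phase dichotomy (``halt with a good direction, or strictly decrease the weight mass on outliers'') holds verbatim on it; what must be verified is that progress recorded on the batch of phase $t$ transfers to the batch of phase $t{+}1$, which is true because the potential being decreased is a population quantity (morally $\|\hatm\|_\op-\|\s\|_\op$) that every batch estimates to additive error $O(\eps)$ via the regularity conditions. Replacing exact power iteration by fresh-sub-batch noisy power iteration requires that the spectral slack in the analysis exceeds the per-step sampling error --- this is precisely why sub-batches are of size $\poly(d/\eps)$ --- and that errors do not compound across the $O(\log(d/\eps))$ steps and $T$ phases; the latter holds because the final guarantee is read off from the single well-filtered batch active in the terminating phase rather than from any accumulation. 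Turning these two approximation arguments into quantitative statements, and thereby fixing the exact polynomial in $\poly(d/\eps)$, is the bulk of the remaining work.
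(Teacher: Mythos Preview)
Your plan matches the paper's approach: store only the filter descriptions (direction vectors plus thresholds, $\widetilde O(d/\eps)$ numbers in all), apply them on the fly to every incoming sample, and replace each matrix--vector product $\vec B_t v$ by one computed on a fresh mini-batch. The correctness argument survives precisely because, as you say, the quantities the analysis tracks --- $\vec B_t=\E_{X\sim P}[w_t(X)XX^\top]$, $\vec M_t=\vec B_t^{p}$, and the potential $\phi_t$ --- are population-level once the accumulated filters are fixed.

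Two specifics to correct before you flesh this out. First, the potential is not ``morally $\|\hatm\|_\op-\|\s\|_\op$'' but $\phi_t=\tr(\vec B_t^{2p+1})$; the Schatten form is essential because the algorithm filters along a \emph{random} combination of the large eigendirections of $\vec B_t$, and the multiplicative decrease $\E[\phi_{t+1}]\le(1-\gamma)\phi_t$ only goes through for this quantity --- tracking the top eigenvalue alone is exactly what forced prior work to assume an eigengap. Second, the matrix power is $p=\Theta(\log(d/\eps)/\eps)$, not $O(\log(d/\eps))$; the large $p$ is what makes the advanced certificate lemma work without gap assumptions, so each phase needs $\Theta(\log(d/\eps)/\eps)$ fresh sub-batches, and this drives both the sample count and the $1/\eps^2$ in the runtime.

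The one genuinely new lemma you will need --- and which the paper proves directly rather than citing Oja --- is that if $\|\widehat{\vec A}-\vec A^p\|_\op\le\delta\|\vec A^p\|_\fr$ with $\delta\lesssim\gamma/\sqrt d$, then for $z\sim\cN(0,\bI)$ the vector $y=\widehat{\vec A}z$ still satisfies $y^\top\vec A y/\|y\|_2^2\ge(1-O(\gamma))\|\vec A\|_\op$ with constant probability. This is what lets you replace $\vec B_t^p$ by a product of $p$ independent empirical second-moment matrices, and it is what pins down the batch size and hence the exact polynomial in $\poly(d/\eps)$.
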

	Observe that the asymptotic error of the algorithm is near-optimal and the memory
	usage is nearly linear in $d$ (the size of the output).
	We refer the reader to \Cref{sec:bit-complexity} for further discussion on bit
	complexity.
	\subsection{Our Techniques}
	\label{sec:our-tech}
	Our approach is to combine the ``easy'' version of the robust PCA algorithm
	from \cite{JamLT20, KonSKO20} with the fast mean
	estimation algorithm of \cite{DKPP22} (see also \cite{DKKLT21}).
	This ``easy'' algorithm works
	essentially by computing the top eigenvector, $v$, of the empirical
	covariance matrix.
	It then projects the samples onto the $v$-direction and estimates the true
	covariance in the $v$-direction.
	If this is close to the empirical covariance, then this and the fact that
	errors cannot substantially decrease the empirical covariance in any direction
	implies that $v$ is close to a principal eigenvector of the true covariance
	matrix (cf.\ \Cref{lem:basic-cert}).
	Otherwise, some small number of corruptions must
	account for the difference, and these can be algorithmically filtered out.
	One then repeats this process of filtering out outliers until an
	approximate principal eigenvector is found.

	The issue with this simple algorithm is its runtime.
	Although each filtering step can be performed in nearly linear time, there is no guarantee
	 that the number of these steps will be small.
	It is entirely possible that each filtering step removes only a tiny fraction
	of corruptions with very large projections in the leading eigenvector direction $v$.
	To fix this, we leverage ideas from \cite{DKPP22} and make $v$ essentially a random
	linear combination of the few largest eigenvectors of $\s_t$ (the empirical
	covariance at step $t$), by defining it to be $\s_t^p w$ (instead of the
	principal eigenvector of the empirical covariance matrix), where $w$ is a random Gaussian vector and $p$ is a
	suitably large integer.
	This prevents an adversary from ``hiding'' a corruption by making it orthogonal to
	some particular $v$.
	Instead, any corruption substantially contributing to one of the largest eigenvalues
	of $\s_t$ is reasonably likely to be filtered out.
	Our approach is to keep track of the potential function $\tr(\s_t^{2p+1})$ and
	show: (i) small value of the potential certifies that a solution vector can be
	recovered from the empirical covariance, (ii) whenever this certification is
	not possible, we can reduce the potential by a multiplicative factor of ($1 - \Omega(\eps)$).
	This approach presents two main challenges.

	The first key challenge, pertinent to the ``certification'' that was mentioned
	before, is to obtain optimal error without any restriction on the spectrum of
$\s$
	(recall that the near-linear time result of \cite{JamLT20} imposes restrictions on the spectrum of $\s$).
	A natural stopping condition is when the current $\s_t$ is comparable (in an
	appropriate sense) to $\s$ up to $(1 \pm \epsilon)$ factor.
	\cite{JamLT20} used Schatten-$p$ norm for large enough $p$ to compare $\s$ and $\s_t$.
	However, a simple example (\Cref{ex:hard-schatten}) shows that, even then, any
	deterministic algorithm relying on $\s_t$ must take $nd^2$ time.
	Our two-pronged solution is to (i) perform a white-box analysis of robust PCA
	to enforce a stronger stopping condition, and (ii) output a \emph{random}
	leading eigenvector of $\s_t$ (cf.\ \Cref{sec:advanced-cert-lemma}). \looseness=-1

	Second, unlike in \cite{DKPP22}, we cannot quite achieve a runtime of
$\widetilde{O}(nd)$.
	This is essentially because of the choice of $p$.
	Assuming that $\|\s\|_{\op} =1$, na\"ive
	filtering can guarantee that our initial $\s_t$ at $t=0$ has eigenvalues at most
$\poly(d)$.\footnote{As all quantities scale with $\|\s\|_\op$, we use this
		normalization for the sake of simplifying presentation.}
	Thus, the starting value of our potential is $d^{O(p)}$, and requires
$\widetilde{O}(p/\eps)$ rounds of filtering (each of which takes $O(pnd)$
	time since we need to do $p$ many matrix-vector products with $\s$), for a total runtime of
$\widetilde{O}(ndp^2/\eps)$.
	Now, our algorithm requires $p$ to be at least $\log(d)/\eps$ to distinguish
	between eigenvalues that differ by a ($1+\eps$)-factor, as opposed to $p =
O(\log(d))$ in \cite{DKPP22}, resulting in a runtime of
$\widetilde{O}(nd/\eps^3)$.

The runtime can be improved to $\widetilde{O}(nd/\eps^2)$ by noting that such fine differences in the eigenvalues become relevant only at the last  stage (``certification stage'') of our algorithm. Until then, we can use a smaller value of $p$.
	More formally, for any $p'\geq \log d$, we can still
	decrease the potential multiplicatively 
	until our stopping condition is satisfied. 
	Although this no longer certifies that we have a good solution, we can use it to upper bound the potential by $\poly(d)$. 
    Thus, we run our method in multiple stages: 
    in stage $k$, we take $p_k = 2^k \log d$ and
	reduce the potential until it becomes $d^C$, which takes
$\Tilde{O}(ndp_k/\eps)$ time.
	By the time we reach $p = \log(d)/\eps$, since that stage also starts with
	potential at most $d^C$, the algorithm terminates in $\Tilde{O}(nd/\eps^2)$
	time, overall.
	\looseness=-1

	Finally, as in \cite{DKPP22}, the above described algorithm 
        can be turned into a single-pass streaming algorithm.
	Since we use a small number of filters, where
	each filter removes all samples $x$ with $|v^\top x| > T$ (for some carefully
	chosen $v$ and $T$), we can implement this with low memory by storing just the
$v$'s and $T$'s of the previously created filters, along with a minimal set of
temporary variables for necessary calculations.

\subsection{Related Work}

Our work is situated within the field of algorithmic robust statistics, where
the goal is to develop computationally-efficient algorithms for
high-dimensional estimation problems that are robust to outliers.
Since the publication of \cite{DKKLMS16,LaiRV16},
computationally-efficient robust algorithms have been developed for many tasks,
including mean estimation~\cite{KotSS18,CheDG19,DonHL19,DepLec19,DiaKP20,HopLZ20, HuRein21, CheTBJ20, ZhuJS20}, sparse estimation~\cite{BDLS17,DiaKKPS19,CheDKGGS21,DiaKKPP22-colt,DiaKLP22}, covariance estimation~\cite{CheDGW19}, linear regression~\cite{KliKM18, DiaKS19, PenJL20,CheATJFB20}, clustering and list-decodable learning~\cite{DKS18-list,HopLi18,KotSS18,KarKK19,LiuMoi21,BakDJKKV22},  and
stochastic optimization~\cite{DiaKKLSS19,PraSBR20}.
We refer the reader to the recent book \cite{DiaKan22-book} for an overview of this field. 

The study of near-linear time algorithms for high-dimensional robust 
estimation tasks was initiated by~\cite{CheDG19} in the context of mean 
estimation. Since this work, a number of improvements and generalizations 
have been obtained in various contexts. 
The most related line of works to the current paper is the work 
on high-dimensional robust PCA, starting from~\cite{XuCM13}. While this 
early work did not obtain dimension-independent error guarantees in 
polynomial time, polynomial-time algorithms with near-optimal dependence on
$\epsilon$ in the error guarantee were subsequently developed in
\cite{JamLT20,KonSKO20}.
We note that these algorithms have runtimes at least $\Omega(nd^2)$.
Additionally, \cite{JamLT20}  developed a nearly linear time algorithm for
robust PCA; however, their algorithm runs in nearly  linear-time 
only in certain cases, and the dependence on $\epsilon$ in the error guarantee is sub-optimal
and dimension-dependent.
Finally, our streaming algorithm is inspired by \cite{DKPP22}, who presented
the first
streaming algorithms for high-dimensional robust mean estimation and related tasks.

We remark that robust PCA is closely related to the  problem of robust covariance estimation in the operator norm, where the algorithm is required to output an estimate $\widehat{\s}$ given a corrupted set of samples such that $\|\s -\widehat{\s}\|_\op$ is small.
It is easy to see that the top eigenvector of $\widehat{\s}$ satisfies the guarantees of robust PCA.
In this paragraph, we restrict our attention to Gaussian data and when $\epsilon$ is a small enough absolute constant.
Although the sample complexity of robust covariance estimation in operator norm is still $\tilde{O}(d)$, \cite[Theorem 1.5]{DiaKS17} has shown that the problem is computationally hard (in the Statistical Query model) unless one takes $\Omega(d^2)$ samples.  
With $\Omega(d^2)$ samples, one can estimate the covariance in the relative Frobenius norm (and thus in operator norm); see the discussion in \cite{DiaKS17}.
Thus, robust PCA is an easier task, both computationally and statistically, than robust covariance estimation in operator norm, while still being useful. 

Finally, we emphasize that the focus of our work (and that of
\cite{JamLT20,KonSKO20}) is somewhat orthogonal to other works
with similar terminology~\cite{CanLMW11,MarMA18,BieJSY22}, as the 
definition of robustness is significantly different.
\section{Preliminaries}
\label{sec:prelim}
\paragraph{Notation}
We denote $[n]:= \{1,\ldots,n\}$.
For a vector $v$, we let $\|v\|_2$ and $\|v\|_\infty$ denote its $\ell_2$ and
infinity-norm respectively.
We use boldface capital letters for matrices.
We use $\mathbf{I}$ for the identity matrix.
For two matrices $\vec A$ and $\vec B$, we use $\tr(\vec A)$ for the trace of $\vec A$, and
$\langle \vec A,\vec B \rangle:=\tr(\vec A^\top \vec B)$ for the trace-inner product.
We use $\| \vec A \|_\fr, \| \vec A \|_\op, \| \vec A \|_p$ for the
Frobenius, operator, and Schatten $p$-norm of a matrix $\vec A$ for $p\ge 1$.
In particular, if $\vec A$ is a $d\times d$ symmetric matrix with eigenvalues $\lambda_1, \dots, \lambda_d $, then $\|\vec A\|_p := (\sum_i |\lambda_i|^p)^{1/p}$. 
We say that a square symmetric matrix $\bA$ is PSD (positive
semidefinite), and write $\bA \succeq 0$, if $x^\top \bA x \geq 0$ for all $x \in \R^d$.
We write $\bA\preceq \bB$ when $\bB-\bA$ is PSD.
We write $a \lesssim b$, when there exists an absolute universal
constant $C>0$ such that $a \leq C b$.

For a distribution $D$ over a domain $\cX$ and a weight function $w:\cX \to
  \R_+$,
we use $D_w$ to denote the distribution over $\R^d$ with pdf $D_w(x):=
  D(x)w(x)/\int D(x)w(x)$. 
  We denote the second moment of $D$ by $\vec \Sigma_D := \E_{X \sim D}[XX^\top]$.
  We will repeatedly use that $\E_{X \sim D}[\|\vec U X\|_2^2] = \langle \vec U^\top \vec U, \s_D \rangle $.

We use the following notion of subgaussianity that was also used in 
\cite{JamLT20}.
\begin{definition}
\label{def:subgaussianity}
For a parameter $r \geq 1$, we say a distribution $D$ on $\R^d$ with mean zero and covariance $\s$ is $r$-subgaussian if for all unit vectors $v$ and $t>0$, 
\begin{align*}
    \E_{X \sim D}[\exp(t v^\top X)] \leq \exp\left(t^2 r (v^\top \s v) /2\right)
\end{align*}
\end{definition}
Observe that this notion of subgaussianity is stronger than  \cite[Definition 3.4.1]{Vershynin18} because the sub-gaussian proxy in the direction $v$ scales with $v^\top \s v$.

\subsection{Miscellaneous Facts}
We now collect some facts that we shall use in the paper.

\paragraph{Linear Algebraic Facts}
We first begin by noting the following useful properties of trace inner product for PSD matrices:

\begin{restatable}{fact}{TRACEINEQ}
  \label{fact:trace_PSD_ineq}
  If $\bA,\bB,\bC$ are symmetric $d \times d$ matrices with $\bA \succeq 0$ and
  $\bB \preceq \bC$ then $\tr(\bA \bB) \leq \tr(\bA \bC)$.
\end{restatable}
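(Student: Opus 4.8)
The plan is to reduce everything to the single inequality $\tr(\bA \bD) \geq 0$ for PSD matrices $\bA, \bD$, which is the heart of the statement. Set $\bD \deff \bC - \bB$. By hypothesis $\bB \preceq \bC$, so $\bD \succeq 0$. Since $\tr$ is linear, $\tr(\bA\bC) - \tr(\bA\bB) = \tr(\bA\bD)$, so it suffices to show this last quantity is nonnegative whenever $\bA \succeq 0$ and $\bD \succeq 0$.

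For that, I would use one of two interchangeable arguments. The cleanest is to invoke the symmetric PSD square root: write $\bA = \bA^{1/2}\bA^{1/2}$ with $\bA^{1/2}$ symmetric PSD, and then use cyclicity of the trace to get
\[
\tr(\bA\bD) = \tr(\bA^{1/2}\bA^{1/2}\bD) = \tr(\bA^{1/2}\bD\bA^{1/2}).
\]
The matrix $\bA^{1/2}\bD\bA^{1/2}$ is PSD (for any vector $x$, $x^\top \bA^{1/2}\bD\bA^{1/2} x = (\bA^{1/2}x)^\top \bD (\bA^{1/2}x) \geq 0$), hence its trace, being the sum of its nonnegative eigenvalues, is $\geq 0$. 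Alternatively, one can spectrally decompose $\bA = \sum_{i} \lambda_i v_i v_i^\top$ with all $\lambda_i \geq 0$ and $\{v_i\}$ orthonormal, and then $\tr(\bA\bD) = \sum_i \lambda_i \, \tr(v_i v_i^\top \bD) = \sum_i \lambda_i \, v_i^\top \bD v_i \geq 0$ since each term is a product of a nonnegative scalar and a nonnegative quadratic form. Either route is a couple of lines.

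There is essentially no serious obstacle here: the only things being used are (i) linearity and cyclicity of the trace, (ii) existence of a PSD square root (or the spectral theorem), and (iii) the definition of PSD via quadratic forms. The one point worth stating carefully is the justification that $\tr$ of a PSD matrix is nonnegative, which follows immediately from the spectral theorem since the trace equals the sum of the (real, nonnegative) eigenvalues. I would present the square-root argument as the main proof and perhaps remark that the spectral argument gives the same conclusion.
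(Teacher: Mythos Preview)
Your proof is correct and entirely standard. The paper does not actually provide a proof of this fact; it is stated as a basic linear-algebraic fact without justification, so your square-root/cyclicity argument (or the spectral-decomposition variant) is exactly what one would write to fill in the details.
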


A proof of the following fact can be found, e.g., in \cite{JamLT20}.

\begin{fact}
  \label{fact:trace_PSD_ineq2}
  Let $\vec A,\vec B$ be PSD matrices with $\vec B \preceq \vec A$ and $p \in
    \N$.
  Then,
  $ \tr\left(\vec B^p \right) \leq \tr\left( \vec A^{p-1} \vec B \right)$.
\end{fact}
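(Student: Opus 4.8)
The plan is to diagonalize $\vec B$ and reduce the claim to a one-dimensional Jensen-type inequality, so that the only ingredients are the variational definition of $\preceq$ and convexity of $x\mapsto x^{p-1}$ on $[0,\infty)$. (The naive route — deducing $\vec B^{p-1}\preceq\vec A^{p-1}$ from $\vec B\preceq\vec A$ and then invoking \Cref{fact:trace_PSD_ineq} — fails, since $t\mapsto t^{p-1}$ is not operator monotone for $p\ge 3$; diagonalizing $\vec B$ rather than $\vec A$ sidesteps this.) First I would dispose of the trivial case $p=1$, where both sides equal $\tr(\vec B)$; henceforth assume $p\ge 2$.

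Write a spectral decomposition $\vec B=\sum_{i}\beta_i\, e_ie_i^\top$, with $\{e_i\}$ an orthonormal basis of $\R^d$ and $\beta_i\ge 0$. Then $\tr(\vec B^p)=\sum_i\beta_i^p$, and expanding the trace in this basis gives $\tr(\vec A^{p-1}\vec B)=\tr(\vec B\,\vec A^{p-1})=\sum_i \beta_i\,(e_i^\top\vec A^{p-1}e_i)$. Since $\beta_i\ge 0$, it therefore suffices to prove the pointwise bound
\begin{equation}
  e_i^\top\vec A^{p-1}e_i \;\ge\; \beta_i^{\,p-1}\qquad\text{for every }i, \label{eq:pointwise-pow}
\end{equation}
as multiplying \eqref{eq:pointwise-pow} by $\beta_i\ge 0$ and summing over $i$ gives $\tr(\vec A^{p-1}\vec B)\ge\sum_i\beta_i^p=\tr(\vec B^p)$.

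To prove \eqref{eq:pointwise-pow} I would argue in two steps. Let $\vec A=\sum_j\mu_j\, f_jf_j^\top$ ($\mu_j\ge0$, $\{f_j\}$ orthonormal) and set $c_{ij}:=(e_i^\top f_j)^2\ge 0$; since $\{f_j\}$ is an orthonormal basis, $\sum_j c_{ij}=\|e_i\|_2^2=1$, so $(c_{ij})_j$ is a probability vector. Applying Jensen's inequality to the convex function $x\mapsto x^{p-1}$,
\begin{equation*}
  e_i^\top\vec A^{p-1}e_i \;=\; \sum_j \mu_j^{\,p-1}\,c_{ij}\;\ge\;\Big(\sum_j\mu_j\, c_{ij}\Big)^{p-1} \;=\;(e_i^\top\vec A\, e_i)^{p-1}.
\end{equation*}
On the other hand, $\vec A-\vec B\succeq 0$ gives $e_i^\top\vec A\, e_i\ge e_i^\top\vec B\, e_i=\beta_i\ge 0$, hence $(e_i^\top\vec A\, e_i)^{p-1}\ge\beta_i^{\,p-1}$; combining the two displays yields \eqref{eq:pointwise-pow}.

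The only step that requires any care is making sure Jensen is applied to an honest probability distribution, which is exactly where orthonormality of $\{f_j\}$ enters; beyond that the argument is routine. (One can alternatively combine operator monotonicity of $t\mapsto t^{(p-1)/p}$ with the Araki--Lieb--Thirring inequality, but the diagonalization argument above is more self-contained.)
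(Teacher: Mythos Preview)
Your proof is correct. The paper does not actually supply a proof of this fact; it merely cites \cite{JamLT20}. Your argument is therefore a self-contained replacement rather than an alternative to anything in the paper. One remark worth making: your key intermediate step, namely $e_i^\top\vec A^{p-1}e_i\ge (e_i^\top\vec A\, e_i)^{p-1}$ via Jensen on the spectral measure of $\vec A$, is exactly \Cref{fact:psd-power} in the paper (proved there by the same argument), so you could shorten your write-up by invoking that fact directly and then using $e_i^\top\vec A\,e_i\ge e_i^\top\vec B\,e_i=\beta_i$ from $\vec B\preceq\vec A$.
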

The following fact shows that a PSD matrix continues to be PSD if both pre-multiplied and post-multiplied:
\begin{fact}\label{fact:PSDness}
  If $\vec A \in \R^{d \times d}$ is a PSD matrix, for any other  $\vec B  \in \R^{d \times d}$ it holds $\vec B \vec A \vec B^\top \succeq 0$.
\end{fact}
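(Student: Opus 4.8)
The plan is to verify the defining inequality of PSD-ness directly from the quadratic form. By definition, showing $\vec B \vec A \vec B^\top \succeq 0$ amounts to showing $x^\top (\vec B \vec A \vec B^\top) x \geq 0$ for every $x \in \R^d$. So I would fix an arbitrary $x \in \R^d$ and set $y := \vec B^\top x \in \R^d$. By associativity of matrix multiplication, $x^\top \vec B \vec A \vec B^\top x = (\vec B^\top x)^\top \vec A (\vec B^\top x) = y^\top \vec A y$, and since $\vec A \succeq 0$ this is nonnegative. As $x$ was arbitrary, the claim follows. I would also note in passing that $\vec B \vec A \vec B^\top$ is symmetric, so the PSD notion applies: $(\vec B \vec A \vec B^\top)^\top = \vec B \vec A^\top \vec B^\top = \vec B \vec A \vec B^\top$ using the symmetry of $\vec A$.

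An alternative route, if one prefers to avoid writing out the substitution, is to use that a PSD matrix has a symmetric PSD square root: write $\vec A = \vec A^{1/2}\vec A^{1/2}$ and set $\vec M := \vec B \vec A^{1/2}$, so that $\vec B \vec A \vec B^\top = \vec M \vec M^\top$, which is PSD since $x^\top \vec M \vec M^\top x = \|\vec M^\top x\|_2^2 \geq 0$. I would present the first argument, as it is shorter and needs no spectral theorem.

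There is essentially no obstacle here; this is a one-line standard fact. The only point meriting minor care is that $\vec B$ is not assumed symmetric, so the transpose must stay in the correct place — the statement is genuinely about $\vec B \vec A \vec B^\top$, and correspondingly the auxiliary vector is $y = \vec B^\top x$ rather than $\vec B x$.
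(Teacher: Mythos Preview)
Your proposal is correct and matches the paper's proof essentially verbatim: the paper also fixes an arbitrary $x$, sets $y := \vec B^\top x$, and observes $x^\top (\vec B \vec A \vec B^\top) x = y^\top \vec A y \geq 0$. Your additional remarks on symmetry and the square-root alternative are fine but not present in the paper's one-line argument.
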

\begin{proof}
  For any vector $x$, denoting $y := \vec B^\top x$, we have that $x^\top (\vec B \vec A \vec B^\top) x = y^\top \vec A y \geq 0$.
\end{proof}
We shall use the following fact to compare $\vec A^m$ for some $m\geq 1$ and $\vec A$ along certain directions:
\begin{fact}
  \label{fact:psd-power}
  Let $\vec A$ be a PSD matrix.
  Then for any unit vector $x$ and $m \in \N$, $x^\top \vec A^m x \geq (x^\top
    \vec A x)^m$.
\end{fact}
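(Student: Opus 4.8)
The plan is to diagonalize $\vec A$ and reduce the inequality to Jensen's inequality for the convex map $t \mapsto t^m$. Since $\vec A$ is symmetric PSD, write its spectral decomposition $\vec A = \sum_{i=1}^d \lambda_i u_i u_i^\top$ with eigenvalues $\lambda_i \geq 0$ and orthonormal eigenvectors $u_1,\dots,u_d$. For the fixed unit vector $x$, set $p_i := (u_i^\top x)^2$; because $\{u_i\}$ is an orthonormal basis and $\|x\|_2 = 1$, we have $p_i \geq 0$ and $\sum_{i} p_i = \|x\|_2^2 = 1$, so $(p_i)_{i\in[d]}$ is a probability vector.

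Next I would express both sides of the claimed inequality in these coordinates. Functional calculus gives $\vec A^m = \sum_i \lambda_i^m u_i u_i^\top$, hence $x^\top \vec A^m x = \sum_i \lambda_i^m p_i$ and, taking $m=1$, $x^\top \vec A x = \sum_i \lambda_i p_i$. Thus the statement to prove is exactly $\sum_i p_i \lambda_i^m \geq \big(\sum_i p_i \lambda_i\big)^m$, i.e., $\E[\Lambda^m] \geq (\E[\Lambda])^m$, where $\Lambda$ denotes the random variable taking value $\lambda_i$ with probability $p_i$.

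Finally, I would invoke Jensen's inequality: for $m \in \N$ the function $\phi(t) = t^m$ is convex on $[0,\infty)$ (its second derivative $m(m-1)t^{m-2}$ is nonnegative there), and $\Lambda$ is supported on $[0,\infty)$ since every $\lambda_i \geq 0$; therefore $\E[\phi(\Lambda)] \geq \phi(\E[\Lambda])$, which is precisely $x^\top \vec A^m x \geq (x^\top \vec A x)^m$. There is no genuine obstacle in this argument; the only two points that deserve a line of care are verifying that $(p_i)$ is truly a probability distribution (which is where the hypothesis $\|x\|_2 = 1$ enters) and noting that $\phi$ need only be convex on the half-line $[0,\infty)$, which is the support of $\Lambda$, rather than on all of $\R$.
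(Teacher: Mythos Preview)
Your proof is correct and is essentially identical to the paper's own argument: both diagonalize $\vec A$, interpret the coefficients $(u_i^\top x)^2$ as a probability distribution over the eigenvalues, and apply Jensen's inequality to the convex map $t\mapsto t^m$ on $[0,\infty)$.
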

\begin{proof}
  By spectral decomposition, $\vec A$ is equal to $\sum_i \lambda_i u_iu_i^\top$
  for $\lambda_i \geq 0$, and orthonormal vectors $u_i$.
  Furthermore, $\vec A^m = \sum_i \lambda_i^m u_iu_i^\top$.
  For any unit vector $x$, the orthonormality of $u_i$'s imply that $\sum_i
    (u_i^\top x)^2 = 1$.

  We now define the non-negative random variable $Z$ that is equal to $\lambda_i$
  with probability $(u_i^\top x)^2$, which is a valid probability assignment
  since it is non-negative and sums up to $1$.
  In this definition, we have that $\E[Z] = \sum_i \lambda_i (u_i^\top x)^2 =
    x^\top \vec A x$.
  Moreover, $\E[Z^m] = \sum_i \lambda_i^m (u_i^\top x)^2= x^\top (\sum_i
    \lambda_i^m u_iu_i^\top)x = x^\top \vec A^m x$.
  Thus, the desired result is equivalent to saying $\E[Z^m] \geq (\E [Z])^m$ for
  the non-negative random variable $Z$, which is satisfied by Jensen's
  inequality.
\end{proof}

We now turn our attention to Schatten norms: for a symmetric matrix $\vec A$, we have that $\|\vec A\|_2
  = \|\vec A\|_\fr$, $\|\vec A\|_\infty = \|\vec A\|_\op$, and for a PSD matrix $\vec A$,
$\|\vec A\|_1 = \tr(\vec A)$.
Schatten norms satisfy H{\" o}lder inequality for trace inner product:
\begin{fact}[H{\"o}lder Inequality for Schatten Norms]
  \label{fact:holder-schatten}
  Let $\vec A,\vec B$ be two matrices with dimensions $m \times n$
  Let $p \geq 1$.
  Define $q = \frac{p}{p-1}$ if $p > 1$ and $\infty$ otherwise, then 
  $ \langle \vec A, \vec B \rangle \leq \|\vec A \|_p \|\vec  B\|_q$. In particular, $\langle \vec A, \vec B \rangle \leq \|\vec A\|_\fr\|\vec B\|_\fr $ and $\langle \vec A, \vec B \rangle \leq \|\vec A\|_\op \|\vec B\|_1 $.
\end{fact}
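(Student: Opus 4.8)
The plan is to reduce the matrix inequality to the classical scalar H\"older inequality applied to the singular values of $\vec A$ and $\vec B$. Write $\sigma_1(\vec A) \ge \sigma_2(\vec A) \ge \cdots$ and $\sigma_1(\vec B) \ge \sigma_2(\vec B) \ge \cdots$ for the singular values in non-increasing order; by definition $\|\vec A\|_p = (\sum_i \sigma_i(\vec A)^p)^{1/p}$ and likewise for $\vec B$ with exponent $q$. The scalar H\"older inequality immediately gives $\sum_i \sigma_i(\vec A)\sigma_i(\vec B) \le \|\vec A\|_p\,\|\vec B\|_q$ for conjugate exponents $p,q$ (with the usual convention when $p=1,\ q=\infty$). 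Hence it suffices to establish the matrix-to-sequence comparison $\langle \vec A,\vec B\rangle = \tr(\vec A^\top \vec B) \le \sum_i \sigma_i(\vec A)\sigma_i(\vec B)$, which is exactly von Neumann's trace inequality.

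\textbf{Proving von Neumann's trace inequality.} I would start from singular value decompositions $\vec A = \vec U_A \vec D_A \vec V_A^\top$ and $\vec B = \vec U_B \vec D_B \vec V_B^\top$, with $\vec D_A,\vec D_B$ diagonal carrying the $\sigma_i$'s and $\vec U_A,\vec V_A,\vec U_B,\vec V_B$ orthogonal. Then $\tr(\vec A^\top \vec B) = \tr(\vec D_A \vec W \vec D_B \vec Z)$, where $\vec W := \vec U_A^\top \vec U_B$ and $\vec Z := \vec V_B^\top \vec V_A$ are orthogonal. Expanding the trace, $\tr(\vec A^\top \vec B) = \sum_{i,j}\sigma_i(\vec A)\,\sigma_j(\vec B)\,\vec W_{ij}\vec Z_{ji}$. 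Setting $\vec M_{ij} := \vec W_{ij}\vec Z_{ji}$, the Cauchy--Schwarz inequality applied to rows and columns of the orthogonal matrices $\vec W,\vec Z$ shows $\sum_j |\vec M_{ij}| \le 1$ for every $i$ and $\sum_i |\vec M_{ij}| \le 1$ for every $j$; i.e.\ $|\vec M|$ is doubly substochastic. A rearrangement argument then finishes: the linear functional $\vec M \mapsto \sum_{i,j}\sigma_i(\vec A)\sigma_j(\vec B)\vec M_{ij}$ is maximized over the convex set of doubly substochastic matrices at the identity (by monotonicity of both $\sigma$-sequences), for instance by Abel summation over the ordered singular values, or by appealing to Birkhoff's theorem that the doubly (sub)stochastic matrices are the convex hull of (sub)permutation matrices. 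This gives $\tr(\vec A^\top \vec B) \le \sum_i \sigma_i(\vec A)\sigma_i(\vec B)$. Alternatively, von Neumann's inequality can simply be invoked from a standard reference.

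\textbf{Special cases and main obstacle.} The two highlighted instances follow by choosing exponents: $p=q=2$ yields $\langle \vec A,\vec B\rangle \le \|\vec A\|_2\|\vec B\|_2 = \|\vec A\|_\fr\|\vec B\|_\fr$ (this case is also immediate from Cauchy--Schwarz with respect to the Frobenius inner product, bypassing von Neumann), and $p=1,\ q=\infty$ yields $\langle \vec A,\vec B\rangle \le \|\vec A\|_1\|\vec B\|_\op$; swapping the roles of $\vec A$ and $\vec B$ via $\tr(\vec A^\top \vec B) = \tr(\vec B^\top \vec A)$ then also gives $\langle \vec A,\vec B\rangle \le \|\vec A\|_\op\|\vec B\|_1$, as asserted. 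The only genuinely nontrivial ingredient is the rearrangement step inside von Neumann's trace inequality; the rest is bookkeeping, so if brevity is desired I would cite von Neumann and present only the two-line deduction from scalar H\"older.
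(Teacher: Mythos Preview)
Your argument is correct and is the standard route to this inequality: von Neumann's trace inequality followed by scalar H\"older on the singular-value sequences. The paper, however, does not prove this statement at all; it is recorded as a ``Fact'' among the linear-algebraic preliminaries and is used as a black box throughout, so there is no paper proof to compare against. Your write-up is more than sufficient for what the paper needs; if anything, the last suggestion in your proposal---cite von Neumann and give the two-line deduction---matches the spirit in which the paper treats it.
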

Moreover, Schatten-$p$ norms decrease with $p$ and are close to each other if
$p$ is large.
\begin{fact}
  \label{fact:normReln}
  If $\vec A \in \R^{d\times d}$ is symmetric and $p \geq 1$, the Schatten norms
  of $\vec A$ satisfy the following: 
  $\|\vec A\|_{p+1} \leq \|\vec A\|_p \leq \|\vec A\|_{p+1}d^{\frac{1}{p(p+1)}}.
  $
\end{fact}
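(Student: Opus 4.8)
The plan is to reduce the matrix statement to an elementary inequality about $\ell_p$-norms of a nonnegative vector. Writing $\lambda_1,\dots,\lambda_d$ for the eigenvalues of the symmetric matrix $\vec A$ and setting $x_i := |\lambda_i|$, the definition of the Schatten norm gives $\|\vec A\|_p = \|x\|_p$, where $\|x\|_p := (\sum_{i} x_i^p)^{1/p}$ denotes the usual $\ell_p$-norm of the vector $x = (x_1,\dots,x_d) \in \R_{\ge 0}^d$. Hence it suffices to prove $\|x\|_{p+1}\le \|x\|_p \le d^{1/(p(p+1))}\,\|x\|_{p+1}$ for every nonnegative $x$ and every $p \ge 1$.

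For the left inequality I would use the standard homogeneity-and-truncation argument: both sides are $1$-homogeneous in $x$, so (the case $x=0$ being trivial) we may rescale so that $\|x\|_p = 1$; then $x_i \le 1$ for every $i$, so $x_i^{p+1}\le x_i^{p}$ since $p+1 > p$, and summing yields $\|x\|_{p+1}^{p+1} \le \|x\|_p^{p} = 1$, i.e.\ $\|x\|_{p+1}\le 1 = \|x\|_p$.

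For the right inequality I would apply H\"older's inequality to the termwise product $x_i^{p}\cdot 1$ with the conjugate exponents $\tfrac{p+1}{p}$ and $p+1$ (indeed $\tfrac{p}{p+1}+\tfrac{1}{p+1}=1$):
\[
  \sum_i x_i^{p} = \sum_i x_i^{p}\cdot 1 \le \Big(\sum_i x_i^{p+1}\Big)^{\frac{p}{p+1}}\Big(\sum_i 1\Big)^{\frac{1}{p+1}} = \|x\|_{p+1}^{\,p}\; d^{\frac{1}{p+1}}.
\]
Taking $p$-th roots gives $\|x\|_p \le \|x\|_{p+1}\, d^{\frac{1}{p(p+1)}}$, since $\tfrac{1}{p(p+1)} = \tfrac1p - \tfrac1{p+1}$. (Equivalently, the same bound follows from Jensen's inequality applied to the concave map $t\mapsto t^{p/(p+1)}$ with uniform weights $1/d$.)

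No genuine obstacle arises here; the only points worth a word of care are the identification $\|\vec A\|_p = \|x\|_p$ with $x_i = |\lambda_i|$ (valid because a symmetric $\vec A$ has singular values $|\lambda_i|$, so its Schatten $p$-norm is exactly the $\ell_p$-norm of this vector), and checking that the two H\"older exponents used above are conjugate.
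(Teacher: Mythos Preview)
Your proof is correct. The paper states this as a standard fact without proof, so there is nothing to compare against; your reduction to $\ell_p$-norms of the eigenvalue vector together with the homogeneity argument for monotonicity and H\"older's inequality for the reverse bound is exactly the expected verification.
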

In particular, we will frequently use the following special case:
\begin{fact}\label{fact:fr-trace}
  If $\vec A$ is a PSD matrix,  $\|\vec A \|_\fr^2 \leq \tr(\vec A)^2$.
\end{fact}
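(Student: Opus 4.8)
The plan is to diagonalize. Since $\vec A$ is symmetric and PSD, write its spectral decomposition $\vec A = \sum_{i=1}^d \lambda_i u_i u_i^\top$ with orthonormal $u_i$ and eigenvalues $\lambda_i \geq 0$. Then $\|\vec A\|_\fr^2 = \sum_{i=1}^d \lambda_i^2$ and $\tr(\vec A) = \sum_{i=1}^d \lambda_i$, so the claimed inequality becomes $\sum_i \lambda_i^2 \leq \big(\sum_i \lambda_i\big)^2$. Expanding the right-hand side gives $\sum_i \lambda_i^2 + \sum_{i \neq j} \lambda_i \lambda_j$, and all the cross terms $\lambda_i \lambda_j$ are nonnegative precisely because $\vec A \succeq 0$; discarding them yields the bound. (As an alternative one-line route, one may instead invoke \Cref{fact:normReln} with $p = 1$, which gives $\|\vec A\|_2 \leq \|\vec A\|_1$; recalling $\|\vec A\|_2 = \|\vec A\|_\fr$ and, for PSD $\vec A$, $\|\vec A\|_1 = \tr(\vec A)$, and squaring, gives the same conclusion.)

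There is no real obstacle here; the statement is elementary. The only point worth flagging is that positive semidefiniteness is genuinely needed — the cross-term argument breaks without it, as seen already for $\vec A = \mathrm{diag}(1,-1)$, where $\|\vec A\|_\fr^2 = 2 > 0 = \tr(\vec A)^2$ — so the hypothesis $\vec A \succeq 0$ should be carried through the proof explicitly.
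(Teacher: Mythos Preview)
Your proposal is correct; in fact, your parenthetical alternative route via \Cref{fact:normReln} is exactly the paper's own proof. Your primary eigenvalue argument is just the elementary unpacking of that same Schatten-norm inequality, so there is no substantive difference.
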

\begin{proof}
Note that $\|\vec A\|_\fr = \|\vec A\|_2$ and $\tr(\vec A) = \|\vec A\|_1$ since $\vec A$ is PSD. The result then follows by \Cref{fact:normReln}. 
\end{proof}

\paragraph{Probability Facts}
The quadratic polynomial of Guassians will play an important role in our analysis, i.e., $z^\top \vec A z$ for $z  \sim \cN(0, \bI)$, and we will repeatedly use the following fact.
\begin{fact}
  \label{fact:var-quadratic}
  For any symmetric $d\times d$ matrix $\vec A$, we have $\Var_{z \sim \cN(0,\bI)}[z^\top
      \vec A z]=2 \|\vec A \|_\fr^2$.
  \label{fact:quadratics-approximation}
  If $\vec A$ is a PSD matrix, then for any $\beta > 0$, it holds 
  $\pr_{z \sim \cN(0,\bI)}[ z^\top \vec A z \geq \beta \tr(\vec A) ] \geq
    1-\sqrt{e \beta}$.%
\end{fact}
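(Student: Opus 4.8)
The plan is to reduce both statements to a one-dimensional computation via the spectral decomposition. Write $\vec A = \sum_{i=1}^d \lambda_i u_i u_i^\top$ for an orthonormal eigenbasis $\{u_i\}$ and real eigenvalues $\lambda_i$, which are non-negative when $\vec A$ is PSD. Since the standard Gaussian is rotationally invariant, the projections $g_i := u_i^\top z$ are i.i.d.\ $\cN(0,1)$, and therefore $z^\top \vec A z = \sum_{i=1}^d \lambda_i g_i^2$.

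For the variance identity, I would use independence of the $g_i$ together with the elementary fact $\Var[g^2] = \E[g^4] - (\E[g^2])^2 = 3 - 1 = 2$ for $g \sim \cN(0,1)$, to get $\Var_{z}[z^\top \vec A z] = \sum_i \lambda_i^2 \Var[g_i^2] = 2\sum_i \lambda_i^2 = 2\|\vec A\|_\fr^2$, where the last equality is the definition of the Frobenius (Schatten-$2$) norm.

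For the tail bound, assume $\vec A$ is PSD, so $\lambda_i \geq 0$ and $\E_z[z^\top \vec A z] = \tr(\vec A) = \sum_i \lambda_i$. The claim is vacuous if $\beta \geq 1$ (the right-hand side is nonpositive) or if $\vec A = 0$, so assume $\beta \in (0,1)$ and $\tr(\vec A) > 0$. I would control the lower tail with a Chernoff-type argument: for every $t \geq 0$,
\[
\pr_z\!\left[z^\top \vec A z < \beta \tr(\vec A)\right]
\leq e^{t\beta \tr(\vec A)}\, \E_z\!\left[e^{-t z^\top \vec A z}\right]
= e^{t\beta \tr(\vec A)} \prod_{i=1}^d (1 + 2t\lambda_i)^{-1/2},
\]
using the chi-square moment generating function $\E[e^{-s g^2}] = (1+2s)^{-1/2}$, valid for $s \geq 0$. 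Expanding the product and discarding the non-negative higher-order terms gives $\prod_i (1+2t\lambda_i) \geq 1 + 2t\tr(\vec A)$, so the right-hand side is at most $e^{t\beta \tr(\vec A)}(1+2t\tr(\vec A))^{-1/2}$. Taking $t = (1-\beta)/(2\beta\,\tr(\vec A)) > 0$ makes this expression equal to $e^{(1-\beta)/2}\sqrt{\beta} \leq \sqrt{e}\sqrt{\beta} = \sqrt{e\beta}$, and complementing yields the stated lower bound.

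I do not expect a genuine obstacle here; the argument is short and self-contained. The only place warranting care is the constant in the tail bound: one should check that the expansion inequality $\prod_i(1+2t\lambda_i) \geq 1 + 2t\tr(\vec A)$ together with the indicated choice of $t$ produces exactly the factor $\sqrt{e}$ (rather than something weaker), and that the degenerate cases $\beta \geq 1$ and $\vec A = 0$ are handled separately.
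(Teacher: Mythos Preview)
Your proof is correct and self-contained. The paper does not actually prove this statement: it is listed among the preliminary ``Probability Facts'' and simply asserted without argument. Your spectral-decomposition reduction for the variance identity and the Chernoff/MGF argument for the lower-tail bound are both standard and valid, and the constant check at $t=(1-\beta)/(2\beta\tr(\vec A))$ does indeed yield exactly $\sqrt{e\beta}$ as you computed.
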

For the streaming section of the paper, we will use the following result on the concentration of the empirical second moment matrix:
\begin{fact}[see, e.g, \cite{Ver10}] \label{fact:ver_cov}
Consider a distribution $D$ on $\R^d$ that is supported in an $\ell_2$-ball of radius $R$ from the origin. 
Let $\vec \Sigma$ be its second moment matrix and  
$\vec{\Sigma}_N=(1/N) \sum_{i=1}^N X_iX_i^\top$  be the empirical second moment matrix 
using $N$ i.i.d.\ samples $X_i \sim D$. There is a constant $C$ such that for any $0<\eps<1$ 
and $0<\tau <1$, if $N > C \eps^{-2} \| \vec \Sigma \|_2^{-1} R^2  \log(d/\tau)$, we have that 
$\| \vec \Sigma - \vec \Sigma_N \|_\op \leq \eps \| \vec \Sigma \|_\op$, 
with probability at least $1-\tau$.
\end{fact}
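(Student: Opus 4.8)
The plan is to view $\vec\Sigma_N - \vec\Sigma = \frac1N\sum_{i=1}^N \vec Z_i$ as a normalized sum of the i.i.d.\ mean-zero symmetric matrices $\vec Z_i := X_iX_i^\top - \vec\Sigma$, and to apply a matrix Bernstein (non-commutative Bernstein) inequality. The hypothesis that $D$ is supported in a ball of radius $R$ is exactly what makes this clean: it forces the $\vec Z_i$ to be uniformly bounded in operator norm.

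First I would record the two deterministic inputs Bernstein needs. Since $\|X_iX_i^\top\|_\op = \|X_i\|_2^2 \le R^2$ almost surely, and $\vec\Sigma = \E[X_iX_i^\top] \preceq R^2\bI$ so that $\|\vec\Sigma\|_\op \le R^2$ as well, we get the uniform bound $\|\vec Z_i\|_\op \le 2R^2 =: M$. For the variance proxy, compute
\[
\E[\vec Z_i^2] = \E[(X_iX_i^\top)^2] - \vec\Sigma^2 = \E[\|X_i\|_2^2\, X_iX_i^\top] - \vec\Sigma^2 \preceq R^2\,\E[X_iX_i^\top] = R^2\vec\Sigma,
\]
where the inequality uses that $(R^2 - \|X_i\|_2^2)X_iX_i^\top \succeq 0$ pointwise and that $\vec\Sigma^2 \succeq 0$. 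Hence the matrix variance of the sum satisfies $\bigl\|\sum_{i=1}^N \E[\vec Z_i^2]\bigr\|_\op \le N R^2 \|\vec\Sigma\|_\op =: \nu$.

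Now I would invoke matrix Bernstein: for every $t \ge 0$,
\[
\pr\Bigl[\bigl\|\textstyle\sum_{i=1}^N \vec Z_i\bigr\|_\op \ge t\Bigr] \le 2d\exp\!\left(\frac{-t^2/2}{\nu + Mt/3}\right).
\]
Choosing $t = N\eps\|\vec\Sigma\|_\op$ makes the event on the left exactly $\{\|\vec\Sigma_N - \vec\Sigma\|_\op \ge \eps\|\vec\Sigma\|_\op\}$. For $\eps < 1$ we have $Mt/3 = \tfrac23 N\eps R^2\|\vec\Sigma\|_\op \le \nu$, so $\nu + Mt/3 \le 2\nu$ and the exponent is at most $-t^2/(4\nu) = -N\eps^2\|\vec\Sigma\|_\op/(4R^2)$. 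Thus the failure probability is at most $2d\exp\!\bigl(-N\eps^2\|\vec\Sigma\|_\op/(4R^2)\bigr)$, which is $\le \tau$ once $N \ge \tfrac{4R^2}{\eps^2\|\vec\Sigma\|_\op}\log(2d/\tau)$; absorbing the constant, this is the claimed condition (with $\|\vec\Sigma\|_2$ in the statement read as the spectral norm $\|\vec\Sigma\|_\op$, consistent with the conclusion).

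I do not expect a genuine obstacle: this is a textbook matrix-concentration estimate, and the only non-boilerplate step is the bound $\E[(XX^\top)^2] \preceq R^2\vec\Sigma$, which is immediate from boundedness of the support. If one prefers to avoid quoting matrix Bernstein, the same conclusion follows from a standard $\tfrac14$-net argument over the unit sphere (a net of size $e^{O(d)}$) combined with the scalar Bernstein inequality applied to $v^\top(\vec\Sigma_N - \vec\Sigma)v = \frac1N\sum_i\bigl((v^\top X_i)^2 - v^\top\vec\Sigma v\bigr)$: each summand is at most $R^2$ in absolute value since $(v^\top X_i)^2 \le \|X_i\|_2^2 \le R^2$, and has variance at most $\E[(v^\top X_i)^4] \le R^2\,(v^\top\vec\Sigma v) \le R^2\|\vec\Sigma\|_\op$; the union bound over the net contributes the extra $O(d)$ factor inside the logarithm. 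Either route yields the stated sample complexity.
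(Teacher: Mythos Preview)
Your proof is correct and is the standard argument. The paper does not give its own proof of this fact at all: it is stated as a \emph{Fact} with a citation to \cite{Ver10} and used as a black box, so there is nothing to compare against beyond noting that your matrix-Bernstein route (and the $\varepsilon$-net alternative you sketch) is precisely the kind of argument found in that reference.
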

Finally, we record the guarantee of power iteration:
\begin{fact}[Power iteration]
  \label{fact:power-iter}
  For any PSD matrix $\vec A \in \R^{d \times d}$ and $\eps,\delta \in (0,1)$, if $p
    > \frac{C }{\eps}\log(d/(\eps \delta))$ for a sufficiently large constant $C$,
  and $u:= \vec A^p z$ for $z \sim \cN(0,\bI)$, then
  \begin{equation*}
    \pr \left[ u^\top \vec A u/\|u\|_2^2 \geq (1-\eps)\|\vec A\|_\op
      \right] \geq 1- \delta \;.
  \end{equation*}
\end{fact}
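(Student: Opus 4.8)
The plan is to reduce the claim to a one-dimensional Gaussian anti-concentration estimate plus a crude norm bound, by diagonalizing $\vec A$. First I would write $\vec A = \sum_{i=1}^d \lambda_i v_i v_i^\top$ with $\lambda_1 \geq \cdots \geq \lambda_d \geq 0$ and $\{v_i\}$ orthonormal, assuming $\lambda_1 = \|\vec A\|_\op > 0$ (otherwise $\vec A = 0$ and there is nothing to prove). Setting $g_i := v_i^\top z$, rotational invariance of the standard Gaussian makes the $g_i$ i.i.d.\ $\cN(0,1)$, and since $u = \vec A^p z = \sum_i \lambda_i^p g_i v_i$ one gets the exact identity
\[
  \frac{u^\top \vec A u}{\|u\|_2^2} = \frac{\sum_i \lambda_i^{2p+1} g_i^2}{\sum_i \lambda_i^{2p} g_i^2}\,.
\]

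Next I would threshold the spectrum at $(1-\eps/2)\lambda_1$: let $S := \{i : \lambda_i \geq (1-\eps/2)\lambda_1\}$ (so $1 \in S$), and write $A_S := \sum_{i\in S}\lambda_i^{2p} g_i^2$ and $A_{\bar S} := \sum_{i\notin S}\lambda_i^{2p} g_i^2$. Dropping the non-negative $\bar S$-terms from the numerator and bounding $\lambda_i \geq (1-\eps/2)\lambda_1$ on $S$ gives $u^\top \vec A u/\|u\|_2^2 \geq (1-\eps/2)\lambda_1 \cdot A_S/(A_S + A_{\bar S})$. Hence it suffices to show that $A_{\bar S}/A_S \leq \eps/2$ with probability at least $1-\delta$, because then $A_S/(A_S+A_{\bar S}) \geq 1/(1+\eps/2) \geq 1-\eps/2$, and the Rayleigh quotient is at least $(1-\eps/2)^2\lambda_1 \geq (1-\eps)\|\vec A\|_\op$. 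To bound the ratio I would use that every $\lambda_i$ with $i\notin S$ lies below the threshold, so $A_{\bar S} \leq ((1-\eps/2)\lambda_1)^{2p}\sum_i g_i^2 = ((1-\eps/2)\lambda_1)^{2p}\|z\|_2^2$ (the last step since $\{v_i\}$ is an orthonormal basis), while $A_S \geq \lambda_1^{2p} g_1^2$ by keeping only the $i=1$ term; combined with $(1-\eps/2)^{2p} \leq e^{-\eps p}$ this yields $A_{\bar S}/A_S \leq e^{-\eps p}\,\|z\|_2^2/g_1^2$.

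Finally I would control $\|z\|_2^2/g_1^2$: Markov's inequality applied to $\E\|z\|_2^2 = d$ gives $\|z\|_2^2 \leq 3d/\delta$ outside an event of probability $\delta/3$, and Gaussian anti-concentration gives $\pr[g_1^2 < \delta^2/9] = \pr[|g_1| < \delta/3] < \delta/3$; a union bound then makes $\|z\|_2^2/g_1^2 \leq 27d/\delta^3$ with probability at least $1-\delta$, so $A_{\bar S}/A_S \leq 27d\, e^{-\eps p}/\delta^3$, which is at most $\eps/2$ once $p \geq \frac{C}{\eps}\log(d/(\eps\delta))$ for a sufficiently large absolute constant $C$ (a routine computation, taking $d \geq 2$ without loss of generality since $d = 1$ is trivial). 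I do not expect a genuine obstacle; the only mildly delicate point is ensuring that the random vector $z$ has a non-negligible component along the leading eigenvector of $\vec A$, which is exactly what the anti-concentration bound on $g_1$ provides, while eigenvalue multiplicity or a nearly flat spectrum only shrinks $A_{\bar S}$ and makes the estimate easier.
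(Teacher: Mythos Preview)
Your proof is correct. The paper states this as a standard fact without proof, so there is nothing to compare against; your argument is the usual one---diagonalize, split the spectrum at the threshold $(1-\eps/2)\lambda_1$, and use Gaussian anti-concentration on the top-eigenvector coordinate together with a crude Markov bound on $\|z\|_2^2$ to control the ratio $A_{\bar S}/A_S$.
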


\subsection{Stability Condition on Inliers}\label{sec:stability-main-body}

Recall that in our notation we use $\s_D=\E_{X\sim D}[XX^\top]$ for the
second moment of a distribution $D$, and $D_w(x):=
  D(x)w(x)/\int D(x)w(x)$ for the distribution $D$ re-weighted by $w(x)$.
Our algorithm will rely on the following condition. \looseness=-1

\begin{definition}[Stability Condition]
  \label{def:stability}
  Let $0<\eps<1/2$ and $\eps \leq \gamma < 1$.
  A distribution $G$ on $\R^d$ is called $(\eps,\gamma)$-stable with respect to a
  PSD matrix $\s \in \R^{d \times d}$, if for every weight function $w : \R^d \to
    [0,1]$ with $\E_{X \sim G}[w(X)] \geq 1-\eps$, the weighted second moment
  matrix,
  $\os_{G_w}:= \E_{X \sim G}[w(X) XX^\top]/\E_{X \sim G}[w(X)]$, satisfies that
    $(1 - \gamma) \vec \s \preceq \os_{G_w} \preceq (1 + \gamma) \vec \s \;$.
\end{definition}
In particular, the second moment matrix is \emph{stable} under deletion of $\epsilon$-fraction. 
Some remarks are in order:
(i) The definition above is 
intended for distributions with zero mean; This can be assumed without loss of generality
since we can always work with pairwise differences, 
(ii) The uniform distribution over a set of
  $n=Cd/(\eps^2\log(1/\eps))$ i.i.d.\ samples from a $O(1)$-subgaussian distribution (cf.\ \Cref{def:subgaussianity}) with covariance $\s$ is w.h.p.\ $(\eps,\gamma)$-stable with $\gamma=O(\eps\log(1/\eps))$~  \cite[Lemma 11]{JamLT20}, and
  (iii) As stated in \Cref{thm:near-linear-main}, the algorithm takes as input an $\eps$-corrupted set of samples from a subgaussian distribution.
  However, for notational convenience, we will consider the input to be a distribution $P{=}(1{-}\eps)G{+}\eps B$ where $G$ is an (unknown) stable distribution and $B$ is arbitrary.  $P$ is meant to be the uniform distribution over the $\eps$-corrupted set of samples, $G$ will be the part from the remaining inliers, and $B$ that of outliers.\footnote{To be precise, the claim mentioned in (iii) needs a proof; see,
  e.g., Lemma 2.12 in \cite{DKPP22}.}
  We emphasize that the identity of outliers is unknown to the algorithm.

Our algorithm will remove points iteratively, so we will use binary weights $w(x) \in \{0,1\}$ to distinguish between the points that we keep $(w(x) = 1)$ and the ones that have been removed $(w(x) = 0)$. 
Throughout the run of our algorithm, we will ensure that the weights satisfy the following setting:
\begin{setting}
  \label{setting:main}
  Let $0<20\eps\leq \gamma < \gamma_0$ for a small constant
  $\gamma_0$.
  Let  $P$ be a mixture distribution $P=(1-\eps)G + \eps B$ on $\R^d$, where $G$ is
  $(20\eps,\gamma)$-stable distribution with respect to a PSD $d\times d$ matrix
  $\s$, and $\vec B$ is arbitrary.
  Let $w:\R^d \to \{0,1\}$ be a weight function with
  $\E_{X \sim G}[w(X)] \geq 1-3\eps$.
\end{setting}
As we will use projections of points to filter outliers, we need the following implication of stability, concerning the average value of these projections (and thresholded projections), as well as their quantiles and their robust estimates.\

\begin{restatable}{lemma}{StabImplications}
  \label{lem:stability-implications}
  In \Cref{setting:main}, define the following: (i) $g(x) := \|\vec U x\|_2^2$ for some $\vec U \in \R^{m \times d}$, (ii) $L$ be the top $3\eps$-quantile of $g(x)$ under $P_w$ ($P$ re-weighted by $w$) and $S:=\{ x \; : \; x > L\}$, (iii) $\widehat\sigma := \E_{X \sim P}[w(X) g(X) \Ind(g(X) \leq L )]$.
  Then,
  \begin{enumerate}[label = (\roman*)]%
     \item \label{it:stability_multidim} $\left|\E\limits_{X
              \sim G}[w(X)g(X)]-\left\langle\vec U^\top \vec U, \s \right\rangle\right| \leq 2\gamma
            \left\langle\vec U^\top \vec U, \s \right\rangle$,

    \item \label{it:goodscores} $\E_{X \sim G}[w(X)g(X)\1\{X \in S \}] \leq 2.35 \gamma
            \left\langle \bU^\top \bU, \s \right\rangle$,

    \item \label{it:bound-quantile} $L\leq 1.65(\gamma/\eps) \langle \bU^\top \bU,
            \s \rangle $,

    \item \label{it:proj-var} $\left|\widehat{\sigma}-  \left\langle \bU^\top \bU, \s  \right\rangle  \right| \leq 4\gamma \left\langle \bU^\top \bU, \s  \right\rangle $,

    \item  \label{it:initialization} $\pr_{X \sim G}\left[ \|X\|_2^2 > 2 (d/\eps)\|\s\|_\op \right] \leq \eps$.
  \end{enumerate}
\end{restatable}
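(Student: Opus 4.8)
The plan is to read off all five items from the stability condition (\Cref{def:stability}), applied to a short list of binary weight functions, together with the monotonicity of the trace inner product under the PSD order (\Cref{fact:trace_PSD_ineq}) and Markov's inequality. The workhorse identity is that, for any weight function $w$ and with $\os_{G_w}:=\E_{X\sim G}[w(X)XX^\top]/\E_{X\sim G}[w(X)]$,
\[
\E_{X\sim G}[w(X)g(X)] \;=\; \langle \bU^\top\bU, \E_{X\sim G}[w(X)XX^\top]\rangle \;=\; \E_{X\sim G}[w(X)]\cdot\langle \bU^\top\bU, \os_{G_w}\rangle ,
\]
which is just $\|\bU x\|_2^2=\langle \bU^\top\bU,xx^\top\rangle$. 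Throughout I use $\bU^\top\bU\succeq 0$ (so a PSD sandwich $(1\pm\gamma)\s$ survives under $\langle \bU^\top\bU,\cdot\rangle$ via \Cref{fact:trace_PSD_ineq}), the bounds $\E_{X\sim G}[w(X)]\in[1-3\eps,1]$ from \Cref{setting:main}, and the slack $20\eps\le\gamma$.

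For \ref{it:stability_multidim}, apply stability to $w$ itself: since $\E_{X\sim G}[w(X)]\ge1-3\eps\ge1-20\eps$ we get $(1-\gamma)\s\preceq\os_{G_w}\preceq(1+\gamma)\s$, so the identity and \Cref{fact:trace_PSD_ineq} place $\E_{X\sim G}[w(X)g(X)]$ in the interval $[(1-3\eps)(1-\gamma),\,1+\gamma]\cdot\langle\bU^\top\bU,\s\rangle$, which is inside $(1\pm2\gamma)\langle\bU^\top\bU,\s\rangle$ because $3\eps\le\gamma/20$. For \ref{it:goodscores}, set $w'(x):=w(x)\1\{g(x)\le L\}$ and first control the deleted mass: since $\pr_{X\sim P_w}[g(X)>L]\le 3\eps$ and $\E_{X\sim P}[w(X)]\le1$, we have $\E_{X\sim P}[w(X)\1\{g(X)>L\}]\le 3\eps$, and as the $B$-component only inflates this, $(1-\eps)\E_{X\sim G}[w(X)\1\{g(X)>L\}]\le 3\eps$; hence $\E_{X\sim G}[w(X)\1\{g(X)>L\}]\le 4\eps$ and $\E_{X\sim G}[w'(X)]\ge1-7\eps\ge1-20\eps$, so stability applies to $w'$ as well. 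Then
\[
\E_{X\sim G}[w(X)g(X)\1\{X\in S\}] = \E_{X\sim G}[w(X)g(X)]-\E_{X\sim G}[w'(X)g(X)] \le (1+\gamma)\langle\bU^\top\bU,\s\rangle-(1-7\eps)(1-\gamma)\langle\bU^\top\bU,\s\rangle ,
\]
using $\E_{X\sim G}[w(X)]\le1$, $\os_{G_w}\preceq(1+\gamma)\s$ for the first term and $\E_{X\sim G}[w'(X)]\ge1-7\eps$, $\os_{G_{w'}}\succeq(1-\gamma)\s$ for the second; the right side is $(2\gamma+7\eps-7\eps\gamma)\langle\bU^\top\bU,\s\rangle\le 2.35\gamma\langle\bU^\top\bU,\s\rangle$ since $7\eps\le 0.35\gamma$.

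For \ref{it:bound-quantile}, run this in reverse: by definition of the top $3\eps$-quantile, at least a $3\eps$-fraction of $P_w$'s mass has $g$-value $\ge L$ (equivalently $>L$, up to ties, which are immaterial for the discrete input the algorithm sees), and the $\eps$-fraction of outliers can account for at most $\eps$ of it, so $\E_{X\sim G}[w(X)\1\{g(X)>L\}]\ge 2\eps(1-O(\eps))$. Combining $\E_{X\sim G}[w(X)g(X)\1\{g(X)>L\}]\ge L\cdot\E_{X\sim G}[w(X)\1\{g(X)>L\}]$ with the bound of \ref{it:goodscores} on the left side gives $2\eps(1-O(\eps))\,L\le 2.35\gamma\langle\bU^\top\bU,\s\rangle$, hence $L\le 1.65(\gamma/\eps)\langle\bU^\top\bU,\s\rangle$ once $\eps$ is small enough relative to the absolute constants. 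For \ref{it:proj-var}, decompose $\widehat\sigma=\E_{X\sim P}[w(X)g(X)\1\{g(X)\le L\}]$ along $P=(1-\eps)G+\eps B$: the $G$-part equals $(1-\eps)\E_{X\sim G}[w'(X)g(X)]$, which stability on $w'$ together with \ref{it:stability_multidim} confines to $(1\pm O(\gamma))\langle\bU^\top\bU,\s\rangle$; the $B$-part lies in $[0,\eps L]$, and $\eps L\le1.65\gamma\langle\bU^\top\bU,\s\rangle$ by \ref{it:bound-quantile}. Adding these and tracking constants gives $\widehat\sigma\in(1\pm4\gamma)\langle\bU^\top\bU,\s\rangle$. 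Finally, \ref{it:initialization} is the easiest: stability with $w\equiv1$ gives $\E_{X\sim G}[\|X\|_2^2]=\tr(\os_G)\le(1+\gamma)\tr(\s)\le 2d\|\s\|_\op$ (using $\tr(\s)\le d\|\s\|_\op$ and $\gamma<1$), and Markov's inequality on $\|X\|_2^2$ finishes it.

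The one genuinely delicate step is \ref{it:bound-quantile}, the only place needing a two-sided handle on $L$: a lower bound on the mass of $G$ above $L$ (crucially using that the planted $\eps$-fraction cannot by itself realize the top-$3\eps$ quantile), paired with the stability-based impossibility of deleting that much mass from $G_w$ when all of it sits at large $g$-value. Everything else is bookkeeping — chasing the $(1\pm\gamma)$ and $(1-O(\eps))$ factors tightly enough to land the constants $2.35$, $1.65$, $4$, and handling ties in the $3\eps$-quantile, which only matters for atoms and is a non-issue for the empirical distributions arising in the algorithm.
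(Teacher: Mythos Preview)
Your proposal is correct and follows essentially the same approach as the paper: each item is obtained by applying the stability condition to the weight functions $w$ and $w'(x)=w(x)\1\{g(x)\le L\}$, together with \Cref{fact:trace_PSD_ineq} and the identity $\E_G[w(X)g(X)]=\E_G[w(X)]\langle\bU^\top\bU,\os_{G_w}\rangle$. The only stylistic difference is in \ref{it:bound-quantile}, where the paper bounds $L$ via $L\le\E_{G_{w''}}[g(X)]$ for $w''(x)=w(x)\1\{g(x)>L\}$ (``minimum $\le$ average'') while you use the equivalent inequality $\E_G[w(X)g(X)\1\{g(X)>L\}]\ge L\cdot\E_G[w(X)\1\{g(X)>L\}]$; the arithmetic and constants come out the same.
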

As discussed earlier, the goal is to filter out points until a
top eigenvector $u$ of the data set's empirical second moment $\os_{P_w}$
approximately maximizes the true variance.
As in previous work, an easy way to detect when this happens is by comparing
the empirical variance along $u$ to the true variance along the same direction
(which although unknown, it can be
easily robustly estimated using Item 4 above).
The following is implicit in \cite{XuCM13,JamLT20,KonSKO20}:
\begin{lemma}[Basic Certificate Lemma]
  \label{lem:basic-cert}
  Consider  \Cref{setting:main}.
  Let $u$ be a unit vector with $u^\top\os_{P_w}u \geq (1 - O(\gamma))\|\os_{P_w}\|_\op$. If $u^\top \s u \geq
    (1-O(\gamma)) u^\top \os_{P_w} u$, then we have that $u^\top \s u/\|u\|_2^2
    \geq (1-O(\gamma)) \| \s \|_\op$.
\end{lemma}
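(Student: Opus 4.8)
The plan is to isolate the one structural fact doing all the work, namely that \emph{the reweighted empirical second moment is never much smaller than the true one in any direction}:
\[
  \os_{P_w} \succeq (1 - O(\gamma))\,\s .
\]
To prove this, I would expand the unnormalized moment using $P = (1-\eps)G + \eps B$ and linearity in the distribution:
\[
  \E_{X \sim P}[w(X) XX^\top] = (1-\eps)\,\E_{X \sim G}[w(X) XX^\top] + \eps\,\E_{X \sim B}[w(X) XX^\top].
\]
The outlier term $\eps\,\E_{X\sim B}[w(X)XX^\top]$ is PSD (\Cref{fact:PSDness}), so it can only increase the matrix; the point is that corruptions cannot \emph{erase} variance. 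For the inlier term, set $\alpha := \E_{X \sim G}[w(X)]$; since $\alpha \ge 1-3\eps \ge 1-20\eps$, the lower-bound half of \Cref{def:stability} applies and gives $\E_{X\sim G}[w(X)XX^\top] = \alpha\,\os_{G_w} \succeq \alpha(1-\gamma)\,\s$. Dividing through by $\beta := \E_{X\sim P}[w(X)]$, which satisfies $(1-\eps)\alpha \le \beta \le (1-\eps)\alpha + \eps$ and hence $(1-\eps)\alpha/\beta \ge 1 - O(\eps)$, yields $\os_{P_w} \succeq \tfrac{(1-\eps)\alpha}{\beta}(1-\gamma)\,\s \succeq (1-O(\gamma))\,\s$, where the last step uses $\eps \le \gamma$.

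Given this, the remainder is a two-line chain. Since $\os_{P_w} \succeq (1-O(\gamma))\,\s$ with both matrices PSD, evaluating at a top eigenvector of $\s$ shows $\|\os_{P_w}\|_\op \ge (1-O(\gamma))\|\s\|_\op$. Now combine the two hypotheses on $u$ (recall $\|u\|_2 = 1$):
\[
  u^\top \s u \;\ge\; (1-O(\gamma))\,u^\top \os_{P_w} u \;\ge\; (1-O(\gamma))\,\|\os_{P_w}\|_\op \;\ge\; (1-O(\gamma))\,\|\s\|_\op ,
\]
multiplying out the constantly many $(1-O(\gamma))$ factors and absorbing cross terms since $\gamma < \gamma_0$ is a small constant. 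This is exactly the claimed conclusion.

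The only step requiring care is the bookkeeping of the normalizers $\alpha,\beta$ in the first part: verifying that deleting at most a $3\eps$-fraction of $G$ and adding at most an $\eps$-fraction of mass on $B$ perturb the ratio $(1-\eps)\alpha/\beta$ by only $O(\eps)$, which is then swallowed into $O(\gamma)$ via $\eps \le \gamma$. There is no conceptual obstacle; all the content is the pairing of ``PSDness of outliers'' with the lower-bound half of stability (the upper-bound half of \Cref{def:stability} is not used here), plus the observation that the two given near-optimality conditions on $u$ compose transitively.
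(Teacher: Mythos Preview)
Your proof is correct and follows essentially the same approach as the paper: decompose $P=(1-\eps)G+\eps B$, drop the PSD outlier contribution, apply the lower-bound half of stability to the inlier part, and track the normalizing constants to get $\os_{P_w}\succeq(1-O(\gamma))\s$, then chain the two hypotheses on $u$. The only cosmetic difference is that you state the intermediate fact as a full PSD ordering, whereas the paper evaluates the same inequalities at the top eigenvector $b$ of $\s$ directly; the content is identical.
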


For completeness, its proof can be found in \Cref{app:stability}, where the lemma is restated as \Cref{lem:certificate}.

\subsection{Filtering}\label{sec:filtering-main-body}
We now recall the standard filtering routine from algorithmic robust statistics
literature.
The filter uses a score $\tau(x)$ for each point $x$, indicating how atypical
the point is for the distribution.
Given a (known) upper bound $T$ for the average scores over only the inliers $\E_{X \sim G}[w(X)\tau(X)]$, if the average score of all (corrupted)
points, $\E_{X \sim P}[w(X)\tau(X)]$, is much bigger than $T$, then points with large scores are likely to be outliers.
Thus, \Cref{alg:filter-main} removes all points with scores greater than a (random) threshold.\footnote{Consequently, for each point $x$, $\P( x \text { is removed}) \propto \tau(x)$.
} We also note the following:
\begin{enumerate}[label = (\roman*)]%
  \item Instead of storing a bit $w(x)$ for every $x$, we can store a more succinct description of the filters, i.e., just the $r_\ell$'s of  \Cref{line:random-thr}, and calculate $w(x)$ whenever needed.
        This modification is amenable to the streaming setting. \looseness=-1
  \item Since every time we remove all points with $\tau(x)>r_\ell$ and $r_\ell \sim \cU([0,r_{\ell-1}])$, this (in expectation) halves the range of $\tau(x)$, and thus \Cref{alg:filter-main} terminates after $O(\log(\max_{x}\tau(x)/\min_x \tau(x)))$ steps in expectation.\looseness=-1
\end{enumerate}

The following result states that \Cref{alg:filter-main} removes  more outliers than inliers in expectation.
\begin{algorithm}
  \caption{\textsc{HardThresholdingFilter}}
  \label{alg:filter-main}
  \begin{algorithmic}[1]
    \State \textbf{Input}:
    Distribution $P=(1-\eps)G+\eps B$, weights $w$, scores $\tau$, parameters $\widehat{T},R$.
    \State $w_{0}(x) \gets w(x)$, and $r_0 \gets R$, $\ell \gets 1$.
    \State \textbf{while }{$\E_{X \sim P}[w(X)\tau(X)] > \frac{5}{2}\widehat{T}$}
    \label{line:stopping-condd}
    \State $\,\,\,\,\,$ Draw $r_\ell \sim \cU([0,r_{\ell-1}])$.
    \label{line:random-thr}
    \State $\,\,\,\,\,$ $w_{\ell+1}(x) \gets w_{\ell}(x) \cdot \Ind( \tau(x) > r_\ell)$.
    \label{line:new-filter}
    \State $\,\,\,\,\,$ $\ell \gets \ell + 1$.
    \State \textbf{return} $w_{\ell}(x)$.
  \end{algorithmic}
\end{algorithm}

\begin{lemma}[Guarantees of Filtering]
  \label{lem:filter_guar-main-body}
  Let $T$ be such that $(1{-}\eps)\E_{X \sim
      G}\left[w(X)\tau(X)\right] < T$ and $\widehat{T}$ such that $|\widehat{T}-T| <
    T/5$.
  Denote by $F$ the randomness of \textsc{HardThresholdingFilter}  (i.e., the collection of the random
  thresholds $r_1,r_2\ldots$ used).
  Let $w'$ be the weight function returned.
  Then, 
  \begin{enumerate}[label = (\roman*)]
      \item $\E_{X \sim P}[w'(X)\tau(X)] \leq 3 T$ almost surely, and 
      \item $\E_{F}[ \eps \E_{X \sim B}[w(X) - w'(X)] ]$
  ${>} (1{-}\eps) \E_{X \sim G}[w(X) - w'(X) ] ]$.
  \end{enumerate}
\end{lemma}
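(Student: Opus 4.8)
Item (i) will follow directly from the loop's stopping rule, and item (ii) from a one-step conditional analysis summed over the (random) number of filtering steps. Throughout I use the clean re-indexing $w_0:=w$, $r_0:=R$, and for $\ell\ge1$ let $r_\ell\sim\cU([0,r_{\ell-1}])$ and $w_\ell(x):=w_{\ell-1}(x)\Ind(\tau(x)\le r_\ell)$ denote the weights after the $\ell$-th execution of the while-body (so surviving points are exactly those with small score), with $w'=w_L$ for $L$ the random number of executions.

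\textbf{Part (i).} Since $w_\ell\le w_{\ell-1}$ pointwise and $\tau\ge0$, the quantity $\E_{X\sim P}[w_\ell(X)\tau(X)]$ is non-increasing in $\ell$; when the loop halts (or is never entered) its guard fails, so $\E_{X\sim P}[w'(X)\tau(X)]\le\tfrac52\widehat T$, and plugging in $\widehat T<T+T/5$ yields the claimed $\le 3T$. The loop halts almost surely (indeed within $O(\log(\max_x\tau(x)/\min_x\tau(x)))$ steps in expectation, as discussed before the algorithm), so $w'$ is well defined.

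\textbf{Part (ii).} Fix $\ell\ge1$ and condition on the history $\cH_{\ell-1}$ up to just before step $\ell$ (which determines $w_1,\dots,w_{\ell-1}$); the event $E_\ell$ that an $\ell$-th step occurs is $\cH_{\ell-1}$-measurable. Given $\cH_{\ell-1}$, on $E_\ell$ a fresh $r_\ell\sim\cU([0,r_{\ell-1}])$ is drawn, so for any distribution $Q$,
\[
\E_{r_\ell}\,\E_{X\sim Q}\big[w_{\ell-1}(X)-w_\ell(X)\big]
=\E_{X\sim Q}\!\left[w_{\ell-1}(X)\,\frac{\min(\tau(X),r_{\ell-1})}{r_{\ell-1}}\right]
=\frac{1}{r_{\ell-1}}\,\E_{X\sim Q}\big[w_{\ell-1}(X)\tau(X)\big],
\]
where the last equality uses the key observation that every point in the support of $w_{\ell-1}$ already satisfies $\tau(x)\le r_{\ell-1}$ (automatic for $\ell\ge2$ from the definition of $w_{\ell-1}$, and for $\ell=1$ the natural requirement that $R$ upper-bounds $\tau$ on $\mathrm{supp}(w)$), so the truncation is vacuous. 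Applying this with $Q\in\{B,G\}$ shows that, conditionally on $\cH_{\ell-1}$ and on $E_\ell$, the expected outlier and inlier mass deleted at step $\ell$ equal $\tfrac{1}{r_{\ell-1}}\E_{X\sim B}[w_{\ell-1}\tau]$ and $\tfrac{1}{r_{\ell-1}}\E_{X\sim G}[w_{\ell-1}\tau]$. To compare them I use the guard: on $E_\ell$ we have $\E_{X\sim P}[w_{\ell-1}\tau]>\tfrac52\widehat T$, and decomposing $P=(1-\eps)G+\eps B$ and using $w_{\ell-1}\le w$ together with $(1-\eps)\E_{X\sim G}[w\tau]<T$ and $\widehat T>T-T/5$,
\[
\eps\,\E_{X\sim B}[w_{\ell-1}\tau]\;>\;\tfrac52\widehat T-(1-\eps)\E_{X\sim G}[w_{\ell-1}\tau]\;>\;2T-T\;=\;T\;>\;(1-\eps)\E_{X\sim G}[w_{\ell-1}\tau].
\]
Hence on $E_\ell$, $\eps$ times the conditional expected outlier mass deleted at step $\ell$ strictly exceeds $(1-\eps)$ times the conditional expected inlier mass deleted, while off $E_\ell$ both are zero. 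Taking expectations and summing over $\ell\ge1$ (all terms nonnegative, matching the telescoping $w-w'=\sum_{\ell\ge1}(w_{\ell-1}-w_\ell)$) yields $\E_F[\eps\,\E_{X\sim B}[w-w']]\ge\E_F[(1-\eps)\E_{X\sim G}[w-w']]$, with the inequality strict as soon as the loop is entered at all (otherwise $w'=w$ and both sides vanish), which is the only nontrivial case.

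\textbf{Main obstacle.} There is nothing deep here; the one identity to get exactly right is the one-step computation — observing that the surviving mass always lies below the current threshold $r_{\ell-1}$, so the expected deleted mass from $Q$ is \emph{exactly} $\E_{X\sim Q}[w_{\ell-1}\tau]/r_{\ell-1}$ and not merely its truncation. This exactness is what forces the per-step deletion ratio to equal $\E_{X\sim B}[w_{\ell-1}\tau]/\E_{X\sim G}[w_{\ell-1}\tau]$, which the while-guard then controls. Everything else — conditioning on $\cH_{\ell-1}$, handling the random stopping time, and the telescoping sum — is routine bookkeeping that should nonetheless be written out carefully.
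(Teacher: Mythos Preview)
Your proposal is correct and follows essentially the same route as the paper: the per-step identity that the expected mass removed from any component $Q$ equals $\tfrac{1}{r_{\ell-1}}\E_{X\sim Q}[w_{\ell-1}(X)\tau(X)]$, followed by the guard inequality $\E_{X\sim P}[w_{\ell-1}\tau]>\tfrac52\widehat T>2T$ to compare the $B$- and $G$-contributions. Your write-up is in fact slightly more careful than the paper's in two places: you fix the indicator convention so that surviving points have \emph{small} scores (the paper's pseudocode has the inequality reversed relative to its own proof), and you explicitly flag the $\ell=1$ requirement that $R$ dominate $\tau$ on $\mathrm{supp}(w)$ so that the truncation $\min(\tau,r_{\ell-1})$ is vacuous; the paper tacitly assumes both.
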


The proof of \Cref{lem:filter_guar-main-body} can be found in \Cref{sec:filtering}.
Our main algorithm will repeatedly call \textsc{HardThresholdingFilter} with
appropriate $\tau(x)$ and other parameters.
From a technical standpoint, if the second part of
\Cref{lem:filter_guar-main-body} (which states that more mass is removed from
outliers than inliers) were true deterministically, we would have that $\E_{X
    \sim G}[w(X)]\geq 1-\eps$ no matter how many times the filter is called.
This would mean that \Cref{setting:main} would be maintained throughout the main
algorithm.
However, part (ii) of \Cref{lem:filter_guar-main-body} holds only in
expectation (with respect to $F$), but one can still show via a martingale argument that a relaxed condition of 
$\E_{X \sim G}[w(X)]\geq 1-3\eps$ will still be true throughout the main algorithm with high probability; further details can be found in \Cref{sec:filtering} (\Cref{lem:martingale}).
For the first reading, one can simply think that
\Cref{setting:main} will always hold. %

\section{Robust PCA in Nearly-Linear Time}
\label{sec:multi-pass}

\paragraph{Organization}
In this section we prove \Cref{thm:multi-pass}, which is a more general version of the \Cref{thm:near-linear-main} that was stated in \Cref{sec:our-results}.
To simplify the presentation, \Cref{sec:advanced-cert-lemma,sec:pot-reduction} will focus on establishing a weaker version of \Cref{thm:multi-pass}, with a runtime of $\widetilde{O}(nd/\gamma^3)$ (where $\gamma$ is the stability parameter; recall that $\gamma=O(\eps\log(1/\eps))$ for subgaussians). 
This still improves upon prior work in terms of both runtime and
error guarantee, while effectively showing key aspects of our approach.
Next, in \Cref{sec:improvement}, we will outline an improvement that reduces the runtime to $\widetilde{O}(nd/\gamma^2)$. Finally, in \Cref{sec:formal_proof}, we will provide a formal proof of \Cref{thm:multi-pass} with all the details.

We now present the main result of our paper, which establishes \Cref{thm:near-linear-main} since $\gamma = \tilde{O}(\epsilon)$ for subgaussian distributions:
\begin{restatable}{theorem}{FEWITER}
  \label{thm:multi-pass}
  Let $\gamma_0$ be a sufficiently small positive constant.
  Let $d>2$ be an integer, and $\eps, \gamma \in (0,1)$ such that $0<20 \eps<\gamma < \gamma_0$.
  Let $P$ be the uniform distribution over a set of $n$ points in $\R^d$, that
  can be decomposed as $P=(1-\eps)G+ \eps B$, where $G$ is a
  $(20\eps,\gamma)$-stable distribution (\Cref{def:stability}) with respect to a
  PSD matrix $\s \in \R^{d \times d}$.
  There exists an algorithm that takes as input $P,\eps,\gamma$, runs for $O(\frac{n d}{\gamma^2}
    \log^4(d/\eps))$ time, and with probability at least $0.99$, outputs a unit
  vector $u$ such that $u^\top \s u \geq (1-O(\gamma))\| \vec \Sigma \|_\op$.
\end{restatable}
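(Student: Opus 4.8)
The plan is to maintain an iterate, namely the weighted second-moment matrix $\os_t := \os_{P_{w_t}}$ at step $t$, and track the potential $\Phi_t := \tr(\os_t^{2p+1})$ for a suitable $p$. The algorithm alternates between two modes. In the ``reduction'' mode, I draw a Gaussian vector $z \sim \cN(0,\bI)$, form $v := \os_t^p z$, assign each point $x$ the score $\tau(x) := (v^\top x)^2$, and run \textsc{HardThresholdingFilter} with threshold parameter $\widehat{T}$ obtained by robustly estimating $\E_{X\sim G}[w(X)\tau(X)]$ via \Cref{lem:stability-implications}\ref{it:proj-var} (using $\vec U = v^\top$). The key computation is that $\E_{X\sim P}[w_t(X)\tau(X)] = v^\top \os_t v$, and whenever this exceeds the stable-inlier bound $\approx \langle vv^\top, \s\rangle$ by more than a $(1+\Omega(\gamma))$ factor, the filter fires and by \Cref{lem:filter_guar-main-body} removes (in expectation) more outlier mass than inlier mass, while \Cref{fact:var-quadratic} (the anticoncentration bound $\pr[z^\top\os_t^{2p+1}z \geq \beta\tr(\os_t^{2p+1})] \geq 1-\sqrt{e\beta}$ applied to the PSD matrix $\os_t^{2p+1}$) guarantees that with constant probability a $\beta$-fraction of the current potential is carried by the direction $v$, so filtering it out reduces $\Phi_t$ by a multiplicative $(1-\Omega(\gamma))$ factor. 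Since $\|\os_0\|_\op \leq \poly(d)$ after naive pruning (\Cref{lem:stability-implications}\ref{it:initialization}), $\Phi_0 \leq d^{O(p)}$, so $O((p/\gamma)\log d)$ successful reduction rounds suffice to reach $\Phi_t \leq \poly(d)$.

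Once the potential is small — more precisely, once a ``certification'' test passes — I switch to the output mode: I run power iteration (\Cref{fact:power-iter}) on $\os_t$ to extract a near-top eigenvector $u$ with $u^\top\os_t u \geq (1-O(\gamma))\|\os_t\|_\op$, and then I verify via the robust estimator of \Cref{lem:stability-implications}\ref{it:proj-var} (with $\vec U = u^\top$) that $u^\top \s u \geq (1-O(\gamma))u^\top\os_t u$; if so, \Cref{lem:basic-cert} yields $u^\top \s u \geq (1-O(\gamma))\|\s\|_\op$ and we are done. If the verification fails, then $v := \os_t^p u$ (or just $u$) again defines a score whose average over $P_{w_t}$ substantially exceeds the inlier bound, so we can filter and return to reduction mode — and a white-box argument shows this also decreases $\Phi_t$. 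To make the number of rounds and the error both come out right one needs $p \gtrsim (\log d)/\gamma$ so that a $(1+\gamma)$-gap in eigenvalues is amplified to a constant gap in $\os_t^p$; this gives the $\widetilde O(nd/\gamma^3)$ runtime claimed for the simplified version ($p$ matrix-vector products per round, $O(p/\gamma)$ rounds, each mat-vec costing $O(nd)$), and the refinement of \Cref{sec:improvement} — running in stages with geometrically increasing $p_k = 2^k\log d$, each stage driving the potential back down to $d^{C}$ — removes one factor of $1/\gamma$ to reach $\widetilde O(nd/\gamma^2)$.

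Two bookkeeping points complete the argument. First, \Cref{lem:filter_guar-main-body}\ref{it:proj-var} (ii) only guarantees the outliers-beat-inliers property in expectation over the filter's internal randomness $F$; so I invoke the martingale argument referenced after \Cref{lem:filter_guar-main-body} (\Cref{lem:martingale}) to conclude that $\E_{X\sim G}[w_t(X)] \geq 1-3\eps$ holds for \emph{all} $t$ with probability $\geq 0.99$, which keeps us inside \Cref{setting:main} throughout and hence validates every application of \Cref{lem:stability-implications} and \Cref{lem:basic-cert}. Second, the total number of filtering steps across all calls to \textsc{HardThresholdingFilter} is $O(\log(\max_x\tau(x)/\min_x\tau(x))) = \polylog(d/\eps)$ in expectation per call (as noted after \Cref{alg:filter-main}), so the logarithmic overhead is absorbed into the $\log^4(d/\eps)$ factor.

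I expect the main obstacle to be the certification step: ensuring that when the potential is small the \emph{random} top eigenvector we extract is genuinely good without any eigenvalue-gap assumption on $\s$. The subtlety (flagged in the paper's discussion of \Cref{ex:hard-schatten}) is that a small Schatten norm of $\os_t$ relative to $\s$ does not by itself pin down a single good direction when $\s$ has many near-equal top eigenvalues; the fix is to strengthen the stopping condition (a white-box bound on how much the filter could still change $\os_t$) and to exploit that power iteration returns a random vector in the near-top eigenspace, so that the robust check $u^\top\s u \geq (1-O(\gamma))u^\top\os_t u$ passes with constant probability — repeating a few times boosts this. Handling this cleanly, and threading the constants so that the accumulated $\gamma$-losses from stability, filtering slack, and power-iteration error still sum to $O(\gamma)$, is where the real work lies.
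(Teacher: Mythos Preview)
Your proposal follows the paper's architecture closely: potential $\tr(\vec B_t^{2p+1})$, random filtering directions $v = \vec B_t^p z$, certification by a random power-iteration vector checked against \Cref{lem:basic-cert}, and the geometric $p_k$-doubling to save a $1/\gamma$. You also correctly flag the certification step as the crux and name the right two fixes (strengthened stopping condition, randomized output).

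There is, however, a gap in your potential-reduction argument. You claim that anticoncentration of $z^\top \os_t^{2p+1} z$ means the direction $v$ ``carries a $\beta$-fraction of $\Phi_t$'' and that filtering along $v$ therefore shrinks $\Phi_t$ multiplicatively. But controlling $v^\top \os_{t+1} v$ does not by itself control $\tr(\os_{t+1}^{2p+1})$: the mass you removed in the $v$-direction may contribute negligibly to the other eigendirections summed in $\Phi_{t+1}$. The paper's mechanism relies on two ingredients you do not invoke. First, the sandwich $\phi_{t+1} = \tr(\vec B_{t+1}^{2p+1}) \leq \tr(\vec B_t^{2p}\vec B_{t+1}) = \E_{X\sim P}[w_{t+1}(X)\,g_t(X)]$ with $g_t(x) := \|\vec M_t x\|_2^2$, which comes from \Cref{fact:trace_PSD_ineq2} and $\vec B_{t+1}\preceq \vec B_t$. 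Second, \Cref{fact:quadratics-approximation} is applied \emph{per point} (to the rank-one PSD matrix $\vec M_t xx^\top \vec M_t$) to get $\pr_z[(v^\top x)^2 \geq \beta\, g_t(x)] \geq 1-\sqrt{e\beta}$ for each fixed $x$; this is what lets the filter's one-dimensional guarantee on $\tau(x)=(v^\top x)^2$ transfer, after averaging over $z$, to a bound on $\E_{X\sim P}[w_{t+1}(X)\,g_t(X)]$. Only then does the (violated) stopping condition $\langle \s, \vec M_t^2\rangle < (1-C\gamma)\langle \os_t, \vec M_t^2\rangle$ convert this into $\phi_{t+1} \leq (1-\gamma)\phi_t$. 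The aggregate anticoncentration of $z^\top \os_t^{2p+1} z$ that you cite plays no role in this step.

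For the certification lemma itself (your acknowledged obstacle), the paper's argument is a Chebyshev bound on $Y := z^\top \vec M_t\bigl(\s - (1-C\gamma)\os_t\bigr)\vec M_t z$: the stopping condition says $\E[Y]\geq 0$, and the real work is showing $\Var(Y) = O(\gamma^2\|\os_t\|_\op^2\|\vec M_t\|_\fr^4)$ by splitting $\os_t$ into its inlier and outlier parts and using stability together with the stopping condition itself to control the outlier piece. This is more than the ``white-box bound on how much the filter could still change $\os_t$'' you gesture at; budget for the variance calculation.
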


As mentioned earlier in \Cref{sec:our-tech,sec:prelim}, we run an iterative
algorithm, where in each iteration
$t$, we assign a score to each point $x$, and use these scores to removes
points.
These scores are usually projections of points along a direction, where
outliers have much larger projections than the inliers.
Under the stability condition, we can reliably filter outliers as long as the
empirical (corrupted) variance is $(1 + C \gamma)$ times larger than the true
variance in a direction.

As each round of filtering decreases the variance, we see that $\s_t$ should
decrease with $t$ (after appropriate normalization).
As noted in prior work, using scores that involve projections of the samples on
the top eigenvector $v_t$ of the empirical second moment $\os_t$ does not
necessarily make good progress because the filtering removes points only in a
single (fixed) direction of the largest variance (cf.\
\Cref{sec:our-tech} for details).
Instead, one needs to filter along many of these large variance directions
simultaneously (on average), which we do by filtering along the
direction $v_t = \vec M_t z$ for $z \sim \cN(0,\bI)$ and $\vec M_t = \os_t^p$
for a large integer $p = \frac{C \log(d/\gamma)}{\gamma}$.

To track progress of the algorithm, we use the potential function $\phi_t$ :=
$\tr(\os_t^{2p+1})$.
This potential  tracks the contribution from all large eigenvalues (and
not just the largest one), and has been used in prior work for robust mean
estimation.
In our setting, we will show that (I) small enough $\phi_t$
(e.g., $\phi_t \leq \|\s\|_\op^{2p+1}/\poly(d^p)$) implies that $\s$ and
$\s_t$ share the leading eigenspace \emph{without any restriction on the spectrum
	of $\s$}, which in turn, certifies that a good solution vector can be produced, and (II) if the spectrum of $\s$ and $\s_t$ disagree, then we can
decrease
$\phi_t$ multiplicatively (on average).
In particular, we will show that $\phi_{t+1} \leq (1{-}\gamma)\phi_t$ on
average.
Combining this with the fact that a simple pruning can always ensure $\phi_0
	\leq
	\poly(d^p)\|\s\|_\op^{2p+1}$, we can obtain $\phi_t <
	\|\s\|_\op^{2p+1}/\poly(d)$ after just $t=O(p\log(d)/\gamma)$
rounds.

We now explain the items (I) and (II) above: \Cref{sec:advanced-cert-lemma}
formalizes the notion of ``shared leading eigenspaces'', and
\Cref{sec:pot-reduction} outlines the decrease of the potential function.
The formal proof is then given in \Cref{sec:formal_proof}.

\subsection{Advanced Certificate Lemma}
\label{sec:advanced-cert-lemma}
We begin by summarizing the algorithmic framework of
\cite{JamLT20}.
In particular, we highlight the roadblocks in their framework to remove the
eigenvalue separation condition, and our proposed modifications that rely on
formalizing the notion of ``shared leading eigenspaces''.

Roughly speaking, their iterative algorithm stops at an iteration $t$ such that
$\|\vec \s_t\|_p^p \leq (1 + C \gamma)^p\|\vec \Sigma\|_p^p$,
where $\gamma$ is
the stability parameter.%
\footnote{While comparing norms under stability, $(1+C\gamma)$ is necessary.}
Then, their algorithm tries the top-$m$ eigenvectors of $\s_t$ for some $m \in \N$ and
returns the (normalized) eigenvector $v$ that attains the best possible $v^\top
	\s v$ (which can be estimated up to small error); see \Cref{lem:basic-cert}.
In particular, the last step takes $\widetilde{O}_\gamma(ndm)$ time.
However, as the following example shows (by taking $r = d/2$ as $m
	=\Omega(d)$), this approach cannot give a nearly-linear time algorithm.
\begin{example}[Hard Example for Schatten $p$-Norm Stopping Condition]
	\label{ex:hard-schatten}
	Let $\s$ be a PSD projection matrix of rank $r$, then $\s_t = \bI$ satisfies
	the condition $\|\s_t\|_p^p \leq (1 + C\gamma)^p\|\s\|_p^p$
	as long as $p \gtrsim
		\log(d/r)/\gamma$. %
	 However, any deterministic algorithm to identify a good direction of $\s$ from
	the eigenvectors of $\s_t$ must take $m = \Omega(d-r)$.
\end{example}
Thus, \cite{JamLT20} imposes an additional assumption that
the largest eigenvalue and the $m$-th
largest eigenvalue of $\s$ are separated for some $m$.
Consequently, their result and their version of certificate are inherently
restricted to have dependence on $m$; see Proposition 4 therein.

We make two crucial changes in our algorithm (for technical reasons, we also
make a change of variable to use $2p+1$ instead of $p$).
The first change is to strengthen the stopping condition: instead of reducing
the Schatten norm, which is equivalent to $\langle \s_t, \s_t^{2p} \rangle \leq
	(1 + C\gamma)^{2p+1} \langle \s, \s^{2p} \rangle$, we
stop at a stronger condition\footnote{Please see \Cref{sec:comparison-two-stopping-conditions} for why this is stronger.
} of $\langle \s_t, \s_t^{2p}\rangle \leq (1 + C\gamma) \langle \s, \s_t^{2p}\rangle$.
That is, we stop whenever the empirical variance in directions of $\s_t^p$,
$\langle \s_t, \s_t^{2p} \rangle $, is comparable to the true variance,
$\langle \s, \s_t^{2p}\rangle$, up to $(1 + C\gamma)$ factor (cf.\ \Cref{lem:stability-implications} with $\vec U = \s_t^p$).
Observe that this $(1 + C \gamma)$ factor is the best possible factor
that we
can achieve while comparing the robust and empirical variances.
We are able to achieve this stronger stopping condition because until this
condition is satisfied, our algorithm will remove outliers along $\s_t^{p}$,
which will decrease the potential by a multiplicative factor (this will be the topic of the next subsection). 

Still, the hard example from \Cref{ex:hard-schatten} continues to satisfies
this stronger condition: one must take $m=\Theta(d \gamma)$ if $r =
	d/(1+\gamma)$.
Thus, any deterministic algorithm would require $m = \Omega(nd^2\gamma)$ time,
which is quadratic in $d$.
Our second main insight is to randomize the output of the algorithm.
That is, even though, a deterministic algorithm would need to try
$\Omega(\gamma d)$ many top eigenvectors of $\s_t$ before finding a high-variance
direction of $\s$,
a top eigenvector sampled randomly from the spectrum of $\s_t$ will
suffice with a large constant probability.
In particular, we take a Gaussian
sample and iteratively multiply it by $\s_t$ in the style of power
iteration.
Formally, we prove the following:
\begin{restatable}{lemma}{ADVANCEDCERT}
  \label{lem:certificate1}
  \label{lem:certificate1-main-body}
    Let a sufficiently large constant $C$. Let $0<20\eps\leq \gamma < \gamma_0$ for a sufficiently small absolute constant $\gamma_0$.
  Let $P=(1-\eps)G+ \eps B$, where $G$ is a $(20\eps,\gamma)$-stable distribution
  with respect to $\s$.
  Let $w:\R^d \to [0,1]$ with
  $\E_{X \sim G}[w(X)] \geq 1-3\eps$, $p > C\log(d/\gamma)/\gamma$, $\vec M := (\E_{X\sim P} [w(X)XX^\top])^p$,  and assume
  $\langle \s, \vec M^2 \rangle \geq (1-250\gamma)\langle \os_{P_w}, \vec
    M^2 \rangle$.
    If $u:=\vec
    M z$ for a random $z \sim \cN(0,\bI)$,
  then with probability at least $0.9$ (over the random selection of $z$) we have that:
  \begin{enumerate}[label = (\roman*)]
      \item $  u^\top \os_{P_w}
      u/\|u\|_2^2 \geq (1-\gamma)\| \os_{P_w} \|_\op$, 
      \item $u^\top \s u > (1-O(\gamma)) u^\top \os_{P_w} u$.
  \end{enumerate}
\end{restatable}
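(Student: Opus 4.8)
The plan is to prove the two items separately, with item (i) following essentially from power iteration and item (ii) being the new, more delicate part that exploits the stopping condition $\langle \s, \vec M^2\rangle \geq (1-250\gamma)\langle \os_{P_w}, \vec M^2\rangle$.

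For item (i), note that $u = \vec M z = \os_{P_w}^p z$, so $u$ is exactly the output of $p$ steps of power iteration applied to the PSD matrix $\os_{P_w}$ starting from a Gaussian. Since $p > C\log(d/\gamma)/\gamma$, \Cref{fact:power-iter} (with its ``$\eps$'' set to $\gamma$ and ``$\delta$'' to a small constant like $0.05$) immediately gives that with probability at least $0.95$ we have $u^\top \os_{P_w} u/\|u\|_2^2 \geq (1-\gamma)\|\os_{P_w}\|_\op$. So item (i) is basically free.

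For item (ii), the goal is to show $u^\top \s u > (1-O(\gamma)) u^\top \os_{P_w} u$ with high constant probability. First rewrite the quantities in terms of $z$: $u^\top \s u = z^\top \vec M \s \vec M z$ and $u^\top \os_{P_w} u = z^\top \vec M \os_{P_w} \vec M z = z^\top \os_{P_w}^{2p+1} z$. Taking expectations over $z \sim \cN(0,\bI)$ gives $\E[u^\top \s u] = \langle \s, \vec M^2\rangle$ and $\E[u^\top \os_{P_w} u] = \langle \os_{P_w}, \vec M^2\rangle = \tr(\os_{P_w}^{2p+1}) = \phi$ (the potential). The stopping hypothesis says $\E[u^\top \s u] \geq (1-250\gamma)\E[u^\top\os_{P_w}u]$. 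The plan is then a two-sided concentration argument: (a) show $u^\top \os_{P_w} u$ does not fall much below its mean $\phi$ with good probability --- this is a lower-tail bound for the quadratic form $z^\top \os_{P_w}^{2p+1} z$, which one gets from \Cref{fact:var-quadratic}: $\Var[z^\top \os_{P_w}^{2p+1} z] = 2\|\os_{P_w}^{2p+1}\|_\fr^2$, and one needs $\|\os_{P_w}^{2p+1}\|_\fr \ll \gamma \phi = \gamma\|\os_{P_w}^{2p+1}\|_1$, which holds because $2p+1$ is large (concretely, since the Schatten-$(2p+1)$ norm is within a $d^{O(1/p^2)} = 1+O(\gamma)$ factor of the Schatten-$\infty$ norm by \Cref{fact:normReln}, the Frobenius norm of a high power is dominated by the trace up to the mass concentrating on near-top eigenvalues — more carefully one shows $\|\vec A^{2p+1}\|_\fr \le \|\vec A\|_\op^p \cdot \|\vec A^{p+1/2}\|_\fr \le \|\vec A\|_\op^{2p} \tr(\vec A)\cdot$something, ultimately $\|\os^{2p+1}\|_{\fr}/\tr(\os^{2p+1}) \le (\lambda_1/\lambda_{\mathrm{eff}})^{p}$-type bound that is $\le \gamma$ for our $p$); and (b) show $u^\top \s u$ does not exceed its mean by too much, i.e. an upper-tail bound for $z^\top \vec M\s\vec M z$, again via its variance $2\|\vec M \s^{1/2}\|_\fr^4$-type quantity being controllable, OR — and this is cleaner — bound $u^\top \s u$ from above pointwise by relating it to $u^\top \os_{P_w} u$ through stability. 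Actually the cleanest route for (b) is: by stability (\Cref{lem:stability-implications}\ref{it:stability_multidim} with $\vec U = \os_{P_w}^{p}$, using $\E_{X\sim G}[w(X)] \ge 1-3\eps$ and that $3\eps$-deletion keeps us in the stable regime after rescaling $\gamma$), we have $\langle \os_{P_w}^{2p}, \s\rangle$ is within $(1\pm O(\gamma))$ of $\E_{X\sim G}[w(X)\|\os_{P_w}^p X\|_2^2]$, hence $\langle \s, \vec M^2 \rangle \le (1+O(\gamma)) \langle \os_{G_w}, \vec M^2\rangle \le (1+O(\gamma))\langle \os_{P_w}, \vec M^2\rangle$ since adding back outliers only increases the second moment in PSD order (using \Cref{fact:trace_PSD_ineq} and $\vec M^2 \succeq 0$); this pins $\E[u^\top\s u]$ between $(1-250\gamma)\phi$ and $(1+O(\gamma))\phi$. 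Combining the lower tail on $u^\top\os_{P_w}u$ (it is $\ge (1-O(\gamma))\phi$ w.h.p.) with an upper tail on $u^\top \s u$ (it is $\le (1+O(\gamma))\phi$ w.h.p., again from its variance being a high-power Frobenius norm that is small relative to its mean), a union bound gives $u^\top\s u \ge (1-O(\gamma))\phi \ge (1-O(\gamma))u^\top\os_{P_w}u$ on the relevant event, which is item (ii).

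The main obstacle I anticipate is the concentration step (a)/(b): making precise why $\|\os_{P_w}^{2p+1}\|_\fr$ is much smaller than $\tr(\os_{P_w}^{2p+1})$ — i.e., that a high matrix power has its Frobenius mass dominated by its trace up to a $(1\pm O(\gamma))$ factor — requires carefully using $p \gtrsim \log(d/\gamma)/\gamma$ together with \Cref{fact:normReln} (Schatten norms of different orders are within $d^{1/(p(p+1))}$ of each other, and $d^{1/(p(p+1))} \le d^{\gamma^2/\log^2(d/\gamma)} = 1+o(\gamma)$). One has to be a little careful that the relevant ratio is between Schatten-$(2p+1)$ and Schatten-$(4p+2)$ norms (since $\|\vec A^{2p+1}\|_\fr = \|\vec A\|_{4p+2}^{2p+1}$ and $\tr(\vec A^{2p+1}) = \|\vec A\|_{2p+1}^{2p+1}$), and that this ratio is therefore at most $d^{(2p+1)/((2p+1)(4p+2))} = d^{1/(4p+2)} = 1+O(\gamma)$. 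Everything else (rewriting quadratic forms as trace inner products, invoking stability, monotonicity of second moments under adding outliers, the final union bound) is routine bookkeeping.
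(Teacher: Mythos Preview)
Your argument for item (i) is fine and matches the paper. The problem is in item (ii): your concentration step (a) is false, and the Schatten-norm calculation you use to justify it does not say what you need it to say.

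Concretely, you want $u^\top \os_{P_w} u = z^\top \os_{P_w}^{2p+1} z$ to be within a $(1\pm O(\gamma))$ factor of its mean $\phi = \tr(\os_{P_w}^{2p+1})$ with constant probability. By \Cref{fact:var-quadratic} the variance is $2\|\os_{P_w}^{2p+1}\|_\fr^2$, so you need $\|\os_{P_w}^{2p+1}\|_\fr/\tr(\os_{P_w}^{2p+1}) = O(\gamma)$. But the Schatten inequality in \Cref{fact:normReln} only gives $\|\os_{P_w}\|_{4p+2} \le \|\os_{P_w}\|_{2p+1}$, i.e.\ this ratio is at most $1$, not at most $\gamma$; the $d^{1/(4p+2)}=1+O(\gamma)$ factor you compute bounds the ratio in the \emph{other} direction. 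And the ratio really can be $1$: if $\os_{P_w}$ has rank one then $z^\top \os_{P_w}^{2p+1} z$ is a scaled $\chi^2_1$ variable, which has constant-order relative fluctuations regardless of how large $p$ is. The same obstruction kills the analogous $(1\pm O(\gamma))$ upper tail for $u^\top \s u$. So separately concentrating the two quadratic forms to within $(1\pm O(\gamma))$ cannot work.

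The fix, which is what the paper does, is to control the variance of the \emph{difference}. Set $\vec A = \s - (1-250\gamma)\os_{P_w}$ and $Y = z^\top \vec M \vec A \vec M z$; the hypothesis says $\E[Y] \ge 0$. The point is that $\vec A$ is genuinely small: decomposing $\os_{P_w}$ into its inlier and outlier parts and using stability, one gets (roughly) $\vec A \approx O(\gamma)\os_{P_w} - \eps \os_{P_w}^B$, and the stopping condition itself forces $\eps\langle \os_{P_w}^B,\vec M^2\rangle \lesssim \gamma\langle \s,\vec M^2\rangle$. This yields $\|\vec M \vec A \vec M\|_\fr \lesssim \gamma\,\|\os_{P_w}\|_\op\,\|\vec M\|_\fr^2$, hence $\Var(Y)\lesssim \gamma^2\|\os_{P_w}\|_\op^2\|\vec M\|_\fr^4$. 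Chebyshev then gives $Y \ge -O(\gamma)\|\os_{P_w}\|_\op\|\vec M\|_\fr^2$ with probability $0.99$. Now you only need a \emph{constant-factor} lower bound $\|u\|_2^2 \gtrsim \|\vec M\|_\fr^2$ (from \Cref{fact:quadratics-approximation}) together with item (i) to divide through and conclude $u^\top \s u \ge (1-O(\gamma))u^\top\os_{P_w}u$. The cancellation in $\vec A$ is exactly what lets you avoid the impossible $(1\pm O(\gamma))$ concentration of the individual terms.
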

The above result states that whenever $\langle \s, \vec M^2 \rangle \geq
	(1-250\gamma)\langle \os_{P_w}, \vec
	M^2 \rangle$ for large $p$,
 we can use \Cref{alg:sol-gen} to obtain
a vector $u$ that will satisfy the ``basic certificate'' from \Cref{lem:basic-cert}
and thus be a good solution.
Our main algorithm will thus call \Cref{alg:sol-gen} to output a vector.

\begin{algorithm}[h]
	\caption{\solGenerator}
	\label{alg:sol-gen}
	\begin{algorithmic}[1]
		\State \textbf{Input}: Distribution $P$, weights $w$, parameters $\eps,\gamma,\delta$.

		\label{line:stopping_cond-basic}
		\State $y \gets  \os_{P_w}^{p'} g$ for $p' =   \frac{C}{\gamma}\log\left(\frac{d}{\gamma \delta}\right)$ and $g \sim \cN(0,\bI)$.
		\State $\widehat{r} \gets y^\top \vec \os_{P_w} y/\|y\|_2^2$.
		\label{line:strong-estimator-basic}
		\hfill\Comment{cf. \Cref{fact:power-iter}}
		\State Let $\vec M{:=} (\E_{X \sim P}[w(X)XX^\top])^p$ for $p{=}
			C\frac{\log(d/\gamma)}{\gamma}$.
		\State $u \gets \vec M z$ for $z \sim \cN(0,\bI)$.
		\State Find $\widehat{\sigma}_u$ such that $|\widehat{\sigma}_u - u^\top \s u| \leq 4 \gamma u^\top \s u$.
		\If{$\widehat{\sigma}_{u} \geq (1-C\gamma)u^\top \os_{P_w} u $ and $\frac{u^\top \os_{P_w} u}{\|u\|_2^2} \geq (1 - \gamma)  \widehat{r} $}\label{line:stoppingcond}
		\State $\,\,\,\,\,\,\,\,\,\,\,$
		\textbf{return} $u/\|u\|_2$.  \hfill\Comment{c.f. \Cref{lem:basic-cert}}
		\EndIf
	\end{algorithmic}
\end{algorithm}

\begin{proof}[Proof of \Cref{lem:certificate1-main-body}]
  The part of the conclusion that $u^\top \os_{P_w} u/{\|u\|_2^2} \geq
    (1-\gamma)\| \os_{P_w}\|_\op$, follows directly by the guarantee of power
  iteration (\Cref{fact:power-iter} with probability of failure
  being less than $0.001$).

  We now focus on showing that $u^\top \s u > (1-O(\gamma)) u^\top \os_{P_w} u$.
  Let $\vec A:= \s - (1-250\gamma)\os_{P_w}$ with high probability.
  We analyze the random variable $Y:=z^\top \vec M \vec A \vec M z$ for
  $z \sim \cN(0,\bI)$.
  The assumption $\langle \s, \vec M^2 \rangle \geq (1-250\gamma)\langle
    \os_{P_w}, \vec M^2 \rangle$ means that 
    $ \langle \vec M^2, \Sigma - (1 - 250 \gamma) \s_{P_w} \rangle = \langle \vec M^2, \vec A\rangle \geq 0$. Noting that $\E[Y] = \tr(\vec M \vec A \vec M) =  \langle \vec M^2, \vec A\rangle$, we have     $\E[Y] \geq 0$.
  Thus, if variance of $Y$ is not too large,
  we will have that $Y$ is not too negative with large probability.
  
 We prove the following technical result on the variance of $Y$ in \Cref{sec:variance-y-small}.
   \begin{restatable}[Variance of  $Y$]{claim}{ClVarYSmall}
    \label{cl:varinace-y-appendix}
    $\Var(Y) \lesssim \gamma^2  \|\s_{P_w}\|_\op^2 \|\vec M\|_\fr^4  $.
  \end{restatable}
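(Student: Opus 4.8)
The plan is to reduce the claim to a Frobenius-norm estimate and then exploit two structural facts: the corrupted second moment $\os_{P_w}$ splits into an inlier part that is $(1\pm O(\gamma))$-close to $\s$ and a PSD outlier part, and the hypothesis $\langle\s,\vec M^2\rangle\ge(1-250\gamma)\langle\os_{P_w},\vec M^2\rangle$ forces that outlier part to be small when tested against $\vec M^2$. Since $Y=z^\top(\vec M\vec A\vec M)z$ with $\vec M\vec A\vec M$ symmetric, \Cref{fact:var-quadratic} gives $\Var(Y)=2\|\vec M\vec A\vec M\|_\fr^2$, so it suffices to show $\|\vec M\vec A\vec M\|_\fr\lesssim\gamma\,\|\os_{P_w}\|_\op\,\|\vec M\|_\fr^2$.

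First I would record the decomposition of the corrupted second moment. Writing $c:=\E_{X\sim P}[w(X)]\in[1-4\eps,1]$, $c':=(1-\eps)\E_{X\sim G}[w(X)]\in[1-4\eps,1]$, and $\vec B_w:=\eps\,\E_{X\sim B}[w(X)XX^\top]\succeq 0$, we have $\os_{P_w}=\vec\Sigma_{\mathrm{in}}+\vec\Sigma_{\mathrm{out}}$ with $\vec\Sigma_{\mathrm{in}}:=(c'/c)\,\os_{G_w}$ and $\vec\Sigma_{\mathrm{out}}:=(1/c)\vec B_w\succeq 0$. Since $G$ is $(20\eps,\gamma)$-stable and $\E_{X\sim G}[w(X)]\ge1-3\eps$, \Cref{def:stability} gives $(1-\gamma)\s\preceq\os_{G_w}\preceq(1+\gamma)\s$, and with $c'/c\ge1-O(\eps)$ this yields $-O(\gamma)\s\preceq\vec\Sigma_{\mathrm{in}}-\s\preceq O(\gamma)\s$. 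Moreover $\vec\Sigma_{\mathrm{out}}\succeq 0$ forces $\os_{P_w}\succeq\vec\Sigma_{\mathrm{in}}\succeq(1-O(\gamma))\s$, hence $\|\s\|_\op\le(1+O(\gamma))\|\os_{P_w}\|_\op$ and in turn $\|\vec\Sigma_{\mathrm{in}}-\s\|_\op=O(\gamma\|\os_{P_w}\|_\op)$. Correspondingly write $\vec A=\vec A_1-\vec A_2$ with $\vec A_1:=\s-(1-250\gamma)\vec\Sigma_{\mathrm{in}}=(\s-\vec\Sigma_{\mathrm{in}})+250\gamma\vec\Sigma_{\mathrm{in}}$ and $\vec A_2:=(1-250\gamma)\vec\Sigma_{\mathrm{out}}\succeq 0$; the bounds just stated give $\|\vec A_1\|_\op=O(\gamma\|\os_{P_w}\|_\op)$.

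Now I would bound the two pieces separately. For the $\vec A_1$ term, submultiplicativity gives $\|\vec M\vec A_1\vec M\|_\fr\le\|\vec M\|_\op\|\vec A_1\|_\op\|\vec M\|_\fr$, and since $\|\vec M\|_\op\le\|\vec M\|_\fr$ this is $O(\gamma\|\os_{P_w}\|_\op\|\vec M\|_\fr^2)$. For the $\vec A_2$ term, $\vec M\vec A_2\vec M\succeq 0$ by \Cref{fact:PSDness}, so \Cref{fact:fr-trace} gives $\|\vec M\vec A_2\vec M\|_\fr\le\tr(\vec M\vec A_2\vec M)=\langle\vec A_2,\vec M^2\rangle\le\langle\vec\Sigma_{\mathrm{out}},\vec M^2\rangle$, and this is where the hypothesis is used: combining $\langle\s,\vec M^2\rangle\ge(1-250\gamma)(\langle\vec\Sigma_{\mathrm{in}},\vec M^2\rangle+\langle\vec\Sigma_{\mathrm{out}},\vec M^2\rangle)$ with $\langle\vec\Sigma_{\mathrm{in}},\vec M^2\rangle\ge(1-O(\gamma))\langle\s,\vec M^2\rangle$ (from $\vec\Sigma_{\mathrm{in}}\succeq(1-O(\gamma))\s$, $\vec M^2\succeq 0$, and \Cref{fact:trace_PSD_ineq}) and rearranging gives $\langle\vec\Sigma_{\mathrm{out}},\vec M^2\rangle=O(\gamma)\langle\s,\vec M^2\rangle$. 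Finally $\langle\s,\vec M^2\rangle\le(1+O(\gamma))\langle\os_{P_w},\vec M^2\rangle\le(1+O(\gamma))\|\os_{P_w}\|_\op\tr(\vec M^2)=O(\|\os_{P_w}\|_\op\|\vec M\|_\fr^2)$ by \Cref{fact:holder-schatten}, so $\|\vec M\vec A_2\vec M\|_\fr=O(\gamma\|\os_{P_w}\|_\op\|\vec M\|_\fr^2)$. Summing the two contributions and squaring yields $\Var(Y)=2\|\vec M\vec A\vec M\|_\fr^2=O(\gamma^2\|\os_{P_w}\|_\op^2\|\vec M\|_\fr^4)$.

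I expect the step controlling the $\vec A_2$ term to be the crux, and it is the reason the naive bound $\|\vec A\|_\op=O(\|\os_{P_w}\|_\op)$ fails. A priori we only have $\vec A\succeq-(1-250\gamma)\os_{P_w}$, so $\vec M\vec A\vec M$ can have negative eigenvalues of magnitude $\Theta(\|\os_{P_w}\|_\op^{2p+1})$, and an operator-norm sandwich of that size does not convert into a $\gamma$-small Frobenius bound (take $\os_{P_w}$ of low rank). The decomposition resolves this precisely because the only mechanism producing large negative eigenvalues is the outlier second moment $\vec\Sigma_{\mathrm{out}}$, which is PSD and---thanks to the near-optimal stopping/certificate hypothesis---provably negligible against $\vec M^2$, so its Frobenius contribution can be charged to the trace $\langle\vec\Sigma_{\mathrm{out}},\vec M^2\rangle$ rather than to $\|\vec M\|_\op\|\vec\Sigma_{\mathrm{out}}\|_\op\|\vec M\|_\fr$.
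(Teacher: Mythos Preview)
Your proof is correct and follows essentially the same strategy as the paper: decompose $\os_{P_w}$ into an inlier part $\vec\Sigma_{\mathrm{in}}=(1-\eps)\os_{P_w}^G$ (which the paper writes the same way) and a PSD outlier part, split $\vec A$ accordingly, use stability to control the inlier contribution, and use the hypothesis $\langle\s,\vec M^2\rangle\ge(1-250\gamma)\langle\os_{P_w},\vec M^2\rangle$ together with PSDness (Frobenius $\le$ trace) to control the outlier contribution. Your treatment of the outlier term matches the paper's \Cref{eq:bad_points} essentially line for line.

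The one minor difference is how you handle the inlier term $\vec A_1=\vec\Delta$. The paper first proves the two–sided sandwich $0\preceq\vec\Delta\preceq 270\gamma\,\os_{P_w}$ (\Cref{cl:annoying-details}) and then uses \Cref{fact:trace_PSD_ineq} twice on $\tr(\vec M^2\vec\Delta\vec M^2\vec\Delta)$, which requires the lower bound $\vec\Delta\succeq 0$. You instead observe directly that $\|\vec A_1\|_\op=O(\gamma\|\os_{P_w}\|_\op)$ and apply submultiplicativity $\|\vec M\vec A_1\vec M\|_\fr\le\|\vec M\|_\op\|\vec A_1\|_\op\|\vec M\|_\fr\le\|\vec M\|_\fr^2\|\vec A_1\|_\op$. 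This is a small but genuine simplification: it avoids the separate calculation establishing $\vec\Delta\succeq 0$ and the trace-gymnastics, at no cost.
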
  
\begin{proof}[Proof sketch of \Cref{cl:varinace-y-appendix}]
By \Cref{fact:var-quadratic}: $\Var[Y] {=}2 \|\vec M \vec A \vec M
	\|_\fr^2$, and thus we need to show $\|\vec M \vec A \vec M
	\|_\fr \lesssim \gamma \|\s_{P_w}\|_\op \|\vec M\|_\fr^2 $.
	Recall the definition of $\vec A$.
	We can decompose $\os_{P_w}$ into the contribution from inliers 
	(which  is very close to $\s$ by stability) 
	and the contribution due to outliers
	$\s_B:=\E_{X \sim B}[w(X)XX^\top]$.
	Roughly speaking, we have $\os_{P_w} \approx (1 - \eps )\s + \eps \s_B$,
	which implies that $\vec A \approx \gamma \s + \eps \s_B$ as $\eps \leq
		\gamma$.

	By triangle inequality, $\|\vec M \vec A \vec M \|_\fr \leq
		\gamma\|\vec M \vec \s \vec M \|_\fr + \eps \|\vec M \s_B\vec M
		\|_\fr$.
	The first term is easy to upper bound using PSD property of $\s$ and $\vec M$ as follows:
	$
		\|\vec M \vec \s \vec M \|_\fr \lesssim \|\s\|_\op  \|\vec M^2\|_\fr
		\lesssim \| \os_{P_w} \|_{\op } \|\vec M\|_\fr^2$ by
	\Cref{fact:normReln} and stability.
	The second term is more involved.
	Using the decomposition of $\os_{P_w}$ in the condition $\langle \s, \vec
		M^2\rangle \geq (1 - O(\gamma))\langle \os_{P_w}, \vec M^2 \rangle$, we obtain
	that
	$\langle \s_B, \vec M^2 \rangle \lesssim (\gamma/\eps) \langle \s , \vec
		M^2 \rangle $.
	Finally, as $\s_B$ is a PSD matrix, we obtain
	$\eps\|\vec M \s_B\vec M \|_\fr \lesssim \eps\, \tr(\vec M
		\s_B\vec M ) = \eps\langle \s_B, \vec M^2 \rangle \lesssim
		\gamma \langle \os, \vec M^2 \rangle \leq \gamma \|\s\|_\op \|\vec M\|_\fr^2$, leading to the result.
\end{proof}

Using \Cref{cl:varinace-y-appendix} along with Chebyshev's inequality, we get that with probability at least $0.999$,  $Y$ satisfies the following lower bound:
\begin{align}
    Y
    &\geq \E[Y] -\sqrt{ 1000\Var_{z \sim \cN(0,\bI)}[Y]}
    = -O\left(\gamma \| \vec M \|_\fr^2\| \os_{P_w}\|_{\op}  \right) \;. \label{eq:chebysev}
\end{align}

  We need one more intermediate result. For the vector $u:=\vec M z$ for
  $z \sim \cN(0,\bI)$, an application of \Cref{fact:quadratics-approximation}  with $\vec A= \vec M^2$ and $\beta=10^{-4}/e$ yields 
  \begin{align}
      \pr\left[ \|\vec M\|_\fr^2/ \|u\|_2^2 \leq 1/\beta  \right] = \pr_{z \sim \cN(0,\vec I)} \left[  z^\top \vec M^2 z \geq \beta \tr(\vec M^2) \right] \geq 1 - \sqrt{e \beta } = 0.99 \;. \label{eq:application}
  \end{align}
  
  We can now complete the proof of \Cref{lem:certificate1}.
  In what follows,
  we condition on the following events: (i) $\|\vec M\|_\fr^2/ \|u\|_2^2 \leq 1/\beta \leq 10^5$, (ii)  the inequality of \Cref{eq:chebysev} holds, and (iii) the conclusion from part one of the lemma. 
  Since all of these events happen individually with probability $0.99$ at least,  we have that the intersection of the three events has probability at least $0.97$ by a union bound. 
  Recalling that $Y = z^\top \vec M \vec A \vec M z = u^\top \s u - (1-250\gamma) u^\top \os_{P_w} u$ and dividing by $\|u\|_2^2$ both sides of \Cref{eq:chebysev}, we have the following:
  \begin{align}
    \frac{u^\top \s u}{\|u\|_2^2} &\geq (1-250\gamma)  \frac{u^\top \os_{P_w} u}{\|u\|_2^2} - O\left(\gamma \frac{\| \vec M \|_\fr^2}{\|u\|_2^2}\| \os_{P_w} \|_{\op}  \right) \notag \\
    &\geq (1-250\gamma)  \frac{u^\top \os_{P_w} u}{\|u\|_2^2} - O\left( \gamma \| \os_{P_w} \|_{\op}  \right) \tag{using that the event from \Cref{eq:application} holds} \notag \\
    &\geq(1-250\gamma)  \frac{u^\top \os_{P_w} u}{\|u\|_2^2}- O\left( \frac{\gamma}{1-\gamma}\frac{u^\top \os_{P_w} u}{\|u\|_2^2} \right) \tag{ $u^\top \os_{P_w} u/\|u\|_2^2 \geq (1-\gamma)\|\os_{P_w}\|_\op$} \\
    &=(1-O(\gamma))  \frac{u^\top \os_{P_w} u}{\|u\|_2^2} \;. \tag{$\gamma<1/20$}
\end{align}
Thus, we have shown that the second bullet of \Cref{lem:certificate1} holds with probability $0.97$. The first bullet holds with probability $0.99$ when $C$ is appropriately large constant.
Thus, by a union bound, both bullets hold with probability at least $0.90$.
This completes the proof of \Cref{lem:certificate1}.
\end{proof}

\subsubsection{Proof of \Cref{cl:varinace-y-appendix}: Bound on the Variance of $Y$}
\label{sec:variance-y-small}
We now prove the following result on the bound of the variance that was omitted earlier.
 \ClVarYSmall*
\begin{proof}
As $Y = z^\top \vec M \vec A \vec M z$, \Cref{fact:var-quadratic} implies that $\Var[Y] =   2\| \vec M \vec A \vec M \|_\fr^2$. We will, thus, upper bound  $\| \vec M \vec A \vec M \|_\fr$ in the remainder of the proof.
    
Since $P = (1-\eps) G + \eps B$,
we have that 
 $\os_{P_w} = (1-\eps)\os_{P_w}^G +
    \eps \os_{P_w}^B$, with the components being $\os_{P_w}^G := \E_{X \sim G}[w(X) XX^\top]/\E_{X
      \sim P}[w(X)]$, and $\os_{P_w}^B := \E_{X \sim B}[w(X)
      XX^\top]/\E_{X \sim P}[w(X)]$. %
  Using this decomposition, we obtain the following expression for $\vec A$:
  \begin{align*}
    \vec A &= \s - (1-250\gamma)\left( (1-\eps)\os_{P_w}^G + \eps \os_{P_w}^B\right) \\
    &=\left( \s - \os_{P_w}^G(1-250\gamma-\eps + 250\gamma\eps) \right)   - \eps(1-250\gamma) \vec \os_{P_w}^B \;.
\end{align*}
  In order to simplify notation, let us define $ \vec \Delta := \s -
    \os_{P_w}^G(1-250\gamma-\eps + 250\gamma\eps)$, and
  $\widetilde{\eps}:=\eps(1-250\gamma)$.
  We thus have $\vec A = \vec \Delta - \tilde{\eps} \os_{P_w}^B $.
  Our approach will be to upper bound $\| \vec M \vec  \Delta \vec M\|_\fr$ and $\| \vec M \vec  \os_{P_w}^B \vec M\|_\fr$ separately. The following intermediate result, that easily follows from the stability the of distribution $G$, will be useful:
  \begin{claim}
    \label{cl:annoying-details} We have that (i) $0 \preceq \vec \Delta \preceq 270 \gamma \os_{P_w}$, (ii) $\s -
        (1-\eps)\os_{P_w}^G \preceq 1.2\gamma \s$, and (iii) $\s \preceq
      (1/(1-1.2\gamma))\os_{P_w}$.
  \end{claim}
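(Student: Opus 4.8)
The plan is to prove all three parts of \Cref{cl:annoying-details} as simple consequences of the $(20\eps,\gamma)$-stability of $G$ together with the bookkeeping that $w$ satisfies $\E_{X\sim G}[w(X)]\geq 1-3\eps$ (so in particular \Cref{setting:main} holds). The key observation is that $\os_{P_w}^G = \E_{X\sim G}[w(X)XX^\top]/\E_{X\sim P}[w(X)]$ differs from the genuinely stable object $\os_{G_w} := \E_{X\sim G}[w(X)XX^\top]/\E_{X\sim G}[w(X)]$ only by the scalar ratio $\E_{X\sim G}[w(X)]/\E_{X\sim P}[w(X)]$, which lies in $[1-\eps,\,1/(1-\eps)]$ roughly (more precisely, $\E_{X\sim P}[w(X)] = (1-\eps)\E_{X\sim G}[w(X)] + \eps\E_{X\sim B}[w(X)]$, so the ratio is between $1-\eps$ and $1$ up to lower-order terms). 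Combined with the stability guarantee $(1-\gamma)\s \preceq \os_{G_w}\preceq (1+\gamma)\s$, this pins down $\os_{P_w}^G$ between roughly $(1-\eps)(1-\gamma)\s$ and $(1+\gamma)\s$.

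For part (ii), I would write $\s - (1-\eps)\os_{P_w}^G = \s - (1-\eps)\cdot\frac{\E_{X\sim G}[w(X)]}{\E_{X\sim P}[w(X)]}\os_{G_w}$, bound the scalar coefficient $(1-\eps)\E_{X\sim G}[w(X)]/\E_{X\sim P}[w(X)]$ from below (it is at least $(1-\eps)(1-3\eps)$ since $\E_{X\sim P}[w(X)]\leq 1$), and then use $\os_{G_w}\succeq (1-\gamma)\s$ to conclude $\s - (1-\eps)\os_{P_w}^G \preceq \big(1 - (1-\eps)(1-3\eps)(1-\gamma)\big)\s \preceq 1.2\gamma\s$, where the final numerical inequality uses $\eps\leq\gamma/20$ and $\gamma<\gamma_0$ small. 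For part (iii), I would use the upper stability bound $\os_{G_w}\preceq(1+\gamma)\s$, i.e. $\s\succeq\frac{1}{1+\gamma}\os_{G_w}$, and convert from $\os_{G_w}$ back to $\os_{P_w}$ via $\os_{P_w}\succeq(1-\eps)\os_{P_w}^G = (1-\eps)\frac{\E_{X\sim G}[w(X)]}{\E_{X\sim P}[w(X)]}\os_{G_w}\succeq (1-\eps)(1-3\eps)\os_{G_w}$ (using $\E_{X\sim P}[w(X)]\leq 1$); chaining these gives $\s \succeq \frac{(1-\eps)(1-3\eps)}{1+\gamma}\os_{P_w}$... wait, that is the wrong direction, so instead I would go the other way: from $\os_{P_w}\preceq (1-\eps)\os_{P_w}^G + \eps\os_{P_w}^B$ and the fact that $\os_{P_w}^B\succeq 0$ one does not immediately get an upper bound on $\os_{P_w}$ in terms of $\s$; rather, part (iii) should follow from $\os_{P_w}\succeq(1-\eps)\os_{P_w}^G\succeq(1-\eps)(1-3\eps)(1-\gamma)\s \succeq (1-1.2\gamma)\s$ by part (ii)'s estimates, hence $\s\preceq\frac{1}{1-1.2\gamma}\os_{P_w}$, which is (iii) with the constant $1.2$ (and the statement as written with a possibly slightly different constant absorbed into the $\gamma_0$ smallness). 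For part (i), the lower bound $\vec\Delta\succeq 0$ follows since $\vec\Delta = \s - (1-250\gamma-\eps+250\gamma\eps)\os_{P_w}^G \succeq \s - \os_{P_w}^G$ (the coefficient is $\leq 1$) and $\s - \os_{P_w}^G \succeq \s - \os_{G_w} \succeq -\gamma\s \succeq$... hmm, that is negative; I would instead note $\os_{P_w}^G\preceq \os_{G_w}\preceq(1+\gamma)\s$ so $(1-250\gamma-\eps+250\gamma\eps)\os_{P_w}^G \preceq (1-250\gamma-\eps+250\gamma\eps)(1+\gamma)\s \preceq \s$ for $\gamma_0$ small (since $(1-250\gamma)(1+\gamma)<1$), giving $\vec\Delta\succeq 0$; for the upper bound $\vec\Delta\preceq 270\gamma\os_{P_w}$, write $\vec\Delta = \s - (1-\eps)\os_{P_w}^G + (\eps - 250\gamma + 250\gamma\eps)\os_{P_w}^G\preceq 1.2\gamma\s + 250\gamma\os_{P_w}^G \preceq 1.2\gamma\cdot\frac{1}{1-1.2\gamma}\os_{P_w} + 250\gamma\os_{P_w} \preceq 270\gamma\os_{P_w}$, using part (ii), part (iii), and $\os_{P_w}^G\preceq\frac{1}{1-\eps}\os_{P_w}$ together with $\gamma_0$ small.

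The main obstacle — really the only subtlety — is carefully tracking the several scalar correction factors (the ratio $\E_{X\sim G}[w(X)]/\E_{X\sim P}[w(X)]$, the $(1-\eps)$ mixture weight, the $(1\pm\gamma)$ stability slack, and the $(1-250\gamma)$ factor in the definition of $\vec A$) and verifying that, once $\eps\leq\gamma/20$ and $\gamma<\gamma_0$ for a sufficiently small absolute constant $\gamma_0$, all the resulting first-order terms collapse into the claimed clean constants ($1.2$, $270$, and the PSD-ness in (i)). None of this is conceptually hard; it is purely a matter of choosing $\gamma_0$ small enough and being disciplined with the inequalities, pushing all error terms to the $\preceq$ side using \Cref{fact:trace_PSD_ineq}-style monotonicity and the stability definition. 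I would organize the write-up as: first establish the two-sided bound relating $\os_{P_w}^G$, $\os_{G_w}$, and $\s$; then derive (ii); then (iii) from (ii); then (i) from (ii) and (iii).
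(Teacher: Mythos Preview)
Your approach matches the paper's: both sandwich $\os_{P_w}^G$ between multiples of $\s$ via stability and the ratio $\E_{X\sim G}[w(X)]/\E_{X\sim P}[w(X)]$, then derive (ii), (iii), and finally (i) from those. One correction worth flagging: the step $\os_{P_w}^G\preceq\os_{G_w}$ you use for $\vec\Delta\succeq 0$ is not valid in general, since $\E_{X\sim P}[w(X)]$ can be strictly smaller than $\E_{X\sim G}[w(X)]$ (take $\E_{X\sim B}[w(X)]=0$); the correct bound is $\os_{P_w}^G\preceq\frac{1}{1-\eps}\os_{G_w}\preceq\frac{1+\gamma}{1-\eps}\s$, and since the coefficient in $\vec\Delta$ factors as $(1-\eps)(1-250\gamma)$ the $(1-\eps)$ cancels and you still get $(1-250\gamma)(1+\gamma)<1$, so $\vec\Delta\succeq 0$ as desired. (Also, your decomposition coefficient for the upper bound in (i) should be $+250\gamma(1-\eps)$, not $\eps-250\gamma+250\gamma\eps$; the rest of that step is fine.)
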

  \begin{proof}
    We start with establishing the PSD property of $\vec \Delta$.
    Using the stability, we first obtain the following upper bound on $\os_{P_w}^G$.
    \begin{align}
    (1-4\eps) \os_{P_w}^G \preceq (1-\eps)(1-3\eps) \os_{P_w}^G  
    \preceq     \frac{\E_{X \sim P}[w(X)]}{\E_{X \sim G}[w(X)]} \os_{P_w}^G  = \E_{X \sim G_{w}}[XX^\top] \preceq (1+\gamma)\s \;, \label{eq:st}
\end{align}
    where the second inequality uses $w(x)\leq 1$ for the denominator and the assumption that 
    $\E_{X \sim P}[w(X)]\geq (1-\eps)\E_{X \sim G}[w(X)] \geq
      (1-\eps)(1-3\eps)$ for the numerator, and the last inequality uses the
    stability of $G$.
    Rearranging the inequality above, we obtain the following:,
    \begin{align*}
    \os_{P_w}^G \preceq \frac{1+\gamma}{1-4\eps} \s
    \preceq \frac{1}{1-\eps-250\gamma+250\eps\gamma} \s \;,
\end{align*}
    where the first step uses \Cref{eq:st} and the second is true for $\eps<\gamma/20$.
    Plugging this inequality in the definition of $\vec \Delta$ implies that $\vec \Delta \succeq 0$.

    We now show that $\vec \Delta \preceq 270 \gamma \os_{P_w}$.First, we note that
    \begin{align}
    \os_{P_w} &\succeq (1-\eps)  \os_{P_w}^G =(1-\eps) \frac{\E_{X \sim G}[w(X)]}{\E_{X \sim P}[w(X)]}\E_{X \sim G_{w}}[XX^\top] \notag \\
    &\succeq (1-\eps)(1-3\eps)(1-\gamma)\s \succeq (1-1.2\gamma)\s \;, \label{eq:step1}
\end{align}
    where the first inequality uses the decomposition $\os_{P_w} = (1-\eps)\os_{P_w}^G +
    \eps \os_{P_w}^B$,
    the second is a
    rewriting, the third uses stability of $G$ along with $\E_{X \sim
        G}[w(X)]\geq 1-3\eps$, and the last one uses that $\eps<\gamma/20$.
    We can now complete our proof:
    \begin{align}
    \vec \Delta &= \s - (1-\eps-250\gamma+250\gamma \eps) \os_{P_w}^G \notag\\
    &\preceq \s - (1-\eps-250\gamma+250\gamma \eps) (1-3\eps)(1-\gamma)\s \tag{using middle part of \Cref{eq:step1}}\\
    &\preceq \s - (1-\eps-250\gamma+250\gamma \eps)(1-1.15\gamma)\s \tag{$\eps \leq \gamma/20$}\\
    &\preceq 252\gamma \s \tag{using $\eps\leq \gamma/20$ and $\gamma \leq 1/20$} \\
    &\preceq \frac{252\gamma}{1-1.2\gamma}\os_{P_w}\preceq 270\gamma \os_{P_w} \tag{using \Cref{eq:step1} and $\gamma \leq 1/20$} \;.
\end{align}
    Combing this with $\vec \Delta \succeq 0$ completes the proof of the part (i) of the claim.
    Part (ii) of the claim, i.e., $\s - (1-\eps)\os_{P_w}^G \preceq 1.2 \gamma \s$ can be
      extracted from the middle part of \Cref{eq:step1} after some rearranging. Part (iii) also follows imediately from \Cref{eq:step1}.
  \end{proof}

  We are now ready to upper bound the variance of $Y$: Using \Cref{fact:var-quadratic}, we have that
  \begin{align}
\Var[Y] 
&= 2\| \vec M \vec A \vec M \|_\fr^2
\leq 4\| \vec M \vec \Delta \vec M \|_\fr^2 + 4\widetilde{\eps}^2 \left\| \vec M \os_{P_w}^B \vec M \right\|_\fr^2 \;,   \label{eq:variance}
\end{align}
  where the last inequality is triangle inequality and $(a+b)^2\leq 2a +2 b$.
  We upper bound each term separately.
  Using \Cref{cl:annoying-details}, we bound the first variance term as follows:
  \begin{align}
\| \vec M \vec \Delta  \vec M \|_\fr^2 
&= \tr( \vec M^2\vec \Delta \vec M^2 \vec \Delta ) \tag{using $\|\vec A\|_\fr^2 = \langle \vec A , \vec A \rangle $ for symmetric $\vec A$ and cyclic property of trace} \\
&\lesssim \gamma \tr( \vec M^2\vec \Delta \vec M^2 \vec \os_{P_w} ) \tag{using \Cref{cl:annoying-details} and \Cref{fact:trace_PSD_ineq}}\\
&= \gamma \tr(  \vec M^2  \vec \os_{P_w} \vec M^2\vec \Delta   ) \tag{cyclic property of trace}\\
&\lesssim  \gamma^2 \tr( \vec\os_{P_w}\vec M^2 \vec \os_{P_w} \vec M^2  ) \tag{using \Cref{cl:annoying-details} and \Cref{fact:trace_PSD_ineq}}\\
&\leq  \gamma^2 \|\vec\os_{P_w}\|_\op^2 \tr( \vec M^4 ) \tag{using  \Cref{fact:trace_PSD_ineq}}\\
&=  \gamma^2 \|\vec\os_{P_w}\|_\op^2 \|\vec M\|_4^4 \notag\\
&\leq  \gamma^2 \|\vec\os_{P_w}\|_\op^2 \|\vec M\|_2^4 \tag{using $\|\vec A \|_{q+1}\leq \|\vec A \|_q$}\\
&=  \gamma^2 \|\vec\os_{P_w}\|_\op^2 \|\vec M\|_\fr^4 \;, \label{eq:first-term}
\end{align}
  where  the first application of \Cref{fact:trace_PSD_ineq} above required that $\vec M^2 \vec \Delta \vec M^2 \succeq 0$ which is indeed
  satisfied because we have shown that $\vec \Delta \succeq 0$ in \Cref{cl:annoying-details} and thus \Cref{fact:PSDness} implies  $\vec M^2 \vec \Delta \vec M^2 \succeq 0$. A similar argument was used in the second application of \Cref{fact:trace_PSD_ineq}.%

  It remains to bound the second term of \Cref{eq:variance}.
  To this end, we have that
  \begin{align}
    \eps \left\langle \vec \os_{P_w}^B, \vec M^2  \right\rangle
    &= \left\langle \vec  \os_{P_w} - (1-\eps)\os_{P_w}^G , \vec M^2\right\rangle \notag \\
    &\leq  \left( (1+O(\gamma)) \left\langle \s,  \vec M^2 \right\rangle - (1-\eps)  \left\langle \os_{P_w}^G ,  \vec M^2 \right\rangle\right) \notag \\
    &=\left(   \left\langle \s-(1-\eps)\os_{P_w}^G,  \vec M^2 \right\rangle +  O(\gamma)  \left\langle \s,  \vec M^2 \right\rangle    \right) \notag  \\
    &\lesssim {\gamma}  \left\langle \s,  \vec M^2 \right\rangle \;, \label{eq:bad_points}
\end{align}
  where the first line uses the decomposition $\os_{P_w}=(1-\eps)\os_{P_w}^G +
    \eps \os_{P_w}^B$, the second line uses our assumption $\langle \os_{P_w}, \vec
    M^2 \rangle \leq \langle \s, \vec M^2 \rangle/(1-250\gamma) \leq
    (1+O(\gamma))\langle \s, \vec M^2 \rangle$, and the
  fourth line uses $\vec \s-(1-\eps)\os_{P_w}^G \preceq 1.2\gamma \s \preceq
    2\gamma \s$ by \Cref{cl:annoying-details}.
  Using \Cref{eq:bad_points}, we can finally upper bound the second term of
  \Cref{eq:variance}:
  \begin{align}
    \widetilde{\eps}^2\left\| \vec M \os_{P_w}^B \vec M \right\|_\fr^2  \tag{using \Cref{fact:fr-trace}}
    &\leq \tilde{\eps}^2 \tr\left(  \vec M \os_{P_w}^B \vec M \right)^2 \notag \\
    &= \widetilde{\eps}^2 \left\langle \vec \os_{P_w}^B, \vec M^2  \right\rangle^2 \notag \\
    &\leq \eps^2 \left\langle \vec \os_{P_w}^B, \vec M^2  \right\rangle ^2 \tag{$\tilde{\eps} \leq \eps$}\\
    &\lesssim \gamma^2 \left\langle \s,  \vec M^2 \right\rangle^2 \tag{by \Cref{eq:bad_points}}\\
    &\leq \frac{\gamma^2}{1-1.2\gamma}   \cdot \left\langle \os_{P_w},  \vec M^2 \right\rangle^2 \tag{\Cref{cl:annoying-details} and \Cref{fact:trace_PSD_ineq}}  \\
    &\lesssim  \gamma^2  \| \os_{P_w} \|_{\op}^{2}  \tr(\vec M^2)^2 \\  
    &=  \gamma^2  \| \os_{P_w} \|_{\op}^{2}  \|\vec M\|_\fr^4 \;,  
\end{align}
where the first step uses that $\vec \os_{P_w}^B $ is a PSD matrix, and thus $ \vec M \os_{P_w}^B \vec M$ is also PSD.
  Putting everything together, we have shown that $\Var[Y] \lesssim \gamma^2  \| \os_{P_w} \|_{\op}^{2}  \|\vec M\|_\fr^4$.
\end{proof}

\subsection{Reducing the Potential Function Multiplicatively}
\label{sec:pot-reduction}
We now describe \Cref{alg:few_iter-basic}, the main component in achieving
\Cref{thm:near-linear-main}.
We follow the notation defined in \Cref{alg:few_iter-basic}.
Recall that the distribution $P$ given as input is just the
uniform distribution over the (corrupted) data set and is assumed to be
of the form $P=(1-\eps)G+\eps B$ where $G$ is ($C\eps$,$\gamma$)-stable (see
the comment below \Cref{def:stability}).

In this section, we will quantify the progress of our algorithm by keeping
track of the potential function $\phi_t:=\tr(\vec B_t^{2p+1})$ in each round
$t$ (where $\vec B_t$ is scaled version of $\s_t$, defined in Line \ref{line:definitions-main}).
\footnote{In previous sections, we used $\tr(\s_t^{2p+1})$ for simplicity.
	Since our formal proof will need $\phi_t$ to be naturally decreasing with $t$,
	we use the un-normalized second moment $\vec B_t$ for which $\vec B_{t+1}
		\preceq \vec B_t$.
	}
As mentioned earlier, the correctness of \Cref{alg:few_iter-basic} is
summarized by the following arguments: (i) whenever the condition $\langle \s,
	\vec M_t^2 \rangle \geq (1-C\gamma)\langle \os_{t}, \vec
	M_t^2 \rangle$ is true we can output a good solution with \solGenerator, (ii)
if the condition is false, then $\phi_{t+1}$ decreases multiplicatively
$\phi_{t+1} \leq (1-\gamma) \phi_t$ on average, (iii) if $\phi_t \leq
	\|\s\|_\op^{2p+1}/\poly(d)$ then the condition from (i) is necessarily true.

We have already shown (i) in the previous subsection, and we will state and prove (iii) in the \Cref{sec:small_potential} (\Cref{lem:RHSbound}).
In this section, we focus on establishing (ii) in the form of \Cref{cl:pot-decrease} below.

Before proving \Cref{cl:pot-decrease}, we outline how these claims imply a
version of \Cref{thm:near-linear-main} with runtime
$\widetilde{O}(nd/\gamma^3)$:
By the pruning\footnote{\label{ft:pruning}Na\"ive pruning removes only a few inliers; see
		\Cref{lem:stability-implications}.\ref{it:initialization}} of Line \ref{line:naive-prune-basic},
$\phi_0 \leq (d/\eps)^{O(p)} \|\s\|_\op^{2p+1}$.
Thus, there exists
$t_\mathrm{end}=O( \log^2(d/\eps)/\gamma^2)$ such that after
$t_\mathrm{end}$ rounds, $\phi_{t_\mathrm{end}}
	<(1-\gamma)^{t_\mathrm{end}}\phi_0 \leq \|\s\|_\op^{2p+1}/\poly(d)$, which
would cause \solGenerator to output a good solution.
Now, each iteration of the loop can be implemented in $\widetilde{O}(ndp)$
time: The calculation of $\vec M_t z$ involves $p$ multiplications of $z$ with
the empirical second moment matrix, each of which can be done in $O(nd)$ time
as $\s_t z= \sum_x x(x^\top
	z)$.
Thus, the total runtime is $\widetilde{O}(t_{\mathrm{end}} \cdot ndp) =
	\widetilde{O}(nd/\gamma^3)$. A detailed proof can be found in \Cref{sec:formal_proof}.
\begin{algorithm}[h]
	\caption{Robust PCA in Small Number of Iterations}
	\label{alg:few_iter-basic}
	\begin{algorithmic}[1]
		\State \textbf{Input}: $P,\eps,\gamma$.
		\State Let  $p {=} \frac{C\log(d/\gamma)}{\gamma},t_{\mathrm{end}}{=}\frac{C\log^2(d/\eps)}{\gamma^2}$ for large enough $C$.
		\State Find estimator $\widehat{\sigma}_{\op} \in (0.8\|\s\|_{\op} , 2 d\|\s\|_{\op})$.
		\label{line:naive-est-basic}
		\hfill \Comment{{c.f. \Cref{lem:stability-implications}.\ref{it:proj-var} with $\bU = \bI$}}
		\State Initialize  $w_{1,1}(x)= \Ind\left(\|x\|_2^2 \leq 10\widehat{\sigma}_{\op}(d/\eps)) \right)$.
		\label{line:naive-prune-basic}
		\For{ $t = 1,\ldots, t_{\mathrm{end}}$}
		\State Call  \solGenerator$(P,w_t,\eps,\gamma,\frac{1}{t_\mathrm{end}})$.
		\State Let $P_t$ be the distribution of $P$ weighted by $w_t$: $P(x)w_t(x)/ \E_{X \sim P}[w_t(X)] $.
		\State{ Let $\vec B_t := \E_{X\sim P} [w_t(X)XX^\top]$ and $\vec M_t := \vec B_t^{p}$.
		\label{line:definitions-main}}  \hfill \Comment{$\vec M_t$ does not need to be explicitly computed.}
		\State Let ${g}_t(x) :=\|\vec M_t x\|^2_2$.
		\State $v_t \gets \vec M_t z_t$, where $z_t \sim \cN(0,\bI)$. \label{line:z_t}
		\label{line:choosev-basic}
		\State Let $f_t(x) = (v_t^\top x)^2$. \label{line:f_t}
		\State Let $L_t$ be the $3\epsilon$-quantile of $f_t(\cdot)$ under $P_t$.
		\label{line:quantile-basic}
		\State $L_t \gets \max\{L_t, (0.1/d) \widehat{\sigma}_\op\|v_t\|_2^2 \}$ \footnotemark
		\label{line:max-basic}
		\State Let $\tau_t(x) = f_t(x) \Ind ( f_t(x) > L_t )$

		\State Find $\widehat{\sigma}_t$ such that $|\widehat{\sigma}_t - v_t^\top \s v_t| \leq 4\gamma v_t^\top \s v_t$.
		(e.g.,  $\widehat{\sigma}_t:=\E_{X \sim P}[w_t(X)f_t(X)\Ind ( f_t(X) \leq L_t )]$).
		\label{line:trimmed-mean-basic}

		\hfill \Comment{{c.f. \Cref{lem:stability-implications}.\ref{it:proj-var} with $\bU =  v_t^\top$}} 

		\State $\widehat{T}_t\gets 2.35 \gamma\widehat{\sigma}_t$.
		\State $w_{k,t+1} \gets \mathrm{HardThresholdingFilter}(P,w_t,\tau_t,\widehat{T}_t,R)$.
		\EndFor
		\State \textbf{return}
		FAIL.
		\label{line:filter_end-basic}

	\end{algorithmic}
\end{algorithm}
\footnotetext{This ensures a lower bound for $\tau(x)$ whenever it is non-zero.
	See second bullet in \Cref{sec:filtering-main-body} for why this is needed.
}

\begin{lemma}[Informal] In \Cref{setting:main}, if $\langle \s, \vec M_t^2 \rangle \geq (1-C\gamma)\langle \os_{t} ,
		\vec M_t^2 \rangle$, then  $\E_t[\phi_{t+1}] \leq (1 -\gamma) \phi_t$ in \Cref{alg:few_iter-basic}, where $\E_t$ is the conditional expectation up to $t$-th iteration.
	\label{cl:pot-decrease}
\end{lemma}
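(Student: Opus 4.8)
The plan is to show that one filtering step of \Cref{alg:few_iter-basic} shrinks the potential $\phi_t=\tr(\vec B_t^{2p+1})$ by a $(1-\gamma)$ factor, in expectation over the random direction $z_t$ of \Cref{line:z_t} and the internal randomness of \textsc{HardThresholdingFilter}. Since $w_{t+1}\le w_t$ pointwise we have $\vec B_{t+1}\preceq\vec B_t$, so \Cref{fact:trace_PSD_ineq2} (with exponent $2p+1$) gives
\[
\phi_{t+1}=\tr\!\big(\vec B_{t+1}^{2p+1}\big)\le\tr\!\big(\vec B_t^{2p}\vec B_{t+1}\big)=\phi_t-\tr\!\big(\vec M_t^{2}(\vec B_t-\vec B_{t+1})\big)=\phi_t-\E_{X\sim P}\big[(w_t(X)-w_{t+1}(X))\,g_t(X)\big],
\]
using $\vec M_t^{2}=\vec B_t^{2p}$ and $\tr(\vec M_t^{2}xx^\top)=\|\vec M_t x\|_2^2=g_t(x)$. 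Thus it suffices to show that the expected $g_t$-mass removed in round $t$ is at least $\gamma\phi_t$.

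The first ingredient shows that the outliers must carry an $\Omega(\gamma)$ fraction of the total $g_t$-mass. (Here the relevant case is the ``condition false'' case of the discussion preceding the lemma, i.e.\ that \solGenerator does not return; concretely $\langle\s,\vec M_t^{2}\rangle\le(1-C\gamma)\langle\os_t,\vec M_t^{2}\rangle$.) Decompose $\phi_t=\E_{X\sim P}[w_t(X)g_t(X)]=(1-\eps)\E_{X\sim G}[w_t g_t]+\eps\,\E_{X\sim B}[w_t g_t]$. Applying \Cref{lem:stability-implications}.\ref{it:stability_multidim} with $\vec U=\vec M_t$ (so $\vec U^\top\vec U=\vec M_t^{2}$) gives $\E_{X\sim G}[w_t g_t]\le(1+2\gamma)\langle\s,\vec M_t^{2}\rangle$. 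Since $\langle\os_t,\vec M_t^{2}\rangle=\phi_t/\E_{X\sim P}[w_t]=(1+O(\eps))\phi_t$, the hypothesis yields $\langle\s,\vec M_t^{2}\rangle\le(1-\Omega(C)\gamma)\phi_t$, and combining the three estimates gives $\eps\,\E_{X\sim B}[w_t g_t]\ge c_0\gamma\phi_t$ for a large constant $c_0=\Omega(C)$.

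The second, and main, ingredient is to show that the filter removes a constant fraction of this outlier $g_t$-mass, the difficulty being that the filter's scores are $\tau_t(x)=f_t(x)\Ind(f_t(x)>L_t)$ with $f_t(x)=(z_t^\top\vec M_t x)^{2}$, whereas the potential decrease is measured by $g_t(x)=\|\vec M_t x\|_2^{2}$. This is exactly the random-projection device of \cite{DKPP22}: $\E_{z_t}[f_t(x)]=g_t(x)$, and on the high-probability event $\|z_t\|_2^{2}\le 2d$ one has $f_t(x)\le 2d\,g_t(x)$, so that after averaging over $z_t$, removing all points with $f_t>L_t$ removes a constant fraction of the $g_t$-mass carried by points with $g_t\gtrsim L_t$. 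Two checks are required. (a) The threshold is small, $\eps L_t\lesssim\gamma\phi_t$ with a small absolute constant: the $3\eps$-quantile part is at most $1.65(\gamma/\eps)\,v_t^\top\s v_t$ by \Cref{lem:stability-implications}.\ref{it:bound-quantile}, and since $v_t^\top\s v_t\approx\langle\s,\vec M_t^{2}\rangle\le\phi_t$ (in expectation over $z_t$) this contributes $\lesssim\gamma\phi_t$; the floor $(0.1/d)\widehat\sigma_{\op}\|v_t\|_2^{2}$ of \Cref{line:max-basic} is handled using $\widehat\sigma_{\op}=O(\tr(\s))\le O(d\|\s\|_\op)$ together with $\|\vec B_t\|_\op=\Omega(\|\s\|_\op)$ (by stability) and $\tr(\vec B_t^{2p+1})/\tr(\vec B_t^{2p})\ge\|\vec B_t\|_\op/d$, so the $1/d$ cancels the $d$ in $\tr(\s)/\|\s\|_\op$ and the floor contributes $\lesssim\eps\phi_t\le\gamma\phi_t$; hence the outlier $g_t$-mass lying at or below $O(L_t)$ is at most, say, half of $c_0\gamma\phi_t$ once $C$ is large. (b) Once the filter runs, it removes at least $\E_{X\sim P}[w_t\tau_t]-O(\widehat T_t)$ of the $\tau_t$-mass (by part~(i) of \Cref{lem:filter_guar-main-body} and the stopping rule), with $\widehat T_t=2.35\gamma\widehat\sigma_t=O(\gamma\,v_t^\top\s v_t)$ small, and by \Cref{lem:stability-implications}.\ref{it:goodscores} the inliers contribute at most $2.35\gamma\,v_t^\top\s v_t$ to $\E_{X\sim P}[w_t\tau_t]$ (this also verifies the hypothesis $|\widehat T_t-T|<T/5$ needed to invoke \Cref{lem:filter_guar-main-body}), so almost all of the removed $\tau_t$-mass is outlier mass. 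Feeding (a) and (b) into the random-projection identity and taking expectations over $z_t$ and the filter randomness converts ``a constant fraction of the outlier $\tau_t$-mass is removed'' into ``a constant fraction of the outlier $g_t$-mass above $\Theta(L_t)$ is removed,'' which by the first ingredient and (a) is at least $\gamma\phi_t$ when $C$ is large enough.

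I expect the second ingredient, and within it step (b) -- pushing the one-dimensional filtering guarantee (phrased in terms of the scores $\tau_t$) through the random projection to get a lower bound on the removed $g_t$-\emph{mass} rather than $\tau_t$-mass, together with the bookkeeping that shows the threshold $L_t$ (quantile and floor) is negligible -- to be the main obstacle; this is precisely the part of the analysis adapted from \cite{DKPP22} to the matrix $\vec M_t$. Finally, all the stability bounds above are invoked with $w=w_t$, so one also needs \Cref{setting:main} to persist over all $t_{\mathrm{end}}$ rounds; this is exactly the martingale argument of \Cref{lem:martingale}, which bounds the probability that the ``in expectation'' guarantee part~(ii) of \Cref{lem:filter_guar-main-body} fails at any round.
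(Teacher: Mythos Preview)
Your overall plan is sound and is the dual of the paper's: writing $\phi_{t+1}\le\phi_t-\E_{X\sim P}[(w_t-w_{t+1})g_t]$ and lower bounding the removed $g_t$-mass is equivalent to the paper's direct upper bound on the \emph{remaining} mass $\E_{X\sim P}[w_{t+1}g_t]$. Your first ingredient (outliers carry $\Omega(C\gamma)\phi_t$ of the $g_t$-mass) is correct and uses the same stability step as the paper. In fact the paper's own sketch for this lemma cheats by setting $f_t=g_t$, so you are attempting strictly more.

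The gap is in the second ingredient. The inequality $f_t(x)\le 2d\,g_t(x)$ points the \emph{wrong way}: to pass from ``removed $\tau_t$-mass is large'' to ``removed $g_t$-mass is large'' you would need $g_t\gtrsim\tau_t$ on removed points, and your bound only gives $g_t\ge\tau_t/(2d)$, losing a factor $d$ you cannot afford. What is actually required is the lower-tail direction with constant probability: for fixed $x$, $f_t(x)=(z_t^\top\vec M_t x)^{2}$ equals $g_t(x)$ times a $\chi_1^2$ variable, so $\pr_{z_t}[f_t(x)\ge g_t(x)/10]\ge 0.4$ (\Cref{fact:quadratics-approximation} in the rank-one case). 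The paper's formal proof (\Cref{lem:one-round}) exploits this via a \emph{full/empty} split: call $x$ ``full'' when $f_t(x)\ge g_t(x)/10$. For full outliers, $g_t(x)\le 10\tau_t(x)+10L_t$, so the filter guarantee $\E_P[w_{t+1}\tau_t]\le 3T$ caps their remaining $g_t$-mass by $O(\gamma\|v_t\|_2^{2}\|\s\|_\op)$; for empty outliers (probability $\le 0.6$) one simply uses $w_{t+1}\le w_t$. Taking $\E_{z_t}$ yields
\[
\E_{z_t}\!\big[\eps\,\E_B[w_{t+1}g_t]\big]\;\le\;0.6\,\eps\,\E_B[w_tg_t]\;+\;O(\gamma)\,\|\vec M_t\|_\fr^{2}\|\s\|_\op,
\]
which is exactly ``a constant fraction of the outlier $g_t$-mass is removed in expectation''---your target, but obtained from anticoncentration of $f_t$, not an upper bound on it.

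A secondary issue: both your floor analysis and the display above need $\|\s\|_\op\|\vec M_t\|_\fr^{2}\lesssim\phi_t$ with an $O(1)$ constant. Your bound $\tr(\vec B_t^{2p+1})/\tr(\vec B_t^{2p})\ge\|\vec B_t\|_\op/d$ only gives $\|\s\|_\op\|\vec M_t\|_\fr^{2}\lesssim d\,\phi_t$; the extra $d$ does not cancel as you claim. The paper gets the sharp bound (\Cref{cl:final_iter}) from the Schatten comparison $\|\vec B_t\|_{2p}\le d^{1/(2p(2p+1))}\|\vec B_t\|_{2p+1}$ (\Cref{fact:normReln}), which is where the requirement $p\gtrsim\log d$ is actually used.
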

\textit{Proof Sketch of \Cref{cl:pot-decrease}.}
For simplicity, we assume that scores $f_t(x)$ used by the algorithm (Line
\ref{line:f_t}) are the same as the scores $g_t(x)$.
Observe that computing $g_t(x) = \| \vec M_t x \|_2^2$ for all $n$ points would
be computationally costly.
This is why we use the one-dimensional random projections $f_t(x)$, which are
unbiased estimates of $g_t(x)$.
A complete proof that  uses the estimates $f_t$ is significantly more involved and can be found in \Cref{sec:formal-pot-reduction}.

Using our definitions,
we calculate the decrease in potential at the ($t{+}1$)-th step:
\begin{align*}
    &\phi_{t+1} = \tr\left(\vec B_{t+1}^{2p+1}\right) \leq \tr\left(\vec B_t^p \vec B_{t+1} \vec B_t^p\right) = \tr(\vec M_t \vec B_{t+1} \vec M_t ) \\
    &= \E_{X \sim P}[w_{t+1}(X)g_t(X)] \\
    &=(1{-}\eps) \E_{X \sim G}[w_{t+1}(X)g_t(X)] {+} \eps \E_{X \sim B}[w_{t+1}(X)g_t(X)] \\
    &\leq\E_{X \sim G}[w_{t}(X)g_t(X)] + \eps \E_{X \sim B}[w_{t+1}(X)g_t(X)] \numberthis \label{eq:pot-decomposition}\;,
\end{align*}
where the first inequality uses that $\vec B_{t+1} \preceq \vec B_t$ is
decreasing in PSD order (along with \Cref{fact:trace_PSD_ineq2}) and the last
inequality uses $w_{t+1} \leq w_t$.  
We argue that the RHS above is $(1+O(\gamma))\langle \s, \vec M_t^2 \rangle$:
The first term in \Cref{eq:pot-decomposition}, which corresponds to inliers,
can be upper-bounded by $(1+2\gamma) \langle \s, \vec M_t^2 \rangle$ using
stability
(\Cref{lem:stability-implications}.\ref{it:stability_multidim})
For the second term in \Cref{eq:pot-decomposition},
we use $\tau_t(x) \leq g_t(x) + L_t \leq g_t(x) + 1.65(\gamma/\eps) \langle \s,
	\vec M_t^2 \rangle$, where the last inequality uses
\Cref{lem:stability-implications}.\ref{it:bound-quantile}.
Moreover, filtering ensures that $\eps \E_{X \sim B}[w_{t+1}(X)\tau_t(X)] \leq
	2.35\gamma \langle \s, \vec M_t^2 \rangle$, where we use
\Cref{lem:filter_guar-main-body} with $T = 2.35\gamma \langle \s, \vec M_t^2
	\rangle$,
with this choice of $T$ justified by
\Cref{lem:stability-implications}.\ref{it:goodscores}.
Combining these bounds, we obtain
\begin{equation}
	\label{eq:main-ineq}
	\phi_{t+1} \leq (1+O(\gamma)) \langle \s, \vec M_t^2 \rangle \;.
\end{equation}
Now, using our assumption $\langle \s, \vec M_t^2 \rangle <
	(1-C\gamma)\langle \os_{t}, \vec
	M_t^2 \rangle$, we complete the proof as follows:
\begin{align*}
    \phi_{t+1} &< (1{+}O(\gamma))  \langle \s, \vec M_t^2 \rangle
    < (1{+}O(\gamma))(1{-}C\gamma)  \langle \s_t, \vec M_t^2 \rangle\\
    &\leq \frac{(1+O(\gamma))(1-C\gamma)}{\E_{X \sim P}[w_{t}(X)]} \langle \vec B_{t}, \vec M_{t}^2 \rangle \leq (1-\gamma)\phi_t \;,
\end{align*}
using $C\gg 1$, $ \langle \vec B_{t}, \vec M_{t}^2 \rangle =\phi_t$, and
$\E_{X \sim P}[w_{t}(X)]\geq 1{-}O(\eps)$, as filtering does not delete more
than $O(\eps)$-mass from the inliers (see discussion below
\Cref{lem:filter_guar-main-body}).
\hfill\qedsymbol

The above sketch, while demonstrating the main idea of the argument, is rather simplistic, and the formal proof is given in \Cref{sec:formal-pot-reduction}.
We note that \Cref{cl:pot-decrease} holds for all $p \geq \log d$; however,
\Cref{lem:certificate1-main-body} requires large $p$.

\subsection{Improving the Dependence on $\epsilon$ in Runtime}
\label{sec:improvement}
We first describe the improvement in high-level and then move to its formal proof in \Cref{sec:formal_proof}.
In the simpler version of the algorithm, the main reason for a large runtime
was that (i) the initial potential was $\poly((d/\eps)^p)\|\s\|_{\op}^{2p+1}$,
(ii) the value of $p$ was $\frac{C}{\gamma}\log(\frac{d}{\gamma})$,
(iii) the potential
decreases by only $(1{-}\gamma)$ factor, and (iv) the algorithm terminates when
$\phi_t\leq \|\s\|_\op^{2p+1}/\poly(d) $.
Since (ii)-(iv) are most likely needed, we modify the algorithm to ensure that
the initial potential is much smaller:
$\poly((d/\eps)^{\log(d)})\|\s\|_{\op}^{2p+1}$.
If we are able to ensure this cheaply, the algorithm would need only
$\polylog(d/\eps)/\gamma$ rounds, saving one factor of $1/\gamma$ from the
runtime.

The idea is that while we do need (ii), it is only necessary when the algorithm
is about to produce a final solution.\footnote{\Cref{lem:certificate1-main-body} (Certificate lemma) requires $p>C\log(d/\gamma)/\gamma$.}
Suppose we run \Cref{alg:few_iter-basic} with $p=\log d$, where the initial potential is indeed $\poly((d/\eps)^{\log(d)})\|\s\|_{\op}^{2p+1}$.
In \Cref{sec:pot-reduction}, we guaranteed a $(1-\gamma)$-reduction in
potential until the condition $\langle \s, \vec M_t^2 \rangle \geq
	(1-C\gamma)\langle \os_{t}, \vec
	M_t^2 \rangle$ gets activated.
However, even if this condition gets activated, we do not know if
\Cref{alg:sol-gen} will succeed as $p \ll \log(d/\gamma)/\gamma$ (c.f.\ \Cref{lem:certificate1-main-body}).
Nonetheless, when this condition gets activated,
we see that the final potential has become much smaller: $\phi_t = \langle \vec
	B_t , \vec M_t^2\rangle \leq \langle \s_t , \vec M_t^2\rangle \leq (1 + C
	\gamma) \langle \s , \vec M_t^2\rangle \leq (1 + C \gamma)\|\s\|_\op \|\vec B_t
	\|_{2p}^{2p}$, where the second step uses that the condition is violated.
Using \Cref{fact:normReln} to relate $\|\vec B_t \|_{2p}^{2p}$ and $\phi_t =
	\|\vec B_t \|_{2p+1}^{2p+1}$,
we obtain $\phi_t \leq \poly(d/\eps)\|\s\|_\op^{2p+1}$.

So far, we have shown that starting with $\phi_0 {\leq}
	\poly((d/\eps)^{\log(d)})\|\s\|_\op^{2p+1}$, 
we can reduce $\phi_t$ to
$\phi_t \leq \poly(d)\|\s\|_\op^{2p+1}$ after $t \asymp \log^2(d/\eps)/\gamma$
rounds.
We can then double the value of $p$ to $p'=2p$ and restart the counter $t$.
The new potential $\phi'_0$ will be upper bounded using \Cref{fact:normReln} as follows:
\begin{equation*}
	\phi'_0 = \| \vec B_t \|_{2p'+1}^{2p'+1} \leq \| \vec B_t \|_{2p+1}^{2p'+1} =
	(\phi_t)^{\frac{2p'+1}{2p+1}} = (\phi_t)^{\frac{4p+1}{2p+1}},
\end{equation*}
i.e., it is still $\poly(d)\|\s\|_\op^{2p'+1}$.
We run the same procedure repetitively until $p$ reaches its final value of
$\Theta(\log^2(d/\gamma)/\gamma)$, where the analysis of the previous two sections
are applicable, returning a good vector.
This leads to the nested-loop algorithm outlined in \Cref{alg:few_iter-advanced}. This will be the algorithm realizing \Cref{thm:multi-pass}.

\begin{algorithm}[h]
	\caption{\textsc{RobustPCA} with improved runtime}
	\label{alg:few_iter-advanced}
	\begin{algorithmic}[1]
		\State \textbf{Input}: $P,\eps,\gamma$.
		\State Let $C,C'$ be sufficiently large absolute constants with their ratio $C/C'$ being sufficiently large. \label{line:constants}
		\State Let $k_\mathrm{end}:= \log((\log(d/\gamma)/\log(d))/\gamma)$, and $t_{\mathrm{end}}:=\frac{C  \log^2(d/\eps)}{\gamma}$.
		\State Find estimator $\widehat{\sigma}_{\op} \in (0.8\|\s\|_{\op} , 2 d\|\s\|_{\op})$.
		\label{line:naive-est}
		\hfill \Comment{{c.f. \Cref{it:proj-var} of \Cref{lem:stability-implications} with $\bU = \bI$}.}
		\State Initialize  $w_{1,1}(x)= \Ind\left(\|x\|_2^2 \leq 10\widehat{\sigma}_{\op}(d/\eps) \right)$.
		\label{line:naive-prune}
		\For {$k = 1,\ldots, k_{\mathrm{end}}$}
		\State  Let $p_k= 2^{k-1} p$, where $p = C'\log(d)$.\label{line:pk} \hfill \Comment{{$p_k$ ranges from $C'\log(d)$ to $C'\log(d/\gamma)/\gamma$}.}
		\For{ $t = 1,\ldots, t_{\mathrm{end}}$}
		\State Call  \solGenerator$(P,w_{k,t},\eps,\gamma,1/(k_\mathrm{end}\cdot t_\mathrm{end}))$. \hfill  \Comment{c.f.\ \Cref{alg:sol-gen}.}
		\State Let $P_{k,t}$ be the distribution of $P$ weighted by $w_{k,t}$: $P(x)w_{k,t}(x)/ \E_{X \sim P}[w_{k,t}(X)] $.
		\State{ Let $\vec B_{k,t} := \E_{X\sim P} [w_{k,t}(X)XX^\top]$ and $\vec M_{k,t} := \vec B_{k,t}^{p_k}$.
		\label{line:definitions}}  \hfill \Comment{$\vec M_{k,t}$ does not need to be explicitly computed.}
		\State Let ${g}_{k,t}(x) :=\|\vec M_{k,t} x\|^2_2$.
		\State $v_{k,t} \gets \vec M_{k,t} z_{k,t}$, where $z_{k,t} \sim \cN(0,\bI)$. \label{line:z_{k,t}}
		\label{line:choosev}
		\State Let $f_{k,t}(x) = (v_{k,t}^\top x)^2$. \label{line:f_{k,t}}
		\State Let $L_{k,t}$ be the $3\epsilon$-quantile of $f_{k,t}(\cdot)$ under $P_{k,t}$.
		\label{line:quantile}
		\State $L_{k,t} \gets \max\{L_{k,t}, (0.1/d) \widehat{\sigma}_\op\|v_{k,t}\|_2^2 \}$ 
		\label{line:max}
		\State Let $\tau_{k,t}(x) = f_{k,t}(x) \Ind ( f_{k,t}(x) > L_{k,t} )$

		\State Find $\widehat{\sigma}_{k,t}$ such that $|\widehat{\sigma}_{k,t} - v_{k,t}^\top \s v_{k,t}| \leq 4\gamma v_{k,t}^\top \s v_{k,t}$ 
		(e.g.,  $\widehat{\sigma}_{k,t}:=\E_{X \sim P}[w_{k,t}(X)f_{k,t}(X)\Ind ( f_{k,t}(X) \leq L_{k,t} )]$).
		\label{line:trimmed-mean}

		\hfill \Comment{c.f. \Cref{it:proj-var} of \Cref{lem:stability-implications} with $\vec U = v_{k,t}^\top$}

		\State $\widehat{T}_{k,t}\gets 2.35 \gamma\widehat{\sigma}_{k,t}$.
		\State $w_{k,t+1} \gets \mathrm{HardThresholdingFilter}(P,w_{k,t},\tau_{k,t},\widehat{T}_{k,t},R,0)$. \hfill \Comment{c.f.\ \Cref{alg:few_iter-advanced}.}

		\EndFor
		\State Set $w_{k+1,0} \gets w_{k,t+1}$.
		\EndFor
		\State \textbf{return}
		FAIL.
		\label{line:filter_end-basic}

	\end{algorithmic}
\end{algorithm}

\subsection{Proof of \Cref{thm:multi-pass}}\label{sec:formal_proof}
We now provide the formal proof of \Cref{thm:multi-pass} and provide the details omitted from the earlier sketches, such as the fact that the scores $f_t(x)$ computed by the algorithm do not coincide with $g_t(x)$. We will also incorporate the arguments of \Cref{sec:improvement} for the improved runtime.  

As discussed earlier, the proof consists of three main claims: (i) whenever the condition $\langle \s,
	\vec M_t^2 \rangle \geq (1-C\gamma)\langle \os_{t}, \vec
	M_t^2 \rangle$ is true (for a large enough value of $p$), we can output a good solution with \solGenerator, (ii)
if the condition is false, then $\phi_{t+1}$ decreases multiplicatively
$\phi_{t+1} \leq (1-\gamma) \phi_t$ on average, and (iii) if $\phi_t \leq
	\|\s\|_\op^{2p+1}/\poly(d)$, then the condition from (i) is necessarily true.
 The first part has been shown in \Cref{lem:certificate1-main-body}, the last part is a fairly straight-forward implication of stability, which is shown in \Cref{sec:small_potential} (\Cref{lem:RHSbound}), and the result that corresponds to (ii) is shown in \Cref{sec:formal-pot-reduction}. These pieces are then put together in \Cref{sec:combine} to complete the proof of the theorem. A detailed runtime analysis is provided in \Cref{sec:runtime}.

In terms of notation, we will follow the notations defined in \Cref{alg:few_iter-advanced}. In particular, many relevant quantities will have two subscripts such as $\s_{k,t}$: here $k$ refers to the outer loop and $\ell$ refers to the inner loop.

For the formal proof, we will use that the scores $f_{k,t}(x)$ are a constant factor approximation of the scores $g_{k,t}(x)$ with constant absolute probability (a detail that we omitted in the sketch of the previous subsection); This follows from \Cref{fact:quadratics-approximation}. 
Moreover, to make our analysis of this section more flexible and easier to adapt to the streaming setting of the next section, we will assume a weaker approximation on $f_t(\cdot)$, which includes also an additive error term (this additive error will account for approximation of $\vec M$ in the streaming section).
\begin{assumption}
  \label{assumption:full_points}
  The random selection of the vector $v_{k,t}$ in Line \ref{line:choosev} of \Cref{alg:few_iter-advanced} is such
  that for every point $x$ in our domain,
  \begin{align*}
    \pr_{v_{k,t}}\left[ f_{k,t}(x)>\frac{g_{k,t}(x)}{10} - 0.01\frac{\gamma}{\eps} \|\vec M_{k,t}\|_\fr^2 \|\s\|_\op \right] \geq 0.4 \;.
\end{align*}
\end{assumption}
We emphasize again that \Cref{fact:quadratics-approximation} implies that \Cref{assumption:full_points} holds for \Cref{alg:few_iter-advanced}.

\subsubsection{Small Potential Implies that Stopping Condition Holds}\label{sec:small_potential}

In this section, we formally show the result that whenever the potential function $\phi := \tr(\vec B^{2p+1} )$ is smaller than $\|\s\|_\op^{2p+1}/\poly(d/\eps)$ for a large enough value of $p$, then the condition of the previous lemma gets satisfied. Note that the factor $(1-2\gamma)^{2p}$ mentioned in the conclusion of the following lemma is indeed bigger than $1/\poly(d/\eps)$ since $p  = O (\log(d/\gamma)/\gamma)$.

\begin{restatable}{lemma}{CERTIF}
  \label{lem:RHSbound}
  Let $0<20\eps\leq \gamma < 1/20$, and a positive integer $p$.
  Let $P=(1-\eps)G+ \eps B$, where $G$ is a $(20\eps,\gamma)$-stable distribution
  with respect to $\s$, and let $w: \R^d \to [0,1]$ with $\E_{X \sim G}[w(X)]\geq 1-3\eps$. Define $\vec B:=\E_{X \sim P}[w(X) XX^\top]$, $\vec M := \vec B^{p}$, and $\phi := \tr(\vec B^{2p+1} )$.  If $\phi \leq \E_{X \sim
      P}[w(X)]\frac{(1-2\gamma)^{2p }}{1-250\gamma} \| \s \|_\op^{2p +1}$,
  then $\langle \s, \vec M^2 \rangle \geq (1-250\gamma)\langle \s_{P_w},
    \vec M^2 \rangle$.
\end{restatable}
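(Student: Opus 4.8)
The plan is to reduce the claimed inequality to a one-dimensional statement along the top eigendirection of $\s$. First, since $\vec M = \vec B^{p}$ and $\s_{P_w} = \vec B/\E_{X\sim P}[w(X)]$, one has the exact identity
\[
\langle \s_{P_w}, \vec M^2\rangle = \tr\!\big(\s_{P_w}\vec B^{2p}\big) = \frac{\tr(\vec B^{2p+1})}{\E_{X\sim P}[w(X)]} = \frac{\phi}{\E_{X\sim P}[w(X)]}\,.
\]
Plugging the hypothesis $\phi \le \E_{X\sim P}[w(X)]\,\frac{(1-2\gamma)^{2p}}{1-250\gamma}\,\|\s\|_\op^{2p+1}$ into this, it therefore suffices to prove the lower bound
\[
\langle \s, \vec M^2\rangle = \tr\!\big(\s \vec B^{2p}\big) \ \ge\ (1-2\gamma)^{2p}\,\|\s\|_\op^{2p+1}\,,
\]
because then $\langle \s, \vec M^2\rangle \ge (1-250\gamma)\,\phi/\E_{X\sim P}[w(X)] = (1-250\gamma)\langle \s_{P_w}, \vec M^2\rangle$, which is exactly the conclusion.

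Next I would establish the PSD lower bound $\vec B \succeq (1-2\gamma)\s$, which is a routine consequence of stability. Writing $\vec B = \E_{X\sim P}[w(X)XX^\top] \succeq (1-\eps)\,\E_{X\sim G}[w(X)XX^\top]$ (dropping the PSD contribution of the outliers), and $\E_{X\sim G}[w(X)XX^\top] = \E_{X\sim G}[w(X)]\cdot \s_{G_w}$ with $\s_{G_w} \succeq (1-\gamma)\s$ by the $(20\eps,\gamma)$-stability of $G$ (applicable since $\E_{X\sim G}[w(X)] \ge 1-3\eps \ge 1-20\eps$), we get $\vec B \succeq (1-\eps)(1-3\eps)(1-\gamma)\s \succeq (1-4\eps-\gamma)\s \succeq (1-2\gamma)\s$, the last step using $20\eps \le \gamma$.

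Finally, to bound $\tr(\s\vec B^{2p})$ from below, let $u$ be a unit top eigenvector of $\s$, so $u^\top \s u = \|\s\|_\op$. Spectrally decomposing $\s = \sum_i \lambda_i u_i u_i^\top$ with all $\lambda_i \ge 0$ and using $\vec B^{2p} \succeq 0$, every term of $\tr(\s\vec B^{2p}) = \sum_i \lambda_i\, u_i^\top \vec B^{2p} u_i$ is nonnegative, hence $\tr(\s\vec B^{2p}) \ge \|\s\|_\op\, u^\top \vec B^{2p} u$. Applying \Cref{fact:psd-power} to the PSD matrix $\vec B$ and the unit vector $u$ gives $u^\top \vec B^{2p} u \ge (u^\top \vec B u)^{2p}$, and $\vec B \succeq (1-2\gamma)\s$ gives $u^\top \vec B u \ge (1-2\gamma)\|\s\|_\op$. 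Combining these, $\tr(\s\vec B^{2p}) \ge \|\s\|_\op\,\big((1-2\gamma)\|\s\|_\op\big)^{2p} = (1-2\gamma)^{2p}\|\s\|_\op^{2p+1}$, completing the proof.

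The only genuine obstacle is that one cannot simply raise the operator inequality $\vec B \succeq (1-2\gamma)\s$ to the $2p$-th power, since $t \mapsto t^{2p}$ is not operator monotone for $2p > 1$; the fix is exactly the passage to the scalar inequality along the top eigendirection of $\s$ (via \Cref{fact:psd-power}), which turns the matrix power into an honest scalar power. Everything else is bookkeeping with the stability constants.
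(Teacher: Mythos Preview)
Your proof is correct and follows essentially the same approach as the paper: reduce to showing $\tr(\s\vec B^{2p}) \ge (1-2\gamma)^{2p}\|\s\|_\op^{2p+1}$ via the identity $\langle \s_{P_w},\vec M^2\rangle = \phi/\E_{X\sim P}[w(X)]$, establish $\vec B \succeq (1-2\gamma)\s$ from stability, then drop to the top eigendirection of $\s$ and apply \Cref{fact:psd-power}. Your closing remark about why operator monotonicity fails and how \Cref{fact:psd-power} sidesteps it is a nice addition.
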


\begin{proof}
We claim that for the statement to be true, it suffices to show that 
\begin{align} \label{eq:sufficestoshow}
   \langle
    \vec M^2, \s \rangle \geq (1-2\gamma)^{2p}\| \s \|_{\op}^{2p + 1} \;.
\end{align}
This is indeed sufficient, because if we had \Cref{eq:sufficestoshow} at hand, then whenever it holds $\phi \leq \E_{X \sim
      P}[w(X)]\frac{(1-2\gamma)^{2p }}{1-250\gamma} \| \s \|_\op^{2p +1}$, then we have that
    \begin{align*}
        \langle \vec M^2, \s_{P_w} \rangle = \frac{\phi}{\E_{X \sim
      P}[w(X)]} \leq \frac{(1-2\gamma)^{2p }}{1-250\gamma} \| \s \|_\op^{2p +1} \leq \frac{1}{1-250\gamma}\langle \vec M^2, \s \rangle \;,
    \end{align*}
    where the first step uses the definitions $\phi = \tr(\vec B^{2p+1}) = \langle\vec M^2, \vec B  \rangle = \E_{X \sim
      P}[w(X)]\langle\vec M^2, \s_{P_w}  \rangle$ , the second step uses $\phi \leq \E_{X \sim
      P}[w(X)]\frac{(1-2\gamma)^{2p }}{1-250\gamma} \| \s \|_\op^{2p +1}$, and the last step uses \Cref{eq:sufficestoshow}.
      
In the reminder, we establish \Cref{eq:sufficestoshow}.
  Consider the spectral decomposition $\s = \sum_{i=1}^d \lambda_i u_i u_i^\top$ with $\lambda_1\geq \lambda_2\geq \dots\geq 0$.
  We will lower bound $\langle \vec M^2, \s \rangle$ using $\langle \vec M^2, \lambda_1 u_1u_1^\top \rangle$ as follows:
  \begin{align}
    \langle \vec M^2, \s \rangle &= \left\langle \vec M^2,  \sum_{i=1}^d \lambda_i u_i u_i^\top  \right\rangle 
    = \sum_{i=1}^d \lambda_i  u_i^\top  \vec M^2 u_i
    \geq \lambda_1  u_1^\top \vec M^2 u_1 =  \|\s \|_\op  u_1^\top \vec M^2 u_1 \;.  \label{eq:last-step}
\end{align}
In the remainder of this proof, we will show that $u_1^\top \vec M^2 u_1  
    \geq (1-2\gamma)^{2p}\| \s \|_\op^{2p+1}$, which will complete the proof.
  We begin by lower bounding $\vec B$ by $\s$ in the PSD sense below:
\begin{align}
    \vec B &= \E_{X \sim P}[w(X)XX^\top] \succeq (1-\eps)\E_{X \sim G}[w(X)] \frac{ \E_{X \sim G}[w(X)XX^\top]}{\E_{X \sim G}[w(X)]} \notag\\
    &\succeq (1-\eps)(1-3\eps)(1-\gamma)\s \succeq (1-2\gamma) \s \;, \label{eq:helping-ineqq}
\end{align}
  where the first line uses that $P=(1-\eps)G + \eps B$, and the last line uses
  $\E_{X \sim G}[w(X)]\geq 1-3\eps$,  and stability
  condition for the second moment of the normalized distribution $G_w$, and the
  last step uses that $\eps<\gamma/20$.
  Since $\vec M = \vec B^p$ and $\vec B$ is at least $\vec \s$, we obtain the following lower bound on $u_1^\top \vec M^2 u_1$:
  \begin{align*}
   u_1^\top  \vec M^2 u_1  =  u_1^\top  \vec B^{2p} u_1 \geq (u_1^\top  \vec B u_1)^{2p}  \geq ( (1 - 2\gamma) u_1^\top  \s u_1)^{2p } = (1 -2 \gamma)^{2p }\| \s \|_\op^{2p } \;,
\end{align*}
  where the first inequality uses \Cref{fact:psd-power}, and the second
  inequality uses \Cref{eq:helping-ineqq}. 
\end{proof}

\subsubsection{Formal Proof of Multiplicative Decrease of the Potential}\label{sec:formal-pot-reduction}
In this section, we provide a formal proof that establishes the following: If the condition $\langle \s, \vec M_{k,t}^2 \rangle \geq (1-250\gamma)\langle \os_{k,t}, \vec M_{k,t}^2 \rangle$ is not satisfied, then, on average, the potential function decreases multiplicatively; Recall that $(k,t)$ denotes the double index of the iteration under analysis. First, we establish \Cref{lem:one-round}, which analyzes a single iteration of our algorithm's inner loop, conditioned on the previous ones. Then, 
  we state \Cref{cor:entire-inner-loop} to show that, when the entire loop finishes, the potential function is reduced by a factor of $(1-\gamma)^t$ on expectation.

\Cref{lem:one-round} below stipulates that if for each prior iteration $(k',t')$ we have $\langle \s, \vec M_{k',t'}^2 \rangle < (1-250\gamma)\langle \os_{k',t'}, \vec M_{k',t'}^2 \rangle$ (this corresponds to event $\cE_{k,t}^{(2)}$ of the lemma statement),\footnote{
Note that event $\cE_{k,t}^{(2)}$ 
implies that the algorithm has not ended yet (since the first time it happens that $\langle \s, \vec M_{k,t}^2 \rangle \geq (1-250\gamma)\langle \os_{k,t}, \vec M_{k,t}^2 \rangle$, we know by \Cref{lem:certificate1} that the algorithm will stop w.h.p.).}
and the fraction 
of outliers is still $O(\eps)$ (as given in event $\cE_{k,t}^{(1)}$),\footnote{We will show later on that this event indeed holds through the algorithm, as an invariant condition.}  then, in expectation we have $\phi_{k,t+1} \leq (1-\gamma) \phi_{k,t}$;  The expectation  is taken with respect to the randomness coming from the random directions $v_{k,t}$ (c.f.\ Line \ref{line:choosev} of \textsc{RobustPCA}) used for defining the scores $f_{k,t}$. The proof sketch of \Cref{sec:pot-reduction} made the simplification that $f_{k,t}$ is always equal to its expected value and hence there we showed that $\phi_{k,t+1} \leq (1-\gamma) \phi_{k,t}$ deterministically, but as we show in the formal version of that proof below, this happens in expectation with respect to the $f_{k,t}$'s randomness. The randomness from the \textsc{HardThresholdingFilter}, denoted by $\mathcal{F}_{k,t}$ in the lemma statement, does not play any role in this statement (it only affects the runtime of the algorithm, which is discussed in \Cref{sec:runtime}), and thus the expected potential decreases multiplicatively (\Cref{eq:pot_reduction}) under any conditioning of $\mathcal{F}_{k,t}$.

\begin{lemma}
  \label{lem:one-round}
  Consider the notation of \Cref{alg:few_iter-advanced}, where the input distribution is the
  mixture $P=(1-\eps)G + \eps B$, with $G$ being a $(20\eps,\gamma)$-stable
  distribution with respect to $\s$ for $0<20\eps\leq \gamma < \gamma_0$ for a sufficiently small positive constant $\gamma_0$.
  Make \Cref{assumption:full_points}.
  Let $\cE_{k,t} = \cE_{k,t}^{(1)} \cap \cE_{k,t}^{(2)}$ denote the
  intersection of the following two events:
    \begin{enumerate}[label = (\roman*)]
      \item $\cE_{k,t}^{(1)} $: $(1-\eps) \E_{X \sim G}[1-w_{k',t'}(X)] + \eps \E_{X
                \sim B}[w_{k',t'}(X)] \leq 3 \eps$, for every iteration
              $(k',t')$ prior to (and
            including) $(k,t)$.
      \item $\cE_{k,t}^{(2)} $:  $\langle \s, \vec M_{k',t'}^2  \rangle < (1-250\gamma)\langle \os_{k',t'}, \vec M_{k',t'}^2  \rangle$ for every iteration $(k',t')$ from $(k,1)$ up to (and including) $(k,t)$.
    \end{enumerate}
  
  Define the potential function $\phi_{k,t}:= \tr(\vec B_{k,t}^{2p_k + 1})$.
  Let $F_{k,t}$ be the randomness used by \textsc{HardThresholdingFilter}
  during the $(k,t)$-th loop of \textsc{RobustPCA} (i.e., $F_{k,t}$ is the
  collection of the random variables $r_1,r_2,\ldots$ used by the filter), and let 
  $v_{k,t}$ be the random vectors used in Line \ref{line:choosev} of
  \textsc{RobustPCA} (\Cref{alg:few_iter-advanced}).
  Also, define $\cF_{k,t} = \{F_{k',t'} : k'\leq k-1, t' \leq t_{\mathrm{end}}\}
    \cup \{F_{k,t'} : t' \leq t \}$, i.e., the entire history of filters up to (and
  including) the iteration $(k,t)$.
  Define $\cV_{k,t}$ similarly for $v_{k,t}$'s.
  
  Then, the following is true for every conditioning on $\cF_{k,t}, \cV_{k,t-1}$:
  \begin{align}
     \E_{v_{k,t}}  \left[\phi_{k,t+1} \Ind(\cE_{k,t}) \mid \cF_{k,t}, \cV_{k,t-1}   \right] \leq     (1-\gamma)\phi_{k,t} \Ind(\cE_{k,t-1}) \;. \label{eq:pot_reduction}
\end{align}
  
\end{lemma}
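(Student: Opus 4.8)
The plan is to follow the proof sketch of \Cref{sec:pot-reduction} but replace every use of the idealized scores $g_{k,t}$ by the actual one-dimensional scores $f_{k,t}$, paying for the gap with \Cref{assumption:full_points}. Fix a conditioning on $\cF_{k,t}$ and $\cV_{k,t-1}$; this fixes $w_{k,t}$, hence $\vec B_{k,t}$, $\vec M_{k,t}$, $g_{k,t}$, $\phi_{k,t}$, and the event $\cE_{k,t-1}$ (which is $\cF_{k,t},\cV_{k,t-1}$-measurable), while the only remaining randomness is the direction $v_{k,t}$ (and, irrelevantly for this bound, the filter randomness $F_{k,t}$, over which we may further condition arbitrarily). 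On the event $\cE_{k,t-1}^c$ there is nothing to prove, so assume $\cE_{k,t-1}$ holds; in particular $\cE_{k,t}^{(1)}$ gives $\E_{X\sim P}[w_{k,t}(X)]\ge 1-3\eps$, so $G$ re-weighted by $w_{k,t}$ is in scope of the stability definition, and $\cE_{k,t}^{(2)}$ (together with $\langle\s,\vec M_{k,t}^2\rangle<(1-250\gamma)\langle\os_{k,t},\vec M_{k,t}^2\rangle$, which is implied by $\cE_{k,t}$) is the hypothesis that will drive the final inequality.

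First I would reproduce the telescoping bound from the sketch: using $\vec B_{k,t+1}\preceq\vec B_{k,t}$ (the filter only decreases weights) together with \Cref{fact:trace_PSD_ineq2},
\[
  \phi_{k,t+1}=\tr(\vec B_{k,t+1}^{2p_k+1})\le\tr(\vec M_{k,t}\vec B_{k,t+1}\vec M_{k,t})
  =\E_{X\sim P}[w_{k,t+1}(X)g_{k,t}(X)],
\]
and then split this into the inlier part $(1-\eps)\E_{X\sim G}[w_{k,t+1}g_{k,t}]\le\E_{X\sim G}[w_{k,t}g_{k,t}]$ and the outlier part $\eps\E_{X\sim B}[w_{k,t+1}g_{k,t}]$. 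The inlier part is bounded by $(1+2\gamma)\langle\s,\vec M_{k,t}^2\rangle$ via \Cref{lem:stability-implications}\ref{it:stability_multidim} with $\vec U=\vec M_{k,t}$, using $\E_{X\sim G}[w_{k,t}]\le 1$. The new ingredient is the outlier part: here I cannot use $g_{k,t}$ directly against the filter, since the filter was run with $\tau_{k,t}(x)=f_{k,t}(x)\Ind(f_{k,t}(x)>L_{k,t})$. The key observation is that the quantity I want to bound, $\E_{X\sim P}[w_{k,t+1}(X)g_{k,t}(X)]$, is deterministic given the conditioning only after we integrate over $v_{k,t}$; so I take expectation over $v_{k,t}$ first and use \Cref{assumption:full_points} pointwise: for every $x$, with $v_{k,t}$-probability at least $0.4$ we have $f_{k,t}(x)\ge g_{k,t}(x)/10-0.01(\gamma/\eps)\|\vec M_{k,t}\|_\fr^2\|\s\|_\op$. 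Combined with the deterministic bound $\tau_{k,t}(x)\le f_{k,t}(x)+L_{k,t}$ and $L_{k,t}\le 1.65(\gamma/\eps)\langle\vec M_{k,t}^\top\vec M_{k,t},\s\rangle=1.65(\gamma/\eps)\langle\s,\vec M_{k,t}^2\rangle$ (from \Cref{lem:stability-implications}\ref{it:bound-quantile}; the $\max$ in \Cref{line:max} only matters for the runtime, and in the regime where it is active the additive $\poly(1/d)$ term is dominated), this yields $\E_{v_{k,t}}[\tau_{k,t}(x)\mid\cdots]\gtrsim g_{k,t}(x)$ up to the additive terms proportional to $(\gamma/\eps)\|\vec M_{k,t}\|_\fr^2\|\s\|_\op$ and $(\gamma/\eps)\langle\s,\vec M_{k,t}^2\rangle$; since $\|\vec M_{k,t}\|_\fr^2\|\s\|_\op=\langle\vec M_{k,t}^2,\|\s\|_\op\bI\rangle$ can in turn be controlled by $\langle\s,\vec M_{k,t}^2\rangle$ up to a $\poly$ factor using \Cref{fact:normReln} — this is exactly what \Cref{lem:RHSbound}'s style of argument gives, or more simply the hypothesis $\cE^{(2)}$ forces $\|\vec B_{k,t}\|_{2p_k}$ close to $\|\s\|_\op$ — the additive error is itself $O(\gamma)\langle\s,\vec M_{k,t}^2\rangle$ in aggregate.

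Next I would invoke \Cref{lem:filter_guar-main-body}\,(i): the choice $\widehat T_{k,t}=2.35\gamma\widehat\sigma_{k,t}$ with $|\widehat\sigma_{k,t}-v_{k,t}^\top\s v_{k,t}|\le 4\gamma v_{k,t}^\top\s v_{k,t}$, and the bound $(1-\eps)\E_{X\sim G}[w_{k,t}\tau_{k,t}]\le\E_{X\sim G}[w_{k,t}f_{k,t}\Ind(f_{k,t}>L_{k,t})]\le 2.35\gamma\,v_{k,t}^\top\s v_{k,t}\cdot(1+O(\gamma))$ coming from \Cref{lem:stability-implications}\ref{it:goodscores} applied to the rank-one $\vec U=v_{k,t}^\top$ (note $\langle\vec U^\top\vec U,\s\rangle=v_{k,t}^\top\s v_{k,t}$), certify that the filter's hypothesis "$T$ upper-bounds the inlier average score'' is met, so part (i) gives $\E_{X\sim P}[w_{k,t+1}\tau_{k,t}]\le 3\widehat T_{k,t}\le 8\gamma\,v_{k,t}^\top\s v_{k,t}$. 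Averaging over $v_{k,t}$ and using $\E_{v_{k,t}}[v_{k,t}^\top\s v_{k,t}]=\E_{z}[z^\top\vec M_{k,t}\s\vec M_{k,t}z]=\langle\s,\vec M_{k,t}^2\rangle$ turns this into $\E_{v_{k,t}}[\E_{X\sim P}[w_{k,t+1}\tau_{k,t}]]\le 8\gamma\langle\s,\vec M_{k,t}^2\rangle$; feeding this, together with the pointwise lower bound on $\E_{v_{k,t}}[\tau_{k,t}(x)]$ from the previous paragraph, into $\eps\E_{X\sim B}[w_{k,t+1}g_{k,t}]$ gives $\E_{v_{k,t}}[\eps\E_{X\sim B}[w_{k,t+1}g_{k,t}]]\le O(\gamma)\langle\s,\vec M_{k,t}^2\rangle$. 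Assembling the two parts: $\E_{v_{k,t}}[\phi_{k,t+1}\Ind(\cE_{k,t})]\le(1+O(\gamma))\langle\s,\vec M_{k,t}^2\rangle$ (the indicator only shrinks things, and on $\cE_{k,t}^c$ we can drop the term). Finally, on the event $\cE_{k,t}$ we have $\langle\s,\vec M_{k,t}^2\rangle<(1-250\gamma)\langle\os_{k,t},\vec M_{k,t}^2\rangle=(1-250\gamma)\phi_{k,t}/\E_{X\sim P}[w_{k,t}]\le(1-250\gamma)\phi_{k,t}/(1-3\eps)$, and choosing the constant hidden in $O(\gamma)$ and $250$ appropriately (this is the role of "$C\gg1$'' in the sketch) gives $(1+O(\gamma))(1-250\gamma)/(1-3\eps)\le 1-\gamma$ for $\gamma<\gamma_0$ small, yielding \eqref{eq:pot_reduction}.

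I expect the main obstacle to be the careful bookkeeping in the outlier term — specifically, converting the pointwise, $v_{k,t}$-randomized lower bound $f_{k,t}(x)\gtrsim g_{k,t}(x)-(\gamma/\eps)\|\vec M_{k,t}\|_\fr^2\|\s\|_\op$ of \Cref{assumption:full_points} into a clean bound on $\E_{v_{k,t}}[\eps\E_{X\sim B}[w_{k,t+1}g_{k,t}]]$. The subtlety is that $w_{k,t+1}$ depends on $v_{k,t}$ in a complicated, filter-generated way, so one cannot simply push the $v_{k,t}$-expectation through; the right move is to bound, for each fixed $x$ in the support of $B$ with $w_{k,t}(x)=1$, the contribution $w_{k,t+1}(x)g_{k,t}(x)$ by $O(1)\cdot w_{k,t+1}(x)\tau_{k,t}(x)+(\text{additive})$ on the $0.4$-probability good event for $v_{k,t}$ and by the trivial bound $g_{k,t}(x)\le\|\vec B_{k,t}\|_{2p_k}^{2p_k}\cdot(\ldots)$ — which is $\poly(d/\eps)\|\s\|_\op^{2p_k+1}$ once we are in a regime where $\cE^{(2)}$ holds, hence $\le\poly(d/\eps)\phi_{k,t}/(1-2\gamma)^{2p_k}$ — on the complementary $0.6$-probability event, and then observe that because $x\in\mathrm{supp}(B)$ carries only $\eps$ total mass and $\gamma/\eps$ appears already in the additive term, the "bad $v_{k,t}$'' contribution is absorbed; making the constants line up here, and correctly handling the $\max$ in \Cref{line:max} and the fact that $f_{k,t}$ rather than $g_{k,t}$ also enters $\widehat\sigma_{k,t}$ and $L_{k,t}$, is where the real work lies, and I would defer the messiest parts to the appendix version of this proof as the paper does.
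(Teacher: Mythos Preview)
Your overall plan is right---split into inliers and outliers, use the filter guarantee on $\tau_{k,t}$, and invoke \Cref{assumption:full_points} to pass between $f_{k,t}$ and $g_{k,t}$---but the argument you sketch for the outlier term does not close, and the fix is not a matter of bookkeeping.

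The claim ``assembling the two parts: $\E_{v_{k,t}}[\phi_{k,t+1}\Ind(\cE_{k,t})]\le(1+O(\gamma))\langle\s,\vec M_{k,t}^2\rangle$'' is false in this setting. On the $0.6$-probability ``empty'' event for a point $x$, neither your proposed trivial bound $g_{k,t}(x)\le\poly(d/\eps)\phi_{k,t}$ nor any absorption via the $\eps$-mass of $B$ works: that would leave a term of order $0.6\cdot\poly(d/\eps)\cdot\phi_{k,t}$, destroying the contraction. The correct move on the empty event is simply $w_{k,t+1}\le w_{k,t}$, which after taking $\E_{v_{k,t}}$ gives the leftover term $0.6\,\eps\E_{X\sim B}[w_{k,t}(X)g_{k,t}(X)]$. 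This term is \emph{not} $O(\gamma)\langle\s,\vec M_{k,t}^2\rangle$: under $\cE_{k,t}^{(2)}$ the outlier contribution to $\phi_{k,t}$ can be of the same order as $\phi_{k,t}$ itself. The key identity you are missing is
\[
\eps\,\E_{X\sim B}[w_{k,t}g_{k,t}] \;=\; \phi_{k,t}\;-\;(1-\eps)\,\E_{X\sim G}[w_{k,t}g_{k,t}]
\;\ge\; \phi_{k,t}-(1+2\gamma)\langle\s,\vec M_{k,t}^2\rangle,
\]
which, plugged back in, yields $\E_{v_{k,t}}[\phi_{k,t+1}]\le 0.4\langle\s,\vec M_{k,t}^2\rangle+0.6\,\phi_{k,t}+O(\gamma)\|\vec M_{k,t}\|_\fr^2\|\s\|_\op$. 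Only now does $\cE_{k,t}^{(2)}$ finish the job via $\langle\s,\vec M_{k,t}^2\rangle\le(1-240\gamma)\phi_{k,t}$.

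A second gap: you assert $\|\vec M_{k,t}\|_\fr^2\|\s\|_\op$ is controlled by $\langle\s,\vec M_{k,t}^2\rangle$ up to a $\poly$ factor. That direction is false in general (take $\s$ rank-one and $\vec M_{k,t}$ close to identity). The paper instead compares $\|\vec M_{k,t}\|_\fr^2\|\s\|_\op=\|\vec B_{k,t}\|_{2p_k}^{2p_k}\|\s\|_\op$ directly to $\phi_{k,t}=\|\vec B_{k,t}\|_{2p_k+1}^{2p_k+1}$ using \Cref{fact:normReln} and $\|\s\|_\op\le(1+3\gamma)\|\vec B_{k,t}\|_\op$, which needs only $p_k\gtrsim\log d$; this gives $\|\vec M_{k,t}\|_\fr^2\|\s\|_\op\le 1.01\,\phi_{k,t}$ and is exactly what makes the $O(\gamma)$ term above harmless.
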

\begin{proof}
  We analyze the $(k,t)$-th iteration of \textsc{RobustPCA}, that is, the outer
  loop with index $k$ and the inner loop with index $t$.
  We condition on everything that has happened previously, and we are only interested
  in analyzing what happens in the current round of the inner loop with respect
  to randomness coming from $v_{k,t}$.
  In particular, the
  expectation will be conditional on the past history $\cF_{k,t}, \cV_{k,t-1}$ as
  in \Cref{eq:pot_reduction} but we omit explicitly writing them for notational convenience.

  In the case that $\Ind(\cE_{k,t}) =0$, the conclusion of our lemma holds trivially. Thus, in the reminder of the proof, we analyze the case $ \Ind(\cE_{k,t})=1$.
  
  We will call a point $x$ \emph{full} if  $f_{t,k}(x) > g_{k,t}(x)/10 -
    0.01(\gamma/\eps)\|\vec M_{k,t}\|_\fr^2\|\s\|_\op$, otherwise we will call it
  \emph{empty}.
  Note that $x$ being full implies that
  \begin{align} 
    \tau_{k,t}(x) &\geq f_{k,t}(x) - L_{k,t} 
    \geq \frac{g_{k,t}(x)}{10} - L_{k,t} -  0.01\frac{\gamma}{\eps}\|\vec M_{k,t}\|_\fr^2\|\s\|_\op \notag \\
    &\geq \frac{g_{k,t}(x)}{10} - 1.65\frac{\gamma}{\eps} \| v_{k,t} \|_2^2 \| \s \|_\op - 0.01\frac{\gamma}{\eps}\|\vec M_{k,t}\|_\fr^2\|\s\|_\op\;, \label{eq:fullpoint}
\end{align}
  where the first inequality uses the definition of $\tau_{k,t}$, and the last inequality uses \Cref{it:bound-quantile} of \Cref{lem:stability-implications}.

  We now use 
  \Cref{lem:filter_guar} to analyze the effect of filtering via \textsc{HardThresholdingFilter} (\Cref{alg:filter-appendix}): 
  We apply that lemma with $T= 2.35\gamma v_{k,t}^\top\s v_{k,t}$,
  $\widehat{T}=2.35 \gamma \widehat{\sigma}_{k,t}$, and $\delta=0$; we will now justify the choice of these parameters.
  The first requirement of \Cref{lem:filter_guar} is that $(1-\eps)\E_{X \sim
      G}\left[w_{k,t}(x)\tau_{k,t}(x)\right] < T$, which is satisfied by
  \Cref{it:goodscores} of \Cref{lem:stability-implications} (specifically, the special case that uses $\vec U =
  v_{k,t}^\top$).
  For its second requirement of $\widehat{T}>T/5$, we have that $|\widehat{T}-T|=2.35
    \gamma |\widehat{\sigma}_{k,t}-v_{k,t}^\top \s v_{k,t}| \leq 2.35 \gamma \cdot
    4 \gamma\cdot v_{k,t}^\top \s v_{k,t} \leq (1/5)\cdot 2.35 \gamma
    v_{k,t}^\top\s v_{k,t} = T/5$, where the second step uses \Cref{it:proj-var} of \Cref{lem:stability-implications}
  (with $\vec U = v_{k,t}^\top$) and the last inequality uses that $\gamma<1/20$.
  Thus, the lemma is applicable, and it yields that
  \begin{align} \label{eq:after_filter}
   \E_{X \sim P}[w_{k,t+1}(X) \tau_{k,t}(X)] \leq 3T  \leq 7.1\gamma \| v_{k,t} \|_2^2 \| \s \|_\op \;,
\end{align}
  with probability one.
  Therefore, for the bad points that are full, we get that their updated weights after the filter in the current iteration satisfy the following:
  \begin{align}
    \eps\E_{X \sim B}[w_{k,t+1}(X) g_{k,t}(X) \Ind(\text{$X$ full} )] &\leq 10\eps\E_{X \sim B}[w_{k,t+1}(X) \tau_{k,t}(X)] + 16.5 \gamma \| v_{k,t} \|_2^2 \| \s \|_\op  + 0.1\gamma \left\| \vec M_{k,t} \right\|_\fr^2 \notag \\ 
    &< 88 \gamma \| v_{k,t} \|_2^2 \| \s \|_\op + 0.1\gamma \left\| \vec M_{k,t} \right\|_\fr^2 \| \s \|_\op \;. \label{eq:badfull}
\end{align}
  where the first step uses \Cref{eq:fullpoint} and the second uses
  \Cref{eq:after_filter}.

  By \Cref{assumption:full_points}, every point is full with probability $0.4$.
  We will now take expectation with respect to $v_{k,t}$ (we again remind the reader that the
  expectation is conditioned on the past history $\cF_{k,t}, \cV_{k,t-1}$ as
  in \Cref{eq:pot_reduction} but we omit explicitly writing them for notational convenience)).
  \begin{align}
    &\E_{v_{k,t}} \left[ \eps \E_{X \sim B}[w_{k,t+1}(X) g_{k,t}(X) ] \right] \notag \\
    &\quad=  \E_{v_{k,t}} \left[ \eps \E_{X \sim B}[w_{k,t+1}(X) g_{k,t}(X) \Ind(\text{$X$ full}) ]\right]   + \E_{v_{k,t}}\left[  \eps \E_{X \sim B}[ w_{k,t+1}(X) g_{k,t}(X) \Ind(\text{$X$ empty}) ] \right] \notag \\
    &\quad\leq \E_{v_{k,t}} \left[\eps \E_{X \sim B}[w_{k,t+1}(X) g_{k,t}(X) \Ind(\text{$X$ full}) ] \right]  + \eps \E_{X \sim B}\left[  w_{k,t}(X) g_{k,t}(X) \E_{v_{k,t}} [\Ind(\text{$X$ empty})] \right]  \notag \\
    &\quad\leq \E_{v_{k,t}} \left[ \left( 88 \gamma \| v_{k,t} \|_2^2 \| \s \|_\op + 0.1\gamma \left\| \vec M_{k,t} \right\|_\fr^2 \| \s \|_\op\right) \right]  + \eps \E_{X \sim B}\left[  w_{k,t}(X) g_{k,t}(X) (0.6) \right]  \notag \\
    &\quad=  88.1 \gamma \|\vec M_{k,t}\|_\fr^2 \| \s \|_\op + 0.6 \eps \E_{X \sim B}[  w_{k,t}(X) g_{k,t}(X) ] \;, \label{eq:badpoints}
\end{align}
  where the second step uses $w_{k,t+1} \leq  w_{k,t}$ for the  second term, the third step uses \Cref{eq:badfull} and the fact that the probability of a point being empty is at most $0.6$,and the last step uses that $\E_{v_{k,t}}\left[\|v_{k,t}\|_2^2 \right] = \E_{z_{k,t} \sim \cN(0,\vec I)}[z_{k,t}^\top \vec M_{k,t}^2 z_{k,t} ] = \tr(\vec M_{k,t}^2)=\|\vec M_{k,t}\|_\fr^2$.
  
  We now upper bound the expectation of the potential function using following series of inequalities, which are explained below:\footnote{Recall that we are in the setting when $ \Ind(\cE_{k,t-1})$=1 , and thus we omit writing the indicator inside the expectations  explicitly.}
  \begin{align}
   \E_{v_{k,t}} \left[\tr\left( \vec  B_{k,t+1}^{2p_k+1} \right) \right] 
   &\leq \E_{v_{k,t}} \left[ \tr\left(\vec B_{k,t}^{p_k}\vec B_{k,t+1} \vec B_{k,t}^{p_k}   \right) \right] \label{eq:s1} \\
   &= \E_{v_{k,t}} \left[ \tr\left(\vec M_{k,t} \vec B_{k,t+1} \vec M_{k,t}    \right) \right] \notag \\ 
   &=   \E_{v_{k,t}} \left[   \E_{X \sim P}[w_{k,t+1}(X) g_{k,t}(X)]   \right] \label{eq:s3}\\
   &= \E_{v_{k,t}} \left[(1-\eps)  \E_{X \sim G}[w_{k,t+1}(X) g_{k,t}(X)]  + \eps \E_{X \sim B}[w_{k,t+1}(X)g_{k,t}(X)]   \right] \label{eq:s4}\\
   &\leq (1-\eps)  \E_{X \sim G}[w_{k,t}(X) g_{k,t}(X)] + \E_{v_{k,t}} \left[\eps \E_{X \sim B}[w_{k,t+1}(X) g_{k,t}(X)] \right] \label{eq:s5} \\
   &\leq  \left(1+  2\gamma  \right) \langle \s, \vec M_{k,t}^2 \rangle +89  \gamma \| \vec M_{k,t} \|_\fr^2 \| \s \|_\op  + 0.6 \eps  \E_{X \sim B}[  w_{k,t}(X) g_{k,t}(X) ] \label{eq:s6} \\
  &= \left(1+  2\gamma  \right) \langle \s, \vec M_{k,t}^2 \rangle + 89  \gamma \| \vec M_{k,t} \|_\fr^2 \| \s \|_\op  
   \notag\\
   &\quad\quad + 0.6\left(  \E_{X \sim P}[  w_{k,t}(X) g_{k,t}(X) ]  -  (1-\eps) \E_{X \sim G}[  w_{k,t}(X) g_{k,t}(X) ] \right) \label{eq:s8} \\
   &= \left(1+ 2 \gamma  \right) \langle \s, \vec M_{k,t}^2 \rangle + 89  \gamma \| \vec M_{k,t} \|_\fr^2 \| \s \|_\op  \notag\\
   &\quad\quad + 0.6\left( \phi_{k,t} -  (1-\eps) \E_{X \sim G}[  w_{k,t}(X) g_{k,t}(X) ] \right) \label{eq:s9} \\
   &\leq \left(1+  2\gamma  \right) \langle \s, \vec M_{k,t}^2 \rangle +89  \gamma \| \vec M_{k,t} \|_\fr^2 \| \s \|_\op  + 0.6 \phi_{k,t} - 0.6\left(1- 3 \gamma \right)\langle \s, \vec M_{k,t}^2 \rangle  \label{eq:s10} \\
   &= 0.4\langle \s, \vec M_{k,t}^2 \rangle + 3.8\gamma\langle \s, \vec M_{k,t}^2 \rangle  +89  \gamma \| \vec M_{k,t} \|_\fr^2 \| \s \|_\op  + 0.6 \phi_{k,t}  \notag \\
  &\leq 0.4\langle \s, \vec M_{k,t}^2 \rangle  + 93  \gamma \| \vec M_{k,t} \|_\fr^2 \| \s \|_\op  + 0.6 \phi_{k,t} \;,
  \label{eq:potential_one_round}
\end{align}
where
  \Cref{eq:s1} uses that $\vec B_{k,t+1} \preceq \vec B_{k,t}$ along with \Cref{fact:trace_PSD_ineq2} and the cyclic property of trace operator,  \Cref{eq:s3} uses the definition $\vec B_{k,t} = \E_{X\sim P} [w_{k,t}(X)] \os_{k,t} = \E_{X\sim P} [w_{k,t}(X)XX^\top] $, \Cref{eq:s4} uses that $P=(1-\eps)G+\eps B$, \Cref{eq:s5} uses $w_{k,t+1}(x)\leq w_{k,t}(x)$ for the first term, \Cref{eq:s6} uses $1-\eps\leq 1$ and \Cref{it:stability_multidim} of \Cref{lem:stability-implications} for the first term and \Cref{eq:badpoints} for the second term, \Cref{eq:s8} uses that $P=(1-\eps)G+\eps B$, \Cref{eq:s9} uses
   the definition of $\vec B_{k,t}$ to get $ \E_{X \sim P}  [w_{k,t}(X) g_{k,t}(X) ] =   \E_{X \sim P}[  w_{k,t}(X) \tr(\vec M_{k,t}^2 XX^\top) ]=  \tr(\vec M_{k,t}^2 \E_{X \sim P}[w_{k,t}(X)XX^\top]) = \tr(\vec B_{k,t}^{2p_k+1})=\phi_{k,t}$.
  We now explain the last three steps. Staring with \Cref{eq:s10}, recall that we have conditioned on the event  that
  $(1-\eps) \E_{X \sim G}[1-w_{k,t}(X)] + \eps \E_{X \sim B}[w_{k,t}(X)] \leq
    2.9\eps$. Thus, we use \Cref{it:stability_multidim} of \Cref{lem:stability-implications} to get
  $
    (1-\eps) \E_{X \sim G}[w_{k,t}(X)g_{k,t}(X)] \geq (1-\eps)(1-2\gamma)
    \langle \s, \vec M^2_{k,t} \rangle \geq (1-3\gamma)\langle \s, \vec M^2_{k,t}
    \rangle
  $ (where the last inequality uses $\eps<\gamma/20$).
Finally,  \Cref{eq:potential_one_round} upper bounds the terms $\gamma \langle \s, \vec M_{k,t}^2\rangle $ by $\gamma \|\s\|_\op \|\vec M\|_\fr^2 $.
We will now relate both $\langle \s, \vec M_{k,t}^2 \rangle$ and  $\|\s\|_\op \|\vec M\|_\fr^2$ with $\phi_{k,t}$.

We begin with the term $\langle \s, \vec M_{k,t}^2 \rangle$: Since the event $\cE_{k,t}^{(2)}$ holds,
  we have that
  \begin{align}
  \label{eq:the-first-term}
    \langle \s, \vec M_{k,t}^2 \rangle &< (1-250\gamma)\langle \os_{k,t}, \vec M_{k,t}^2 \rangle = \frac{1-250\gamma}{\E_{X \sim P}[w_{k,t}(X)]}  \langle \vec B_{k,t}, \vec M_{k,t}^2 \rangle \\
    &\leq \frac{1-250\gamma}{(1-\eps)(1-3\eps)}\phi_{k,t}\leq (1-240\gamma) \phi_{k,t} \;,
\end{align}
  where the first inequality uses the definition of the event $\cE_{k,t}^{(2)}$,
  the next steps use that $\E_{X \sim P}[w_{k,t}(X)]\geq (1-\eps)\E_{X \sim
      G}[w_{k,t}(X)]\geq (1-\eps)(1-3\eps)$, where the last inequality here used the
  definition of the event $\cE_{k,t}^{(1)}$.

    We now state the following result relating $\phi_{k,t}$ with $ \|\s\|_\op\|\vec M_{k,t}\|_\fr^2$  for $p_k$ sufficiently large:
  \begin{claim}
    \label{cl:final_iter}
    If $p_k \geq C\log(d)$ for a sufficiently large constant $C$ and the event $\cE_{k,t}$ holds, then
    \begin{align*}
    \tr\left(\vec B_{k,t}^{2p_{k}}   \right) \|\s \|_\op \leq 1.01\cdot  \tr\left(\vec B_{k,t}^{2p_k+1} \right)\;.
\end{align*}
  \end{claim}
  \begin{proof}
    We will prove the result using the relations between the Schatten $p$-norms and relating $\s$ with $\s_t$ using stability  as follows:
    \begin{align}
    \tr\left( \vec B_{k,t}^{2p_k} \right)\|\s\|_\op  \tag{using  stability as in \Cref{eq:helping-ineq}}
    &< (1+3\gamma) \tr\left( \vec B_{k,t}^{2p_k} \right)\|\vec B_{k,t}\|_\op \notag \\
    &= (1+3\gamma) \| \vec B_{k,t} \|_{2p_k}^{2p_k} \| \vec B_{k,t} \|_{\infty} \notag \\
    &\leq (1+3\gamma) \| \vec B_{k,t} \|_{2p_k}^{2p_k+1} \tag{$\|\vec A\|_{\infty} \leq \|\vec A\|_q$ $\forall q$}\\
    &\leq  (1+3\gamma) \left(  d^{\frac{1}{2p_k(2p_k+1)}} \|\vec B_{k,t} \|_{2p_k+1} \right)^{2p_k+1} \tag{by \Cref{fact:normReln}}\\
    &= (1+3\gamma) d^{1/{2p_k}} \|\vec B_{k,t} \|_{2p_k+1}^{2p_k+1} \notag \\
    &\leq (1+3\gamma)\cdot 1.001\cdot  \tr\left(\vec B_{k,t}^{2p_k+1} \right)\tag{$p_k \geq  C \log(d)$ for $C$ large enough } \\
    &\leq 1.01 \cdot  \tr\left(\vec B_{k,t}^{2p_k+1} \right) \;.\tag{$\gamma<\gamma_0$ for a sufficiently small $\gamma_0$}
\end{align}
    This completes the proof of \Cref{cl:final_iter}.
  \end{proof}
  We can now upper bound the RHS of \Cref{eq:potential_one_round} using \Cref{eq:the-first-term} and \Cref{cl:final_iter}:
  \begin{align*}
    0.4\langle \s, \vec M_{k,t}^2 \rangle  + 93  \gamma \| \vec M_{k,t} \|_\fr^2 \| \s \|_\op  + 0.6 \phi_{k,t}
    &\leq 0.4(1-240\gamma) \phi_{k,t} + 93\cdot 1.01\cdot \gamma \phi_{k,t} + 0.6 \phi_{k,t} \\
    &\leq (1-\gamma)\phi_{k,t} \;.
\end{align*}
    This completes the proof.

\end{proof}

\begin{corollary}
  \label{cor:entire-inner-loop}
  In the context of \Cref{lem:one-round}, assume that
  $\E_{\cV_{k-1,t_\mathrm{end}}} [\phi_{k,1} \mid \cF_{k-1,t_\mathrm{end}} ] \leq
    R$.
  Then, under every conditioning $\cF_{k,t-1}$ of the form $\cF_{k-1,t_\mathrm{end}} \cup \{
    F_{k,t'} : t' \leq t-1\}$ (i.e., every conditioning for the filter up to the $(k,t-1)$-th iteration that agrees with $\cF_{k-1,t_\mathrm{end}}$ on the first iterations up to the $(k-1,t_\mathrm{end})$-th), we have that 
  \begin{align*}
    \E_{\cV_{k,t-1} } [\phi_{k,t}\Ind(\cE_{k,t-1}) \mid \cF_{k,t-1}] \leq (1-\gamma)^{t-1} R  \;.
\end{align*}
\end{corollary}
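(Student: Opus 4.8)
The plan is to prove the bound by induction on $t$, with \Cref{lem:one-round} supplying the one-step multiplicative decrease and the tower property of conditional expectation chaining the single steps together. Thus the statement to be proved, for each $t\ge 1$ and each conditioning $\cF_{k,t-1}$ of the form described in the corollary, is $\E_{\cV_{k,t-1}}[\phi_{k,t}\Ind(\cE_{k,t-1})\mid\cF_{k,t-1}]\le(1-\gamma)^{t-1}R$.

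For the base case $t=1$, I would simply unwind the definitions from \Cref{lem:one-round}: there $\cV_{k,0}=\cV_{k-1,t_\mathrm{end}}$ and $\cF_{k,0}=\cF_{k-1,t_\mathrm{end}}$, so using $\phi_{k,1}\ge 0$ and $\Ind(\cE_{k,0})\le 1$,
\[
\E_{\cV_{k,0}}[\phi_{k,1}\Ind(\cE_{k,0})\mid\cF_{k,0}]\ \le\ \E_{\cV_{k-1,t_\mathrm{end}}}[\phi_{k,1}\mid\cF_{k-1,t_\mathrm{end}}]\ \le\ R\ =\ (1-\gamma)^0 R,
\]
which is exactly the hypothesis of the corollary. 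For the inductive step, I would fix a conditioning $\cF_{k,t}=\cF_{k,t-1}\cup\{F_{k,t}\}$ of the required form, note that $\cV_{k,t}=\cV_{k,t-1}\cup\{v_{k,t}\}$ with $v_{k,t}$ drawn fresh, and write $\E_{\cV_{k,t}}[\phi_{k,t+1}\Ind(\cE_{k,t})\mid\cF_{k,t}]$ as an outer expectation over $\cV_{k,t-1}$ of an inner expectation over $v_{k,t}$ conditioned on $\cF_{k,t}$ and $\cV_{k,t-1}$. \Cref{lem:one-round} bounds that inner expectation by $(1-\gamma)\phi_{k,t}\Ind(\cE_{k,t-1})$, yielding $\E_{\cV_{k,t}}[\phi_{k,t+1}\Ind(\cE_{k,t})\mid\cF_{k,t}]\le(1-\gamma)\E_{\cV_{k,t-1}}[\phi_{k,t}\Ind(\cE_{k,t-1})\mid\cF_{k,t}]$. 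Finally, since $\phi_{k,t}\Ind(\cE_{k,t-1})$ is determined by $\cV_{k,t-1}$ together with the filter randomness only up through iteration $(k,t-1)$, it does not depend on the fresh randomness $F_{k,t}$; I can therefore drop $F_{k,t}$ from the conditioning and invoke the inductive hypothesis, getting $\E_{\cV_{k,t-1}}[\phi_{k,t}\Ind(\cE_{k,t-1})\mid\cF_{k,t}]=\E_{\cV_{k,t-1}}[\phi_{k,t}\Ind(\cE_{k,t-1})\mid\cF_{k,t-1}]\le(1-\gamma)^{t-1}R$. Multiplying by $(1-\gamma)$ completes the induction.

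Since the argument is a clean induction, there is no analytic difficulty; the one point requiring care is the filtration bookkeeping. I would need to check from the structure of \Cref{alg:few_iter-advanced} that (i) the filter randomness $F_{k,t}$ used in iteration $(k,t)$ is independent of all randomness generated in earlier iterations and of $v_{k,t}$, so that it really can be removed from the conditioning in the last step; and (ii) $\cF_{k,t}$ does not reveal $v_{k,t}$, so that the outer/inner split is legitimate and \Cref{lem:one-round}---which conditions on exactly $\cF_{k,t}$ and $\cV_{k,t-1}$---applies verbatim. Both hold because each \textsc{HardThresholdingFilter} call and each direction $v_{k,t}$ draws fresh independent randomness; no further obstacles arise.
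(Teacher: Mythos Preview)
Your proposal is correct and follows essentially the same approach as the paper: induction on $t$, with \Cref{lem:one-round} providing the one-step multiplicative decrease and the tower property chaining the steps. Your version is in fact more explicit than the paper's about the filtration bookkeeping (in particular, why the fresh filter randomness $F_{k,t}$ can be dropped from the conditioning), but the argument is the same.
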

\begin{proof}
  We prove this by induction on $t$.
  The base case $t=1$ holds trivially by assumption.
  Now, we assume that the claim holds for the index $(k,t)$ and we will show that
  it will continue to hold for the index $(k,t+1)$.
  That is, suppose that $\E_{ \cV_{k,t-1} } [\phi_{k,t} \Ind(\cE_{k,t-1}) \mid \cF_{k,t-1}] \leq
    (1-\gamma)^{t-1} R$.
  Then, we know by \Cref{lem:one-round} that
  \begin{align*}
    \E_{v_{k,t}}  \left[\phi_{k,t+1} \Ind(\cE_{k,t}) \mid \cF_{k,t}, \cV_{k,t-1}   \right] &\leq   (1-\gamma)\phi_{k,t} \Ind(\cE_{k,t-1})  \;.
\end{align*}
  Taking expectation over $\cV_{k,t-1}$ of both sides yields
  \begin{align*}
    \E_{\cV_{k,t} } [\phi_{k,t+1}\Ind(\cE_{k,t}) \mid \cF_{k,t}] 
    &\leq    (1-\gamma) \E_{ \cV_{k,t-1} }  \left[\phi_{k,t} \Ind(\cE_{k,t-1})\mid \cF_{k,t-1}  \right]
    \leq   (1-\gamma)^{t} R   \;.
\end{align*}
\end{proof}

\subsubsection{Combining Everything Together}\label{sec:combine}
We now put together the ingredients of the previous subsections to complete the proof of \Cref{thm:multi-pass}. 

\begin{proof}[Proof of \Cref{thm:multi-pass}]
It suffices to show that the conclusion of the theorem holds with a small constant probability, 
as this probability can be boosted arbitrarily by repeating the algorithm and selecting the output $u$ that maximizes $u^\top \s u$ (more precisely, 
an estimate of this variance that can be obtained by \Cref{it:proj-var} of \Cref{lem:stability-implications}).
For completeness, we give the bounds on runtime in \Cref{sec:runtime}.

First of all, observe that if the algorithm returns a vector using \solGenerator, then the resulting vector satisfies the desired guarantees with high probability.
We use the same notation as in the statement of \Cref{lem:one-round}.
We will show that at the start of any iteration of the outer loop (i.e., for
every $k\leq k_\mathrm{end}$ and $t=1$) we have that
$\E_{\cV_{k-1,t_\mathrm{end}}}[ \phi_{k,1} \Ind(\cE^{(1)}_{k-1,t_{\mathrm{end}}}) ]
  \leq (d/\eps)^{C_2 \log d} \|\s\|_\op^{2p_k +1}$ for a large enough positive constant $C_2$.
We do this by induction on $k$.
At the start of the algorithm ($k=1$), because of Line \ref{line:naive-prune}, every
point has norm $\|x\|_2^2 \leq 10(d/\eps)\widehat{\sigma}_\op \leq (20(d^2/\eps))\|\s\|_\op$, the potential is bounded
as follows
\begin{align*}
    \phi_{1,1} &\leq d \|  \s_{k,t} \|_\op^{2p+1} 
    \leq \left(20(d^2/\eps)\|\s\|_\op\right)^{2p+1}
    \leq (d/\eps)^{10p}  \|\s\|_\op^{2p+1} \\
    &=(d/\eps)^{10C' \log d}\|\s\|_\op^{2p_1+1} \leq  (d/\eps)^{C_2\log d}\|\s\|_\op^{2p_1+1}\;.
\end{align*}
Under the induction hypothesis, suppose the desired conclusion holds up to (and including) some $k\geq 1$.
Let us first relate $\phi_{k,t_\mathrm{end}+1}$ and $\phi_{k+1,1}$ as follows:
\begin{align}
    \phi_{k+1,1} &= \tr\left( \vec B_{k, t_\mathrm{end}+1}^{2p_{k+1}+1} \right)
    = \| \vec B_{k, t_\mathrm{end}+1} \|_{2p_{k+1}+1}^{2p_{k+1}+1} 
    \leq \| \vec B_{k, t_\mathrm{end}+1} \|_{2p_{k}+1}^{2p_{k+1}+1} \notag \\
    &= \tr\left( \vec B_{k, t_\mathrm{end}+1}^{2p_k+1} \right)^{{(2p_{k+1}+1)}/{(2p_{k}+1})}
    = \phi_{k,t_{\mathrm{end}+1}}^{{(2p_{k+1}+1)}/{(2p_{k}+1})} \;. \label{eq:doubling}
\end{align}
By induction hypothesis, at the beginning of the $k$-th iteration of the outer loop, it
holds that $\E_{\cV_{k-1,t_\mathrm{end}}}[ \phi_{k,1}
    \new{\Ind(\cE^{(1)}_{k-1,t_{\mathrm{end}}})} ] \leq (d/\eps)^{C_2 \log d}\|\s\|_\op^{2p_{k}+1}$.
In the $k$-th iteration of the outerloop,
we will show that the end of the inner loop (i.e., at
$t=t_{\mathrm{end}}+1$), the potential will be much smaller.
In particular, we will show that $    \E_{\cV_{k,t_\mathrm{end}} } \left[\phi_{k,t_\mathrm{end}+1}\Ind(\cE^{(1)}_{k,t_\mathrm{end}})  \right] \leq  (d/\eps)^{0.1 C_2 \log d}\|\s\|_\op^{2p_{k-1}+1}$ below.
Note that in the $k$-th iteration of the algorithm, two cases might happen: (i) either the condition $\langle \s_t, \vec M^2_{k,t} \rangle < (1 - 250 \gamma) \langle \s_{k,t}, \vec M^2_{k,t} \rangle$ holds for all $t_\mathrm{end}$ iterations of the inner loop, which would lead to a $(1 - \gamma)^{t_\mathrm{end}}$ decrease in potential, or this condition fails to hold for some $t$, which itself implies that the potential is small.
Recall that $\cE_{k,t}^{(2)}$ corresponds to these two cases.
Thus, we have the following decomposition:
\begin{align}
    \E_{\cV_{k,t_\mathrm{end}} } \left[\phi_{k,t_\mathrm{end}+1}\Ind(\cE^{(1)}_{k,t_\mathrm{end}})  \right] 
    &=\E_{\cV_{k,t_\mathrm{end}} } \left[\phi_{k,t_\mathrm{end}+1}\Ind(\cE^{(1)}_{k,t_\mathrm{end}}) \Ind(\cE^{(2)}_{k,t_\mathrm{end}}) \right] \notag \\
    &\quad + \E_{\cV_{k,t_\mathrm{end}} } \left[\phi_{k,t_\mathrm{end}+1}\Ind(\cE^{(1)}_{k,t_\mathrm{end}}) \Ind\Big(\,\overline{\cE^{(2)}_{k,t_\mathrm{end}}}\,\Big) \right] \label{eq:stepp1} 
\end{align}
The first term in \Cref{eq:stepp1} van be bounded using \Cref{cor:entire-inner-loop} as follows:
\begin{align}
    \E_{\cV_{k,t_\mathrm{end}} } \left[\phi_{k,t_\mathrm{end}+1}\Ind(\cE^{(1)}_{k,t_\mathrm{end}}) \Ind(\cE^{(2)}_{k,t_\mathrm{end}}) \right] &\leq (1-\gamma)^{t_\mathrm{end}+1} (d/\eps)^{C_2\log d} \|\s\|_\op^{2p_k+1} \\
    &\leq e^{-0.1 \cdot t_\mathrm{end}\cdot \gamma} (d/\eps)^{\new{C_2 \log d}} \|\s\|_\op^{2p_k+1},
    \label{eq:term1-finalproof}
    \end{align}
    which is less than $(d/\eps)^{0.05 C_2 \log d}$ after  $t_\mathrm{end}:=C\log^2(d/\eps)/\gamma$ for
sufficiently large $C$ (\new{note that we have picked $C$ to be much larger than $C_2$}).

We now turn our attention to the second term in \Cref{eq:stepp1}, where we show that if $\cE^{(2)}_{k,t_\mathrm{end}}$ does not hold, then the potential will be $\poly(d/\eps)\|\s\|_\op^{2p_k+1}$.
More formally, if for some $(k,t)$ we have that $\langle \s, \vec M_{k,t}^2  \rangle \geq (1-250\gamma)\langle \os_{k,t}, \vec M_{k,t}^2  \rangle$, there exists $c$ such that 
\begin{align*}
  \phi_{k,t} =  \| \vec B_{k,t} \|_{2p_k + 1}^{2p_k+1} &= \langle \vec B_{k,t},  \vec M_{k,t}^2 \rangle
    \leq \langle \vec \s_{k,t},  \vec M_{k,t}^2 \rangle 
    \leq \frac{1}{1-250\gamma} \langle \vec \s,  \vec M_{k,t}^2 \rangle \\
    &\leq e^{c \gamma} \| \s\|_\op \|\vec B_{k,t} \|_{2p_k}^{2p_k} \tag{using $\gamma \ll 1$}\\
    &\leq e^{c \gamma} \| \s\|_\op \|\vec B_{k,t} \|_{2p_k+1}^{2p_k} d^{\frac{1}{2p_k + 1}} \;. \tag{using \Cref{fact:normReln}}
\end{align*}
Rearranging the inequality above implies that $\|\vec B_{k,t}\|_{2p_k+1}^{2p_k+1} \leq e^{c \gamma p_k} d \|\s\|_\op^{2p_k+1} \leq (d/\gamma)^{O(1)} \new{\|\s\|_{\op}^{2p_k+1}}$, where the last inequality used that $p_k=O(\log(d/\gamma)/\gamma)$ for all $k$.
Combining this with \Cref{eq:stepp1} and \Cref{eq:term1-finalproof},
we obtain that $    \E_{\cV_{k,t_\mathrm{end}} } \left[\phi_{k,t_\mathrm{end}+1}\Ind(\cE^{(1)}_{k,t_\mathrm{end}})  \right] 
 \leq (d/\eps)^{0.1C_2 \log d}\new{\|\s\|_\op^{2p_k+1}}$.
Combining this with \Cref{eq:doubling} and the observation that ${(2p_{k+1}+1)}/{(2p_{k}+1}) \leq 2$ by definition of $p_k$ (Line \ref{line:pk} in \Cref{alg:few_iter-advanced}), we conclude
that
\begin{align*}
    \E_{\cV_{k,t_\mathrm{end}} } \left[\phi_{k+1,1}\Ind(\cE^{(1)}_{k,t_\mathrm{end}})  \right] 
    \leq \|\s\|_\op^{2p_{k+1}+1} (d/\eps)^{C_2 \log d} \;.
\end{align*}
Thus far, we have shown that $\E_{\cV_{k-1,t_\mathrm{end}} }
  [\phi_{k,1}{\Ind(\cE^{(1)}_{k-1,t_\mathrm{end}})} ]
  \leq \|\s\|_\op^{2p_{k}+1} (d/\eps)^{C_2 \log d}$ for $k=1,\ldots,
  k_{\mathrm{end}}$.
  
It remains to analyze the last iteration $k= k_{\mathrm{end}}$.
For that, we again use \Cref{cor:entire-inner-loop} to obtain the following:
\begin{align}
        \E_{\cV_{k_\mathrm{end},t_\mathrm{end}}} \left[ \phi_{k_\mathrm{end},t_\mathrm{end}+1} \Ind(\cE_{k_\mathrm{end},t_\mathrm{end}} ) \right] 
        &\leq (1-\gamma)^{t_\mathrm{end}+1 } (d/\eps)^{C_2 \log d} \|\s\|_\op^{1+2p_{k_\mathrm{end}}} \notag\\ 
        &\leq (d/\eps)^{-100 \new{C'}} \|\s\|_\op^{1+2p_{k_\mathrm{end}}} \;,
    \end{align}
where the last step uses that $t_\mathrm{end}:=C\log^2(d/\eps)/\gamma$ for
sufficiently large $C$ and recalling that the constant $C$ is large relative to the constant $C'$ (c.f.\ Line \ref{line:constants}).

Recall the definition of the event $\cE_{k,t}$ given in the statement
of \Cref{lem:one-round} as intersection of the two events:
\begin{enumerate}
  \item $\cE_{k,t}^{(1)} $: $(1-\eps) \E_{X \sim G}[1-w_{k',t'}(X)] + \eps \E_{X
            \sim B}[w_{k',t'}(X)] \leq 3 \eps$, for every iteration $(k',t')$ prior to (and
        including) $(k,t)$.
  \item $\cE_{k,t}^{(2)}: \langle \s, \vec M_{k',t'}^2  \rangle < (1-250\gamma)\langle \os_{k',t'}, \vec M_{k',t'}^2  \rangle$ for every iteration $(k',t')$ from $(k,1)$ up to (and including) $(k,t)$.
\end{enumerate}
By \Cref{lem:martingale} we have that
$\pr[\cE_{k_\mathrm{end},t_\mathrm{end}}^{(1)}] > 0.2$.
  Thus
\begin{align*}
    \E_{\cV_{k,t_\mathrm{end}}} \left[ \phi_{k_\mathrm{end},t_\mathrm{end}+1} \Ind(\cE^{(2)}_{k_\mathrm{end},t_\mathrm{end}} ) \mid \cE_{k_\mathrm{end},t_\mathrm{end}}^{(1)}\right] 
    =    \frac{\E_{\cV_{k,t_\mathrm{end}}} \left[ \phi_{k_\mathrm{end},t_\mathrm{end}+1} \Ind(\cE_{k_\mathrm{end},t_\mathrm{end}} ) \right]}{\pr[\cE_{k_\mathrm{end},t_\mathrm{end}}^{(1)}]} 
    \leq 5 (d/\eps)^{-100 C'}  \|\s\|_\op^{1+2p_{k_\mathrm{end}}}
\end{align*}
Therefore, we have that
\begin{align}
    &\pr\left[  \phi_{k_\mathrm{end},t_\mathrm{end}+1} \Ind(\cE^{(2)}_{k_\mathrm{end},t_\mathrm{end}} ) > 500 (d/\eps)^{-100 C'}  \|\s\|_\op^{1+2p_{k_\mathrm{end}}} \; \text{or} \; \Ind(\cE_{k_\mathrm{end},t_\mathrm{end}}^{(1)})=0 \right] \notag \\
    &\leq \pr[\Ind(\cE_{k_\mathrm{end},t_\mathrm{end}}^{(1)})=0] + \pr\left[  \phi_{k_\mathrm{end},t_\mathrm{end}+1} \Ind(\cE^{(2)}_{k_\mathrm{end},t_\mathrm{end}} ) > 500 (d/\eps)^{-100 C'}  \|\s\|_\op^{1+2p_{k_\mathrm{end}}} \mid \cE_{k_\mathrm{end},t_\mathrm{end}}^{(1)} \right] \notag \\
    &\leq 0.2 + 1/100 \;, \label{eq:twoevents}
\end{align}
where the last step uses Markov's inequality.%

Let $t^*$ be the first time during the $k_\mathrm{end}$-th outer loop iteration
of \Cref{alg:few_iter-advanced} such that $\langle \s, \vec M_{k_\mathrm{end},t}^2
  \rangle \geq (1-250\gamma)\langle \os_{k_\mathrm{end},t}, \vec
  M_{k_\mathrm{end},t}^2 \rangle$ (or equivalently $\Ind
  (\cE_{k_\mathrm{end},t}^{(2)})=0$).
In the following, we argue that $t^* \leq t_\mathrm{end}$.
We prove this by contradiction.
Assume that $\Ind (\cE^{(2)}_{k_\mathrm{end},t_\mathrm{end}} ) = 1$.

In the event that $\phi_{k_\mathrm{end},t_\mathrm{end}+1}
  \Ind(\cE^{(2)}_{k_\mathrm{end},t_\mathrm{end}} ) < 500 (d/\eps)^{-100 C'}$ and
$\Ind(\cE_{k_\mathrm{end},t_\mathrm{end}}^{(1)})=1$ simultaneously (by
\Cref{eq:twoevents} the two events will hold simultaneously with probability at
least $0.79$), we have that
\begin{align}
    \phi_{k_\mathrm{end},t_\mathrm{end}+1} &= \phi_{k_\mathrm{end},t_\mathrm{end}+1}  \Ind(\cE^{(2)}_{k_\mathrm{end},t_\mathrm{end}} ) \notag \\
    &< 500 (d/\eps)^{-100 C'} \left\| \s \right\|_{\op}^{2p_{k_{\mathrm{end}}}+1} \notag \\
    &< 0.5 (1-2\gamma)^{2p_{k_{\mathrm{end}}}} \left\| \s \right\|_{\op}^{2p_{k_{\mathrm{end}}}+1}  \notag \\
    &\leq \E_{X \sim P}[w_{k,t}(X)]  (1-2\gamma)^{2p_{k_{\mathrm{end}}}}\frac{1}{1-250\gamma} \left\| \s \right\|_{\op}^{2p_{k_{\mathrm{end}}}+1} \;, \label{eq:pot-small}
    \end{align}
where the first line uses the assumption
$\Ind(\cE^{(2)}_{k_\mathrm{end},t_\mathrm{end}} ) = 1$, the third line uses
$p_{k_{\mathrm{end}}} = C'\log(d/\gamma)/\gamma$, and the fourth line uses that $\E_{X
    \sim P}[w_{k,t}(X)]\geq 1/2$ because the event
$\cE_{k_\mathrm{end},t_\mathrm{end}}^{(1)}$ holds.

\Cref{lem:RHSbound} and \Cref{eq:pot-small} imply that $\langle \s, \vec M_{k_\mathrm{end},t}^2  \rangle \geq (1-250\gamma)\langle \os_{k_\mathrm{end},t}, \vec M_{k_\mathrm{end},t}^2  \rangle$, which yields a contradiction.
Therefore, $t^* \leq t_\mathrm{end}$.

To complete the proof it remains to show that if the algorithm hasn't terminated earlier with a good solution, then, during the ($k_\mathrm{end},t^*$)-th
iteration, the algorithm will return a good approximation of the top eigenvalue
of $\s$ and terminate.
We use \Cref{lem:certificate1} for that (the lemma is applicable since its
requirement $\E_{X \sim G}[w_{k,t}(X)]\geq 1 - 3\eps$ follows by the fact that
we have conditioned on the event $\cE_{k_\mathrm{end},t_\mathrm{end}}$, which
as we saw earlier happens with constant probability).
The conclusion of the lemma implies that there is probability 0.9 that when $t = t^*$, the output of \solGenerator satisfies $u^\top \s u > (1-O(\gamma))
  u^\top \s_{k,t} u$ and $ \frac{u^\top \os_{k,t} u}{\|u\|_2^2} \geq (1-\gamma)\|
  \s_{k,t} \|_\op$.
Conditioning on this event and by noting that the estimator $\widehat{r}_t$
of Line \ref{line:strong-estimator-basic} will satisfy $\widehat{r}_t \geq
  (1-\gamma)\|\s\|_\op$ with high probability over all the iterations, the check of Line \ref{line:stoppingcond} will be activated, and by \Cref{lem:certificate},
the algorithm will return a vector $u$ such that $u^\top \s u/\|u\|_2^2 \geq
  (1-O(\gamma)) \| \s \|_\op$.

\subsubsection{Runtime Analysis}\label{sec:runtime}

Let the input distribution $P$ be the uniform distribution over $n$ points in
$\R^d$.
The outer loop is repeated $k_\mathrm{end}$ times and the inner loop is
repeated $t_\mathrm{end}$ times, where $k_\mathrm{end}=O(\log(1/\gamma))$, and
$t_\mathrm{end}=O(\log^2(d/\eps)/\gamma)$.

Inside each loop, the runtime is determined by the following:
Calculating $v_{k,t}$ in \ref{line:choosev} can be implemented in time $O(n d
  p_k)$ by starting with $z_{k,t}$ and repeatedly multiplying it by $\vec
  B_{k,t}$ (multiplication of a vector $z$ with a second moment matrix $\sum_{x}
  x x^\top$ can be implemented in $O(nd)$ time by first calculating the inner
product inside the parenthesis $\sum_{x} x (x^\top z)$, and then calculating
the average of the resulting vectors).
The power iteration estimator of Line \ref{line:strong-estimator-basic} in \Cref{alg:sol-gen}  runs in time $O(\frac{n
    d}{\gamma} \log(d \cdot k_\mathrm{end}\cdot  t_{\mathrm{end}}/\gamma ))$ (and is being called at most
$k_\mathrm{end}\cdot t_\mathrm{end}$ many times).
It remains to analyze the runtime of
$\textsc{HardThresholdingFilter}$ (\Cref{alg:filter-appendix}): We have that the scores $\tau_{k,t}(x)$ that are not zeroed out by the thresholding satisfy the following upper and lower bound: $\poly(\eps/d)\|\s\|_\op \|v_{k,t} \|_2^2 \leq 0.1 (\new{\widehat{\sigma}_\op/d)} \| v_{k,t} \|_2^2 \leq  \tau_{k,t}(x)\leq \|x\|_2^2  \|v_{k,t} \|_2^2 \leq 2 \widehat{\sigma}_\op (d^4/\eps) \|v_{k,t}\|_2^2 \leq \poly(d/\eps) \| \s \|_\op \|v_{k,t}\|_2^2 $, where we used the  pruning of Line \ref{line:naive-prune}, the definition of Line \ref{line:max}, and the na\"ive estimator of Line \ref{line:naive-est}.
Since the \textsc{HardThresholdingFilter} in expectation halves the maximum
value of $\tau(x)$, the number of filtering steps it performs before
terminating will be in expectation $O(\log(d/\eps))$, and thus by Markov's
inequality, the contribution to the runtime from all the $k_\mathrm{end}\cdot
  t_\mathrm{end}$ executions of \textsc{HardThresholdingFilter} will be $O(nd
  \cdot k_\mathrm{end}t_\mathrm{end}\log(d/\eps))$ with high constant
probability.
\end{proof}

Therefore, the total runtime is
\begin{align*}
T 
&=  O\left(\frac{n d}{\gamma} k_\mathrm{end} t_\mathrm{end}\log( d \cdot k_\mathrm{end} \cdot t_{\mathrm{end}}/\gamma  )\right) +  O(nd \cdot k_\mathrm{end}t_\mathrm{end}\log(d/\eps)) +O\left( n d \cdot  t_{\mathrm{end}} \sum_{k=1}^{k_\mathrm{end}} p_k \right)   \\
&= O \left( \frac{nd }{\gamma^2}\log(1/\gamma)\log^2(d/\eps)  \log\left( \frac{d}{\gamma}\log(d/\eps)  \right)   + \frac{nd }{\gamma}\log(1/\gamma)\log^3(d/\eps)  +   \frac{nd }{\gamma^2}\log^2(d/\eps)\log(d/\gamma)     \right)\\
&= O \left( \frac{nd }{\gamma^2}\log^4(d/\eps)  \right) \;.
\end{align*}

\section{A Streaming Algorithm for Robust PCA}
\label{sec:streaming-main}

\paragraph{Organization} In this section, we present the streaming version of our nearly-linear time robust PCA algorithm. We provide the statement in \Cref{thm:streaming} and discuss the differences in comparison to the setting before and the adaptation needed to be made in our proof technique. This results in a set of additional deterministic conditions listed in \Cref{cond:deterministic}, which are established in \Cref{sec:estimators} and \Cref{sec:streaming}.

We work under the standard single-pass streaming model, where instead of having a fixed dataset, the algorithm draws samples from the distribution in an online manner. 
Let $G$ be an $(20\eps,\gamma)$-stable distribution, which will be the underlying distribution of inliers. 
Let the ``contaminated'' distribution be $P$, which is assumed to be an $\eps$-corrupted version of $G$ in total variation (c.f. \Cref{def:oblivious}). 
Note that up to some small change in the constant in front of $\eps$, we can equivalently use a mixture representation $P = (1-\eps)G + \eps B$ (see discussion below \Cref{def:stability}). 
In contrast to the previous section, where the input dataset was stored in essentially ``free'' memory, now the data access model is that $n$ samples are drawn i.i.d.\ from $P$ and presented to the algorithm one by one (\Cref{def:streaming}).

The algorithm is still allowed to maintain a local memory, but anything stored in the local memory will be counted against its space complexity.
The goal is again to find a unit vector $u$ such that $u^\top \s u \geq (1-O(\gamma))\| \s \|_\op$.

\begin{restatable}{theorem}{Streaming}
  \label{thm:streaming}
  Let an integer $d>2$, and reals $0<20 \eps<\gamma < \gamma_0$, for a
  sufficiently small $\gamma_0$. Let $G$ be $(20\eps,\gamma)$-stable distribution (\Cref{def:stability}) with respect to a
  PSD matrix $\s \in \R^{d \times d}$, and $r \geq 1$ be a radius such that $\pr_{X \sim G}[\|X\|_2 > r \sqrt{d\|\s\|_\op}]\leq \eps$. Let $P$ be a distribution with $\dtv(P,G) \leq \eps$.
  There exists an algorithm takes $\eps,\gamma,r$ as input, uses a stream of
  \begin{align*}
      n \lesssim \left(\new{\frac{r^2 d^2 }{\gamma^5 }}  + \frac{1}{\eps \gamma}   \right) \polylog(d/\eps) \;.
  \end{align*}
  i.i.d.\ samples from $P$ (cf.\ \Cref{def:streaming}), 
  uses additional memory of storing $O\left( (d/\gamma+1/\eps)\polylog(d/\eps) \right)$ many real numbers (or a bit complexity of $(d/\gamma^2)\polylog(d/\eps)$ in the word RAM Model),
  runs for $O(\frac{nd}{\gamma^2}  \polylog(d/\eps))$ time, and with probability at least $0.99$ outputs a
  vector $u$ such that $u^\top \s u \geq (1-O(\gamma))\| \vec \Sigma \|_\op$.
\end{restatable}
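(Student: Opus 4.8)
The plan is to run \Cref{alg:few_iter-advanced} (\textsc{RobustPCA} with improved runtime) essentially verbatim, observing that it never needs random access to the full dataset. In iteration $(k,t)$ the only data-dependent quantities it touches are: (i) the $p_k$ successive matrix--vector products that build $v_{k,t}=\vec B_{k,t}^{p_k}z_{k,t}$, each of the form $z\mapsto\E_{X\sim P}[w_{k,t}(X)X(X^\top z)]$; (ii) the $3\eps$-quantile $L_{k,t}$ of $(v_{k,t}^\top X)^2$ under $P_{k,t}$, the clipped mean $\widehat\sigma_{k,t}=\E_{X\sim P}[w_{k,t}(X)f_{k,t}(X)\Ind(f_{k,t}(X)\le L_{k,t})]$, and the naive estimate $\widehat\sigma_\op$; (iii) the power-iteration estimate $\widehat r$ and the one-dimensional estimate $\widehat\sigma_u$ inside \solGenerator; and (iv) the expectation $\E_{X\sim P}[w(X)\tau(X)]$ used by the stopping check in Line~\ref{line:stopping-condd} of \textsc{HardThresholdingFilter}. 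Each of these is either a linear functional of $P$ reweighted by $w_{k,t}$ (for the matrix products and the filter) or a one-dimensional functional after projecting along a single direction, hence estimable from a fresh batch drawn from the stream — \emph{provided} we can evaluate $w_{k,t}$ on a freshly drawn point. But $w_{k,t}(x)$ is exactly the indicator that $x$ survives all previously created filters, which is determined by the stored list of directions $v_{k',t'}$, thresholds $L_{k',t'}$, and random cutoffs $r_\ell$; so the algorithm need only remember these $O(d)$-sized filter descriptions, plus $O(d)$ scratch space for the running accumulator $\tfrac1N\sum_i w_{k,t}(X_i)X_i(X_i^\top z)$.

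I would then replace each exact quantity by its empirical value on an independent batch and collect into a list of deterministic conditions (\Cref{cond:deterministic}) the finitely many events ``this batch behaves as expected'': the empirical reweighted second moment of the $\eps$-pruned inliers is within $\gamma^{O(1)}$ relative error of the truth in operator norm — this is \Cref{fact:ver_cov}, and it is here that the radius hypothesis $\pr_{X\sim G}[\|X\|_2>r\sqrt{d\|\s\|_\op}]\le\eps$ of \Cref{thm:streaming} enters, controlling the pruning radius — the empirical quantiles and clipped means obey the slacks demanded by \Cref{lem:stability-implications} and \Cref{lem:filter_guar-main-body}, the estimated threshold satisfies $|\widehat T-T|<T/5$, and the estimate driving the filter's stopping check is accurate enough that \Cref{lem:filter_guar-main-body} still holds with the true $T$. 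There are $O(k_{\mathrm{end}}t_{\mathrm{end}}\log(d/\eps))=\polylog(d/\eps)/\gamma$ such events (the $\log(d/\eps)$ accounting for the steps inside each \textsc{HardThresholdingFilter}, which terminates in $O(\log(d/\eps))$ steps whp by the halving argument of \Cref{sec:filtering-main-body}), so a union bound makes \Cref{cond:deterministic} hold with probability at least $0.99$. Conditioned on it, the analysis behind \Cref{thm:multi-pass} transfers unchanged: the martingale argument of \Cref{lem:martingale} preserves $\E_{X\sim G}[w_{k,t}(X)]\ge1-3\eps$ (so \Cref{setting:main} holds throughout), the potential contracts as in \Cref{lem:one-round}, and \Cref{lem:certificate1-main-body} certifies the returned vector. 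This part is a mechanical re-run; the reductions to operator-norm and one-dimensional estimation are carried out in \Cref{sec:estimators,sec:streaming}.

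The one genuinely new point — and the main obstacle — is that $v_{k,t}$ is now a product of $p_k$ \emph{distinct} empirical matrices rather than the true $\vec M_{k,t}z_{k,t}$, so the scores $f_{k,t}$ are only approximately the $g_{k,t}$'s, with an extra error term. This is exactly why the proof of \Cref{thm:multi-pass} was routed through \Cref{assumption:full_points}, which builds in an \emph{additive} slack $0.01(\gamma/\eps)\|\vec M_{k,t}\|_\fr^2\|\s\|_\op$ rather than the clean statement of \Cref{fact:quadratics-approximation}. I would argue that if every empirical matrix is within $\eps'\|\s\|_\op$ of $\vec B_{k,t}$ in operator norm, then — because power iteration quickly aligns the running vector with the top eigenspace of $\vec B_{k,t}$, on which, by stability (\Cref{def:stability}) and the pruning, $\vec B_{k,t}$ acts as $(1\pm O(\gamma))\|\s\|_\op\ge(1-2\gamma)\|\s\|_\op$ — the relative error introduced per step is $O(\eps')$ and compounds over the $p_k\le C\log(d/\gamma)/\gamma$ steps to $O(p_k\eps')$; taking $\eps'$ a suitable inverse polynomial in $d/\gamma$ makes the resulting perturbation of $(v_{k,t}^\top x)^2$ fit inside the additive slack above for every pruned point $x$, verifying \Cref{assumption:full_points}.

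Finally, the budgets. By \Cref{fact:ver_cov} with pruning radius $r\sqrt{d\|\s\|_\op}$, each matrix-vector batch needs $\widetilde O\!\big(r^2 d/(\eps')^2\big)$ samples; reusing one batch per $(k,t)$-iteration for all $p_k$ of its products together with the associated one-dimensional estimates, and summing over the $\polylog(d/\eps)/\gamma$ iterations, a routine accounting yields $n\lesssim\big(r^2 d^2/\gamma^5+1/(\eps\gamma)\big)\polylog(d/\eps)$. The stored memory is $O(d)$ per filter times $\polylog(d/\eps)/\gamma$ filters, plus $O(d)$ scratch, i.e.\ $O\!\big((d/\gamma+1/\eps)\polylog(d/\eps)\big)$ reals, with the word-RAM bit-complexity claim following from the precision analysis of \Cref{sec:bit-complexity}. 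Since each of the $n$ stream elements is processed $\widetilde O_\gamma(1)$ times — once per stored filter during weight evaluation, plus $O(1)$ for the current accumulator — the runtime is $O\!\big(nd\,\polylog(d/\eps)/\gamma^2\big)$, completing the proof of \Cref{thm:streaming}.
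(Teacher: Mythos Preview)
Your plan is essentially the paper's: stream \Cref{alg:few_iter-advanced}, store only the filter descriptions, replace every data access by a fresh-batch estimator, collect the needed accuracy guarantees into \Cref{cond:deterministic}, and then re-run the \Cref{thm:multi-pass} analysis verbatim. Two points, however, need repair.

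First, ``reusing one batch per $(k,t)$-iteration for all $p_k$ of its products'' is incompatible with the single-pass model: each product $z\mapsto\widehat{\vec B}\,z$ consumes a pass, so $p_k$ sequential products need either $p_k$ fresh batches or a stored batch of size $\tilde n\cdot d$, which would blow the memory budget. The paper draws $p_k$ independent batches (\Cref{alg:matrix-power-est}) and sets $\widehat{\vec M}_{k,t}=\prod_\ell\widehat{\vec B}_{k,t,\ell}$; the bound $\|\widehat{\vec M}_{k,t}-\vec M_{k,t}\|_\op\le\delta\|\vec M_{k,t}\|_\op$ then follows from a telescoping argument (\Cref{lem:matrix_prop}, from \cite{DKPP22}), which is the rigorous form of your ``compounds to $O(p_k\eps')$''. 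Your sample-complexity accounting should carry the extra factor $p_k$ explicitly.

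Second, your handling of the approximate matrix power has a genuine gap. The claim that, by stability and pruning, ``$\vec B_{k,t}$ acts as $(1\pm O(\gamma))\|\s\|_\op$ on its top eigenspace'' is false: stability only gives the lower bound $\vec B_{k,t}\succeq(1-2\gamma)\s$, and before filtering $\|\vec B_{k,t}\|_\op$ can be $\poly(d/\eps)\|\s\|_\op$. More importantly, you only argue the score side (verifying \Cref{assumption:full_points}), but the certificate side is a separate obligation: with $u=\widehat{\vec M}z$ in \Cref{alg:sol-gen2}, one must still show $u^\top\os_{P_w}u/\|u\|_2^2\ge(1-\gamma)\|\os_{P_w}\|_\op$ (Items~\ref{it:strong-power-est}--\ref{it:power-est} of \Cref{cond:deterministic}). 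This is the one genuinely new lemma in the streaming section (\Cref{lem:approx-power-iter}), and it is \emph{not} an alignment argument. The paper writes $\widehat{\vec A}=\vec A^p+\vec\Delta$ with $\|\vec\Delta\|_\op\le\delta\|\vec A^p\|_\fr$, expands $y^\top\vec A y$ and $\|y\|_2^2$ for $y=\widehat{\vec A}z$ into the terms $z^\top\vec A^{2p+1}z$, $z^\top(\vec A^p\vec\Delta+\vec\Delta\vec A^p)z$, $z^\top\vec\Delta\vec A\vec\Delta z$, controls the cross-terms by the variance of Gaussian quadratic forms (\Cref{fact:var-quadratic}) and anticoncentration (\Cref{fact:quadratics-approximation}), and shows $\delta\lesssim\gamma/\sqrt d$ suffices. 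This direct expansion is what you are missing; the ``alignment'' heuristic does not substitute for it.
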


The algorithm realizing the above theorem is outlined in \Cref{alg:streaming,alg:sol-gen2}.
The specialization of the above theorem for subgaussians, where $\gamma = O(\eps \log(1/\eps))$ and $r = O(\sqrt{\log(1/\eps)})$ yields sample complexity \new{$O((d^2/\eps^5)\polylog(d/\eps))$.}
Moreover, the memory usage stated in \Cref{thm:streaming} is in terms of the number of \emph{real numbers} that need to be stored in the algorithm's memory. See \Cref{sec:bit-complexity} for how our algorithm can be implemented with bounded precision (i.e., in the standard word RAM model) using $(d/\gamma)\polylog(d/\eps)$ registers of size $(1/\gamma)\polylog(d/\eps)$ bits each (for a total bit complexity of $(d/\gamma^2)\polylog(d/\eps)$).

\paragraph{Differences in Streaming Setting} 
Drawing inspiration from techniques in \cite{DKPP22}, we see that the
algorithm from the previous section is partly already amenable to the
streaming setting: The filters used are of the form $\Ind(v^\top x > L)$ which have a compact representation of $O(d)$ space (it suffices to store the vector $v$ and the threshold $L$). The potential-based analysis also showed that we create at most $O(\polylog(d/\eps)/\gamma)$ many such filters. The remaining adaptation that needs to be done is to deal with the fact that there is no fixed dataset to iterate over.
 The new algorithm is given in \Cref{alg:streaming}. Regarding notation, the quantities $P_{k,t},\os_{k,t},\vec B_{k,t}, \vec M_{k,t}$ as well as the score functions $g_{k,t}(x) = \|\vec M_{k,t} x\|_2^2  , f_{k,t}(x) = ( v_{k,t}^\top x)^2, \tau_{k,t}(x)= f_{k,t}(x)\Ind(f_{k,t}(x) > L_{k,t}) $ are all population-level quantities, i.e., they are unknown to the algorithm. However, the algorithm can approximate them by drawing samples and forming estimators (where the exact form of the estimators will have to be easily computable in the streaming setting and will be discussed later on). We will use ``hat'' to denote the relevant sample-based approximations, i.e., $\widehat{\s}_{k,t}, \widehat{\vec B}_{k,t}, \hatm_{k,t}$ and the induced scores $\widehat{g}_{k,t}(x) = \|\widehat{\vec M}_{k,t} x\|_2^2  , \widehat{f}_{k,t}(x) = ( \widehat{v}_{k,t}^\top x)^2, \widehat{\tau}_{k,t}(x)= \widehat{f}_{k,t}(x)\Ind(\widehat{f}_{k,t}(x) > \widehat{L}_{k,t}) $. The approach that we follow is that if the sample-based quantities are sufficiently close to their population-level counterparts,  then the proof of correctness from the previous section (which involves the population-level quantities)  still applies. One can easily go through the proofs of the previous section and verify that there is enough slack in all bounds to allow for converting between population-level quantities and their sample-based approximations. In this section, we avoid repeating all the correctness proofs since the only changes will be in the constants used in some inequalities. Instead, we mostly focus on deriving the sample complexity needed for obtaining fine enough approximations.

\paragraph{Sample-based Estimators } Many steps of the algorithm of the previous section involved multiplying a vector $x$ by $\vec M_{k,t} := \vec B_{k,t}^{p_k}$, for example the scores $g_{k,t}(x)$ involve the norm of $\vec M_{k,t}x$. This was done by starting with $x$ and repeatedly multiplying by $\vec B_{k,t}$. Instead of $\vec B_{k,t}$ the new algorithm will now use an empirical moment matrix $\widehat{\vec B}_{k,t}$. As we have seen in the previous section, the multiplication of an empirical second moment matrix with a vector can be performed with $O(d)$ memory usage in a single pass over the samples and in linear time. However, since the algorithm cannot store this matrix, it will repeatedly multiply $x$ by fresh estimators $\widehat{\vec B}_{k,t,1},\ldots, \widehat{\vec B}_{k,t,p_k}$ each time. Thus, the result will be of the form $\widehat{\vec M}_{k,t} z$ where $\widehat{\vec M}_{k,t} = \prod_{\ell=1}^{p_k} \widehat{\vec B}_{k,t,\ell}$ (also see \Cref{alg:matrix-power-est} for more detail).

\begin{algorithm}
  \caption{Estimator of $\vec B_{k,t}^p$ from minibatches.}
  \label{alg:matrix-power-est}
  \begin{algorithmic}[1]
    
    \State  Draw a batch $S_0$ of $\tilde{n}$ samples from $P$ 
and let the estimate $\widehat{W}_{k,t} = \E_{X \sim \cU(S_0)}[w_{k,t}(X)]$. \label{step:1}
    \State Draw $p$ batches $S_1,\ldots, S_{p}$ of $\tilde{n}$ samples, each from $P_{k,t}$. \label{it:draw_samples}
\For{$\ell \in [p]$}
\State Let $\widehat{\vec{\Sigma}}_{k,t,\ell} = \frac{1}{\tilde{n}}\sum_{x \in S_\ell} x x^T$. \label{step:sigma}
        \State  Let $\widehat{\bB}_{k,t,\ell} = \widehat{W}_{k,t}^2\widehat{\vec{\Sigma}}_{k,t,\ell}$.
\EndFor
\State  \textbf{return} $\widehat{\bM}_{k,t,\ell} = \prod_{\ell=1}^{p} \widehat{\bB}_{k,t,\ell}$. \label{step:5}
  \end{algorithmic}
\end{algorithm}

We will additionally need two new memory-efficient estimators that work in the streaming model.
First, we need an estimator for the quantiles of the underlying distribution to replace Line \ref{line:quantile} of the old algorithm. These will be computed by empirical quantiles. Second, we also need a memory-efficient way to evaluate the stopping condition inside the \textsc{HardThresholdingFilter} (\Cref{alg:filter-appendix}) because the stopping condition, $\E_{X \sim P}[w(x)\tau(x)]$, is a population-level quantity; a similar estimator is also needed to adapt the Line \ref{line:trimmed-mean} of our old algorithm. 
This completes the short informal overview.

\begin{algorithm}[h]
	\caption{\textsc{RobustPCA} in streaming model}
	\label{alg:streaming}
	\begin{algorithmic}[1]
		\State \textbf{Input}: $P,\eps,\gamma$.
		\State \new{Let $C,C'$ be sufficiently large absolute constants with their ratio $C/C'$ being sufficiently large.} \label{line:constants}
		\State \new{Let an estimation $R$ be such that $|\pr_{X \sim G}[\|X\|_2 \geq R] - \eps | \leq 2\eps$ (note that $R \leq r \sqrt{d \| \s \|_\op}$)}  \label{line:radius-est}
        \State Initialize  $w_{1,1}(x)= \Ind\left(\|x\|_2 \leq R \right)$. \label{line:weights-init}
		\State Let $k_\mathrm{end}:= \log((\log(d/\gamma)/\log(d))/\gamma)$, and $t_{\mathrm{end}}:=\frac{C  \log^2(d/\eps)}{\gamma}$.
		\State Find estimator $\widehat{\sigma}_{\op} \in (0.8\|\s\|_{\op} , 2 d\|\s\|_{\op})$.
		\label{line:naive-est}
		\hfill  \Comment{\new{{c.f. \Cref{it:proj-var} of \Cref{lem:stability-implications} with $\bU = \bI$}}.}
		\For {$k = 1,\ldots, k_{\mathrm{end}}$}
		\State \new{Let $p_k= 2^{k-1} p$, where $p = C'\log(d)$.}\label{line:pk} \hfill  \Comment{\new{$p_k$ ranges from $C'\log(d)$ to $C'\log(d/\gamma)/\gamma$}.}
		\For{ $t = 1,\ldots, t_{\mathrm{end}}$}
		
		\State Let $P_{k,t}$ be the distribution of $P$ weighted by $w_{k,t}$: $P(x)w_{k,t}(x)/ \E_{X \sim P}[w_{k,t}(X)] $.
		\State{ Let $\vec B_{k,t} := \E_{X\sim P} [w_{k,t}(X)XX^\top]$ and $\vec M_{k,t} := \vec B_{k,t}^{p_k}$.
		\label{line:definitions}}  
		\State Let ${g}_{k,t}(x) :=\|\vec M_{k,t} x\|^2_2$.
		    \State Let $\widehat{\vec M}_{k,t}$ be a sample-based version of $\vec M_{k,t}$ as defined in \Cref{alg:matrix-power-est}. \label{line:matrix-power-est}
    \State $\widehat{v}_{k,t} \gets \vec \hatm_{k,t} z_{k,t}$, where $z_{k,t} \sim \cN(0,\bI)$.
    \label{line:choosev2}
    \State Let $\widehat{f}_{k,t}(x) = (\widehat{v}_{k,t}^\top x)^2$.
    \State{Let $\widehat{L}_{k,t}$ be estimator for the $3\epsilon$-quantile of $\widehat{f}_{k,t}(\cdot)$ under $P_{k,t}$ such that $|\pr_{X \sim P_{k,t}}[\widehat{f}_{k,t}(X) > \widehat{L}_{k,t}] - 3\eps|\leq 0.01\eps$, and then do $\widehat{L}_{k,t} \gets \max\{\widehat{L}_{k,t}, \frac{0.1}{d} \widehat{\sigma}_\op\|v_{k,t}\|_2^2 \}$.}\label{eq:L}
		\State Let $\widehat{\tau}_{k,t}(x) = \widehat{f}_{k,t}(x) \Ind ( \widehat{f}_{k,t}(x) > \widehat{L}_{k,t} )$.
    \State Call  \solGenerator$(P,w_{k,t},\eps,\gamma,1/(k_\mathrm{end}\cdot t_\mathrm{end}))$. \hfill  \Comment{c.f.\ \Cref{alg:sol-gen2}.}
		    \State Let $\widehat{\sigma}_{k,t}$ such that $|\widehat{\sigma}_{k,t} - \widehat{v}_{k,t}^\top \s \widehat{v}_{k,t}| \leq 4\gamma \widehat{v}_{k,t}^\top \s \widehat{v}_{k,t}$.
    \State (e.g.,  $\widehat{\sigma}_{k,t}:=\E_{X \sim P}[w_{k,t}(X)\widehat{f}_{k,t}(X)\Ind (\widehat{f}_{k,t}(X) \leq \widehat{L}_{k,t} )]$).    \hfill \Comment{c.f. \Cref{it:proj-var} of \Cref{lem:stability-implications}.}

    \State Find an estimator $\widehat{\sigma}_{k,t}'$ such that $|\widehat{\sigma}_{k,t}' - \widehat{\sigma}_{k,t}|\leq 0.01 \widehat{\sigma}_{k,t} + \new{\frac{0.01\gamma}{r^2 d}\|\widehat{v}_{k,t}\|_2^2 \| \s_{k,t} \|_\op}$.\label{line:sigma_prime}
    \State Let $\widehat{T}_{k,t}=2.35 \gamma\widehat{\sigma}_{k,t}'$.
    \State{$w_{k,t+1} \gets \mathrm{HardThresholdingFilter}(P,w_{k,t},\tau_{k,t},\widehat{T}_{k,t},R,\new{\frac{0.1\gamma}{r^2 d}\|\widehat{v}_{k,t}\|_2^2 \| \s_{k,t}})$, where $\widehat{\vec \Sigma}_{k,t}$ is a sample-based version of $\os_{k,t}$.} \hfill \Comment{c.f. \Cref{alg:filter-appendix}.}
		\EndFor
		\State Set $w_{k+1,0} \gets w_{k,t+1}$.
		\EndFor
		\State \textbf{return}
		FAIL.
		\label{line:filter_end-basic}

	\end{algorithmic}
\end{algorithm}

Formally,
we aggregate all the guarantees needed regarding the estimators in \Cref{cond:deterministic}. 
As a small technical note, the conditions in \Cref{cond:deterministic} are only needed to hold whenever $\E_{X \sim P}[w_{k,t}(X)] \geq 1- 3\eps $ and $\| \vec B_{k,t} \|_\op \geq 0.5 \| \s \|_\op$. 
We can assume the first condition holds throughout the course of the algorithm because we filter out mostly outliers. 
The second condition holds because we also know that if $\| \vec B_{k,t} \|_\op \geq 0.5 \| \s \|_\op$ is violated, then the potential function is small enough so that the algorithm must have already terminated  (c.f. \Cref{lem:RHSbound}) holds with high probability thought the algorithm  and dedicate \Cref{sec:estimators} to establishing it. 

\begin{condition}[Conditions for \Cref{alg:streaming}]\label{cond:deterministic}
In the context of \Cref{alg:streaming}, assume that the following are true. For every $k \in [k_\mathrm{end}]$ and $t \in [t_\mathrm{end}]$, if $\E_{X \sim P}[w_{k,t}(X)] \geq 1- 3\eps $ and $\| \vec B_{k,t} \|_\op \geq 0.5 \| \s \|_\op$, then: 
\begin{enumerate}
 
    \item $\hatg_{k,t}(x) \geq 0.5 g_{k,t}(x) - 0.01(\gamma/\eps)\| \vec M_{k,t} \|_\fr^2 \| \s \|_\op$. \label{it:g-scores}
    \item $\left|  \|\hatm_{k,t}\|_\fr^2 - \|\vec M_{k,t}\|_\fr^2   \right| \leq 0.01\|\vec M_{k,t}\|_\fr^2$.\label{it:fr-norm}

    \item The estimator $\widehat{L}_{k,t}$ of  Line \ref{eq:L} satisfies $\pr_{X \sim P_{k,t}}[\hatf_{k,t}(X) > \widehat{L}_{k,t}] \in (3.99\eps, 4.01\eps)$. \label{it:quantile_est}

    \item Recall the parameter $r$ as the radius such that $\pr_{X \sim G}[\|X\|_2 > r \sqrt{d\|\s\|_\op}]\leq \eps$. For any weight function $w: \R^d \to [0,1]$,  the algorithm has access to an estimator $\widehat{F}$ for the quantity $F_{k,t}:= \E_{X \sim P}[w(X) \widehat{f}_{k,t}(X)]$ that has accuracy $ | \widehat{F} - F_{k,t} | \leq 0.01\gamma F_{k,t} + \frac{0.01 \gamma}{d r^2} \| \widehat{v}_{k,t} \|_2^2 \|\os_{k,t}\|_\op$ across a total of $t_\mathrm{end}k_\mathrm{end}\cdot \log^5(d/\eps)$ calls. \label{it:scores-mean}
    
    \item The estimator $\widehat{r}_t$ of  Line \ref{line:strong-p-iteration} of \Cref{alg:sol-gen2} satisfies $\widehat{r}_t \geq (1-\gamma)\| \os_{k,t} \|_\op$. \label{it:strong-power-est}
    \item Every time   Line \ref{line:vectoru} of of \Cref{alg:sol-gen2} is executed, there is probability $0.9$ that $u^\top \os_{k,t} u/\|u\|_2^2 \geq (1-\gamma) \| \os_{k,t}\|_\op$. \label{it:power-est}
\end{enumerate}
\end{condition}
In particular, \Cref{it:g-scores} of \Cref{cond:deterministic}  shows that \Cref{assumption:full_points} that we used in proving the main theorem of the previous section is satisfied for \Cref{alg:streaming} in our current setting.
\Cref{it:fr-norm} is relevant to equation \Cref{eq:fullpoint} in our previous analysis. To adapt that in our  current setting, $v_{k,t}$  will be replaced by $\widehat{v}_{k,t}$ and after we take expectation over $\widehat{v}_{k,t}$ it's norm will become  $\|\hatm_{k,t}\|_\fr^2$ that we can  relate to $\| \vec M_{k,t} \|_\fr^2$.
\Cref{it:quantile_est} takes care of the quantile estimation.
\Cref{it:scores-mean} is the estimator that we will use to evalueate the stopping condition inside \textsc{HardThresholdingFilter} as well as adapting  line \ref{line:trimmed-mean} of the old algorithm.
\Cref{it:strong-power-est,it:power-est} are the adaptation of the power iteration guarantee when we do not have access to the target matrix $\os_{k,t}$ but instead we start with a random Gaussian vector and multiply it repetitively with fresh estimates $\widehat{\os}_{k,t}$.

\begin{algorithm}[h]
	\caption{\solGenerator2}
	\label{alg:sol-gen2}
	\begin{algorithmic}[1]
		\State \textbf{Input}: Distribution $P$, weights $w$, parameters $\eps,\gamma,\delta$.

		\State Let $\vec M:= (\E_{X \sim P}[w(X)XX^\top])^p$ for $p=
			C\frac{\log(d/\new{\gamma})}{\gamma}$.
		\State $\widehat{r} \gets 0$.
		\For{$j \in [\log(1/\delta)]$}
		\State Let $\hatm$ be a sample based estimator for $\vec \s_{P_w}^{p}$ calculated using \Cref{alg:matrix-power-est}.
		\State $y \gets \hatm' \cdot g$ for  $g \sim \cN(0,\bI)$.
		\State $\widehat{r} \gets \max(\widehat{r} \gets,y^\top \widehat{\s}_{P_w} y/ \|y\|_2^2)$. \label{line:strong-p-iteration}
		\EndFor
		\hfill\Comment{cf. \Cref{lem:approx-power-iter}}
\State Let $\hatm$ be a sample based estimator for $\vec \s_{P_w}^{p}$ calculated using \Cref{alg:matrix-power-est}.
		\State $u \gets \vec \hatm z$ for $z \sim \cN(0,\bI)$. \label{line:vectoru}
		\State{\new{Let $\widehat{\sigma}_u$ be any approximation such that $|\widehat{\sigma}_u - u^\top \s u| \leq 4 \gamma u^\top \s u$, for example $\widehat{\sigma}_u:=\E_{X \sim P}[w(X) (u^\top X)^2 \Ind((u^\top X)^2 \leq Q)]$, where $Q$ is within $0.01\eps$ from the $3\eps$-quantile of $(u^\top X)^2$ under $P_{w}$.}}
    \State \new{Find estimator $\widehat{\sigma}_u'$ such that $|\widehat{\sigma}_u'- \widehat{\sigma}_u| \leq 0.01\gamma \widehat{\sigma}_u + 0.01\gamma \|u\|_2^2 \|\s\|_\op$.} \label{line:avg-est}
    \State Let $\widehat{\vec \Sigma}_{P_w}$ be a sample-based version of $\os_{P_w}$.
    \If{$\widehat{\sigma}_{u}' \geq (1-C\gamma)u^\top \widehat{\vec \Sigma}_{P_w} u $ and $\frac{u^\top \widehat{\vec \Sigma}_{k,t} u}{\|u\|_2^2} \geq (1 - \gamma)  \widehat{r}_t $}    \label{line:when-we-return2}
		\State \textbf{return} $u/\|u\|_2$.  \hfill\Comment{c.f. \Cref{lem:basic-cert}}
	\EndIf
	\end{algorithmic}
\end{algorithm}

\subsection{Establishing \Cref{cond:deterministic} } \label{sec:estimators}

In this section, we focus on showing that \Cref{cond:deterministic} holds with high probability given enough samples.
The proofs of \Cref{it:g-scores,it:fr-norm,it:scores-mean} closely follow \cite{DKPP22} and are thus deferred to \Cref{sec:streaming}. \Cref{it:quantile_est} is a basic application of Chernoff-Hoeffding bounds and is also provided in Appendix. 
Thus, in this section, we focus on establishing \Cref{it:strong-power-est,it:power-est}, both of which are related to the power iteration method, as explained below.

For the standard power iteration method, we know that for a $d \times d$ PSD matrix $\vec A$,  $y^\top \vec A y / \|y\|_2^2$ is close to $\|\vec A\|_\op$ with high probability if $y = \vec A^p z$ for a large $p$ and $z \sim \cN(0,\vec I)$. 
Formally, we have that $y^\top \vec A y / \|y\|_2^2 \geq (1 - \gamma)\|\vec A\|_\op$ for $p = C \log(d/\gamma)/\gamma$;  cf.\ \Cref{fact:power-iter}.
However, in our analysis, we will be able to compute only $y = \widehat{\vec A} z$, where $\|\widehat{\vec A} - \vec A^p\|_\op \leq \delta \|\vec A\|_\op$ for a small $\delta$; Here, $\delta$ accounts for sampling error.
Despite having access to only $\widehat{\vec A}$, we still want that $y^\top \vec A y / \|y\|_2^2$ is at least $(1-O(\gamma)) \|\vec A\|_\op$.
This is obviously true for $\delta = 0$ by power iteration and we want to understand how large can $\delta$ be so that it is still true, since larger $\delta$ will lead to smaller sample complexity.
We state our result in \Cref{lem:approx-power-iter} below.

Once we have \Cref{lem:approx-power-iter}, then \Cref{it:strong-power-est,it:power-est} will follow as described next. For \Cref{it:power-est} we use the lemma below with $\vec A = \vec B_{k,t}$ and $\widehat{\vec A} = \hatm_{k,t}$, for which it is guaranteed by \Cref{lem:op-norm-closeness} that $\|  \widehat{\vec A} - \vec A^p \|_\op \leq \delta \| \vec A^p \|_\op \leq \delta \| \vec A^p \|_\fr$. \Cref{it:strong-power-est} also follows since we can boost the probability of success from $0.9$ to arbitrarily close to $1$ by repeating the procedure and returning the vector $y$ maximizing $y^\top \vec A y/\|y\|_2^2$. We now state and prove the lemma below:

\begin{lemma}\label{lem:approx-power-iter}
  Let $\delta>0$, $\gamma \in (0,1/2)$, $p \in \N$ and $d \times d$ PSD matrices $\vec A, \widehat{\vec A}$, 
  such that $\|  \widehat{\vec A} - \vec A^p \|_\op \leq \delta \| \vec A^p \|_\fr$. 
  Let $z \sim \cN(0,\bI)$ and $y = \widehat{\vec A} z$. 
  If $p > C \log(d/\gamma)/\gamma$ for a sufficiently large constant, and \new{$\delta < c \gamma/\sqrt{d}$} for sufficiently small positive constant, then with probability at least $0.9$, we have that  $\frac{y^\top \vec A y}{\|y\|_2^2} \geq (1-O(\gamma))\| \vec A \|_\op  \;.$
\end{lemma}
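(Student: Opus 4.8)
The plan is to compare $y$ against the \emph{idealized} power-iteration vector $u := \vec A^p z$. Write $\vec E := \widehat{\vec A} - \vec A^p$ and $e := \vec E z$, so that $y = u + e$. By \Cref{fact:power-iter}, applied with $\eps \leftarrow \gamma$ and failure probability $1/100$, the choice $p > C\log(d/\gamma)/\gamma$ (for $C$ large enough) guarantees that $u^\top \vec A u / \|u\|_2^2 \ge (1-\gamma)\|\vec A\|_\op$ with probability at least $0.99$. So it suffices to show that, with good probability, replacing $u$ by $y = u+e$ perturbs the Rayleigh quotient $u^\top \vec A u / \|u\|_2^2$ by only a multiplicative $(1-O(\gamma))$ factor; and this will follow once we know $\|e\|_2 \le \gamma\|u\|_2$.

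First I would control $\|e\|_2$ and $\|u\|_2$ separately. For the numerator, $\E_z[\|e\|_2^2] = \|\vec E\|_\fr^2 \le d\,\|\vec E\|_\op^2 \le d\,\delta^2\,\|\vec A^p\|_\fr^2$, so Markov's inequality gives $\|e\|_2^2 \le 100\, d\,\delta^2\,\|\vec A^p\|_\fr^2$ with probability at least $0.99$. For the denominator, $\|u\|_2^2 = z^\top (\vec A^p)^2 z$ is a PSD quadratic form with expectation $\tr((\vec A^p)^2) = \|\vec A^p\|_\fr^2$, so by the anti-concentration bound \Cref{fact:quadratics-approximation} (with a small absolute constant $\beta$) we get $\|u\|_2^2 \ge \beta\,\|\vec A^p\|_\fr^2$ with probability at least $0.99$. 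On the intersection of these two events and the power-iteration event, which holds with probability at least $0.97$ by a union bound, we have $\|e\|_2^2/\|u\|_2^2 \le 100\,d\,\delta^2/\beta$; since $\delta < c\gamma/\sqrt d$, choosing $c$ small enough that $100c^2/\beta \le 1$ yields $\rho := \|e\|_2/\|u\|_2 \le \gamma$. This is exactly where the $\sqrt d$ in the hypothesis on $\delta$ enters: we have no handle on $\widehat{\vec A}$ beyond operator-norm closeness to $\vec A^p$, so we must pass to the Frobenius norm via $\|\vec E\|_\fr \le \sqrt d\,\|\vec E\|_\op$, losing a factor of $\sqrt d$.

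To finish, I would argue deterministically on this event. Since $\vec A \succeq 0$ we have $e^\top \vec A e \ge 0$, hence $y^\top \vec A y = u^\top\vec A u + 2\,u^\top\vec A e + e^\top\vec A e \ge u^\top\vec A u - 2|u^\top\vec A e|$; and by Cauchy--Schwarz for the PSD form $\langle \cdot,\cdot\rangle_{\vec A}$ together with $\vec A \preceq \|\vec A\|_\op\bI$, $|u^\top\vec A e| \le \sqrt{u^\top\vec A u}\,\sqrt{e^\top\vec A e} \le \|\vec A\|_\op\,\|u\|_2\,\|e\|_2 = \|\vec A\|_\op\,\rho\,\|u\|_2^2$. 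Combining this with $\|y\|_2 \le (1+\rho)\|u\|_2$ gives
\[
\frac{y^\top\vec A y}{\|y\|_2^2} \;\ge\; \frac{u^\top\vec A u/\|u\|_2^2 - 2\rho\|\vec A\|_\op}{(1+\rho)^2} \;\ge\; \frac{(1-\gamma-2\rho)\|\vec A\|_\op}{(1+\rho)^2} \;\ge\; (1-O(\gamma))\|\vec A\|_\op,
\]
using $\rho \le \gamma$ in the last two steps. The argument is essentially a perturbation (robustness) analysis of a single power-iteration readout; the only mild subtlety—the "hard part", such as it is—is the bookkeeping in the middle paragraph that tracks the dimension factor correctly, so that the operator-norm error budget $\delta$ is permitted to be as large as $\Theta(\gamma/\sqrt d)$, which is what keeps the downstream sample complexity (via \Cref{alg:matrix-power-est}) manageable.
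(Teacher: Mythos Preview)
Your proof is correct. The approach is in spirit the same as the paper's --- a perturbation analysis of power iteration --- but you package it more cleanly at the vector level: you write $y = u + e$, bound $\|e\|_2$ once via Markov, and then handle the Rayleigh-quotient perturbation deterministically via Cauchy--Schwarz in the $\vec A$-inner product. The paper instead expands $\|y\|_2^2 = z^\top\widehat{\vec A}^2 z$ and $y^\top\vec A y = z^\top\widehat{\vec A}\vec A\widehat{\vec A}z$ as quadratic forms in $z$, bounds each cross term $z^\top(\vec A^p\vec\Delta + \vec\Delta\vec A^p)z$ (and the analogous $\vec A^{p+1}$ version) by computing its mean and variance and applying Chebyshev, and only then invokes the standard power-iteration guarantee. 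Both routes hinge on the same bottleneck --- the passage $\|\vec E\|_\fr \le \sqrt d\,\|\vec E\|_\op$ that forces $\delta \lesssim \gamma/\sqrt d$ --- and both need the anti-concentration of $z^\top\vec A^{2p}z$ from \Cref{fact:quadratics-approximation}. Your version is slightly more elementary (Markov in place of Chebyshev, and the probabilistic and linear-algebraic steps are cleanly separated); the paper's version is otherwise equivalent.
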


\begin{proof}
   Denote $\vec \Delta := \widehat{\vec A} - \vec A^p$, which has operator norm at most $\delta \|\vec A^p\|_\fr$.
   We first analyze the random variable \new{$U = z^\top \left(\vec A^{p} \vec \Delta +\vec \Delta \vec A^{p}  \right) z$.}
   Then 
   $$\E[U] = \trace \left(\vec A^{p} \vec \Delta +\vec \Delta \vec A^{p}  \right) = 2 \langle  \vec A^p, \vec \Delta \rangle \leq 2\| \vec A^p\|_\fr \| \vec \Delta\|_\fr \leq 2\sqrt{d} \|\vec A^p\|_\fr \| \vec\Delta\|_\op \leq 2\delta \sqrt{d}\|\vec A^p\|_\fr^2\,, $$
   since $\| \vec \Delta\|_\op \leq \delta \|\vec A\|_\fr$.
   Since $\vec A^{p} \vec \Delta +\vec \Delta \vec A^{p} $ is symmetric, by \Cref{fact:var-quadratic}, the variance of $U$ satisfies
   $\Var(U) \lesssim \| \vec A^{p} \vec \Delta +\vec \Delta \vec A^{p}  \|_\fr^2 \lesssim \|\vec \Delta\|_\op^2 \|\vec A^p\|_\fr^2 \leq \delta^2 \|\vec A^p\|_\fr^4$.
   By applying Chebyshev's inequality, we have that with probability at least $0.999$ 
   \begin{align}
       \label{eq:dummy-2}
   |U| =|z^\top \left(\vec A^{p} \vec \Delta +\vec \Delta \vec A^{p}  \right) z| \lesssim \delta \sqrt{d} \| \vec A^p\|_\fr^2.
   \end{align}
   By a similar argument, we obtain that with probability at least $0.999$, 
      \begin{align}
       \label{eq:dummy-3}
  |z^\top \left(\vec A^{p+1} \vec \Delta +\vec \Delta \vec A^{p+1}  \right) z| \lesssim \delta \sqrt{d} \|\vec A\|_\op \| \vec A^p\|_\fr^2. 
   \end{align}
    Now let $U' = z^\top \vec A^{2p} z$.
    Then $\E[U'] = \trace( \vec  A^{2p}) =  \| \vec A^p\|_\fr^2$.
    By Gaussian anticoncentration (\Cref{fact:quadratics-approximation}), the standard deviation of $U'$ is at most $O(\|\vec A^p\|_\fr)$, which gives an upper bound on $U'$ with high probability. 
    Combining this with \Cref{fact:quadratics-approximation}, we have that with probability $0.999$:
    \begin{align} \label{eq:gquadratic}
        z^\top  \vec  A^{2p} z \asymp \| \vec  A^p\|_\fr^2 \;.
    \end{align}
    Similarly, we obtain that with high probability, $\|z\|_2^2 \lesssim d$.
   Combining this with \Cref{eq:dummy-2,eq:dummy-3,eq:gquadratic}, we have that with probability at least $0.99$:
   \begin{align}
    \|y\|_2^2 &= z^\top \widehat{\vec A}^2 z \notag = z^\top   \left(  \vec A^p + \vec \Delta  \right)^2  z  \notag \\
    &= z^\top  \vec A^{2p} z +   z^\top  \left(\vec A^{p} \vec \Delta +\vec \Delta \vec A^{p}  \right) z  +  z^\top    \vec \Delta^2 z \notag \\
    &\leq z^\top  \vec A^{2p} z +  C \delta \sqrt{d} \| \vec  A^p\|_\fr^2   +  C d \| \vec  \Delta\|_\op^2 \notag \\
    &\leq   z^\top  \vec A^{2p} z + C\| \vec  A^p\|_\fr^2\left( \delta \sqrt{d} + d\delta^2\right) \notag\\
    &\leq   z^\top  \vec A^{2p} z + 2C \delta \sqrt{d}\| \vec  A^p\|_\fr^2 \;. \label{eq:denom}
\end{align}
Similarly, we obtain the following lower bound that with probability at least $0.99$: 
\begin{align}\label{eq:denom2}
    \|y\|_2^2 \geq  z^\top  \vec A^{2p} z - C\delta \sqrt{d} \| \vec A^p\|_\fr^2 \;.
\end{align}
By \Cref{fact:power-iter}, the standard power iteration implies that with probability $0.999$ we have that
\begin{align} \label{eq:std-power-iter}
    \frac{z^\top \vec A^{2p+1}z}{z^\top \vec A^{2p} z} \geq (1-\gamma) \|\vec A\|_\op \;.
\end{align}
At the intersection of all these events, which itself holds with probability at least $0.9$ by union bound, we obtain the following lower bound on $y^\top \vec A y$:
  \begin{align}
      y^\top \vec A y &= z^\top \widehat{\vec A} \vec A \widehat{\vec A} z \notag= z^\top \left(  \vec A^p + \vec \Delta  \right) \vec A \left(  \vec A^p + \vec \Delta  \right) z  \notag \\
      &= z^\top \vec A^{2p+1} z +  z^\top (\vec A^{p+1} \vec \Delta +\vec  \Delta \vec A^{p+1} )z  + z^\top (\vec \Delta  \vec A \vec \Delta  )z  \tag{using \Cref{eq:dummy-3} and $z^\top \Delta \vec A \Delta z \geq 0$} \\
      &\geq  z^\top \vec A^{2p+1} z - C \delta \sqrt{d} \| \vec  A\|_\op \| \vec  A^p\|_\fr^2  \notag \\
    &\geq (1-\gamma)\|\vec A \|_\op z^\top \vec A^{2p} z - C\delta \sqrt{d}  \|\vec A \|_\op \|\vec A^p\|_\fr^2  \tag{using \Cref{eq:std-power-iter}}\\
    &\geq (1-\gamma)\|\vec A \|_\op \left(  z^\top \vec A^{2p} z - 3C\delta \sqrt{d}  \|\vec A^p\|_\fr^2 \right) \tag{$\gamma<2/3$} \\
    &\geq (1-\gamma)\|\vec A \|_\op \left(  \|y\|^2 - 5C\delta \sqrt{d}  \|\vec A^p\|_\fr^2 \right) \;. \tag{using \Cref{eq:denom}}
  \end{align}
  Finally, dividing by $\|y\|_2^2$ both sides yields
  \begin{align*}
      \frac{y^\top \vec A y}{\|y\|_2^2} 
    &\geq (1-\gamma)\| \vec A \|_\op \left( 1 - 5 C\delta \sqrt{d}\| \vec A^p \|_\fr^2 \frac{1}{\|y\|_2^2} \right).
    \end{align*}
    We will now show that  $5 C\delta \sqrt{d}\| \vec A^p \|_\fr^2 \frac{1}{\|y\|_2^2} \lesssim \gamma$ on the intersection of the events mentioned above, which will complete the result: 
\begin{align*}
\delta \sqrt{d}\| \vec A^p \|_\fr^2 \frac{1}{\|y\|_2^2}
    &\leq 
    \frac{\delta \sqrt{d}\| \vec A^p \|_\fr^2} {z^\top \vec A^{2p}z - 2 C \delta \sqrt{d}\|\vec A^p\|_\fr^2}   \tag{using \Cref{eq:denom2}}\\
    &\lesssim \frac{\delta \sqrt{d}\| \vec A^p \|_\fr^2} { \|\vec A^p\|_\fr^2 - C' \delta \sqrt{d}\|\vec A^p\|_\fr^2} \tag{using \Cref{eq:gquadratic}} \\
    &=\frac{\delta \sqrt{d}} { 1 - C' \delta \sqrt{d}}  \lesssim \delta \sqrt{d} \lesssim \gamma. \tag{using $\delta < c \gamma /\sqrt{d}$}
  \end{align*}
\end{proof}

\subsection{Proof Sketch of \Cref{thm:streaming}} \label{sec:proof-of-streaming-thm}
The proof of correctness of \Cref{alg:streaming} is the same as in \Cref{sec:multi-pass} modulo small changes in the constants of the bounds to account for the usage of our sample-based estimators. Thus, here we only derive the sample complexity, memory, and runtime bounds.

\textbf{Sample complexity}: Consider the  $(k,t)$-th iteration of \Cref{alg:streaming} (that is, the $k$-th iteration of the outer loop and $t$-th iteration of the inner loop).
In order to ensure all parts of \Cref{cond:deterministic} we use \Cref{lem:op-norm-closeness} with $\delta =   \frac{0.01}{\sqrt{d}}\min\left(\frac{\sqrt{\gamma/\eps}}{r  },\gamma \right)$ and probability of failure, $\tau$, equal to a sufficiently large polynomial of $\eps/d$ so that with high probability the estimators are accurate across all iterations simultaneously.
Thus, each estimator $\widehat{\vec B}_{k,t,\ell}$ uses $(r^2   p_k^2 d/\delta^2)\polylog(d/\eps)$ samples. We use $p_k$ of these estimators in Line \ref{line:matrix-power-est} resulting in $(r^2  p_k^3 d/\delta^2)\polylog(d/\eps)$ samples. Moreoever, for the estimator of Line \ref{eq:L} we use \Cref{l:simple-partition} (again with $\tau = \poly(\eps/d)$) which contributes another $(1/\eps)\polylog(d/\eps)$ to the sample complexity. We also call  the estimator of \Cref{lem:averaging-est} which uses $O( (r^4 d^2/\gamma^2)\polylog(d/\eps))$ many samples.
Thus, the number of samples used during the $(k,t)$-th iteration of \Cref{alg:streaming} is $n_{k,t}:= O\left(r^2  p_k^3 d/\delta^2\polylog(d/\eps) + (1/\eps)\polylog(d/\eps) + (r^4 d^2/\gamma^2) \polylog(d/\eps) \right)$, and the total sample complexity is 
\begin{align*}
    n &= \sum_{k=1}^{k_\mathrm{end}} \sum_{t=1}^{t_\mathrm{end}} n_{k,t}  
    \lesssim t_\mathrm{end} \left(\sum_{k=1}^{k_\mathrm{end}} p^3_k \right)\frac{r^2 d}{\delta^2 }\polylog(d/\eps) +\frac{t_\mathrm{end} k_\mathrm{end}}{\eps}\polylog(d/\eps) + t_\mathrm{end} k_\mathrm{end}\frac{r^4 d^2}{\gamma^2}\polylog(d/\eps)\\
    &\lesssim  \frac{r^2 d^2\max(r^2\eps,1)}{\gamma^5 }  \polylog(d/\eps) + \frac{1}{\eps \gamma}\polylog(d/\eps) + \frac{r^4 d^2}{\gamma^3}\polylog(d/\eps) \;.
\end{align*}

\textbf{Memory Usage}: The estimator of  Line \ref{eq:L} in \Cref{alg:streaming} uses memory $O((1/\eps)\polylog(d/\eps))$, and that memory can be freed and reused the next time Line \ref{eq:L} is executed. The filters created by \textsc{HardThresholdingFilter} are of the form of a vector and a one dimensional threshold, meaning that they can be stored in $O(d)$ memory. The number of the filters created in each one of the $t_\mathrm{end}\cdot k_\mathrm{end}$ calls of \textsc{HardThresholdingFilter} is at most $O(\log(d/\eps))$. Thus, the memory used to store all of them is $O(t_\mathrm{end}\cdot k_\mathrm{end} \cdot d \log(d/\eps)) = O((d/\gamma) \polylog(d/\eps))$.
Every other operation done in \Cref{alg:streaming} is either a multiplication of an empirical second moment matrix with a vector (that as we have noted earlier can be implemented in $O(d)$ memory) or even more basic operation.

\textbf{Runtime}: The same runtime analysis from \Cref{sec:runtime} is applicable here as well.

\subsubsection{Bit Complexity}\label{sec:bit-complexity}

We briefly discuss how our algorithm can be implemented with bounded precision (i.e., in the standard word RAM model) using $(d/\gamma)\polylog(d/\eps)$ registers of size $(1/\gamma)\polylog(d/\eps)$ bits each. We assume that all but $\eps$ mass of the inlier distribution is supported in a ball of radius $R=(d/\eps)^{\polylog(d/\eps)}$ and we also assume that $\s \succeq (\eps/d)^{\polylog(d/\eps)} \vec I$. Our algorithm ignores all points $x$ outside of the ball and rounds the remaining ones to $x'$ such that $\|x-x'\|_2 \leq \eta$. We want $\eta$ to be small enough so that it does not affect the correctness of our algorithm. Picking $\eta =(\eps/d)^{\polylog(d/\eps)}$ means that the rounded distribution has covariance matrix close enough to the original one so that is satisfies the same stability condition (modulo a difference in the constants), thus our algorithm will be correct.  So far, by the aforementioned rounding, every point can be stored in $d$ words of size $O(\log(Rd/\eta)) = \polylog(d/\eps)$ bits). Regarding the bit complexity of intermediate calculations of our algorithm, the most expensive one is the computation of $\vec M z$ for $\vec M$ being the empirical covariance raised to the power $p=\gamma^{-1}\polylog(d/\eps)$ and $z$ a random Gaussian vector. Since the power $p$ includes a $1/\gamma$ factor, the final result may have bit complexity increased by $1/\gamma$ factor, thus we need $d$ registers of size $(1/\gamma)\polylog(d/\eps)$ to store the resulting vector $\vec M z$. The filters that we create are $(1/\gamma)\polylog(d/\eps)$ in number and the representation of each is a vector of the previous form (and a one-dimensional threshold). Thus, overall, we have to use $(d/\gamma)\polylog(d/\eps)$ many 
registers of size $(1/\gamma)\polylog(d/\eps)$ bits each.

\vspace{-0.5em}
\section{Conclusion}

We gave the first nearly-linear time algorithm for robust PCA that
attains near-optimal error guarantees, 
without eigenvalue gap assumptions in the
spectrum of $\Sigma$.
Additionally, we presented the first sub-quadratic space streaming algorithm
for robust PCA.

In terms of future improvements, one potential avenue for optimization is the
calculation of $\vec B^p$ for a PSD matrix $\vec B$ and $p =
	\Tilde{\Theta}(1/\eps)$.
It is likely that these matrix-dot products can be approximated faster through
the use of Chebyshev approximation~\cite[Chapter 10]{SacVis14}.
Additionally, it remains to be determined if the runtime of our algorithm
improves in the presence of a gap among the large eigenvalues of the true
covariance matrix.
Lastly, as highlighted by \cite{DKPP22}, designing streaming algorithms with
optimal sample complexities and robustness to strong contamination is an open
problem.

\newpage
\bibliographystyle{alpha}
\bibliography{allrefs,newrefs}

\newpage
\appendix

\section*{Appendix}

\paragraph{Organization} The Appendix is organized as follows:
\Cref{sec:prelim-appendix} includes the additional standard results (and their proofs) that were
omitted from the main paper.
Finally, \Cref{sec:streaming} focuses on the streaming algorithm achieving \Cref{thm:streaming-main}.

\section{Preliminaries: Omitted Facts}
\label{sec:prelim-appendix}

\subsection{Information-theoretic Error}
\label{sec:info-theoretic-error}
Let us briefly outline why the best approximate guarantee for subgaussian distributions is of the order $(1 - \tilde{\Theta}(\eps))$.
We will do so by focusing on Gaussian distributions and establishing a lower bound of $(1 - \Omega(\eps))$.
Let $P$ be the standard Gaussian distribution in $\R^d$.
For a unit vector $v$ and $\eps \in (0, 0.1)$, let $Q$ be the distribution $\cN(0, \vec I + \eps vv^\top)$.
Using simple calculations given below, it can be seen that $\dtv(P ,Q) \leq \eps/2$:
By Pinsker's inequality and the expression for KL divergence between two Gaussians, we obtain the following series of inequalities
\begin{align*}
    \dtv(Q,P) &\leq \sqrt{d_{\text{KL}} (Q||P)} \tag{Using Pinsker's inequality }\\
    &= \sqrt{ \frac{1}{2} \left( -\log|\vec \Sigma_Q| + \tr(\vec \Sigma_Q) - d \right)} \tag{KL divergence between two Gaussians}\\
    &= \sqrt{ \frac{1}{2} \left(- \log(1 + \eps) + \eps \right)}\tag{Using $\s_Q = \vec I + \eps vv^\top$}\\
    &\leq \eps/2 \tag{since $\log(1 + x) \geq x - x^2/2$ for $x\geq 0$}.
\end{align*}
Let $\widehat{v} \in \R^d$ be the output of any robust PCA algorithm, when given i.i.d.\ samples from $P$ as input.
The true variance along the direction $\widehat{v}$ under the  distribution $Q$ is $1 + \eps (\widehat{v}^\top v)^2$.
In particular, the approximation factor (on the distribution $Q$) obtained by the algorithm outputting $\widehat{v}$ is
\begin{align*}
\frac{\widehat{v}^\top \s_Q \widehat{v}}{\|\s_Q\|_\op} =    \frac{1 + \eps (\widehat{v}^\top v)^2}{1 + \eps} \leq (1 -0.5 \eps)(1 + \eps (\widehat{v}^\top v)^2).
\end{align*}
Since $P$ and $Q$ have total variation distance at most $\eps/2$,
an adversary, that is allowed to  corrupt an $\eps$-fraction of samples, can simulate i.i.d.\ samples of $P$ given i.i.d.\ samples from $Q$, and vice versa.
Now, suppose the true distribution was $Q$, but the adversary gives i.i.d.\ samples from $P$ (which is a valid adversary as mentioned above).
Since $P$ contains no information about $v$ whatsoever, no algorithm can identify $v$ from the set of  $\eps$-corrupted samples of $Q$. 
As $v$ could be an arbitrary unit vector, no algorithm can output a $\hat{v}$ such that  
$|v^\top \widehat{v}| \leq 0.01$ for $d $ larger than a constant.
As a result, it is not possible  to obtain an approximation better than $(1 - 0.1\eps)$ for Gaussians because $(1-0.5 \eps)(1 + 0.1 \eps) \leq (1 - 0.1 \eps)$.

\subsection{Comparison Between the Two Stopping Conditions}
\label{sec:comparison-two-stopping-conditions}
Consider the setting in \Cref{sec:advanced-cert-lemma}.
Let $w:\R^d \to \{0,1\}$ be weights such that probability of inliers is at least $1- 3 \eps$.
Let $r= \E_{X \sim P}[w_t(X)]$.
Recall that $\vec M = \left(\E_{X \sim P}[w_t(X)XX^\top]\right) = \left(\E_{X \sim P}[w_t(X)] \s_t\right)^p = r^p \s_t^p$.

Our algorithm stops when $\langle \s_t, \vec M^2  \rangle \leq (1 + C \gamma) \langle  \s , \vec M^2\rangle$.
We will now show that this stopping condition implies the stopping condition that depends solely on the Schatten norms for large $p$.
Using the definition of the Schatten norm, we begin as follows: 
\begin{align*}
 \|\s_t\|_{2p+1}^{2p+1} &= \langle \s_t, \s_t^{2p} \rangle = r^{-2p} \langle \s_t,  \vec M^2 \rangle \\
 &\leq (1 + C \gamma) r^{-2p}\langle \s, \vec M^2 \rangle \tag{Using stopping condition}\\
 &\leq (1 + C \gamma)r^{-2p} \|\s\|_\op \|\vec M\|_\fr^2 \\
 &= (1 + C \gamma) \|\s\|_\op \|\s_t\|_{2p}^{2p} \tag{Using definition of $\s_t$} \\
&\leq (1 + C \gamma) \|\s\|_\op \left( \|\s_t\|_{2p+1} d^{\frac{1}{2p(2p+1)}}\right)^{2p}  \tag{Using \Cref{fact:normReln}}\\
&\leq (1 + 2C \gamma) \|\s\|_\op \|\s_t\|_{2p+1}^{2p}, 
\end{align*}
where we use that $p \gg \log(d)/\gamma$ and thus $d^{1/{(2p+1)}} \leq (1 + C \gamma)$.
Rearranging it,
we obtain that
\begin{align*}
    \|\s_t\|_{2p+1} \leq (1 + 2C \gamma) \|\s\|_\op,
\end{align*}
and thus $\|\s_t\|_{2p+1}^{2p+1} \leq (1 + 2C \gamma)^{2p+1} \|\s\|_\op^{2p+1}$, which is the corresponding stopping condition that depends solely on the Schatten norms of $\s_t$ and $\s$.
Thus, the stopping condition of \Cref{lem:certificate1-main-body} is stronger.

\subsection{Omitted Proofs Regarding Stability: Proof of \Cref{lem:stability-implications}}\label{app:stability}

In this section we restate and prove \Cref{lem:stability-implications}. We have not optimized the constants (up to the choice of constants $C$).

\StabImplications*

We prove each part of the lemma separately. For the first one, we have the following, which holds under the slightly stronger condition $\E_{X \sim G}[w(X)] \geq 1-7\eps$ (this is because this version will be needed later on).

\begin{restatable}{lemma}{CorShift}
  \label{lem:stability_multidim}
  Let $0<20\eps\leq \gamma < 1$.
  Let $G$ be a $(20\eps,\gamma)$-stable distribution with respect to $\s$.
  Let a matrix $\bU \in \R^{m \times d}$ and a function $w:\R^d \to [0,1]$ with
  $\E_{X \sim G}[w(X)] \geq 1-7\eps$.
  For the function $g(x) = \|\vec U x\|_2^2$, we have that
  \begin{align*}
   (1-1.35\gamma)  \left\langle\vec U^\top \vec U, \s \right\rangle \leq  \E_{X \sim G}[w(X)g(X)] \leq \left(1+\gamma\right) \left\langle\vec U^\top \vec U, \s  \right\rangle \;.
\end{align*}
\end{restatable}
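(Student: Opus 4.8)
The statement to prove is \Cref{lem:stability_multidim}: given an $(20\eps,\gamma)$-stable distribution $G$ with respect to $\s$, a matrix $\bU$, and a weight function $w$ with $\E_{X\sim G}[w(X)]\ge 1-7\eps$, the reweighted expectation $\E_{X\sim G}[w(X)g(X)]$ with $g(x)=\|\bU x\|_2^2$ lies within $(1\pm O(\gamma))$ of $\langle\bU^\top\bU,\s\rangle$. The key observation is that $\E_{X\sim G}[w(X)\|\bU X\|_2^2]=\langle \bU^\top\bU,\,\E_{X\sim G}[w(X)XX^\top]\rangle$, so everything reduces to a PSD sandwich on the matrix $\E_{X\sim G}[w(X)XX^\top]$ combined with the fact that $\bU^\top\bU\succeq 0$ and \Cref{fact:trace_PSD_ineq}.

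**The upper bound.** First I would write $\E_{X\sim G}[w(X)g(X)] = \langle \bU^\top\bU,\, \E_{X\sim G}[w(X)XX^\top]\rangle$. Since $w\le 1$, we have $\E_{X\sim G}[w(X)XX^\top]\preceq \E_{X\sim G}[XX^\top]$ — but this isn't quite the stability statement, which is about the \emph{normalized} reweighting $\os_{G_w}$. Instead I would note $\E_{X\sim G}[w(X)XX^\top] = \E_{X\sim G}[w(X)]\cdot \os_{G_w} \preceq \os_{G_w} \preceq (1+\gamma)\s$, using $\E_{X\sim G}[w(X)]\le 1$ and stability (applicable since $\E_{X\sim G}[w(X)]\ge 1-7\eps \ge 1-20\eps$). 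Then by \Cref{fact:trace_PSD_ineq} with $\bA=\bU^\top\bU\succeq 0$, $\langle \bU^\top\bU, \E_{X\sim G}[w(X)XX^\top]\rangle \le (1+\gamma)\langle\bU^\top\bU,\s\rangle$, giving the upper bound directly (in fact with constant $1$ rather than $1.35$ in front of $\gamma$, matching the statement).

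**The lower bound.** For the lower direction I would lower-bound $\E_{X\sim G}[w(X)XX^\top]$ in PSD order. Write $\E_{X\sim G}[w(X)]=1-\beta$ with $0\le\beta\le 7\eps$. Then $\E_{X\sim G}[w(X)XX^\top] = (1-\beta)\os_{G_w} \succeq (1-\beta)(1-\gamma)\s$ by stability. Since $\beta\le 7\eps\le 7\gamma/20 < 0.35\gamma$ (using $20\eps\le\gamma$), we get $(1-\beta)(1-\gamma)\ge (1-0.35\gamma)(1-\gamma)\ge 1-1.35\gamma$. Applying \Cref{fact:trace_PSD_ineq} again with $\bA=\bU^\top\bU\succeq 0$ and $\bB = (1-1.35\gamma)\s \preceq \E_{X\sim G}[w(X)XX^\top] = \bC$ yields $(1-1.35\gamma)\langle\bU^\top\bU,\s\rangle \le \E_{X\sim G}[w(X)g(X)]$, as desired. (One should double-check the constant arithmetic: $(1-0.35\gamma)(1-\gamma) = 1 - 1.35\gamma + 0.35\gamma^2 \ge 1-1.35\gamma$, so this is fine.)

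**Main obstacle.** There is no real obstacle here — this is a routine consequence of stability plus the monotonicity of trace inner product against a PSD matrix (\Cref{fact:trace_PSD_ineq}). The only points requiring mild care are: (i) correctly passing between the \emph{un-normalized} matrix $\E_{X\sim G}[w(X)XX^\top]$ that appears in $\E_{X\sim G}[w(X)g(X)]$ and the \emph{normalized} $\os_{G_w}$ that appears in the stability definition, by inserting the scalar factor $\E_{X\sim G}[w(X)]\in[1-7\eps,1]$ in the right direction for each bound; and (ii) tracking the numerical constants so the loss on the lower side is at most $1.35\gamma$, which uses $\eps\le\gamma/20$ to absorb the $\beta$-term. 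I would present these two bounds as two short displayed chains of inequalities and then conclude.
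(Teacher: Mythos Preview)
Your proposal is correct and follows essentially the same approach as the paper: rewrite $\E_{X\sim G}[w(X)g(X)]$ as $\langle \bU^\top\bU,\,\E_{X\sim G}[w(X)XX^\top]\rangle$, then sandwich the unnormalized second-moment matrix via $\E_{X\sim G}[w(X)]\cdot\os_{G_w}$ and stability, and apply \Cref{fact:trace_PSD_ineq} in both directions. The constant tracking is the same (the paper writes $(1-\gamma)(1-7\eps)\ge 1-\gamma-7\eps\ge 1-1.35\gamma$ while you expand $(1-0.35\gamma)(1-\gamma)$ directly, but these are equivalent).
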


\begin{proof}
  For the upper bound,
  \begin{align*}
     \E_{X \sim G}[w(X)g(X)] =   \left\langle \vec U^\top \vec U,  \hspace{-3pt}\E_{X \sim G}[w(X) XX^\top] \right\rangle 
     \leq (1+\gamma) \hspace{-3pt}\E_{X \sim G}[w(X)]\left \langle \vec U^\top \vec U, \s  \right\rangle 
     \leq (1+\gamma)\left \langle \vec U^\top \vec U, \s  \right\rangle,
\end{align*}
  where the first step rewrites it in the notation of trace inner product, the second step uses the stability
  condition and \Cref{fact:trace_PSD_ineq} with $\vec A = \vec U^\top \vec U$,
  $\vec B = \E_{X \sim G}[w(X) XX^\top]$, $\vec C = \E_{X \sim
      G}[w(X)](1+\gamma)\s$, and the last step uses $w(x)\leq 1$.
  The other direction is similar, with the only difference being that we lower
  bound $\E_{X \sim G}[w(X)] \geq 1-7\eps$:
  \begin{align*}
    \E_{X \sim G}[w(X)g(X)] &\geq (1-\gamma) (1-7\eps)\left \langle \vec U^\top \vec U, \s  \right\rangle \geq (1-\gamma - 7\eps)\left \langle \vec U^\top \vec U, \s  \right\rangle
     \geq (1-1.35\gamma)\left \langle \vec U^\top \vec U, \s  \right\rangle \;,
\end{align*}
  where we also used that $\eps\leq \gamma/20$.
\end{proof}

We will need the following intermediate result: 
\begin{lemma}
  \label{lem:stability_thresholded}
  Let $0<20\eps\leq \gamma < 1$.
  Let $G$ be a $(20\eps,\gamma)$-stable distribution with respect to $\s$.
  Let a matrix $\bU \in \R^{m \times d}$, a function $w:\R^d \to [0,1]$ with
  $\E_{X \sim G}[w(X)] \geq 1-3\eps$, and a set $S$ such that $\E_{X \sim
      G}[w(X)\1\{X \in S \}] \leq 4\eps$.
  Then, for the function $g(x):=\| \vec U x\|_2^2$, we have the following:
  \begin{align*}
    \E_{X \sim G}[w(X)g(X)\1\{X \in S \}] \leq   2.35\gamma \left\langle \bU^\top \bU, \s  \right\rangle \;.
\end{align*}
\end{lemma}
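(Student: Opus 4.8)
The plan is to reduce this to two applications of \Cref{lem:stability_multidim} (the two‑sided estimate on $\E_{X\sim G}[w(X)g(X)]$ just proved). The key device is the complementary weight function $w'(x) := w(x)\bigl(1-\1\{x\in S\}\bigr)$, so that $w(x) = w'(x) + w(x)\1\{x\in S\}$ and therefore
\[
\E_{X\sim G}[w(X)g(X)\1\{X\in S\}] \;=\; \E_{X\sim G}[w(X)g(X)] \;-\; \E_{X\sim G}[w'(X)g(X)].
\]
We will upper bound the first term and lower bound the second, and the difference of the two bounds will give exactly the claimed constant.

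First I would check that $w$ and $w'$ are both admissible inputs to \Cref{lem:stability_multidim}, whose hypothesis requires a weight function valued in $[0,1]$ with $G$‑mass at least $1-7\eps$. Both are clearly $[0,1]$‑valued. For $w$ this mass bound is immediate from the assumption $\E_{X\sim G}[w(X)]\geq 1-3\eps$. For $w'$ we compute
\[
\E_{X\sim G}[w'(X)] \;=\; \E_{X\sim G}[w(X)] - \E_{X\sim G}[w(X)\1\{X\in S\}] \;\geq\; (1-3\eps) - 4\eps \;=\; 1-7\eps,
\]
using the hypothesis $\E_{X\sim G}[w(X)\1\{X\in S\}]\leq 4\eps$. (Note that $1-7\eps \geq 1-20\eps$, so everything is comfortably inside the $(20\eps,\gamma)$‑stability regime; this is the only place the numerology has to be tracked.)

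Now apply the upper bound of \Cref{lem:stability_multidim} to $w$ and the lower bound to $w'$:
\[
\E_{X\sim G}[w(X)g(X)] \leq (1+\gamma)\,\langle \bU^\top\bU,\s\rangle,
\qquad
\E_{X\sim G}[w'(X)g(X)] \geq (1-1.35\gamma)\,\langle \bU^\top\bU,\s\rangle.
\]
Subtracting the second from the first,
\[
\E_{X\sim G}[w(X)g(X)\1\{X\in S\}] \leq \bigl((1+\gamma)-(1-1.35\gamma)\bigr)\,\langle \bU^\top\bU,\s\rangle = 2.35\,\gamma\,\langle \bU^\top\bU,\s\rangle,
\]
which is the assertion. (The inequality $A-B\le A_{\mathrm{ub}}-B_{\mathrm{lb}}$ used here holds term by term, irrespective of the sign of $\langle\bU^\top\bU,\s\rangle$; of course $\langle\bU^\top\bU,\s\rangle=\tr(\bU^\top\bU\,\s)\ge 0$ since both matrices are PSD, consistent with the left‑hand side being nonnegative.)

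There is essentially no obstacle here beyond the bookkeeping of mass thresholds ($1-3\eps$ for $w$, $1-7\eps$ required by \Cref{lem:stability_multidim}, stability available at level $20\eps$), and the constant $2.35=1+1.35$ is forced by the constants already appearing in \Cref{lem:stability_multidim}. The only minor point worth stating explicitly in the write‑up is why $w'$ still has mass $\ge 1-7\eps$, which is the computation displayed above.
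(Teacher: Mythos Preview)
Your proposal is correct and follows essentially the same argument as the paper: define $w'(x)=w(x)\1\{x\notin S\}$, verify $\E_{X\sim G}[w'(X)]\ge 1-7\eps$, and subtract the lower bound of \Cref{lem:stability_multidim} applied to $w'$ from the upper bound applied to $w$ to obtain the constant $2.35\gamma$. The only cosmetic difference is that you also explicitly note why $w$ itself is admissible and why $\langle\bU^\top\bU,\s\rangle\ge 0$, which the paper leaves implicit.
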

\begin{proof}
  Let $w'(x) := w(x) \1\{X \not\in S \} = w(x) - w(x)\1\{X \in S \}$, and note that $\E_{X \sim G}[w'(X)] =
    \E_{X \sim G}[w(X)] -\E_{X \sim G}[w(X)\1\{X \in S \}] \geq 1-7\eps$ by our
  assumptions.
  Thus, we have that
  \begin{align*}
    \E_{X \sim G}[w (X)g (X)\1\{X \in S  \}] 
    &= \E_{X \sim G}[w (X)g (X)] - \E_{X \sim G}[w '(X)g (X)] \\
    &\leq \left(1+\gamma\right)\left\langle \bU^\top \bU,\s  \right\rangle  - \left(1-1.35\gamma\right)\left\langle \bU^\top \bU,\s  \right\rangle  \\
    &= 2.35\gamma \left\langle \bU^\top \bU,\s  \right\rangle \;,
\end{align*}
  where the second line is the application of \Cref{lem:stability_multidim} for
  $w'(x)$.
\end{proof}

Now, \Cref{it:goodscores} of \Cref{lem:stability-implications} follows directly as shown below.

\begin{corollary}
  \label{cor:goodscores}
  In the setting of \Cref{lem:stability_thresholded}, let a mixture distribution
  $P=(1-\eps)G+\eps B$ and $S = \{x \, : \, x > L \}$, where $L$ is defined to be
  the top $3\eps$-quantile of $g(x)$ under $P_w$ (the weighted by $w$ version of
  $P$).
  Then,
  \begin{align*}
    \E_{X \sim G}[w(X)g(X)\1\{X \in S \}] \leq 2.35\gamma \left\langle \bU^\top \bU, \s  \right\rangle \;.
\end{align*}
\end{corollary}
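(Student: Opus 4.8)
The plan is to derive this Corollary as an immediate consequence of \Cref{lem:stability_thresholded}. That lemma already yields the conclusion $\E_{X \sim G}[w(X)g(X)\Ind\{X \in S\}] \le 2.35\gamma \langle \bU^\top \bU, \s \rangle$ for \emph{any} set $S$ satisfying $\E_{X \sim G}[w(X)\Ind\{X \in S\}] \le 4\eps$, so the only thing to verify is that the specific set $S = \{x : g(x) > L\}$ (where $L$ is the top $3\eps$-quantile of $g$ under $P_w$) has small $w$-weighted $G$-mass.

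First I would unwind the definition of the quantile: by construction, $\Pr_{X \sim P_w}[g(X) > L] \le 3\eps$. Rewriting $P_w$ as $P$ reweighted by $w$, this is the same as
\[
\E_{X \sim P}\bigl[w(X)\Ind\{g(X) > L\}\bigr] \le 3\eps\, \E_{X \sim P}[w(X)] \le 3\eps ,
\]
using $w \le 1$ in the last step.

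Next I would pass from $P$ to its inlier component $G$. Since $P = (1-\eps)G + \eps B$ and $w(x)\Ind\{g(x) > L\} \ge 0$, discarding the $\eps B$ contribution gives
\[
(1-\eps)\,\E_{X \sim G}\bigl[w(X)\Ind\{g(X) > L\}\bigr] \le \E_{X \sim P}\bigl[w(X)\Ind\{g(X) > L\}\bigr] \le 3\eps ,
\]
so that $\E_{X \sim G}[w(X)\Ind\{X \in S\}] \le 3\eps/(1-\eps) \le 4\eps$, where the last inequality uses $\eps \le \gamma/20 < 1/20$. This is precisely the hypothesis needed, and applying \Cref{lem:stability_thresholded} with this $S$ (and the same $\bU$, $w$) yields the claimed bound.

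There is essentially no obstacle here; the argument is three lines of bookkeeping on the mixture decomposition and the quantile. The only points worth a moment of care are: (i) reading the set in the statement as $S = \{x : g(x) > L\}$ (the displayed ``$x > L$'' is shorthand for ``$g(x) > L$''); and (ii) being slightly careful about the quantile convention — depending on how ties at the value $L$ are broken one gets $\le 3\eps$ or a marginally larger bound, but the slack between $3\eps/(1-\eps)$ and $4\eps$ comfortably absorbs any such discrepancy, so no re-tuning of constants is required.
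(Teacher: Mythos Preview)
Your proposal is correct and matches the paper's proof essentially step for step: verify the hypothesis $\E_{X\sim G}[w(X)\Ind\{X\in S\}]\le 4\eps$ of \Cref{lem:stability_thresholded} by passing from $G$ to $P$ via the mixture decomposition, then bounding $\E_{X\sim P}[w(X)\Ind\{g(X)>L\}]$ by $\pr_{X\sim P_w}[g(X)>L]\le 3\eps$, and finally using $3\eps/(1-\eps)<4\eps$. Your remarks on reading $S=\{x:g(x)>L\}$ and on the slack absorbing any tie-breaking ambiguity are apt but not issues the paper dwells on.
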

\begin{proof}
  In order to apply \Cref{lem:stability_thresholded}, we only have to verify that
  $\E_{X \sim G}[w(X)\1\{X \in S \}] \leq 4\eps$.
  Since $P=(1-\eps)G+\eps B$ and $L$ is by definition the $3\eps$-quantile of
  $g(x)$ under $P_w$, and $\eps<1/10$, we have that
  \begin{align}
    \E_{X \sim G}\left[w (X)\1\{g (X) > L  \}\right]
    &\leq \frac{1}{1-\eps}\E_{X \sim P}\left[w (X)\1\{g (X) > L  \}\right] \notag \\
    &\leq  \frac{1}{1-\eps}\pr_{X \sim P_w}\left[ g (X) > L   \right] \notag \\
    &\leq \frac{3\eps}{1-\eps} <  4\eps \;. \label{eq:weight-bound}
\end{align}
  An application of \Cref{lem:stability_thresholded} completes the proof.
\end{proof}

We show \Cref{it:bound-quantile} of \Cref{lem:stability-implications} below.

\begin{lemma}
  \label{lem:bound-quantile}
  Let $0<20\eps\leq \gamma < 1$.
  Let a mixture distribution  $P=(1-\eps)G + \eps B$ on $\R^d$, where $G$ is $(20\eps,\gamma)$-stable distribution with respect to a matrix $\s$.
  Let   $\bU \in \R^{m \times d}$ and a function $w:\R^d \to [0,1]$ with $\E_{X \sim
      G}[w(X)] \geq 1-3\eps$.
  Define $g(x) := \|\vec U x\|_2^2$ and let $L$ be the top $3\eps$-quantile of
  $g(x)$ under $P_w$.
  Then, $L\leq 1.65(\gamma/\eps) \langle \bU^\top \bU, \s \rangle $.
\end{lemma}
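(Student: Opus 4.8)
\textbf{Proof proposal for \Cref{lem:bound-quantile}.}
The plan is to bound $L$ by a Markov-type argument restricted to the inliers, combining a lower bound on the amount of inlier mass lying above $L$ with the upper bound on the contribution of that mass supplied by \Cref{it:goodscores} of \Cref{lem:stability-implications} (equivalently \Cref{cor:goodscores}). Concretely, on the event $\{g(X)>L\}$ we have the pointwise inequality $w(X)g(X)\1\{g(X)>L\}\ge L\, w(X)\1\{g(X)>L\}$, so taking expectations under $G$ gives $L\,\E_{X\sim G}[w(X)\1\{g(X)>L\}]\le \E_{X\sim G}[w(X)g(X)\1\{g(X)>L\}]$. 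The right-hand side is at most $2.35\gamma\langle\bU^\top\bU,\s\rangle$ by \Cref{it:goodscores} of \Cref{lem:stability-implications}, whose hypotheses are exactly those of the present lemma. Hence it suffices to show that $\E_{X\sim G}[w(X)\1\{g(X)>L\}]\ge c_1\eps$ for a constant $c_1$ with $2.35/c_1\le 1.65$.

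To get this lower bound, recall that by definition of the top $3\eps$-quantile (breaking ties so that $\pr_{X\sim P_w}[g(X)>L]=3\eps$; if no such $L$ exists owing to an atom of $g$ under $P_w$, one replaces $3\eps$ by the nearest achievable value, which affects nothing below) we have $\E_{X\sim P}[w(X)\1\{g(X)>L\}] = 3\eps\,\E_{X\sim P}[w(X)]$. Using $P=(1-\eps)G+\eps B$ and $w\le 1$, together with the hypothesis $\E_{X\sim G}[w(X)]\ge 1-3\eps$, we have $\E_{X\sim P}[w(X)]\ge (1-\eps)(1-3\eps)$, while on the other hand $\E_{X\sim P}[w(X)\1\{g(X)>L\}]\le (1-\eps)\E_{X\sim G}[w(X)\1\{g(X)>L\}]+\eps$. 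Rearranging,
\begin{align*}
\E_{X\sim G}[w(X)\1\{g(X)>L\}]\ \ge\ \frac{3\eps(1-\eps)(1-3\eps)-\eps}{1-\eps}\ =\ c_1\eps, \qquad c_1:=\frac{3(1-\eps)(1-3\eps)-1}{1-\eps}.
\end{align*}
The quantity $c_1$ is decreasing in $\eps$ on $(0,1/20]$ and equals $2$ at $\eps=0$, so $c_1\ge 1.49$ for all $\eps\le 1/20$ (which holds since $20\eps\le\gamma<1$). Plugging this into the Markov step yields $L\le \frac{2.35\gamma}{c_1\eps}\langle\bU^\top\bU,\s\rangle\le 1.65\,(\gamma/\eps)\,\langle\bU^\top\bU,\s\rangle$, as claimed.

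I expect the only real subtleties to be bookkeeping ones rather than conceptual: first, pinning down the constant $c_1$ precisely enough that $2.35/c_1\le 1.65$ (there is a comfortable margin, since $1.49>2.35/1.65\approx 1.42$, but it relies on $\eps$ being genuinely bounded away from $1/20$-scale only mildly); and second, the atom/tie-breaking issue in the definition of the $3\eps$-quantile, which matters because the lower bound on inlier mass above $L$ uses the "$\pr_{X\sim P_w}[g(X)>L]$ is not much below $3\eps$" direction, whereas the upper bound in \Cref{it:goodscores} uses the "$\pr_{X\sim P_w}[g(X)>L]\le 3\eps$" direction. In a setting where $g(X)$ has no atoms under $P_w$ (e.g.\ $X$ has a density) this is a non-issue; in general one argues by the tie-breaking convention noted above, or by a limiting argument on the threshold.
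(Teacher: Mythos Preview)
Your proposal is correct and is essentially the same argument as the paper's: both lower-bound the $G$-weighted mass above $L$ via the quantile definition and the mixture decomposition, upper-bound the corresponding integral by \Cref{cor:goodscores}, and combine them (your Markov step $L\cdot\E_G[w\1\{g>L\}]\le\E_G[wg\1\{g>L\}]$ is exactly the paper's ``minimum $\le$ average'' observation for $G_{w'}$). Your treatment of the tie-breaking/atom issue is actually more careful than the paper's, which leaves it implicit.
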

\begin{proof}
  Let $w'(x) := w(x) \Ind(g(x) > L )$.
  Note that
  \begin{align*}
    \E_{X \sim G}[w'(X)]&=\frac{\E_{X \sim P}[w(X)\Ind(g(X) > L )] - \eps  \E_{X \sim B}[w(X)\Ind(g(X) > L )]}{1-\eps} \\
    &\geq \frac{\E_{X \sim P}[w(X)\Ind(g(X) > L )] -\eps  }{1-\eps} \\
    &= \frac{\E_{X \sim P}[w(X) ]\pr_{X \sim P_w}[ g(X) > L ] -\eps  }{1-\eps}\\
    &\geq \frac{(1-\eps)(1-3\eps)3\eps -\eps  }{1-\eps}
    \geq \frac{1.43\eps}{1-\eps} \;,
\end{align*}
  where the second step uses that $w(x)\leq 1$, the fourth step uses that $\E_{X \sim
      P}[w(X) ]\geq (1-\eps) \E_{X \sim G}[w(X) ]\geq (1-\eps)(1-3\eps)$ and that $L$
  is the $3\eps$-quantile of $g$ under $P_w$, and the last step used that
  $\eps<1/20$.
  Using this lower bound on $\E_{X \sim G}[w'(X)]$ with \Cref{cor:goodscores}, we have that
  \begin{align*}
    \E_{X \sim G_{w'}}[g(X)] = \frac{\E_{X \sim G}[w'(X) g(X)] }{\E_{X \sim G}[w'(X)  ]}  
    \leq \frac{1-\eps}{1.43 \eps} \E_{X \sim G}[w(X) g(X) \Ind(g(X) > L )]
    \leq  \frac{1.65\gamma}{\eps} \left\langle \bU^\top \bU, \s  \right\rangle\;.
\end{align*}
  Since the minimum is less than the average, the result above implies that there exists a point in the support of $G_{w'}$ which is
  smaller than $1.65(\gamma/\eps) \langle \bU^\top \bU, \s \rangle $.
  Since $G_{w'}$ is supported only on points bigger than $L$, it means that $L \leq
    1.65(\gamma/\eps) \langle \bU^\top \bU, \s \rangle $.
\end{proof}

We now provide the proof of \Cref{it:proj-var} of \Cref{lem:stability-implications} below.

\begin{lemma}[Variance estimator]
  \label{lem:proj-var}
  Let $0<20\eps\leq \gamma < 1$.
  Let a mixture distribution  $P=(1-\eps)G + \eps B$ on $\R^d$, where $G$ is $(20\eps,\gamma)$-stable distribution with respect to a matrix $\s$.
  Let  $\bU \in \R^{m \times d}$ and a function $w:\R^d \to [0,1]$ with $\E_{X \sim
      G}[w(X)] \geq 1-3\eps$.
  Define $g(x) := \|\vec U x\|_2^2$ and let $L$ be the top $3\eps$-quantile of
  $g(x)$ under $P_w$.
  For the estimator
  $\widehat{\sigma} := \E_{X \sim P}[w(X) g(X) \Ind(g(X) \leq L )]$, we have that
  \begin{align*}
   \left|\widehat{\sigma}-  \left\langle \bU^\top \bU, \s  \right\rangle  \right| \leq 4\gamma \left\langle \bU^\top \bU, \s  \right\rangle \;. 
\end{align*}
\end{lemma}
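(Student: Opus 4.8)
The plan is to decompose the estimator $\widehat\sigma = \E_{X\sim P}[w(X)g(X)\Ind(g(X)\le L)]$ into the contribution from the inliers $G$ and from the outliers $B$, using $P = (1-\eps)G + \eps B$, and to bound each piece separately. Writing $\widehat\sigma = (1-\eps)\E_{X\sim G}[w(X)g(X)\Ind(g(X)\le L)] + \eps\,\E_{X\sim B}[w(X)g(X)\Ind(g(X)\le L)]$, the outlier term is automatically small because it is nonnegative and bounded above by $\eps L$; by Item~(iii) (i.e. \Cref{lem:bound-quantile}), $L \le 1.65(\gamma/\eps)\langle\vec U^\top\vec U,\s\rangle$, so this term is at most $1.65\gamma\langle\vec U^\top\vec U,\s\rangle$. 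Thus the outlier contribution is a one-sided error of size $O(\gamma)\langle\vec U^\top\vec U,\s\rangle$, and we only need a two-sided estimate on the inlier term.

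For the inlier term, first I would get the easy upper bound: $(1-\eps)\E_{X\sim G}[w(X)g(X)\Ind(g(X)\le L)] \le \E_{X\sim G}[w(X)g(X)] \le (1+\gamma)\langle\vec U^\top\vec U,\s\rangle$ by \Cref{lem:stability_multidim} (using $w(x)\le 1$ and dropping the indicator, which only decreases the expectation). Combining with the outlier bound gives $\widehat\sigma \le (1+\gamma)\langle\vec U^\top\vec U,\s\rangle + 1.65\gamma\langle\vec U^\top\vec U,\s\rangle \le (1+2.65\gamma)\langle\vec U^\top\vec U,\s\rangle$, which is within the claimed $4\gamma$ slack. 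For the lower bound, the point is that the indicator $\Ind(g(x)\le L)$ cannot remove too much mass from $G$: define $w'(x) := w(x)\Ind(g(x) > L)$, the ``removed'' part. As computed in the proof of \Cref{cor:goodscores} (cf.\ \Cref{eq:weight-bound}), $\E_{X\sim G}[w(X)\Ind(g(X)>L)] \le 4\eps$, so $w''(x) := w(x)\Ind(g(x)\le L)$ satisfies $\E_{X\sim G}[w''(X)] \ge \E_{X\sim G}[w(X)] - 4\eps \ge 1 - 7\eps$. Hence \Cref{lem:stability_multidim} applies to $w''$ (it is stated under the weaker hypothesis $\E_{X\sim G}[w''(X)]\ge 1-7\eps$), giving $\E_{X\sim G}[w(X)g(X)\Ind(g(X)\le L)] \ge (1-1.35\gamma)\langle\vec U^\top\vec U,\s\rangle$. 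Then $\widehat\sigma \ge (1-\eps)(1-1.35\gamma)\langle\vec U^\top\vec U,\s\rangle \ge (1-\eps-1.35\gamma)\langle\vec U^\top\vec U,\s\rangle \ge (1-1.4\gamma)\langle\vec U^\top\vec U,\s\rangle$ using $\eps\le\gamma/20$. Putting the two sides together yields $|\widehat\sigma - \langle\vec U^\top\vec U,\s\rangle| \le 4\gamma\langle\vec U^\top\vec U,\s\rangle$.

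The only mildly delicate point — and the ``main obstacle,'' though it is quite mild — is making sure the weight-mass bookkeeping is correct: one must verify that $g(x)\Ind(g(x)\le L)$ can legitimately be viewed as $g(x)$ reweighted by the admissible weight function $w''(x) = w(x)\Ind(g(x)\le L)\in[0,1]$ with enough total $G$-mass, which requires the quantile computation $\E_{X\sim G}[w\Ind(g>L)]\le 4\eps$ (already recorded above) together with the hypothesis $\E_{X\sim G}[w]\ge 1-3\eps$, and that the looser threshold $1-7\eps$ in \Cref{lem:stability_multidim} is exactly what is needed here. Everything else is a routine chain of inequalities, substituting $\eps\le\gamma/20$ and absorbing constants into the $4\gamma$ bound.
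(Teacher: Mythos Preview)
Your proof is correct and follows essentially the same approach as the paper: decompose $\widehat\sigma$ into inlier and outlier contributions, control the outlier piece via the quantile bound $L\le 1.65(\gamma/\eps)\langle\vec U^\top\vec U,\s\rangle$, and control the inlier piece by applying \Cref{lem:stability_multidim} to $w''(x)=w(x)\Ind(g(x)\le L)$ after checking $\E_{X\sim G}[w''(X)]\ge 1-7\eps$. The only cosmetic difference is that the paper packages the combination as $|\widehat\sigma - c| \le (1-\eps)|A-c| + \eps|B-c|$ (with $c=\langle\vec U^\top\vec U,\s\rangle$) rather than your separate upper/lower bounds, but the ingredients and arithmetic are identical.
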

\begin{proof}
  We consider the contribution to $\widehat{\sigma}$ from the two components of the distribution $P$
  separately.
  For the component $G$, 
  we first note that $\E_{X \sim G}[w(X)\Ind(g(X) \leq L
      )]\geq \E_{X \sim G}[w(X)] - \E_{X \sim G}[w(X)\Ind(g(X) > L )] \geq 1- 3\eps -
    4\eps =1-7\eps$ (the last step can be shown as in \Cref{eq:weight-bound}).
  Thus, by \Cref{lem:stability_multidim}, we have that
  \begin{align}\label{eq:bound1}
    \left|  \E_{X \sim G}[w(X)g(X)\Ind(g(X) \leq L )] - \left\langle \bU^\top \bU, \s  \right\rangle  \right| \leq 1.35\gamma \left\langle \bU^\top \bU, \s  \right\rangle \;.
\end{align}
  We now consider the contribution to $\widehat{\sigma}$ from $B$.
  We first note that $L
    \leq 1.65(\gamma/\eps) \langle \bU^\top \bU, \s \rangle $ by
  \Cref{lem:bound-quantile}.
  Therefore,
  \begin{align}\label{eq:bound2}
    \left|   \E_{X \sim B}[w(X)g(X)\Ind(g(X) \leq L )] - \left\langle \bU^\top \bU, \s  \right\rangle   \right| 
    \leq   L +  \left\langle \bU^\top \bU, \s  \right\rangle 
    \leq \left( \frac{1.65\gamma}{\eps} + 1\right) \left\langle \bU^\top \bU, \s  \right\rangle \;.
\end{align}
  Finally, combining \Cref{eq:bound1,eq:bound2} and using that $\eps \leq \gamma$
  yields:
  \begin{align*}
     \left|  \E_{X \sim P}[w(X)g(X)\Ind(g(X) \leq L )] -\left\langle \bU^\top \bU, \s  \right\rangle   \right| 
     &\leq (1-\eps) \left|   \E_{X \sim G}[w(X)g(X)\Ind(g(X) \leq L )] - \left\langle \bU^\top \bU, \s  \right\rangle \right| \\
     &\quad + \eps \left|   \E_{X \sim B}[w(X)g(X)\Ind(g(X) \leq L )] - \left\langle \bU^\top \bU, \s  \right\rangle  \right| \\
     &\leq  4\gamma \left\langle \bU^\top \bU, \s  \right\rangle    \;.
\end{align*}
\end{proof}
We conclude with the last part of \Cref{lem:stability-implications}.
\begin{claim}
    \label{cl:initialization}
    Let $0<20\eps\leq \gamma < 1$.
  Let   $G$ be a $(20\eps,\gamma)$-stable distribution with respect to a matrix $\s$. Then, 
    $\pr_{X \sim G}\left[ \|X\|_2^2 > 2 (d/\eps)\|\s\|_\op \right] \leq \eps$.
  \end{claim}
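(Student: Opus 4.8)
\textbf{Proof proposal for \Cref{cl:initialization}.}
The plan is to exploit the fact that stability gives us tight control on the trace of the (reweighted) second moment matrix of $G$, and then combine this with Markov's inequality. First I would recall that $\E_{X \sim G}[\|X\|_2^2] = \langle \vec I, \s_{G}\rangle = \tr(\s_G)$; but since $G$ is only assumed to be $(20\eps,\gamma)$-stable with respect to $\s$ (not equal to having second moment exactly $\s$), I should instead bound $\tr(\s_G)$ via the stability condition applied with the trivial weight function $w \equiv 1$, which certainly satisfies $\E_{X\sim G}[w(X)] \geq 1 - 20\eps$. This yields $\s_G \preceq (1+\gamma)\s$, hence $\tr(\s_G) \leq (1+\gamma)\tr(\s)$.

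The next step is to relate $\tr(\s)$ to $\|\s\|_\op$. In general $\tr(\s) \leq d\|\s\|_\op$, so $\E_{X\sim G}[\|X\|_2^2] = \tr(\s_G) \leq (1+\gamma) d \|\s\|_\op$. Then by Markov's inequality,
\begin{align*}
  \pr_{X\sim G}\left[\|X\|_2^2 > 2(d/\eps)\|\s\|_\op\right] \leq \frac{\E_{X\sim G}[\|X\|_2^2]}{2(d/\eps)\|\s\|_\op} \leq \frac{(1+\gamma) d\|\s\|_\op}{2(d/\eps)\|\s\|_\op} = \frac{(1+\gamma)\eps}{2} \leq \eps,
\end{align*}
where the last inequality uses $\gamma < 1$. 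This closes the argument.

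I don't anticipate a genuine obstacle here — the only subtlety worth being careful about is that the definition of stability is phrased in terms of \emph{reweighted} second moments $\os_{G_w}$ rather than $\s_G$ directly, so one must remember to instantiate \Cref{def:stability} with $w \equiv 1$ (permissible since $\E[w] = 1 \geq 1-20\eps$) to get the two-sided sandwich $(1-\gamma)\s \preceq \s_G \preceq (1+\gamma)\s$; only the upper bound is needed, and the rest is a one-line Markov computation. An alternative, if one wanted to avoid even invoking stability, would be to note that the claim is only used for the naive-pruning initialization and a slightly weaker constant would suffice, but using stability as above gives the stated bound cleanly.
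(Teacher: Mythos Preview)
Your proposal is correct and matches the paper's own proof essentially line for line: the paper also bounds $\E_{X\sim G}[\|X\|_2^2]\leq (1+\gamma)\tr(\s)$ via stability (it cites \Cref{lem:stability_multidim} with $\vec U=\bI$, which is exactly your instantiation with $w\equiv 1$) and then applies Markov's inequality together with $\tr(\s)\leq d\|\s\|_\op$.
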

  \begin{proof}
    Using Markov's inequality and \Cref{lem:stability_multidim} with $\vec U = \bI$
    \begin{align*}
    \pr_{X \sim G}\left[ \|X\|_2^2 > 2 (d/\eps)\|\s\|_\op  \right] \leq \frac{\E_{X \sim G}[\|X\|_2^2]}{2 (d/\eps)\|\s\|_\op} \leq \frac{(1+\gamma)\tr(\s)}{2 (d/\eps)\|\s\|_\op} \leq \eps \;.
\end{align*}
  \end{proof}

We finally restate and prove \Cref{lem:basic-cert} from \Cref{sec:stability-main-body}. 
\begin{lemma}
  \label{lem:certificate}
  Let $0<20\eps\leq \gamma < \gamma_0$ for a sufficiently small absolute constant $\gamma_0$.
  Let $P=(1-\eps)G+ \eps B$, where $G$ is a $(20\eps,\gamma)$-stable distribution
  with respect to $\s$.
  Let $w:\R^d \to [0,1]$ with
  $\E_{X \sim G}[w(X)] \geq 1-3\eps$, and recall that in our notation $\os_{P_w}:= \E_{X \sim P_w}[XX^\top] = \E_{X \sim P}[w(X)XX^\top]/\E_{X \sim P}[w(X)]$.
  If $u$ is a vector such that $u^\top \os_{P_w} u /\|u\|_2^2 \geq (1-\gamma)\|\os_{P_w}\|_\op$ and $u^\top \s u \geq (1-O(\gamma)) u^\top \os_{P_w} u$, then  $u^\top \s u/\|u\|_2^2
    \geq (1-O(\gamma)) \| \s \|_\op$.%
\end{lemma}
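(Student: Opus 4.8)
The plan is to reduce the statement to a short chain of inequalities whose only non-trivial input is the one-sided operator-norm comparison $\|\os_{P_w}\|_\op \ge (1-1.2\gamma)\|\s\|_\op$. Informally, this is the familiar fact that adversarial corruption (together with the deletion of an $\eps$-fraction of inliers) cannot \emph{decrease} the variance of the data along any direction by more than a $(1-O(\gamma))$ factor; in particular the top eigenvalue of the corrupted second-moment matrix stays at least $(1-O(\gamma))\|\s\|_\op$. Crucially, we will never need an \emph{upper} bound on $\|\os_{P_w}\|_\op$.

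First I would establish the PSD lower bound $\os_{P_w} \succeq (1-1.2\gamma)\s$. Using $P=(1-\eps)G+\eps B$ we write $\E_{X\sim P}[w(X)XX^\top] = (1-\eps)\E_{X\sim G}[w(X)XX^\top] + \eps\,\E_{X\sim B}[w(X)XX^\top]$ and discard the second summand, which is PSD by \Cref{fact:PSDness}; together with $\E_{X\sim P}[w(X)]\le 1$ (since $w\le 1$) this gives $\os_{P_w} \succeq (1-\eps)\,\E_{X\sim G}[w(X)XX^\top]$. Now $\E_{X\sim G}[w(X)XX^\top] = \E_{X\sim G}[w(X)]\cdot \os_{G_w} \succeq (1-3\eps)(1-\gamma)\s$, where the first factor uses the hypothesis $\E_{X\sim G}[w(X)]\ge 1-3\eps$ and the second is the stability of $G$ (applicable since $1-3\eps\ge 1-20\eps$). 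Combining these and using $\eps\le\gamma/20$ and $\gamma<\gamma_0$ yields $\os_{P_w}\succeq (1-\eps)(1-3\eps)(1-\gamma)\s \succeq (1-1.2\gamma)\s$; this is exactly \Cref{cl:annoying-details}(iii), so I would cite it rather than reprove it. Since $\s\succeq 0$, taking operator norms of both sides of a PSD inequality preserves it, so $\|\os_{P_w}\|_\op \ge (1-1.2\gamma)\|\s\|_\op$.

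Then I would simply chain the two hypotheses with this bound:
\[
\frac{u^\top\s u}{\|u\|_2^2} \;\ge\; (1-O(\gamma))\,\frac{u^\top\os_{P_w}u}{\|u\|_2^2} \;\ge\; (1-O(\gamma))(1-\gamma)\,\|\os_{P_w}\|_\op \;\ge\; (1-O(\gamma))(1-\gamma)(1-1.2\gamma)\,\|\s\|_\op,
\]
where the first inequality is the hypothesis $u^\top\s u \ge (1-O(\gamma))u^\top\os_{P_w}u$, the second is the hypothesis $u^\top\os_{P_w}u/\|u\|_2^2 \ge (1-\gamma)\|\os_{P_w}\|_\op$, and the third is the operator-norm comparison just derived. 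Because $\gamma<\gamma_0$ for a sufficiently small constant $\gamma_0$, the product $(1-O(\gamma))(1-\gamma)(1-1.2\gamma)$ is again of the form $1-O(\gamma)$, which is precisely the claimed conclusion.

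There is no real obstacle here: once the stability-based inequality $\os_{P_w}\succeq (1-1.2\gamma)\s$ is in hand, the lemma is bookkeeping. The only point needing a moment's care is the \emph{direction} of the operator-norm comparison — we want $\|\os_{P_w}\|_\op$ bounded \emph{below}, which is exactly what holds because corruption cannot shrink variances; we conveniently never need to control $\|\os_{P_w}\|_\op$ from above, since the hypothesis already certifies that $u$ is near-optimal for $\os_{P_w}$.
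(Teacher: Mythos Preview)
Your proposal is correct and essentially identical to the paper's proof: both establish the one-sided comparison $\|\os_{P_w}\|_\op \ge (1-O(\gamma))\|\s\|_\op$ via the stability-based PSD lower bound $\os_{P_w}\succeq (1-O(\gamma))\s$ (the paper derives this by evaluating at the top eigenvector of $\s$, you by citing \Cref{cl:annoying-details}(iii) and then passing to operator norms, which is the same thing), and then chain the two hypotheses with this bound exactly as you do.
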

\begin{proof}
Since $\E_{X \sim G}[w(X)] \geq 1-3\eps$, we can use
  the stability condition as follows:
  Let $a$ be the (normalized) top eigenvector of $\os_{P_w}$ and $b$ be the (normalized) top eigenvector of
  $\vec \s$.
Using the definition of the operator norm and stability, 
we first obtain the following lower bound on $\|\s_{P_w}\|$ in terms of $\|\s\|_\op$:
  \begin{align}
    \left\| \os_{P_w} \right\|_\op &= a^\top \os_{P_w} a \geq b^\top \os_{P_w}  b 
    =  b^\top \left(  \frac{\E_{X \sim P}[w(X)XX^\top]}{\E_{X \sim P}[w(X)]} \right)  b \\
    &\geq (1-\eps) b^\top \left(  \frac{\E_{X \sim G}[w(X)XX^\top]}{\E_{X \sim P}[w(X)]} \right)  b 
    = (1-\eps)b^\top \left(  \frac{\E_{X \sim G}[w(X)]}{\E_{X \sim P}[w(X)]} \os_{G_w} \right)  b \\
    &\geq (1-\eps)(1-3\eps) b^\top \os_{G_w} b \geq (1 - 4 \eps)
    (1-\gamma) b^\top \vec \Sigma b \\
    &\geq \left(1 - 2\gamma  \right) b^\top \vec \Sigma b = (1-2\gamma)\|  \s \|_\op \;, \label{eq:helping-ineq}
\end{align}
  where the penultimate step  used that $\eps<\gamma/20$.
  Combining this with the assumption that $u^\top \s u$ is large compared to $u^\top \s_{P_w} u$, 
  we obtain the following:
  \begin{align*}
    \frac{u^\top \s  u}{\|u\|_2^2} &\geq
    (1- O(\gamma )) \frac{u^\top \os_{P_w} u}{\|u\|_2^2} 
    \geq (1- O(\gamma ))(1-\gamma) \|\os_{P_w}\|_\op \notag \\
    &\geq  (1- O(\gamma ))(1-\gamma)(1-2\gamma) \| \s \|_\op
    = (1-O(\gamma))  \| \s \|_\op  \;.
\end{align*}
  
\end{proof}

\subsection{Filtering}
\label{sec:filtering}

In this subsection, we prove the filter guarantees that were described in \Cref{sec:filtering-main-body}. 
We use a slightly more general version of the filter so that it can be employed later on when we will develop a streaming algorithm. 
The difference is the introduction of an additional error parameter $\delta$ in the filter.  This will be useful in \Cref{sec:streaming}, where the quantity $\E_{X \sim P}[w(x) \tau(x)]$  as well as $\widehat{T}$ will have to be estimated by averaging samples, thus $\delta$ will then account for a small additive error in that approximation. 
For the non-streaming algorithm, we will use $\delta=0$ 

\begin{algorithm}
  \caption{\textsc{HardThresholdingFilter}}
  \label{alg:filter-appendix}
  \begin{algorithmic}[1]
    \State \textbf{Input}:
    Distribution $P=(1-\eps)G+\eps B$, weights $w$, scores $\tau$, parameters $\widehat{T},R,\delta$.
    \State $w_{0}(x) \gets w(x)$, and $r_0 \gets R$, $\ell \gets 1$.
    \State \textbf{while }{$\E_{X \sim P}[w(X)\tau(X)] > \frac{5}{2}(\widehat{T}+\delta)$}\label{line:stopping-condd}
    
    \State $\,\,\,\,\,$ Draw $r_\ell \sim \cU([0,r_{\ell-1}])$.
    \label{line:random-thr}
    \State $\,\,\,\,\,$ $w_{\ell+1}(x) \gets w_{\ell}(x) \cdot \Ind( \tau(x) > r_\ell)$.
    \label{line:new-filter}
    \State $\,\,\,\,\,$ $\ell \gets \ell + 1$.
    \State \textbf{return} $w_{\ell}(x)$.
  \end{algorithmic}
\end{algorithm}

\begin{lemma}[Guarantees of Filtering]
  \label{lem:filter_guar}
  Consider one call of
  \textsc{HardThresholdingFilter}($P,w,\tau,\widehat{T},R,\delta$).
  Suppose there exists $T$ such that $(1-\eps)\E_{X \sim
      G}\left[w(X)\tau(X)\right] < T$ and $\widehat{T}$ such that $|\widehat{T}-T| <
    T/5 + \delta$.
  Denote by $F$ the randomness of the filter (i.e., the collection of the random
  thresholds $r_1,r_2\ldots$ used).
  Let $w'$ be the weight function returned.
  Then,
  \begin{enumerate}
    \item $\E_{X \sim P}[w'(X)\tau(X)] \leq 3 T + 5d$ with probability 1 over the randomness of the filter $\cF$.
    \item $\E_{F}[  \eps \E_{X \sim B}[w(X) - w'(X)] ]>  (1-\eps) \E_{X \sim G}[w(X) - w'(X) ]  ]$.
  \end{enumerate}
\end{lemma}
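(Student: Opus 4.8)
I would treat the two parts separately, since they are of very different character: part~(1) is a deterministic consequence of the loop's exit test, whereas part~(2) calls for a per-step ``more mass removed from $B$ than from $G$'' estimate, summed over the random number of iterations via a tower-property argument. Throughout I will write $w = w_0, w_1, w_2, \dots$ for the successive weight functions --- at step $\ell$ a threshold $r_\ell \sim \cU([0, r_{\ell-1}])$ is drawn (with $r_0 = R$) and every surviving point with $\tau(x) > r_\ell$ is deleted --- and set $L$ for the (random) number of steps, $w' = w_L$, and $\cF_\ell := \sigma(r_1, \dots, r_\ell)$. The two structural facts I plan to use repeatedly are: (i) $w_0 \ge w_1 \ge \cdots$ pointwise, so $\E_{X\sim G}[w_\ell(X)\tau(X)] \le \E_{X\sim G}[w(X)\tau(X)]$ for every $\ell$; and (ii) on the support of $w_\ell$ one has $\tau(x) \le r_\ell$ (for $\ell = 0$ this is exactly what the parameter $R$ is chosen to guarantee, and for $\ell \ge 1$ it is immediate from how $w_\ell$ is formed).

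For part~(1) I would first note that the loop terminates almost surely: by (ii), $\E_{X\sim P}[w_\ell(X)\tau(X)] \le r_\ell \, \E_{X\sim P}[w_\ell(X)] \le r_\ell$, while $r_\ell / r_{\ell-1}$ are i.i.d.\ $\cU([0,1])$, so $r_\ell \to 0$ almost surely and (since $\tfrac52(\widehat T + \delta) > 0$ under the hypotheses) the while-test eventually fails. When it fails, the returned $w'$ satisfies $\E_{X\sim P}[w'(X)\tau(X)] \le \tfrac52(\widehat T + \delta)$, and substituting $\widehat T < T + T/5 + \delta$ gives $\E_{X\sim P}[w'(X)\tau(X)] \le \tfrac52\big(\tfrac65 T + 2\delta\big) = 3T + 5\delta$ with probability one. (I read the ``$5d$'' in the statement as $5\delta$; for the non-streaming call $\delta = 0$ and this is the promised $3T$.)

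For part~(2), fix $\ell \ge 1$ and condition on $\cF_{\ell-1}$ restricted to the event $\{L \ge \ell\}$ that step $\ell$ is actually executed --- which is $\cF_{\ell-1}$-measurable, since it depends only on $w_{\ell-1}$. On that event the while-test held at the start of step $\ell$, i.e.\ $\E_{X\sim P}[w_{\ell-1}\tau] > \tfrac52(\widehat T + \delta)$; from $|\widehat T - T| < T/5 + \delta$ one has $\widehat T + \delta > \tfrac45 T$, so this is $> 2T$, and writing $\E_{X\sim P}[w_{\ell-1}\tau] = (1-\eps)\E_{X\sim G}[w_{\ell-1}\tau] + \eps\E_{X\sim B}[w_{\ell-1}\tau]$ and bounding the first summand by $(1-\eps)\E_{X\sim G}[w\tau] < T$ (hypothesis plus fact~(i)) yields
\[
\eps\,\E_{X\sim B}[w_{\ell-1}(X)\tau(X)] \;>\; T \;>\; (1-\eps)\,\E_{X\sim G}[w_{\ell-1}(X)\tau(X)].
\]
The next step is the key computation: since $r_\ell \sim \cU([0, r_{\ell-1}])$ and $\tau(x) \le r_{\ell-1}$ on the support of $w_{\ell-1}$ (fact~(ii)), for any distribution $D$ the $\cF_{\ell-1}$-conditional expectation of the $D$-mass deleted at step $\ell$ is $\E_{r_\ell}\big[\E_{X\sim D}[w_{\ell-1}(X)\Ind(\tau(X) > r_\ell)]\big] = \tfrac1{r_{\ell-1}}\,\E_{X\sim D}[w_{\ell-1}(X)\tau(X)]$. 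Taking $D = B$ and $D = G$, the displayed inequality shows the expected $B$-mass deleted at step $\ell$ strictly exceeds the expected $G$-mass deleted, on $\{L \ge \ell\}$. Finally, using $\eps\,\E_{X\sim B}[w - w'] = \sum_{\ell \ge 1}\Ind(L \ge \ell)\,\eps\,\E_{X\sim B}[w_{\ell-1} - w_\ell]$ (and the analogue for $G$), I would take $\E_F$, interchange sum and expectation (each component loses total mass at most $1$), and apply the tower property with $\Ind(L \ge \ell) \in \cF_{\ell-1}$, reducing the claim to the per-step inequality term by term; the $\ell = 1$ term is strict whenever the loop runs at least once, and on $\{L = 0\}$ both sides of the conclusion vanish, so altogether $\E_F\big[\eps\,\E_{X\sim B}[w - w']\big] > \E_F\big[(1-\eps)\,\E_{X\sim G}[w - w']\big]$.

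The part I expect to be most delicate is the handling of the random stopping time $L$: checking $\{L \ge \ell\} \in \cF_{\ell-1}$, justifying the interchange of the infinite sum of deleted masses with $\E_F$, and --- most importantly --- verifying that fact~(ii) really does hold at every executed step, which is where the choice of the parameter $R$ and the nesting $w_0 \ge w_1 \ge \cdots$ genuinely enter, so that the clean identity ``expected deleted $D$-mass $= \tfrac1{r_{\ell-1}} \cdot \E_{X\sim D}[w_{\ell-1}\tau]$'' is valid for every step. Once the structural claim (expected $B$-deletion $>$ expected $G$-deletion at each performed step) is in hand, everything else is routine arithmetic with the constants $\tfrac52$, $T/5$, and $\delta$.
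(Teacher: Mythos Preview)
Your proposal is correct and follows essentially the same approach as the paper: part~(1) is read off from the termination condition plus the arithmetic $\widehat T + \delta \le \tfrac65 T + 2\delta$ (and yes, the ``$5d$'' is a typo for $5\delta$), and part~(2) is the per-step comparison showing that, conditioned on step~$\ell$ being executed, the expected mass removed from $B$ exceeds that removed from $G$ because $\E_{X\sim P}[w_{\ell-1}\tau] > 2T$ while $(1-\eps)\E_{X\sim G}[w_{\ell-1}\tau] < T$, combined with the identity $\E_{r_\ell}[\text{mass removed from }D] = r_{\ell-1}^{-1}\E_{X\sim D}[w_{\ell-1}\tau]$. The paper's proof leaves the stopping-time bookkeeping (measurability of $\{L\ge\ell\}$, summing the telescoping series, termination) entirely implicit, so your treatment of those points is in fact more complete than the original.
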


\begin{proof}
  The first part of the lemma follows by the termination condition $\E_{X \sim
      P}[w'(X)\tau(X)] \leq (5/2)(\widehat{T} + \delta)$ and the assumption $|\widehat{T}-T| <
    T/5 + \delta$.

  We show that the second desideratum holds for every filtering round,
  conditioned on the past rounds.
  Consider the $\ell$-th round of filtering.
  The probability that a point gets removed (over the random selection of
  $r_\ell$) is $\tau(x)/r_{\ell-1}$.
 As a result, for any point $x$ that has not been filtered, the probability that a point that gets removed is equal to $\E[ (w_\ell(x) - w_{\ell +1 } (x) )] =\tau(x)/r_{\ell -1} $
  Also, note that since the weights are binary decreasing functions, $w_{\ell}(x) -
    w_{\ell+1}(x) = w_{\ell}(x)(w_{\ell}(x) - w_{\ell+1}(x))$.
  For the inliers, we have that
  \begin{align*}
    \E_{r_\ell}\left[ (1-\eps) \E_{X \sim G}\left[w_{\ell}(X) - w_{\ell+1}(X) \right]  \right] &=  
    \E_{r_\ell}\left[ (1-\eps) \E_{X \sim G}\left[w_{\ell}(X)(w_{\ell}(X) - w_{\ell+1}(X)) \right]  \right] \\
    &= (1-\eps) \E_{X \sim G}\left[ w_{\ell}(X)  \E_{r_\ell}\left[w_{\ell}(X) - w_{\ell+1}(X)\right]  \right]\\
    &= \frac{1-\eps}{r_{\ell-1}} \E_{X \sim G}\left[w_{\ell}(X)\tau(X)\right] 
    <\frac{T}{r_{\ell-1}} \;.
\end{align*}
  We now turn to the outliers.
  If the filtering hasn't stopped, it means that the stopping condition of the
  while loop is still violated and thus $\E_{X \sim P}\left[w_{\ell}(X)\tau(X)
      \right]> (5/2)(\widehat{T}+\delta)>2T$, (since we have assumed $|\widehat{T}-T| <
    T/5 + \delta$).
  Thus,
  \begin{align*}
    \E_{r_\ell}\left[ \eps \E_{X \sim B}\left[w_{\ell}(X) - w_{\ell+1}(X) \right]  \right] 
    &=\E_{r_\ell}\left[ \eps \E_{X \sim B}\left[w_{\ell}(X)(w_{\ell}(X) - w_{\ell+1}(X)) \right]  \right] \\
    &=   \E_{X \sim P}\left[w_{\ell}(X)\E_{r_\ell}\left[w_{\ell}(X) - w_{\ell+1}(X)\right] \right]  \\
    &\quad \quad- (1-\eps) \E_{X \sim G}\left[w_{\ell}(X)\E_{r_\ell}\left[w_{\ell}(X) - w_{\ell+1}(X)\right] \right]  \\
    &= \frac{1}{r_{\ell-1}}\E_{X \sim P}\left[w_{\ell}(X)\tau(X) \right] - \frac{1-\eps}{r_{\ell-1}}\E_{X \sim G}\left[w_{\ell}(X)\tau(X)\right] \\
    &> \frac{2T}{r_{\ell-1}} - \frac{T}{r_{\ell-1}}= \frac{T}{r_{\ell-1}} \;.
\end{align*}

\end{proof}

We now provide the guarantees of the filtering algorithm over the course of the entire execution of \textsc{RobustPCA}.
\begin{restatable}{lemma}{MARTINGALE}
  \label{lem:martingale}
  Let $0<20\eps\leq \gamma < 1/20$.
  Let $P=(1-\eps)G+ \eps B$, where $G$ is a $(20\eps,\gamma)$-stable distribution
  with respect to $\s$.
  Consider an execution of \textsc{RobustPCA}.
  With probability at least $0.2$ over the randomness of the algorithm, we have
  that $(1-\eps) \E_{X \sim G}[1-w_{k,t}(X)] + \eps \E_{X \sim B}[w_{k,t}(X)]
    \leq 2.9 \eps$.
\end{restatable}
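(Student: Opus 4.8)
The claim is that the invariant $\cE^{(1)}$ (expressed here as $(1-\eps)\E_{X\sim G}[1-w_{k,t}(X)] + \eps\E_{X\sim B}[w_{k,t}(X)] \le 2.9\eps$) holds throughout the entire execution of \textsc{RobustPCA} with probability at least $0.2$. The natural approach is a martingale (or supermartingale) argument. First I would set up, for each iteration indexed by the pair $(k,t)$ in lexicographic order, the two nonnegative quantities
\[
  A_{k,t} := (1-\eps)\E_{X\sim G}[1-w_{k,t}(X)], \qquad
  B_{k,t} := \eps\,\E_{X\sim B}[w_{k,t}(X)],
\]
so that $A_{k,t}$ is the ``mass removed from inliers so far'' and $B_{k,t}$ is the ``mass of outliers still surviving'' (note $A_{1,1}=0$ after the initial pruning removes only $\eps$-mass of inliers by \Cref{lem:stability-implications}.\ref{it:initialization}, and $B_{1,1}\le\eps$). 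The key structural input is part (ii) of \Cref{lem:filter_guar} (equivalently \Cref{lem:filter_guar-main-body}): conditioned on the history before a given filtering call, the \emph{expected} decrease in $B$ strictly exceeds the expected decrease in the inlier mass, i.e. $\E_F[\eps\E_{X\sim B}[w-w']] > (1-\eps)\E_{X\sim G}[w-w']$. This means the quantity $D_{k,t} := A_{k,t} - (\,2\eps - B_{k,t}\,)$, or more cleanly $D_{k,t} := A_{k,t} + B_{k,t}$ minus a suitable baseline, behaves like a supermartingale: each filtering step, in expectation, decreases outlier mass by at least as much as it decreases inlier mass, so $A_{k,t} - (B_{1,1} - B_{k,t})$ is a supermartingale (its increments have nonpositive conditional expectation). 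The important point is that \Cref{lem:filter_guar} only guarantees the drift inequality \emph{while the filter's while-loop is running}, but that is exactly when weights change, so between consecutive weight updates the process is constant and the supermartingale property is maintained across the whole run.

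Next I would quantify the step sizes so as to apply a maximal inequality. Since each removal of a point changes $w$ on a set of $P$-mass at most $1/n$ (we are dealing with the uniform distribution over $n$ points, or in the streaming case a bounded-precision analogue), the increments of the supermartingale $A_{k,t} - (B_{1,1}-B_{k,t})$ are bounded, and more importantly the \emph{positive} part of each increment (the inlier mass removed in one filtering substep) is controlled. I would then invoke a version of Doob's maximal inequality / Ville's inequality for nonnegative supermartingales, or alternatively a direct optional-stopping argument: define the stopping time $\sigma$ = first iteration at which $A_{k,t} > 1.9\eps$ (so that $A+B$ first exceeds $2.9\eps$, using $B_{k,t}\le B_{1,1}\le\eps$). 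On the event $\{\sigma<\infty\}$ the supermartingale $M_{k,t}:=A_{k,t} - (B_{1,1}-B_{k,t})$ satisfies $M_\sigma > 1.9\eps - \eps = 0.9\eps$ (roughly), while $\E[M_\sigma]\le M_{1,1}\le 0$; hence $\P[\sigma<\infty]$ is small — quantitatively bounded by the ratio of the initial slack to the threshold, giving failure probability well below $0.8$. The constants ($20\eps\le\gamma$, the $3\eps$ vs. $2.9\eps$ gap, the $0.2$ success probability) are deliberately loose so that a crude maximal-inequality bound suffices; I would not try to optimize them.

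The main obstacle — and the only genuinely delicate point — is handling the \emph{within-filter} randomness correctly. \textsc{HardThresholdingFilter} itself is a random process with a random number of substeps, and \Cref{lem:filter_guar}(ii) is a statement in expectation over \emph{all} of $F$ for one call, not substep-by-substep. So I would need to either (a) reprove the drift inequality at the level of each individual threshold draw $r_\ell$ conditioned on $w_\ell$ — which is in fact what the proof of \Cref{lem:filter_guar} already does internally (it shows the per-round inequality $\E_{r_\ell}[\eps\E_B[w_\ell-w_{\ell+1}]] > \E_{r_\ell}[(1-\eps)\E_G[w_\ell-w_{\ell+1}]]$ whenever the while-loop has not stopped) — and then build one long supermartingale indexed by \emph{all} substeps across \emph{all} $(k,t)$ iterations; or (b) treat each full filter call as one ``macro-step'' and use that $\E_F[\Delta A - \Delta(B_{1,1}-B)] \le 0$ per call, which is weaker for a maximal inequality because the per-call increment of $A$ is not individually tiny. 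Option (a) is cleaner and is the route I would take: concatenate the substep-level filtrations, note the per-substep drift is nonpositive for $M$, bound per-substep positive increments by $(1-\eps)/n$ (or the precision-limited analogue in streaming), and apply the maximal inequality to the resulting discrete-time supermartingale. One also must check the boundary cases where the filter does nothing (weights unchanged, $M$ unchanged) and where \solGenerator\ terminates the algorithm early (the invariant only needs to hold up to termination). With that bookkeeping done, the probability bound drops out of a one-line application of Ville's inequality.
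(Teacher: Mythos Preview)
Your approach --- build a supermartingale from inlier-mass-removed plus outlier-mass-remaining and apply a maximal inequality --- is exactly the paper's. Two places where the paper is cleaner and your sketch goes slightly wrong:

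\textbf{Choice of process and initial value.} The paper tracks $\Delta_{k,t}:=A_{k,t}+B_{k,t}$ itself (stopped the first time it reaches $2.9\eps$, so that the hypotheses of \Cref{lem:filter_guar} and \Cref{lem:stability-implications} remain in force up to stopping). This quantity is \emph{nonnegative}, so Doob's maximal inequality applies directly: $\Pr[\max_{k,t}\Delta_{k,t}>2.5\eps]\le \Delta_{1,1}/(2.5\eps)\le 2\eps/(2.5\eps)=0.8$. Your shifted process $M_{k,t}=A_{k,t}-(B_{1,1}-B_{k,t})=\Delta_{k,t}-B_{1,1}$ is not nonnegative, so Ville's inequality does not apply to it as stated; and your claim ``$\E[M_\sigma]\le M_{1,1}\le 0$'' is false, since $M_{1,1}=A_{1,1}$ is the inlier mass removed by the na\"ive pruning, which is at most $\eps$ by \Cref{lem:stability-implications}\ref{it:initialization} but certainly not zero. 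Both issues disappear if you work with the unshifted $\Delta=A+B$ and start from $\Delta_{1,1}\le 2\eps$.

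\textbf{Macro vs.\ micro steps.} Your worry that option (b) is ``weaker for a maximal inequality because the per-call increment of $A$ is not individually tiny'' is misplaced: Doob's inequality for nonnegative supermartingales needs no bounded-increment hypothesis (that is an Azuma requirement). The paper works entirely at the $(k,t)$ granularity, invoking \Cref{lem:filter_guar}(ii) once per filter call for the conditional drift, and never introduces per-substep or $1/n$ bounds. Your option (a) would also work --- the per-round drift inequality is indeed proved inside the proof of \Cref{lem:filter_guar} --- but it is unnecessary machinery.
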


\begin{proof}
  For notational simplicity, first define an ordering over the multi-indices $\{
    (k,t) : k \in \{1,\ldots,k_\mathrm{end}\}, t \in \{1,\ldots,t_\mathrm{end} \}
    \}$ in the natural way: $(k',t')\leq(k,t)$ is defined to mean that either $k'
    \leq k$ or $k'=k$ and $t' \leq t$.
  Also define a successor operator $\suc(\cdot)$ that returns the next
  multi-index according to the way the two loops are structured in
  \textsc{RobustPCA}, that is, $\suc(k,t) = (k,t+1)$ if $t<t_\mathrm{end}$ and
  $\suc(k,t) = (k+1,0)$ otherwise.

  Let $(k^*,t^*)$ be the first time that $(1-\eps) \E_{X \sim G}[1-w_{k,t}(X)] +
    \eps \E_{X \sim B}[w_{k,t}(X)] \geq 2.9 \eps$ (if there exists one, otherwise set
  $(k^*,t^*)=(k_\mathrm{end},t_\mathrm{end})$).
  Define the sequence
  \begin{align*}
\Delta_{(k,t)} = (1-\eps) \E_{X \sim G}\left[1-w_{\min\{(k,t),(k^*,t^*)   \}}(X)\right] + \eps \E_{X \sim B}\left[w_{\min\{(k,t),(k^*,t^*)   \}}(X) \right]    \;.
\end{align*}
  We note that $\Delta_{(k,t)}$ is a supermartingale with respect to the
  randomness of the filter: If $(k,t)<(k^*,t^*)$, then $\E[\Delta_{\suc(k,t)}
      \mid \cF_{(k,t)}] \leq \Delta_{(k,t)}$, where $\cF_{(k,t)}$ denotes the
  randomness of the filters applied so far.
  This follows by \Cref{lem:filter_guar}, as long as the lemma is applicable: We
  apply that lemma with $T= 2.35\gamma v_{k,t}^\top\s v_{k,t}$ and
  $\widehat{T}=2.35 \gamma \widehat{\sigma}_{k,t}$.
  Its first requirement that $(1-\eps)\E_{X \sim
      G}\left[w_{k,t}(x)\tau_{k,t}(x)\right] < T$ is satisfied by
  \Cref{it:goodscores} of \Cref{lem:stability-implications} (specifically, the special case that uses $\vec U =
  v_{k,t}^\top$).
  For its second requirement $|\widehat{T}-T|<T/5$ we have that $|\widehat{T}-T|=2.35
    \gamma |\widehat{\sigma}_{k,t}-v_{k,t}^\top \s v_{k,t}| \leq 2.35 \gamma \cdot
    4 \gamma\cdot v_{k,t}^\top \s v_{k,t} \leq (1/5)\cdot 2.35 \gamma
    v_{k,t}^\top\s v_{k,t} = T/5$, where the second step uses \Cref{it:proj-var} of \Cref{lem:stability-implications} 
  (with $\vec U = v_{k,t}^\top$) and the last inequality uses that $\gamma<1/20$.
  All these intermediate lemmata that we just refered to are applicable when $(1-\eps) \E_{X \sim
    G}[1-w_{k,t}(X)] + \eps \E_{X \sim B}[w_{k,t}(X)] \leq 2.9 \eps$.

  Otherwise, if $(k,t)\geq (k^*,t^*)$ we still have $\E[\Delta_{\suc(k,t)} \mid
      \cF_{(k,t)}] \leq \Delta_{(k,t)}$, since by definition the sequence
  $\Delta_{\suc(k,t)}$ remains unchanged.

  In the beginning, we have that $\Delta_{(1,1)}\leq 2\eps$.
  This is because we start with $\eps$-fraction of outliers and the na\"ive
  pruning of line \ref{line:naive-prune} of the algorithm cannot remove more than
  $\eps$ mass from the distribution $G$, as shown in \Cref{it:initialization} of \Cref{lem:stability-implications}.
  
  \noindent By Doob's inequality for martingales, we have that
  \begin{align}\label{eq:doob}
    \pr \left[  \max_{(1,1) \leq (k,t) \leq (k_\mathrm{end}, t_\mathrm{end})} \Delta_{(k,t)} > 2.5 \eps \right] \leq \frac{\Delta_{(1,1)}}{2.5 \eps} \leq 0.8 \;.
\end{align}
  This implies that with probability $0.2$, it holds $(1-\eps) \E_{X \sim
    G}[1-w_{k,t}(X)] + \eps \E_{X \sim B}[w_{k,t}(X)] \leq 2.5 \eps$ throughout the
  algorithm's iterations (which is already a bit stronger than the conclusion of
  the lemma that wanted to show).
  To see that, assume the contrary, and
  define $(\widetilde{k},\widetilde{t})$ to be the first time that $(1-\eps)
    \E_{X \sim G}[1-w_{k,t}(X)] + \eps \E_{X \sim B}[w_{k,t}(X)] > 2.5 \eps$.
  Combining that with the trivial observation that $(\widetilde{k},\widetilde{t})
    \leq (k^*,t^*)$ yields that $\Delta_{(\widetilde{k},\widetilde{t})} > 2.5
    \eps$, which by \Cref{eq:doob} cannot happen with probability more than 0.2.
\end{proof}

\section{Omitted Proofs from \Cref{sec:streaming-main}}
\label{sec:streaming}

We now provide details that were omitted from \Cref{sec:streaming-main}. In particular, 
we will show that the deterministic conditions in \Cref{cond:deterministic} hold with high probability.
Our focus in this section will be on \Cref{it:g-scores,it:fr-norm,it:scores-mean,it:quantile_est} of \Cref{cond:deterministic} since we have already shown  \Cref{it:strong-power-est,it:power-est} in \Cref{sec:estimators}. 
The proofs of  \Cref{it:g-scores,it:fr-norm,it:scores-mean} will closely follow \cite{DKPP22}, while \Cref{it:quantile_est} is a standard implication of Chernoff-Hoeffding bounds.

\subsection{\Cref{it:g-scores,it:fr-norm}}

\Cref{it:g-scores,it:fr-norm} of \Cref{cond:deterministic} rely on the fact that the empirical covariances used in the estimator $\prod_{\ell=1}^{p_k} \widehat{\vec B}_{k,t,\ell}$ are close enough to $\vec B_{k,t}$, as in \Cref{lem:op-norm-closeness} below. We first prove the lemma and then show how the two conditions follow.

\begin{lemma}\label{lem:op-norm-closeness}
  Let $\delta<1$. Consider \Cref{alg:matrix-power-est} and assume that $P_{k,t}$ is supported on an $\ell_2$-ball of radius $r\sqrt{d \|\s\|_\op}$.
  If $\widehat{W}_{k,t}$ and every $\widehat{\vec B}_{k,t,\ell}$ 
in \Cref{alg:matrix-power-est} are calculated using 
$\tilde{n} >  C\frac{r^2 d p^2}{\delta^2}\log\left(\frac{d}{\tau}  \right)$
samples, where $C$ is a sufficiently large constant,  then with probability at least $1-\failp$, we have that
\begin{align*}
 \left\|\widehat{\vec M}_{k,t}  - \bM_{k,t} \right\|_2 \leq  \delta  \left\| \bM_{k,t}\right\|_\op\;,
\end{align*}

\end{lemma}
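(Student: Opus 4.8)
The argument splits into a deterministic \emph{telescoping} step and a randomized \emph{per-factor concentration} step. Set $\eps_0 := c\,\delta/p$ for a small absolute constant $c$ (to be fixed below). The plan is to first show that, with probability at least $1-\tau$, every empirical factor produced by \Cref{alg:matrix-power-est} satisfies $\bigl\|\widehat{\vec B}_{k,t,\ell}-\vec B_{k,t}\bigr\|_\op \le \eps_0\,\|\vec B_{k,t}\|_\op$ for all $\ell\in[p]$ simultaneously, and then to deduce the conclusion by multiplying the $p$ factors together and tracking how the relative error grows.

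For the telescoping step, write $\widehat{\vec B}_{k,t,\ell}=\vec B_{k,t}+\vec E_\ell$ with $\|\vec E_\ell\|_\op\le\eps_0\|\vec B_{k,t}\|_\op$, so in particular $\|\widehat{\vec B}_{k,t,\ell}\|_\op\le(1+\eps_0)\|\vec B_{k,t}\|_\op$. From the identity
\[
\hatm_{k,t}-\vec M_{k,t}=\prod_{\ell=1}^{p}\widehat{\vec B}_{k,t,\ell}-\vec B_{k,t}^{\,p}=\sum_{\ell=1}^{p}\Bigl(\prod_{i<\ell}\widehat{\vec B}_{k,t,i}\Bigr)\vec E_\ell\,\vec B_{k,t}^{\,p-\ell},
\]
submultiplicativity of the operator norm, together with $\|\vec B_{k,t}^{\,p-\ell}\|_\op=\|\vec B_{k,t}\|_\op^{\,p-\ell}$ (as $\vec B_{k,t}\succeq0$), gives
\[
\bigl\|\hatm_{k,t}-\vec M_{k,t}\bigr\|_\op\le\sum_{\ell=1}^{p}(1+\eps_0)^{\ell-1}\eps_0\,\|\vec B_{k,t}\|_\op^{\,p}=\bigl((1+\eps_0)^{p}-1\bigr)\|\vec B_{k,t}\|_\op^{\,p}.
\]
Since $\|\vec M_{k,t}\|_\op=\|\vec B_{k,t}\|_\op^{\,p}$ and $(1+\eps_0)^{p}\le e^{p\eps_0}=e^{c\delta}$, choosing $c$ small enough makes $e^{c\delta}-1\le\delta$ for all $\delta\le1$, so the right-hand side is $\le\delta\|\vec M_{k,t}\|_\op$, which is exactly the claimed bound (the norm $\|\cdot\|_2$ in the statement being the spectral/operator norm here).

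For the per-factor concentration step, recall $\widehat{\vec B}_{k,t,\ell}=\widehat W_{k,t}^{2}\widehat{\vec\Sigma}_{k,t,\ell}$, where $\widehat W_{k,t}$ is the empirical average of $w_{k,t}(\cdot)\in[0,1]$ over $\tilde n$ i.i.d.\ samples from $P$, with mean $W_{k,t}:=\E_{X\sim P}[w_{k,t}(X)]\ge1-3\eps=\Theta(1)$; and $\widehat{\vec\Sigma}_{k,t,\ell}$ is the empirical second-moment matrix over $\tilde n$ i.i.d.\ samples from $P_{k,t}$, which is supported in the $\ell_2$-ball of radius $R_0:=r\sqrt{d\|\s\|_\op}$ by hypothesis and whose mean $\os_{k,t}$ satisfies $\|\os_{k,t}\|_\op=\|\vec B_{k,t}\|_\op/W_{k,t}\ge0.5\|\s\|_\op$ under the standing assumption $\|\vec B_{k,t}\|_\op\ge0.5\|\s\|_\op$. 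A Chernoff--Hoeffding bound gives $|\widehat W_{k,t}-W_{k,t}|\le\eps_0$, hence $|\widehat W_{k,t}^{2}-W_{k,t}^{2}|\lesssim\eps_0$, once $\tilde n\gtrsim\eps_0^{-2}\log(1/\tau)$; and \Cref{fact:ver_cov}, applied with radius $R_0$ to the second moment $\os_{k,t}$, gives $\|\widehat{\vec\Sigma}_{k,t,\ell}-\os_{k,t}\|_\op\le\eps_0\|\os_{k,t}\|_\op$ once $\tilde n\gtrsim\eps_0^{-2}\|\os_{k,t}\|_\op^{-1}R_0^{2}\log(d/\tau)\le\eps_0^{-2}\cdot2r^{2}d\,\log(d/\tau)$. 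Writing out $\widehat W_{k,t}^{2}\widehat{\vec\Sigma}_{k,t,\ell}$ in terms of the means via the definitions of $P_{k,t}$ and $\widehat W_{k,t}$ (a direct computation, cf.\ \cite{DKPP22}) and combining the scalar and matrix fluctuation bounds with a triangle inequality then yields $\|\widehat{\vec B}_{k,t,\ell}-\vec B_{k,t}\|_\op\lesssim\eps_0\|\vec B_{k,t}\|_\op$; absorbing the implied constant into $c$ (hence into $C$) gives the target per-factor bound. A union bound over the $p+1$ batches $S_0,S_1,\dots,S_p$ multiplies the failure probability by $p+1$, which is absorbed into $\log(d/\tau)$, and substituting $\eps_0=c\delta/p$ turns the sample requirement into $\tilde n>C\,r^{2}d\,p^{2}\,\delta^{-2}\log(d/\tau)$, as claimed.

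The main obstacle is the bookkeeping of how errors compose: each factor $\widehat{\vec B}_{k,t,\ell}$ is itself a product of a squared scalar estimate and a matrix estimate, and the $p$ factors are then multiplied \emph{without commuting}, so one must check that it suffices to drive the per-factor relative error down to $O(\delta/p)$ and that this error does not blow up under the non-commutative product — precisely the telescoping bound $(1+\eps_0)^{p}-1\le\delta$ above. This is also exactly what forces the quadratic dependence on $p$ in the sample complexity rather than a linear one. Everything else (the Hoeffding bound, \Cref{fact:ver_cov}, and the estimate $\|\os_{k,t}\|_\op\ge0.5\|\s\|_\op$) is routine.
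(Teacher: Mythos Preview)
Your proof is correct and follows essentially the same route as the paper's: the paper invokes the bound $\|\prod_\ell \widehat{\vec B}_{k,t,\ell}-\vec B_{k,t}^{\,p}\|_\op\le p\delta_1(1+\delta_1)^p\|\vec B_{k,t}\|_\op^p$ as a black-box lemma from \cite{DKPP22}, whereas you supply the telescoping identity that proves it (with the slightly sharper constant $(1+\eps_0)^p-1$); the per-factor concentration via Hoeffding on $\widehat W_{k,t}$ and \Cref{fact:ver_cov} on $\widehat{\vec\Sigma}_{k,t,\ell}$, together with the use of $\|\vec B_{k,t}\|_\op\ge 0.5\|\s\|_\op$ to bound $\|\os_{k,t}\|_\op^{-1}R_0^2$, matches the paper exactly.
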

\begin{proof}
  Let $\| \widehat{\vec B}_{k,t,\ell} - \vec B_{k,t} \|_\op \leq  \delta_1 \|  \vec B_{k,t} \|_\op $ and $|\widehat{W}_{k,t} - \E_{X \sim P}[w_{k,t}(X)] | \leq \delta_2 $, for $\delta_1,\delta_2$ to be determined. The accuracy of the estimators $\widehat{\vec B}_{k,t,\ell}$ and that of $\hatm_{k,t}$ are related as follows:
  
  \begin{lemma}[Lemma 4.14 in \cite{DKPP22}] \label{lem:matrix_prop} 
Let $\bA, \bB, \bB_1,\dots,\bB_p$ be symmetric $d\times d$ matrices and define $\bM = \bB^p, \bM_S = \prod_{i=1}^p \bB_i $. 
If $\|\bB_i - \bB\|_{2} \le \delta_1 \|\bB\|_{2}$, then $\|\bM_S - \bB^p\|_{\op} \leq p \delta_1 (1 + \delta_1)^{p} \|\bB\|_{\op}^p$. \label{prop:2}
\end{lemma}

 By \Cref{lem:matrix_prop}, it suffices to ensure $p \delta_1 e^{p \delta_1} < \delta$. For that, it suffices to use $\delta_1 := \delta/(3p)$. In the reminder we focus on ensuring  $\| \widehat{\vec B}_{k,t,\ell} - \vec B_{k,t} \|_\op \leq  \delta_1 \|  \vec B_{k,t} \|_\op$ for that choice of $\delta_1$.
  
  Let $  \| \widehat{\s}_{k,t,\ell} - \os_{k,t} \|_\op \leq  \delta_3 \|  \os_{k,t} \|_\op $ for $\delta_3$ to be specified. We have that
  \begin{align*}
       \left\| \widehat{\vec B}_{k,t,\ell} - \vec B_{k,t} \right\|_\op  
       &= \left\| \widehat{W} \widehat{\s}_{k,t,\ell} - \E_{X \sim P}[w_{k,t}(X)] \os_{k,t}    \right\|_\op \\
       &\leq \E_{X \sim P}[w_{k,t}(X)] \left\| \widehat{\s}_{k,t,\ell} - \os_{k,t} \right\|_\op + \left|\widehat{W}_{k,t} - \E_{X \sim P}[w_{k,t}(X)] \right| \cdot \left\|  \widehat{\s}_{k,t,\ell} \right\|_\op \\
       & \leq  \left\| \widehat{\s}_{k,t,\ell} - \os_{k,t} \right\|_\op + \delta_2 \left\|  \widehat{\s}_{k,t,\ell} \right\|_\op\\
       &\leq  (1+\delta_2) \left\| \widehat{\s}_{k,t,\ell} - \os_{k,t,\ell} \right\|_\op + \delta_2 \left\| \os_{k,t,\ell} \right\|_\op \\
       &\leq (1+\delta_2)\delta_3 \left\| \os_{k,t,\ell} \right\|_\op + \delta_2 \left\| \os_{k,t,\ell} \right\|_\op \\
       &\leq 2\delta_3 \left\| \os_{k,t,\ell} \right\|_\op + \delta_2 \left\| \os_{k,t,\ell} \right\|_\op \;.
  \end{align*}
  To make the right hand side above bounded by $\delta_1 \| \os_{k,t,\ell}\|_\op$, we choose  $\delta_3 = 0.1 \delta_1$ and $\delta_2 =: 0.1 \delta_1$. The sample complexity of achieving $|\widehat{W}_{k,t} - \E_{X \sim P}[w_{k,t}(X)] | \leq \delta_2 =\delta/(30 p)$ is $O(\frac{p^2}{\delta^2} \log(1/\tau))$ by Hoeffding inequality. The sample complexity for achieving $  \| \widehat{\s}_{k,t} - \os_{k,t} \|_\op \leq  \delta_3 \|  \os_{k,t} \|_\op $ can be obtained by \Cref{fact:ver_cov} as follows:
  \begin{align*}
      \widetilde{n} &\lesssim \frac{r^2 d  \|\s\|_\op}{\delta_3^2 \| \os_{k,t} \|_\op}\log\left(\frac{d}{\tau}  \right) 
      \lesssim  \frac{r^2 d  p^2 \|\s\|_\op}{\delta^2 \| \os_{k,t} \|_\op}\log\left(\frac{d}{\tau}  \right)  
      \lesssim \frac{r^2 d  p^2 }{\delta^2 }\log\left(\frac{d}{\tau}  \right)\;,  
  \end{align*}
  where the last inequality uses that because of our assumptions $\E_{X \sim P}[w_{k,t}(X)] \geq 1-O(\eps)$ and $\|\vec B_{k,t}\|_\op \geq 0.5 \| \s \|_\op$, we have that $\| \os_{k,t} \|_\op = \| \vec B_{k,t} \|_\op/\E_{X \sim P}[w_{k,t}(X)] \gtrsim \|\s \|_\op$.
  
\end{proof}

\begin{proof}[Proof of \Cref{it:g-scores,it:fr-norm} of \Cref{cond:deterministic}]
  The two conditions follow directly from  \Cref{lem:op-norm-closeness} using \new{$\delta = \frac{0.01}{\sqrt{d}}\min\left(\frac{\sqrt{\gamma/\eps}}{r  },1 \right)$}.
  For the first one: Since $\| \hatm_{k,t} - \vec M_{k,t} \|_\op \leq \frac{0.1}{r \sqrt{d}}\sqrt{\gamma/\eps} \| \vec M_{k,t}\|_\op \leq \frac{0.1}{r \sqrt{d}}\sqrt{\gamma/\eps} \| \vec M_{k,t}\|_\fr$ and we are interested only for points with $\|x\|_2 \leq r\sqrt{d \|\s\|_\op} $, we have that
  \begin{align*}
       g(x) &\leq \left\| \vec M_{k,t}  x \right\|_2^2
      \leq  2\left\| \hatm_{k,t}  x \right\|_2^2 +  2\left\| \left(\vec M_{k,t}-\hatm_{k,t}\right)  x \right\|_2^2 \\
      &\leq  2\left\| \hatm_{k,t}  x \right\|_2^2 + 2\|x\|_2^2 \left\| \vec M_{k,t}-\hatm_{k,t} \right\|_\op^2 
      \leq 2\widehat{g}(x) + 0.02(\gamma/\eps) \| \vec M_{k,t}\|_\fr^2\|\s \|_\op \;.
  \end{align*}
  Equivalently, $\widehat{g}(x) \geq 0.5g(x) - 0.01 (\gamma/\eps) \| \vec M_{k,t}\|_\fr^2\|\s \|_\op$.  The proof of \Cref{it:fr-norm} can be found in \cite{DKPP22}.
\end{proof}

\subsection{\Cref{it:scores-mean}}

We prove \Cref{it:scores-mean} in the lemma below, which is a simple adaptation of Lemma 4.16 from \cite{DKPP22}. For the sake of completeness we include a proof.

\Cref{it:scores-mean} is used for estimating $\E_{X \sim P}[w(x)\tau(x)]$ in Line \ref{line:stopping-condd} of \textsc{HardThresholdingFilter} as well as in lines \ref{line:avg-est} of \Cref{alg:sol-gen2} and \ref{line:sigma_prime} of \Cref{alg:streaming}. We briefly clarify how exactly it is used before giving the proof; we will focus on
Line \ref{line:avg-est} of \Cref{alg:sol-gen2} since the other ones can be checked similarly. For that, we apply \Cref{lem:averaging-est} with $w(x) = w_{k,t}(x) \Ind((u^\top x)^2 \leq Q)$. Then, the guarantee of the lemma is that (using the notation of Line \ref{line:avg-est}) $|\widehat{\sigma}_u' - \widehat{\sigma}_u | \leq 0.01 \gamma \widehat{\sigma}_u + \frac{0.01\gamma}{r^2 d} \|u\|_2^2 \| \os_{k,t} \|_\op \leq  0.01 \gamma \widehat{\sigma}_u + 0.01\gamma \|u\|_2^2 \| \s \|_\op$, where the last inequality follows from the fact that $\|\os_{k,t}\|_\op \leq r^2 d \|\s\|_\op$ as the support of $P_{k,t}$ is in a ball of radius $r \sqrt{d \|\s\|_\op}$ (c.f. Line \ref{line:weights-init} of \Cref{alg:streaming}).

\begin{lemma} \label{lem:averaging-est}
     In the setting of  \Cref{alg:streaming}, let $r$ be a radius such that $\pr_{X \sim G}[\|X\|_2 > r \sqrt{d\|\s\|_\op}]\leq \eps$ and assume that $\|\vec B_{k,t} \|_\op\geq 0.5 \|\s\|_\op$. For any $\tau$ and any weight function $w : \R^d \to [0,1]$ and any vector $u$, there is an estimator $\widehat{F}$ for the quantity $F_{k,t}:= \E_{X \sim P}[w(X) (u^\top X)^2]$ that 
     uses \new{$n=O\left(\frac{r^4 d^2}{\gamma^2}\log(1/\tau)\right)$} samples from $P$, runs in $O(nd)$ time, uses memory $O(\log(1/\tau))$, and, with probability $1-\tau$ it satisfies $ | \widehat{F} - F_{k,t} | \leq 0.01 \gamma F_{k,t} + \frac{0.01\gamma}{r^2 d} \| u \|_2^2 \|\os_{k,t}\|_\op$.
\end{lemma}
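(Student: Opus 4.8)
The plan is to estimate $F_{k,t}=\E_{X\sim P}[w(X)(u^\top X)^2]$ by a median-of-means estimator over the stream. Reading the stream, I would draw $n$ i.i.d.\ samples $X_1,\dots,X_n\sim P$, split them into $K=\Theta(\log(1/\tau))$ consecutive groups of equal size $m:=n/K$, compute for each group $j$ the empirical average $\widehat F_j:=\frac1m\sum_{i\in\text{group }j}w(X_i)(u^\top X_i)^2$, and output $\widehat F:=\operatorname{median}\{\widehat F_1,\dots,\widehat F_K\}$. Only the $K=O(\log(1/\tau))$ running partial sums need to be kept in memory, and processing each sample costs $O(d)$ arithmetic operations (one inner product $u^\top X_i$, a norm, and one evaluation of $w$), so the total time is $O(nd)$ and the extra memory is $O(\log(1/\tau))$ real numbers, matching the claim.

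The key observation that makes the second-moment analysis go through is boundedness of the summand. In every place the lemma is invoked inside \Cref{alg:streaming}, the weight function $w$ is supported on the pruning ball $\{\|x\|_2\le R\}$ with $R\le r\sqrt{d\|\s\|_\op}$ (the initialization in Line~\ref{line:weights-init} together with the fact that the filters only decrease weights); hence the random variable $Z:=w(X)(u^\top X)^2$ satisfies $0\le Z\le \|u\|_2^2 r^2 d\|\s\|_\op=:M$ almost surely and $\E_{X\sim P}[Z]=F_{k,t}$, so each $\widehat F_j$ is unbiased for $F_{k,t}$. Consequently $\E[Z^2]\le M\,\E[Z]=M F_{k,t}$, which gives $\Var(\widehat F_j)\le M F_{k,t}/m$.

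It then remains to calibrate $m$. Using the hypothesis $\|\vec B_{k,t}\|_\op\ge\tfrac12\|\s\|_\op$ and $\os_{k,t}=\vec B_{k,t}/\E_{X\sim P}[w_{k,t}(X)]$ we get $\|\os_{k,t}\|_\op\ge\|\vec B_{k,t}\|_\op\ge\tfrac12\|\s\|_\op$, hence $M\le 2 r^2 d\,\|u\|_2^2\|\os_{k,t}\|_\op$. Writing the target tolerance as $t:=0.01\gamma F_{k,t}+\tfrac{0.01\gamma}{r^2 d}\|u\|_2^2\|\os_{k,t}\|_\op$ and using $t^2\ge\big(0.01\gamma F_{k,t}\big)\big(\tfrac{0.01\gamma}{r^2 d}\|u\|_2^2\|\os_{k,t}\|_\op\big)\ge\tfrac{(0.01\gamma)^2}{2r^4 d^2}\,M F_{k,t}$, we see $M F_{k,t}/t^2=O(r^4 d^2/\gamma^2)$. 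Choosing $m=\Theta(r^4 d^2/\gamma^2)$ large enough, Chebyshev's inequality yields $\pr[\,|\widehat F_j-F_{k,t}|>t\,]\le 1/4$ for every $j$; a Chernoff bound on the number of ``bad'' groups then shows that the median of $K=\Theta(\log(1/\tau))$ such estimates lies within $t$ of $F_{k,t}$ except with probability $\tau$. Since $n=mK=O\big((r^4 d^2/\gamma^2)\log(1/\tau)\big)$, this is exactly the statement.

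The only genuinely delicate point — and the reason for the median-of-means wrapper rather than a plain sample mean — is the tail behaviour of $w(X)(u^\top X)^2$: a priori this quantity can be large (even unbounded) on the outlier component $B$, so one must invoke the structural fact that the weights produced by the algorithm are supported in the pruning ball to obtain the almost-sure bound $Z\le M$, and only then boost the confidence to $1-\tau$ using $O(\log(1/\tau))$ memory. Everything else is a routine second-moment computation and mirrors Lemma~4.16 of \cite{DKPP22}.
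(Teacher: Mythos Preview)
Your proposal is correct and follows essentially the same approach as the paper: a sample-mean estimator analyzed via Chebyshev (using the pruning bound $\|x\|_2^2\le r^2 d\|\s\|_\op$ to control the second moment, and the hypothesis $\|\vec B_{k,t}\|_\op\ge 0.5\|\s\|_\op$ to relate $\|\s\|_\op$ to $\|\os_{k,t}\|_\op$), followed by the median trick to boost the success probability to $1-\tau$. Your write-up is arguably a bit more careful than the paper's in making explicit that the weight $w$ must be supported on the pruning ball for the almost-sure bound on $Z$ to hold, but the two proofs are otherwise the same.
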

\begin{proof}
    We show the lemma for constant probability of success as then one can boost that to $1-\tau$ by repeating $\log(1/\tau)$ times and keeping the median. The estimator is just sample average. By Chebysev's inequality, with constant probability, we have that
    \begin{align*}
        \left| \frac{1}{n} \sum_{i=1}^n w(x_i)(u^\top x_i)^2 - \E_{X \sim P}[w(X) (u^\top X)^2] \right|  \lesssim \frac{\sqrt{\Var_{X \sim P}[w(X) (u^\top X)^2]}}{n} \;.
    \end{align*}
    We want to upper bound the RHS by $0.01 \gamma \E_{X \sim P}[w(X) (u^\top X)^2] + \frac{0.01\gamma}{r^2 d}\| u \|_2^2 \|\os_{k,t}\|_\op$. A sufficient number of samples for that is big enough multiple of the following:
    \begin{align*}
         \frac{ \Var_{X \sim P}[w(X) (u^\top X)^2]}{\left(\gamma \E_{X \sim P}[w(X) (u^\top X)^2] + \frac{\gamma}{r^2 d}\| u \|_2^2 \|\os_{k,t}\|_\op \right)^2} 
        &\leq \frac{ r^2 d\Var_{X \sim P}[w(X) (u^\top X)^2]}{\gamma^2 \E_{X \sim P}[w(X) (u^\top X)^2] \| u \|_2^2 \|\os_{k,t}\|_\op} \\
        &\leq \frac{ r^2 d \E_{X \sim P}[w^2(X) (u^\top X)^4]}{\gamma^2 \E_{X \sim P}[w(X) (u^\top X)^2] \| u \|_2^2 \|\os_{k,t}\|_\op} \\
        &\leq \frac{r^2 d \| u \|_2^2 r^2 d \| \s \|_\op \E_{X \sim P}[w(X) (u^\top X)^2]}{\gamma^2 \E_{X \sim P}[w(X) (u^\top X)^2] \| u \|_2^2 \|\os_{k,t}\|_\op} \\
        &=  \frac{   r^4 d^2 \| \s \|_\op  }{\gamma^2  \|\os_{k,t}\|_\op} \lesssim \frac{   r^4 d^2   }{\gamma^2 } \;.
    \end{align*}
    where the third line from the end used $w^2(x) \leq w(x)$, $(u^\top x)^2 \leq\| u \|_2^2 \|x\|_2^2$, and  $\|x\|_2^2 \leq r^2 d \| \s \|_\op$ (by the pruning of Lines \ref{line:radius-est} and \ref{line:weights-init} in \Cref{alg:streaming}) and the fourth line uses our assumption $\|\vec B_{k,t} \|_\op\geq 0.5 \|\s\|_\op$.
\end{proof}

\subsection{\Cref{it:quantile_est}}

\begin{lemma} \label{l:simple-partition} 
For any  distribution $P$ over $\mathbb{R}$ and any $\eps,\delta \in (0,1)$, there is an estimator $\widehat{L}$ that uses
$O\left( \frac{1}{\eps} \log\left( \frac{1}{\tau} \right)   \right)$ samples from $D$ and satisfies
\begin{equation*}
\abs[\Big]{  \pr_{X \sim P}[X > \widehat{L}] - \eps} \leq \frac{\eps}{100} \;,
\end{equation*}
with probability at least $1-\tau$. The memory usage and runtime of the estimator are also $O\left( \frac{1}{\eps} \log\left( \frac{1}{\tau} \right)   \right)$.
\end{lemma}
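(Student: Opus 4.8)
The plan is to output an empirical quantile. Draw $m := C\eps^{-1}\log(1/\tau)$ i.i.d.\ samples $X_1,\dots,X_m$ from $P$ for a sufficiently large absolute constant $C$, and return $\widehat{L} := X_{(k)}$, the $k$-th largest of the samples, where $k := \lceil \eps m\rceil$. Using a worst-case linear-time selection procedure, $\widehat{L}$ is computed in $O(m)$ time and $O(m)$ memory, which matches the claimed resource bounds, so it remains only to establish the accuracy guarantee.

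The key point is that we only need additive accuracy $\eps/100$ at a location where the tail mass is $\Theta(\eps)$, so the \emph{multiplicative} Chernoff bound is the right tool: it needs only $O(\eps^{-1}\log(1/\tau))$ samples, whereas the additive Hoeffding or Dvoretzky--Kiefer--Wolfowitz route would cost $\eps^{-2}\log(1/\tau)$. Since $\widehat{L}$ is data-dependent, I would compare it against two \emph{deterministic} thresholds read off from $P$: let $t_{\mathrm{lo}} := \inf\{t\in\R : \pr_{X\sim P}[X\ge t]\le(1+\tfrac{1}{200})\eps\}$ and $t_{\mathrm{hi}} := \inf\{t\in\R : \pr_{X\sim P}[X>t]\le(1-\tfrac{1}{200})\eps\}$. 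Monotonicity and one-sided continuity of $t\mapsto\pr[X\ge t]$ and $t\mapsto\pr[X>t]$ give $\pr[X\ge t_{\mathrm{lo}}]\ge(1+\tfrac{1}{200})\eps$ and $\pr[X>t_{\mathrm{hi}}]\le(1-\tfrac{1}{200})\eps$; and, provided $P$ has no atom of mass exceeding $\eps/200$, also $t_{\mathrm{lo}}\le t_{\mathrm{hi}}$ together with $\pr[X>t]\in[(1-\tfrac{1}{100})\eps,(1+\tfrac{1}{200})\eps]$ for every $t\in[t_{\mathrm{lo}},t_{\mathrm{hi}}]$. Now I would bound two Binomial counts: since $\E[\#\{i:X_i\ge t_{\mathrm{lo}}\}]\ge(1+\tfrac{1}{200})\eps m$, a multiplicative Chernoff bound gives $\#\{i:X_i\ge t_{\mathrm{lo}}\}\ge k$ with probability at least $1-\tau/2$ once $C$ is large enough, forcing $\widehat{L}=X_{(k)}\ge t_{\mathrm{lo}}$; since $\E[\#\{i:X_i>t_{\mathrm{hi}}\}]\le(1-\tfrac{1}{200})\eps m$, Chernoff gives $\#\{i:X_i>t_{\mathrm{hi}}\}<k$ with probability at least $1-\tau/2$, forcing $\widehat{L}=X_{(k)}\le t_{\mathrm{hi}}$. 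On the intersection of these two events (probability at least $1-\tau$) we have $\{X>t_{\mathrm{hi}}\}\subseteq\{X>\widehat{L}\}\subseteq\{X>t_{\mathrm{lo}}\}$, and monotonicity of $t\mapsto\pr[X>t]$ then yields $\pr_{X\sim P}[X>\widehat{L}]\in[(1-\tfrac{1}{100})\eps,(1+\tfrac{1}{200})\eps]$, which is within $\eps/100$ of $\eps$, as required. Choosing $C$ to absorb the small constants in the Chernoff exponents, and handling the corner cases $t_{\mathrm{lo}}=-\infty$ or $t_{\mathrm{hi}}=+\infty$ (which only make the bound easier), is routine.

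The step I expect to be the main obstacle is the one flagged parenthetically: when $P$ places a large atom straddling its $(1-\eps)$-quantile, no single deterministic threshold can make the tail mass equal to $\eps$ up to $\eps/100$ (a point mass already makes the target unattainable), so the fully general statement must be read either with a mild regularity assumption on $P$ near that quantile or, for arbitrary $P$, with a \emph{randomized} tie-breaking rule: the estimator returns $\widehat{L}$ together with a probability $\rho\in[0,1]$ and treats a point as ``above $\widehat{L}$'' if it is strictly larger than $\widehat{L}$, or equal to $\widehat{L}$ with independent probability $\rho$, with $\rho$ chosen so that the expected fraction is exactly $\eps$. This extra flexibility is already present in \textsc{HardThresholdingFilter} (which uses random thresholds), so it does not affect the downstream use; I would carry out this randomized version, noting that the Chernoff argument above is unchanged except that the two comparison thresholds become the left and right endpoints of the offending atom.
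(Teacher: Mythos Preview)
Your proposal is correct and follows essentially the same approach as the paper: return the empirical $\eps$-quantile of $m=\Theta(\eps^{-1}\log(1/\tau))$ samples, compare $\widehat{L}$ against two fixed thresholds bracketing the true $(1-\eps)$-quantile, and apply the multiplicative Chernoff bound to the Binomial counts above each threshold. The paper's proof defines its bracketing thresholds $L_-,L_+$ by $\pr[L_-\le X\le L]=\pr[L\le X\le L_+]=\eps/100$ (implicitly assuming such points exist) and does not address the atom issue you flagged; your discussion of randomized tie-breaking is thus more careful than the paper's own treatment, but the core argument is the same.
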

\begin{proof}
The estimator $\widehat{L}(X_1,\ldots,X_m)$ from $m$ samples $X_1,\ldots,X_m \sim P$  is defined to  be the $(m \cdot \eps)$-greatest sample.
Let $L$ denote the target threshold, that is, the real for which $\pr_{X \sim P}[X > L] = \eps$. Additionally, let $L_{-\eps},L_{+\eps}$ such that $\pr_{X \sim P}[L_{-} \leq  X  \leq L] = \pr_{X \sim P}[L \leq  X  \leq L_{+}] = \eps/100$.  Then, the probability that $\widehat{L}$ is not accurate is
\begin{align*}
\pr_{X_1,\ldots,X_m \sim P}\left[ \abs[\Big]{  \pr_{X \sim P}[X > \widehat{L}] - \eps} > \frac{\eps}{100} \right] \leq \pr[\widehat{L} > L_{+}] + \pr[\widehat{L}< L_{-}]  \;.
\end{align*}
Each term is bounded by an application of the multiplicative version of Chernoff bounds. Regarding first one,  $\widehat{L} > L_{+}$ implies that we had at least $m \cdot \eps$ samples in the interval $( L_{+}, +\infty)$. But this is a low probability event because each sample belongs in this region with probability only $\eps(1-1/100)$:
\begin{align*}
\pr[\widehat{L} > L_{+}] \leq \pr_{X_1,\ldots,X_m \sim P}\left[ \frac{1}{m}\sum_{j=1}^m\Ind (X_j > L_+) - \pr_{X \sim P}[X >  L_+] > \frac{\eps}{100} \right] \leq e^{-\Omega(\eps m)} \;.
\end{align*}
Choosing $m$ to be a sufficiently large multiple of $\eps^{-1}\log\left(  \frac{1}{\tau}\right)$ makes that probability of failure less than $\tau/2$. The other term $\pr[\widehat{L} < L_{-}] <\tau/2$ can be shown similarly. 

This algorithm stores $O(\eps^{-1}\log(1/\tau))$ one dimensional samples. If the algorithm selects the top $\eps \cdot m$ element by sorting the data first, its runtime is $O(\eps^{-1}\log(1/\tau)\log(\eps^{-1}\log(1/\tau)))$. However, $O(\eps^{-1}\log(1/\tau))$ is also possible for selection using a probabilistic divide and conquer algorithm.
\end{proof}

\end{document}